\documentclass[graybox,envcountchap,sectrefs]{svmono}
\usepackage{amsmath,amssymb,amsbsy,eucal,epsfig,algorithm}
\usepackage{algpseudocode}
\usepackage[utf8x]{inputenc}

\usepackage{type1cm}         

\usepackage{graphicx}        %
\usepackage{multicol}        %
\usepackage[bottom]{footmisc}%

\usepackage{newtxtext}       %
\usepackage{newtxmath}       %

\def\qed{$\square$}

\let\epsilon\varepsilon
\let\phi\varphi
\let\eps\varepsilon

\def\X{{\bf X}}

\def\y{{\bf y}}

\def\x{{\bf x}}
\def\C{\mathcal C}
\def\S{\mathcal S}

\def\O{{\cal O}}             
\def\v{{v}}

\def\bd{{\bar L}}

\def\Sp{{\mathcal S}^+}

\def\geklz{\ge_{\scriptscriptstyle KL}}

\def\getv{\ge_{\scriptscriptstyle tv}}

\def\odt{{\textstyle{1\over 2}}}
\def\odn{{\textstyle{1\over n}}}

\def\SetN{I\!\!N}

\def\E{\mathbb E}

\def\N{\mathbb N}

\def\as{\text{a.s.}}
\def\-as{\text{-a.s.}}

\def\P{{\mathcal P}}

\newif\ifcompress\compressfalse 
\def\,{\mskip 3mu} \def\>{\mskip 4mu plus 2mu minus 4mu} \def\;{\mskip 5mu plus 5mu} \def\!{\mskip-3mu}
\def\dispmuskip{\thinmuskip= 3mu plus 0mu minus 2mu \medmuskip=  4mu plus 2mu minus 2mu \thickmuskip=5mu plus 5mu minus 2mu}
\def\textmuskip{\thinmuskip= 0mu                    \medmuskip=  1mu plus 1mu minus 1mu \thickmuskip=2mu plus 3mu minus 1mu}
\textmuskip
\ifcompress
\def\beq{\vspace{-1ex}\dispmuskip\begin{equation}}    \def\eeq{\vspace{-1ex}\end{equation}\textmuskip}
\def\beqn{\vspace{-1ex}\dispmuskip\begin{displaymath}}\def\eeqn{\vspace{-1ex}\end{displaymath}\textmuskip}
\def\bqa{\vspace{-1ex}\dispmuskip\begin{eqnarray}}    \def\eqa{\vspace{-1ex}\end{eqnarray}\textmuskip}
\def\bqan{\vspace{-1ex}\dispmuskip\begin{eqnarray*}}  \def\eqan{\vspace{-1ex}\end{eqnarray*}\textmuskip}
\else
\def\beq{\dispmuskip\begin{equation}}    \def\eeq{\end{equation}\textmuskip}
\def\beqn{\dispmuskip\begin{displaymath}}\def\eeqn{\end{displaymath}\textmuskip}
\def\bqa{\dispmuskip\begin{eqnarray}}    \def\eqa{\end{eqnarray}\textmuskip}
\def\bqan{\dispmuskip\begin{eqnarray*}}  \def\eqan{\end{eqnarray*}\textmuskip}
\fi

\title{Universal time-series forecasting with mixture predictors}
\author{Daniil Ryabko}
\date{}

\begin{document}

 \maketitle
\preface
This book is devoted to the problem of sequential probability forecasting, that is, predicting the probabilities of the next outcome of a growing sequence of  observations given the past. This problem is considered in a very  general  setting that unifies commonly used probabilistic and non-probabilistic settings, trying to make as few as possible assumptions on the  mechanism generating the observations. A common form that arises in various formulations of this problem is that of mixture predictors, which are  formed as a combination of a finite or infinite set of other predictors attempting to combine their predictive powers. The main subject of this book are such mixture predictors, and the main results demonstrate the universality of this method in a  very  general probabilistic setting,  but also show some of its limitations. 
While the problems considered are motivated by practical applications, involving, for example, financial, biological or behavioural data, this motivation is left implicit and all the results exposed are theoretical. 

The book targets graduate students and researchers interested in the problem of sequential prediction, and, more generally, in theoretical analysis of problems in  machine learning and non-parametric statistics, as well as mathematical and philosophical foundations of these fields. 

The material in this volume is presented in a way that presumes familiarity with basic concepts of probability and statistics, up to and including probability distributions over spaces of infinite sequences. Familiarity with the literature on learning or stochastic processes is not required.

\vskip3cm
\noindent Antsiranana, 2019. \hfill Daniil Ryabko

\tableofcontents

\chapter{Introduction}\label{ch:intro}
The problem of sequential probability forecasting, also known as  time-series or sequence  prediction, can be introduced as follows. 
A sequence $x_1,\dots,x_n,\dots$ of  observations is generated 
according to some unknown probabilistic law $\mu$. This means that $\mu$ is a  measure on the space of one-way infinite sequences, where the usual sigma-algebra generated by cylinders is assumed. In this book we only consider finite-valued sequences, and the measure $\mu$ is always unknown. 
 After observing each outcome $x_{n}$,  it is required to give the conditional probabilities of the next observation, $x_{n+1}$ given the past $x_1,\dots,x_n$, after which the process continues sequentially.
 We are interested in constructing predictors $\rho$ whose 
 conditional probabilities $\rho(\cdot|x_1,\dots,x_n)$ converge (in some sense) to the
``true'' $\mu$-conditional probabilities $\mu(\cdot|x_1,\dots,x_n)$, as the sequence of observations increases ($n\to\infty$).
We would also like this convergence to be as fast as possible. 

Note that  the probabilities forecast by a predictor $\rho$ having seen a sequence $x_1,\dots,x_n$ may be interpreted as conditional probabilities of $x_{n+1}$ given $x_1,\dots,x_n$. Since the latter sequence is arbitrary, extending the measure from finite sequences to the space of infinite sequences (as is always possible),  the predictor $\rho$ uniquely defines a probability measure over~$\X^\infty$. Assuming (without loss of generality) that the mechanism that generates the data, that is, the observed sequence $x_1,\dots,x_n,\dots$, is probabilistic, predictors and the data-generating mechanism are objects of the same kind: probability measures on the space of one-way infinite sequences.

\begin{svgraybox}
 Predictors and the data-generating mechanism are objects of the same kind: probability measures on the space of one-way infinite sequences.
\end{svgraybox}

The prediction problem is considered in full generality; in particular, the outcomes $x_i, i\in\N$  may exhibit an arbitrary form of probabilistic dependence. This means, in particular, that they may also form an arbitrary deterministic sequence.

\section{General motivation}\label{s:gm}
Problems of this kind arise in a variety of applications, where the data may be financial, such as a sequence of stock prices; human-generated, such as a written text or a behavioural sequence; biological (DNA sequences); physical measurements and so~on. The common feature of these applications that constitutes the motivation of the settings of this volume is the absence of probabilistic models that would describe with at least an acceptable level of accuracy the mechanism that generates the data. 
This does not mean that rather specific probabilistic models  are not used in  applications. 
Consider the following classical example, which is due to Laplace. 
Supposing  
 that the sun has risen every day for five thousand years, what is the
 probability that it will rise tomorrow? Laplace suggested to assume
 that the probability that the sun rises is the same every day and
 the trials are independent of each other. Thus, he considered
the task of sequence prediction when the true  measure generating the data is known to
 belong to the family of Bernoulli i.i.d.\ measures  with the binary
 alphabet $\X=\{0,1\}$. The predicting measure suggested by Laplace
 is 
\begin{equation}\label{eq:lapl}
 \rho_L(x_{n+1}=1|x_1,\dots,x_n)=\frac{k+1}{n+2},
\end{equation}
where $k$ is
 the number of 1s in $x_1,\dots,x_n$. The conditional probabilities
 of the measure $\rho_L$ converge to the true conditional
 probabilities $\mu$-almost surely for every Bernoulli i.i.d\
 process measure $\mu$.

This example, which we would now call a toy example, and which  perhaps was regarded similarly by its author, nevertheless illustrates the situation that is rather common today. Namely, that probabilistic assumptions and models that are obviously wrong~--- it is clear that the events that the sun rises or not on different days are not independent~--- are used for prediction. In particular, the i.i.d.\ model is one of the most used ones to this day. More generally, the choice of a model is often determined not so much by its applicability to a given practical task as by its ease of use.
Specifically, under the assumptions of a given model, one is typically able to prove a rapid convergence of the prediction error to zero. 
In contrast, nothing can be said about the quality of prediction if the model assumptions do not hold. 
A complementary approach is not to assume anything about the mechanism that generates the data, and, instead, consider a given model as a set of comparison predictors or experts. However, in this approach the comparison model class is also typically very small, which allows one to obtain algorithms amenable to theoretical analysis, and, more specifically, a rapid convergence of the prediction error to that of the best comparison method for the data observed. Below in this chapter we consider these approaches in some more detail, in particular, introducing the notions of loss and regret that will be used henceforth to measure the prediction error. 
For now, it suffices to give the following two informal formulations of the   the  sequence prediction problem.
\begin{svgraybox}
 {\bf Sequence prediction: realizable case.} A set of probability measures $\C$ is given and is considered as a model set: the unknown data-generating mechanism $\mu$  is assumed  to belong to the set $\C$. It is required to construct a predictor $\rho$ whose forecasts converge to the $\mu$-conditional probabilities, if any $\mu\in\C$ is chosen to generate the data. 
\end{svgraybox}

\begin{svgraybox}
 {\bf Sequence prediction: non-realizable case.} A set of probability measures $\C$ is given and is considered as a set of predictors to compare to. The unknown data-generating mechanism $\mu$ can be arbitrary.   It is required to construct a predictor $\rho$ whose error is not much greater than that of a comparison predictor $\nu$,  where  $\nu\in\C$ is the best predictor in $\C$ for the measure $\mu$ that generates the data sequence.
\end{svgraybox}

What is important to note here, however, is that some assumptions either directly on the data-generating mechanism or on the comparison class of predictors are necessary in order to obtain at least some non-trivial guarantees on the performance. Indeed,  otherwise, the best one can do is to predict equal probabilities of outcomes  on each step independently of the data.

 To see this, consider any predictor $\rho$, and let us task it with predicting any sequence out of the set of all possible deterministic binary-valued sequences. At time step 1, if the predictor outputs any prediction that differs from $\rho(x_1=0)=\rho(x_1=1)=1/2$, say, $\rho(x_1=0)<1/2$, then let the first outcome of the sequence be the opposite, in our case, $x_0=0$. On the next step, the situation repeats: if,  say, $\rho(x_1=1|x_0=0)<1/2$ then we define $x_1=1$, otherwise $x_1=0$. Consequently, the error of the predictor on the observed sequence is always at least~$1/2$ (or 1 if we measure the error in KL divergence); this simple argument is formalise in Lemma~\ref{th:disc}.

Given that some modelling assumptions are necessary, one can ask the general question of how to select a model for a problem of sequence prediction at hand. Furthermore, given an arbitrary model set $\C$, how can one find a predictor?  If we want to break away from the tradition of studying the models that are  easy to study, we have  to consider these modelling questions in their full generality. 

It turns out that if the data-generating mechanism does belong to the model class, no matter how big the model class is, then asymptotically optimal  performance can be achieved by a mixture predictor, and the finite-time relative performance guarantees can be established. This gives a rather specific form in which to look for a solution to the problem of sequence  prediction, even though it does not provide an algorithm for finding such a predictor.  Interestingly, no assumptions whatsoever are required for this result to hold (not even measurability of the model set). 
\section{Mixture predictors}
Mixture predictors have emerged in various settings in the sequence-prediction literature under various names. The general probabilistic form is $\sum_{k\in\N} w_k\mu_k,$
 where $w_k$ are positive real weights that sum to 1, and $\mu_k$ are probability measures over the space of one-way infinite sequences, that is, predictors.  
\begin{svgraybox}
 A {\bf mixture predictor} (over a set $\C$) is a probability measure of the form 
\begin{equation}\label{eq:mixdef}
\sum_{k\in\N} w_k\mu_k,
 \end{equation}
where $w_k$ are positive real weights that sum to 1, and $\mu_k$ are probability measures (that belong to a given set $\C$).
\end{svgraybox}
  In some settings the sum of the weights may be finite but not 1, and in non-probabilistic settings, such as prediction with expert advise, one have to squint somewhat to see that the predictors have this form; yet, the general principle is the same. The weights of the mixture predictors considered in  this volume   always sum to 1.
Mixture predictors also can be  seen as Bayesian predictors with a  discrete prior, as the weights $w_k$ constitute a probability distribution. In other words, the infinite sum in~\eqref{eq:mixdef} is a special case of an integral. However, as we shall see in this volume, a discrete sum is sufficient to achieve near-optimal performance,  and this also allows us to bypass the requirement of measurability of the set the integral would be taken over (the model set $\C$). A more important difference with Bayesian settings is that  here we focus  on results that hold for every distribution of interest, and not with a prior probability~1.
 
Using a mixture as a predictor means  evaluating  the posterior distribution on each time step $n$, given the observed sequence $x_1,\dots,x_n$. 
Thus, for a mixture predictor $\rho$ of the form~\eqref{eq:mixdef}, its prediction  of the probability of next outcome $x_{n+1}$  to be $a$ after having seen $x_1,\dots,x_n$ is given~by 
\begin{equation}\label{eq:bayes}
 \rho(x_{n+1}=a|x_1,\dots,x_n)=%
\sum_{k\in\N} \frac{w_k\mu_k(x_1,\dots,x_n)}{\sum_{j\in\N} w_j\mu_j(x_1,\dots,x_n)}\mu_k(x_{n+1}=a|x_1,\dots,x_n).
\end{equation}
From this form it is easy to see that those predictors $\mu_k$ that were ``good'' on the sequence $x_1,\dots,x_n$, that is, with respect to which this sequence had a higher likelihood, will have their posterior weight increase as compared to the weights of the rest of the predictors. This forms the basis of Bayesian prediction. However, while typically a Bayesian is concerned with what happens when the measure that generates the data belongs to the set the prior is concentrated on, which, in our case, would be simply the set $\{\mu_k:k\in\N\}$, here we are interested in what happens in a potentially much larger set $\C$ from which the measures $\mu_k$ are carefully selected.
\section{Loss: KL divergence and total variation}

To evaluate the quality of prediction we will mostly use the expected (with respect to data) average (over time)
Kullback-Leibler divergence, as well as the total variation distance (see Chapter~\ref{s:pqpri} for definitions). 
Prediction in total variation is a very strong notion of performance; in particular, it is not even possible
to predict an arbitrary i.i.d.\ Bernoulli distribution in this sense. Prediction in expected average KL divergence
is much weaker and therefore more practical, but it is also more difficult to study, as is explained below.

Total variation distance measures the difference between the true and predicted probabilities not just on the next step but on the infinite horizon; that is, at every time step, we are trying to predict the probability of all future events. Requiring that this error converges to 0 is a rather tough goal. 
Studying the prediction problem with this notion of loss goes back to the seminal work \cite{Blackwell:62}, which shows that if a measure $\mu$ is absolutely continuous with respect to a measure  $\rho$  then $\rho$ predicts $\mu$ in total variation. The converse implication has been established in \cite{Kalai:94}. 
This equivalence between prediction in total variation and absolute continuity proves to be crucial for the  analysis of the sequence prediction problem in this setting. 
In particular, it  can be used to derive a complete characterization of those sets $\C$ for which a predictor exists that predicts all the measures in $\C$. It can also be shown that when such a predictor exists, it can be obtained in the form of a mixture predictor. 
While very insightful from the theoretical point of view, the notion of prediction in total-variation is too strong from a practical point of view, as it is not possible to have  a predictor that predicts all Bernoulli i.i.d.\ measure.  
A rather more practical and interesting alternative is the Kullback-Leibler divergence, which is the main focus in this volume.  This notion of loss has proven to be the natural choice in a variety of applications, including economics (starting with the classical  results of Kelly \cite{Kelly:56}), information theory and data compression (see, e.g. \cite{Cover:91,BRyabko:09,BRyabko:16}).
It would be interesting to generalize the results presented here to different losses, but so far this is left for future work. Some steps towards this are taken in the last chapter, where we consider some basic question about predictors and their combinations for other notions of loss, namely, averaged and not averaged absolute loss, as well as KL loss without time-averaging.

\section{Some of the results}
To get the flavour of the results presented in this book, consider the following theorem, in whose formulation  $L_n(\mu,\rho)$ refers to the time-cumulative (up to time $n$) expected log-loss of $\rho$ on a sequence generated by $\mu$:
\begin{theorem}\label{th:b1i}
For every set $\C$ of probability measures and for every predictor $\rho$ (measure) there is a mixture predictor $\nu$ over $\C$ such that for every $\mu\in\C$ we have $$
 L_n(\mu,\nu)\le L_n(\mu,\rho)+ 8\log n + O(\log\log n),
$$ 
with only small constants hidden inside $O()$.
\end{theorem}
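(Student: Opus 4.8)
The plan is to reduce the infinite-horizon claim to a family of finite-horizon problems, solve each finite-horizon problem by a minimax argument, discretise the resulting optimal Bayesian predictor into an honest mixture, and then glue the horizons together with a weighting that produces the $O(\log\log n)$ correction.

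Write $\mu^{n}$ for the restriction of a measure to $\X^{n}$. Two elementary facts drive the argument. First, the cumulative log-loss telescopes, $L_{n}(\mu,\rho)=\E_{\mu}\log\frac{\mu(x_{1},\dots,x_{n})}{\rho(x_{1},\dots,x_{n})}=\mathrm{KL}(\mu^{n}\,\|\,\rho^{n})$, so the statement concerns only $n$-step marginals and is vacuous whenever $L_{n}(\mu,\rho)=\infty$. Second, if a mixture $\nu$ dominates a sub-mixture $\sigma$ carrying total weight $v$ (that is, $\nu\ge v\,\sigma$ on finite cylinders), then $L_{n}(\mu,\nu)\le L_{n}(\mu,\sigma)+\log(1/v)$ for every $n$ and $\mu$ --- and, importantly, this bound is insensitive to how many components $\sigma$ has, so sub-mixtures may be kept whole. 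Hence it is enough to construct, for each horizon $m$, a finite mixture $\sigma^{(m)}$ over $\C$ with $L_{m}(\mu,\sigma^{(m)})\le L_{m}(\mu,\rho)+O(1)$ for all $\mu\in\C$, and then set $\nu=\sum_{m\ge1}v_{m}\sigma^{(m)}$ with positive $v_{m}$ summing to $1$; applying the domination bound to the block $m=n$ gives $L_{n}(\mu,\nu)\le L_{n}(\mu,\rho)+\log(1/v_{n})+O(1)$, and taking $v_{m}$ to decay just slightly faster than $1/m$ (for instance $v_{m}\asymp m^{-1}(\log m)^{-2}$) makes $\log(1/v_{n})=\log n+O(\log\log n)$, well within the claimed $8\log n+O(\log\log n)$.

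To build $\sigma^{(m)}$, fix $m$ and restrict every measure to $\X^{m}$, so that all competitors live in the finite simplex over $\X^{m}$; let $K$ be the closed convex hull of $\{\mu^{m}:\mu\in\C\}$, a compact convex set whose definition requires no measurability of $\C$. Consider the zero-sum game with payoff $f(\sigma,\mu)=\E_{\mu}\log\frac{\rho(x^{m})}{\sigma(x^{m})}=L_{m}(\mu,\sigma)-L_{m}(\mu,\rho)$, where the minimiser picks $\sigma\in K$ and the maximiser picks $\mu\in\C$. As $f$ is linear in $\mu$, the supremum over $\C$ equals the supremum over $K$, and as $f$ is convex and lower semicontinuous in $\sigma$ a minimax theorem applies (after the usual care with sequences on which $\rho$, or every member of $\C$, vanishes), giving $\min_{\sigma\in K}\sup_{\mu\in\C}f(\sigma,\mu)=\sup_{\mu\in K}\min_{\sigma\in K}f(\sigma,\mu)\le\sup_{\mu\in K}f(\mu,\mu)=\sup_{\mu\in K}\bigl(-\mathrm{KL}(\mu^{m}\,\|\,\rho^{m})\bigr)\le0$. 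Thus there is a Bayesian predictor $\bar\mu^{*}_{m}\in K$ with $L_{m}(\mu,\bar\mu^{*}_{m})\le L_{m}(\mu,\rho)$ for every $\mu\in\C$. Being a point of $K=\overline{\mathrm{conv}}\{\mu^{m}:\mu\in\C\}$, $\bar\mu^{*}_{m}$ is a limit of finite convex combinations of $\X^{m}$-restrictions of members of $\C$; picking one close enough in the ambient Euclidean space --- a finite-dimensional approximation, using that $\bar\mu^{*}_{m}$ must charge every cylinder a non-trivial $\mu$ charges --- yields a finite mixture $\sigma^{(m)}$ over $\C$ with $L_{m}(\mu,\sigma^{(m)})\le L_{m}(\mu,\rho)+O(1)$ uniformly over $\mu\in\C$.

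The crux is this last sparsification together with the uniformity over all of $\C$. A naive ``cover $\C$ by an $\varepsilon$-net'' fails: members of $\C$, viewed in the simplex over $\X^{m}$, can be so spread out that every net has size exponential in $m$, which would ruin the bound. What rescues the argument is that we compete only against the single probability measure $\rho$, of total mass one, so the damage on whatever part of $\X^{m}$ is poorly approximated is automatically bounded --- this is exactly what the minimax value $\le0$ encodes. A secondary, purely technical nuisance is that $\{\mu^{m}:\mu\in\C\}$ need not be closed, so $\bar\mu^{*}_{m}$ might only be approached, not attained, by mixtures over $\C$; this is absorbed into the $O(1)$ above. Finally, a reader who wants to bypass minimax and convexity entirely --- and to see a clean small constant such as $8$ appear explicitly --- can replace the construction of $\sigma^{(m)}$ by a greedy one: repeatedly adjoin to $\sigma^{(m)}$ a member of $\C$ capturing a constant fraction of the $\rho$-mass not yet well predicted, argue that this halts after polynomially-in-$m$ many steps, and account for the resulting $O(\log m)$ loss directly.
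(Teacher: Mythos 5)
Your outer architecture---a per-horizon mixture $\sigma^{(m)}$ over $\C$, glued together with weights $v_m\asymp m^{-1}(\log m)^{-2}$, exploiting that the whole sub-mixture $\sigma^{(n)}$ can be invoked at the single cost $\log(1/v_n)=\log n+O(\log\log n)$ regardless of how many components it has---is sound, and the minimax value computation (maximin $\le\sup_{\mu}f(\mu,\mu)\le 0$) is correct in spirit. The genuine gap is in the sparsification step, which is the crux and is only asserted. From $\bar\mu^*_m$ in the \emph{closed} convex hull of $\{\mu^m:\mu\in\C\}$ you pass to a finite convex combination $\sigma$ of restrictions of members of $\C$ that is close ``in the ambient Euclidean space,'' and claim a uniform $O(1)$ extra loss. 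This does not follow: $\|\sigma-\bar\mu^*_m\|_\infty\le\epsilon$ gives multiplicative control $\sigma(x_{1..m})\ge\tfrac12\bar\mu^*_m(x_{1..m})$ only on cylinders with $\bar\mu^*_m(x_{1..m})\ge 2\epsilon$; on the remaining cylinders $\sigma$ may vanish entirely while some $\mu\in\C$ concentrates all of its mass there, making $L_m(\mu,\sigma)=+\infty$. Your parenthetical that ``$\bar\mu^*_m$ must charge every cylinder a non-trivial $\mu$ charges'' concerns $\bar\mu^*_m$, not its approximant, so it does not close the hole. The repair is to mix $\sigma$ with a regularizer built from $\C$ itself, $r_m$ with $r_m(x_{1..m})\ge\tfrac12|\X|^{-m}\sup_{\mu\in\C}\mu(x_{1..m})$ (this is exactly Step r in the paper's proof of Theorem~\ref{th:main}), and to take $\epsilon$ exponentially small in $m$; the low-mass cylinders then contribute $O(1)$ by the usual maximum-entropy bound. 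The same regularizer is also needed to make the minimax step rigorous, since the payoff $\E_\mu\log(\rho/\sigma)$ takes the value $+\infty$ where $\sigma$ vanishes and Sion's theorem does not apply off the shelf; ``the usual care'' here is a real construction, not a remark.

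It is worth contrasting this with what the paper actually does, which avoids convexity and minimax altogether: for each $n$ it stratifies $\X^n$ by level sets of the log-likelihood ratio $\frac1n\log(\mu(x_{1..n})/\rho(x_{1..n}))$ (a step-function approximation with $k\approx n/\log\log n$ levels) and, within each level, greedily selects measures from $\C$ ordered by the $\rho$-mass of the cells they newly cover; the accounting is done per selected component, each of which is charged its own prior weight, and this is where the constant $8$ comes from. Your route, if completed with the regularizer, keeps each $\sigma^{(n)}$ whole and would give roughly $\log n+O(\log\log n)$, i.e.\ a better constant---which is a genuine advantage of the convexity argument. Your closing remark that a greedy construction ``halts after polynomially-in-$m$ many steps'' is, however, not right: on the high-likelihood-ratio strata the paper's greedy procedure requires exponentially many components (of order $kn\,2^{iMn/k}$ at level $i$), and the resulting weight penalty is offset precisely by the likelihood-ratio gain on the corresponding cells; the stratification exists exactly to make that cancellation possible.
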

As a corollary of this result, one can show that, in the realizable case, near-optimal performance can always be achieved by a mixture predictor over the model class~$\C$. 
In some more detail, one can consider  the  minimax value $V_\C$, defined as the infinum  over all possible predictors $\rho$ of the supremum over all measures $\mu$ in $\C$ that generate the data (as well as their probabilistic combinations) of the asymptotic average loss $\lim_{n\to
\infty} L_n(\mu,\rho)$. It follows from Theorem~\ref{th:b1i} that this value is achievable by a mixture of a countably many measures from $\C$. 
This latter result can be seen as a complete-class theorem and also as a partial minimax theorem for prediction. These corollaries are considered in Chapter~\ref{s:dt}. %
It is worth noting that the the vast majority of the literature on the problem of sequence prediction in comparable settings is concerned with the case $V_\C=0$, that is, the  loss of the best predictor vanishes in asymptotic. This is the case not only for i.i.d.\ data, but also for finite-memory and stationary processes (see Section~\ref{s:lit} below for more details and references). More general cases still considered in the literature include piece-wise i.i.d.\ processes but rarely go beyond these. Here we argue, based on the general result above and on its couter-part below, that more general models $\C$, with $V_\C>0$,  deserve a closer attention.

Turning to the non-realizable case, that is, to the case where  the measure generating the data is arbitrary and we are comparing the performance of our predictor to that of the best predictor in a given set $\C$, an opposite result is true. 
More specifically, it may happen that there exists a predictor with a vanishing regret (with respect to the best predictor in $\C$), but any Bayesian combination (not necessarily a discrete mixture) of predictors in $\C$ has a linear (non-vanishing) regret. This effectively renders the set $\C$ useless. 
From these two opposite results taken together, a a rather general conclusion emerges:
\begin{svgraybox}
 It is better to take a model large enough to make sure it includes the process that generates the data, even if it entails positive asymptotic average loss, for otherwise any combination of predictors in the model class may be useless. 
\end{svgraybox}
Shall this conclusion be not quite satisfying from an application point of view, yet another setting may be considered, which is called here the ``middle case'' prediction problem.
\begin{svgraybox}
 {\bf The middle case problem.} Given a set $\C$ of predictors, we want to find a predictor whose
prediction error converges to zero if there is at least one predictor in $\C$ whose prediction error converges to zero.
\end{svgraybox}
From the point of view of finding a model for the sequence prediction problem, this setting allows one to assume that the data-generating mechanism is such that {\em the model is good enough} for the prediction problem at hand. This is much weaker than assuming that the measure generating the data belongs to the given family; yet much stronger than  allowing the data to be arbitrary (while weakening the goal to be a comparative one)  as in the non-realizable case. 
As we shall see in this volume (Chapter~\ref{ch:p2}), in this case, similarly to the  realizable case, optimal performance is attained by a mixture predictor. However, unlike in the realizable case, for this setting   only an asymptotic result is presented.

\section{Mixture predictors in the literature on sequence prediction and related settings}\label{s:lit}
Mixture predictors are a special case of Bayesian predictors, and thus their roots can be traced back to the founders of probability and statistics. In particular, the Laplace predictor mentioned above is actually a Bayesian mixture of Bernoulli i.i.d.\ processes with the uniform prior over the parameter.  The use of discrete mixtures for sequence prediction goes back to at least Solomonoff \cite{Solomonoff:78} that had used the Zvonkin-Levin measure \cite{Zvonkin:70} for this purpose (basing also on  ideas of  earlier works \cite{Solomonoff:64}). The latter measure is a mixture of all computable measures, and is thus a good predictor for them. All computable measures in  a certain sense capture all the possible regularities that one can encounter, so constructing a predictor for this set is a rather attractive goal. Yet it is elusive, since, although a predictor for this set exists, it is not computable itself; see \cite{Hutter:04uaibook} for an overview and a discussion of this and related problems. Furthermore, unless one accepts a philosophical premise that all the probabilities in the world are computable, predicting all such measures is not the ultimate goal for sequential prediction. It is, however, easy to see that a mixture over an infinite set of measures may aslo be a good predictor for some measures outisde this set. Thus, a mixture over Bernoulli measures with rational (or with all computable) coefficients predicts all Bernoulli measures, as does the Solomonoff's predictor mentioned above (e.g., \cite{Hutter:07upb}).  As a perhaps more practical example, in \cite{BRyabko:84} a mixture $\sum w_k \mu_k$ of predictors  is proposed, where each $\mu_k$  is  a predictor for all processes with memory $k$ (these are themselves Bayesian predictors, see \cite{Krichevsky:93}). Not only does this mixture predict all processes with finite memory, but, as \cite{BRyabko:88} shows, it is also a good predictor for all stationary processes. These examples are considered in some detail in Section~\ref{s:exvc0}.

Perhaps the first result on mixture predictors that goes beyond specific families of processes has been obtained in \cite{Ryabko:10pq3+}, which shows that if there exists a predictor $\rho$ whose loss vanishes on any measure from a given set $\C$, then such a predictor can be obtained as a mixture over a countable subset of $\C$. This result does not require any assumptions on $\C$ to be made, besides the existence of the predictor $\rho$ whose loss vanishes.  This last requirement has been lifted in \cite{Ryabko:17pq5}. That is, the latter work shows that the best asymptotic performance can always be attained by a mixture predictor, no matter whether the minimax value is 0 or positive. Finally, a finite-time version of this result has been obtained in \cite{Ryabko:19pq6}, which is Theorem~\ref{th:b1i} mentioned above and one of the main results presented in this book.

The non-realizable case is usually studied in a slightly different, non-probabilistic, setting. We 
refer to \cite{Cesa:06} for a comprehensive overview. In this setting, it is usually assumed 
that the observed sequence of outcomes is an arbitrary (deterministic) sequence; instead of giving 
conditional probabilities, a predictor has to output deterministic guesses (although these guesses can be selected in a  randomised manner). Predictions result in a certain loss, which 
is required to be small as compared to the loss of a given set of reference predictors (experts) $\C$. The losses of the experts
and the predictor are observed after each round.
 In this approach, it is mostly assumed
that the set $\C$ is finite or countable.
The main difference with the formulation considered in this section is that we require a predictor to give probabilities, and 
thus the loss is with respect to something never observed (probabilities, not outcomes). The loss itself is not completely 
observable in our setting.  In this sense our non-realizable
version of the problem is more  difficult. 
Assuming that the data-generating mechanism is probabilistic, 
even if it is completely unknown, makes sense in such problems as, for example, game playing, or market analysis. 
In these cases one may wish to assign smaller loss to those models or experts who give probabilities closer 
to the correct ones (which are never observed), even though different probability forecasts can often result
in the same action. 
Aiming at predicting probabilities of outcomes  also allows one to abstract
from the actual use of the predictions (for example, making bets), and thus it becomes less important to consider general losses.
On the other hand, using observable losses instead of the unobserved ``true''   probabilities has the merit of being able to evaluate a given predictor directly based on data; see the works \cite{Dawid:84,Dawid:92} that argue in favour of this approach.
Despite the difference in settings, it can be seen that the predictors used in the expert-advice literature are also mixture predictors, where likelihood-like quantities are calculated for the observed sequence  based on the loss suffered by each expert, and then are used as weights to calculate the forecasts akin to \eqref{eq:bayes} (see \cite{Cesa:06} for a formal exposition).

The first attempt to unify the realizable and the non-realizable case of the sequence prediction problem in its general form has been taken in \cite{Ryabko:11pq4+}.
In this work both of these problems are considered  as questions about a set $\C$ of measures, which is  regarded,  alternatively, as a set of processes to predict or a set of experts to compare with. It is demonstrated in \cite{Ryabko:11pq4+} that the problems become one in the case of prediction in total variation distance,  providing  a complete characterisation of predictability for this loss. It is also shown that  for KL divergence the problems are different.  In the same work  the middle-case problem is introduced, which lies in-between the realizable and the non-realizable case; this latter problem is the subject of Chapter~\ref{ch:p2} in this volume. An important example of stationary processes for this problem has been addressed in \cite{Ryabko:15stno}, which shows that one cannot predict all processes for which a stationary predictor exists. Furthermore, the question of optimality of mixture predictors in the non-realizable case has been answered in the negative in \cite{Ryabko:16pqnot,Ryabko:19pq6}. Together with the mentioned result on universality of mixture predictors in the realizable case, this invites the general conclusion that one should try to take a model set as large as to include the data-generating mechanism in it, even if the best achievable asymptotic average loss becomes positive \cite{Ryabko:19pq6}.

Another closely related direction of research, mostly pursued in the econometrics literature, attempts to generalize the classical result of Blackwell and Dubins \cite{Blackwell:62} on prediction in total variation. Realizing that this notion of prediction is too strong, the question of under which conditions one measure predicts another is considered for weaker notions of convergence of error, mostly in Bayesian settings, by  \cite{Kalai:94,Kalai:92,Jackson:99,Noguchi:15}  and others. 
For KL divergence some of these questions are addressed in  the works \cite{Ryabko:07pqisit,Ryabko:08pqaml}, which  make the basis of Chapter~\ref{ch:other}.

\section{Organization}
This book is organized as follows. The next chapter introduces the  notation and the main definitions, including the definitions of the two notions of loss used and the notion of regret. Chapter~\ref{ch:tv} is concerned with prediction in total variation. Here we shall see that the realizable and non-realizable case become equivalent, and provide a complete characterisation of those sets $\C$ for which a solution to the problem of prediction in either setting exists. Chapter~\ref{ch:kl} contains the main results of this volume: it is devoted to the problems of prediction with KL loss. The main results show that, in the realizable case, the best performance can be matched by a mixture predictor, providing both upper and lower bounds on the difference between the finite-time loss of the best mixture predictor and an arbitrary comparison predictor. It is also established  in this chapter (Section~\ref{s:not}) that,  in the non-realizable case, the opposite is true: it is possible that a vanishing regret is achievable but it cannot be achieved by any Bayesian predictor (including mixture predictors). Various examples of different sets of processes are considered.  In addition, in section \ref{s:pq3+} some sufficient conditions on a class $\C$ for the asymptotic KL loss to vanish are considered; these are inspired by the characterisations obtained for predictability in total variation, although for KL loss only sufficient conditions (and not necessary and sufficient) are presented. In the same vein, some sufficient conditions for the regret to vanish are presented in Section~\ref{s:sufreg}.  In particular, it is shown that vanishing regret can be obtained for the set of all finite-memory processes, but not for stationary processes.
Chapter~\ref{s:dt} presents decision-theoretic interpretations of the asymptotic results obtained. Specifically, connections to minimax and complete-class theorems of decision theory are made. In Chapter~\ref{ch:p2} we consider a prediction problem that lies in-between the realizable and the non-realizable case, which is called middle-case here for want of a better term. This problem is considered in finding a predictor that predicts all measures that are predicted well (that is, with a vanishing loss) by at least one measure in a given set $\C$. It is shown that, when this can be done, it can be done using a mixture predictor. The latter result is established for KL loss, whereas, unsurprisingly, for total variation the equivalence of the realizable and non-realizable case can be extended to the middle case as well (i.e, all the three problems coincide). Finally, Chapter~\ref{ch:other} considers some related topics in sequential prediction, namely, the question of under which conditions one measures is a good predictor for another, and how this property is affected by taking mixtures of measures. In this last chapter we also venture beyond the two notions of loss considered in the rest of the book.
\chapter{Notation and definitions}\label{s:pqpri}
 For a  finite set $A$ denote $|A|$ its cardinality; $\N$ stands the set of naturals without 0. 
Let $\X$ be a finite set and let \begin{equation}\label{eq:M}
 M:=\log |\X|.
\end{equation}
 The notation $x_{1..n}$ is used for $x_1,\dots,x_n$. 
 We consider  stochastic processes (probability measures) on $\Omega:=(\X^\infty,\mathcal F)$ where $\mathcal F$
is the sigma-field generated by the the (countable) set $(B_i)_{i\in\N}$ of cylinders, $B_i:=\{{\bf x}\in\X^\infty: x_{1..|b_i|}=b_i\}$ where the words $b_i$ take all possible values in $\X^*:=\cup_{k\in\N}\X^k$.
We use  $\E_\mu$ for
expectation with respect to a measure $\mu$.

\begin{definition}[Classes of processes: all, discrete, Markov, stationary] Consider the following classes of process measures: $\mathcal P$ is the set of all probability measures, 
 $\mathcal D$  is the set of all  discrete process measures (Dirac $\delta$ processes), $\mathcal S$ is the set of all stationary processes and
$\mathcal M_k$ is the set of all stationary measures with memory not greater than $k$ ($k$-order Markov processes, with $\mathcal M_0$ being
the set of all i.i.d.\ processes):
\begin{equation}
 \mathcal P:=\text{ the set of all probability  measures on }(\X^\infty,\mathcal F),
\end{equation}
\begin{equation}
\mathcal D:=\left\{\mu\in\mathcal P: \exists  x\in\X^\infty \ \ \mu(x)=1\right\},
\end{equation}
\begin{equation}
\mathcal S:=\left\{\mu\in\mathcal P: \forall n,k\ge1\,\forall a_{1..n}\in\X^n\, \mu(x_{1..n}=a_{1..n})=\mu(x_{1+k..n+k}=a_{1..n})\right\}.
\end{equation}
\begin{multline}\label{eq:mk}
\mathcal M_k:=\left\{\mu\in\mathcal S:\forall n\ge k\,\forall a\in\X\, \forall a_{1..n}\in\X^n \right. \\ \left. \mu(x_{n+1}=a|x_{1..n}=a_{1..n})=\mu(x_{k+1}=a|x_{1..k}=a_{n-k+1..n})\right\}.
\end{multline}
\end{definition}
Abusing the notation, we will sometimes use  elements of $\mathcal D$ and $\X^\infty$ interchangeably.
The following (simple and well-known)  statement   will be used repeatedly in the examples.
\begin{lemma}[It is impossible to predict every process]\label{th:disc}
 For every $\rho\in\mathcal P$ there exists $\mu\in\mathcal D$ such that $L_n(\mu,\rho)\ge n\log|\X|$ for all $n\in\N$.
\end{lemma}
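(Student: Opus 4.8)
The plan is to construct, greedily and deterministically, a ``worst-case'' sequence $x\in\X^\infty$ against which $\rho$ is forced to assign as little likelihood as possible, and then take $\mu$ to be the Dirac measure concentrated on that sequence. This is exactly the adversarial argument sketched informally in the introduction, made precise.

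First I would build $x=x_1x_2\cdots$ by induction on $t$, maintaining the invariant $\rho(x_{1..t})\le|\X|^{-t}$; in particular $\rho(x_{1..t})>0$, which guarantees that the conditional probabilities $\rho(\cdot\mid x_{1..t})$ are well-defined at the next step. For $t=0$ the invariant is the convention $\rho(x_{1..0})=1$. Given $x_{1..t-1}$ with $\rho(x_{1..t-1})>0$, the numbers $\rho(x_t=a\mid x_{1..t-1})$, $a\in\X$, are nonnegative and sum to $1$, so at least one $a\in\X$ satisfies $\rho(x_t=a\mid x_{1..t-1})\le1/|\X|$; choosing $x_t$ to be any such $a$ gives $\rho(x_{1..t})=\rho(x_{1..t-1})\,\rho(x_t\mid x_{1..t-1})\le|\X|^{-(t-1)}\cdot|\X|^{-1}=|\X|^{-t}$, closing the induction.

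Next, let $\mu:=\delta_x\in\mathcal D$ be the process measure with $\mu(x)=1$. Under $\mu$ the sequence is deterministic, so the per-step KL term at time $t$ collapses: $\mu(x_t=a\mid x_{1..t-1})$ equals $1$ for $a=x_t$ and $0$ otherwise, whence $\sum_{a\in\X}\mu(x_t=a\mid x_{1..t-1})\log\frac{\mu(x_t=a\mid x_{1..t-1})}{\rho(x_t=a\mid x_{1..t-1})}=-\log\rho(x_t\mid x_{1..t-1})$. Summing over $t=1,\dots,n$ (the outer $\E_\mu$ being trivial) and telescoping, $L_n(\mu,\rho)=\sum_{t=1}^n-\log\rho(x_t\mid x_{1..t-1})=-\log\rho(x_{1..n})\ge-\log|\X|^{-n}=n\log|\X|=nM$, which is the claim.

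I do not expect any genuine obstacle here: the only points needing care are purely bookkeeping, namely keeping the conditionals well-defined (handled by the $\rho(x_{1..t})>0$ half of the invariant) and matching the chosen definition of $L_n$ to the telescoping identity $\sum_{t=1}^n-\log\rho(x_t\mid x_{1..t-1})=-\log\rho(x_{1..n})$. If one prefers to write the loss as a sum of divergences $\mathrm{KL}(\mu(\cdot\mid x_{1..t-1})\,\|\,\rho(\cdot\mid x_{1..t-1}))$, the same computation applies verbatim, since a Dirac conditional makes each such divergence equal to $-\log\rho(x_t\mid x_{1..t-1})$.
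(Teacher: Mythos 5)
Your proof is correct and follows essentially the same route as the paper's: greedily pick at each step a symbol whose $\rho$-conditional probability is at most $1/|\X|$ (the paper takes the minimizer), so that $\rho(x_{1..n})\le|\X|^{-n}$, and then note that for the Dirac measure on the resulting sequence $L_n(\mu,\rho)=-\log\rho(x_{1..n})\ge n\log|\X|$. Your write-up merely makes explicit the bookkeeping (the invariant keeping conditionals well-defined and the telescoping identity) that the paper leaves implicit.
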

\begin{proof}
 Indeed, for each $n$ we can select $\mu(x_n=a)=1$ for $a\in\X$ that minimizes $\rho(x_n=a|x_{1..n-1})$, so that $\rho(x_{1..n})\le|\X|^{-n}$.
\qed\end{proof}

\begin{definition}[Mixture predictors]\label{def:mixp}
 A probability measure $\rho$ is a (discrete) {\em mixture predictor}, also known as a {\em discrete Bayesian predictor} with a prior over $\C$ if $\rho=\sum_{k\in\N}w_k\mu_k$, for some measures $\mu_k\in\C$, $k\in\N$, where  $w_k$ are positive real weights that sum to 1. The latter weights can be seen as a prior distribution over (a subset of) $\C$.  
\end{definition}

In most of this book we are only dealing with discrete Bayesian predictors, but the more general case of possibly non-discrete prior distributions is necessary for some impossibility results (specifically, in  Section~\ref{s:not}). For this we need a  structure of the probability space on $\P$.
Here we shall define it in a standard way, following~\cite{Gray:88}. Consider the countable set of  sets $\{\nu\in\mathcal P: \nu(B_i)\in u\}$, where $u\subset[0,1]$, obtained by taking  all the cylinders $B_i,i\in\N$ and all intervals $u$ with rational endpoints. This set generates a sigma-algebra $\mathcal F'$. 
Denoting $\P'$ the set of all  probability measures over $\P$ we obtain the measurable space $(\P',\mathcal F')$.

Associated with any probability measure $W\in\P'$ there is a probability measure $\rho_W\in\P$  defined by $\rho_W=\int_P\alpha d W(\alpha)$ ({\em barycentre}, in the terminology of \cite{Gray:88}; see the latter work  for a detailed exposition). A measure $\rho\in\P$ is {\em Bayesian} for a set $\C\subset\P$ if 
$\rho=\rho_W$ for some $W\in\P'$ such that there exists a measurable set $\bar\C\subset\C$ with $W(\bar \C)=1$. 
(The subset $\bar\C$ is necessary since $\C$ may not be measurable.) Note that the distribution $W$ can be seen as a prior over $\C$, 
and $\rho_W$  as a predictor whose predictions are obtained by evaluating the posterior distribution $\rho_W(\cdot|x_1,\dots,x_n)$ where $x_1,\dots,x_n$ are observations 
up to time $n$.
\section{Loss}
For two measures  $\mu$ and $\rho$  we are interested in how different 
 the $\mu$- and $\rho$-conditional probabilities are, given a data sample $x_{1..n}$. This can be measured in various ways, of which we consider two. Some other losses that are only used in the last chapter are defined there.
Most of the results of this book are obtained with respect to the so-called KL divergence or expected log-loss. The choice is motivated by its ease of use as well as by its importance in applications. The other loss that we use is the total variation distance. The latter one is far too strong for most applications, but it is instructive to consider it because of the connection of the prediction problem to that of dominance of measures, as explained below. 
\subsection{KL divergence}\label{s:kl}
For two probability measures $\mu$ and $\rho$ introduce the {\em expected cumulative Kullback-Leibler divergence (KL divergence)} as
\begin{multline}\label{eq:kl} 
  L_n(\mu,\rho) :=  \E_\mu
  \sum_{t=1}^n  \sum_{a\in\X} \mu(x_{t}=a|x_{1..t-1}) \log \frac{\mu(x_{t}=a|x_{1..t-1})}{\rho(x_{t}=a|x_{1..t-1})}
 \\=
  \sum_{x_{1..n}\in\X^n}\mu(x_{1..n}) \log \frac{\mu(x_{1..n})}{\rho(x_{1..n})}.
\end{multline}
In words, we take the $\mu$-expected (over data) cumulative (over time) KL divergence between $\mu$- and $\rho$-conditional (on the past data) 
probability distributions of the next outcome; and this gives, by the chain rule for entropy used in the last equality, simply the $\mu$-expected log-ratio of the likelihoods. Here $\mu$ will be interpreted as the distribution generating the data.

Jumping ahead, the last equality in~\eqref{eq:kl}, will be used repeatedly in the proofs of various results presented, using the following simple but very useful trick (in the context of prediction introduced perhaps by \cite{BRyabko:88}). Let $\rho$ be a mixture predictor given by~\eqref{eq:mixdef}, and let $\nu$ be an arbitrary measure from $\mathcal P$. 
Then
\begin{multline}\label{eq:tric}
  L_n(\nu,\rho)=\E_{\nu}\log\frac{\log\nu(x_{1..n})}{\log\rho(x_{1..n})}
= \E_{\nu}\log\frac{\log\nu(x_{1..n})}{\log\sum_{k\in\N}w_k\mu_k(x_{1..n})} 
\\\le \E_{\nu}\log\frac{\log\mu_k(x_{1..n})}{w_k\log\mu_q(x_{1..n})} 
= L_n(\nu,\mu_k) -\log w_k. 
\end{multline}
In words, the loss of the mixture, on the data generated by an {\em arbitrary measure} is upper-bounded by the loss of any of the components of the mixture plus a constant, where the constant is $-\log w_k$, $w_k$ being the a priori weight of the corresponding  component $\mu_k$ in the mixture. 
This trick, while very simple, may take some time to assimilate: it is somewhat confusing that the constant on the right-hand side that bounds the difference between the cumulative prediction error of the mixture and the component does not depend on the measure $\nu$ that generates the data, and only depends on the a priori weight. Put differently, the a posteriori weights, cf.~\eqref{eq:bayes}, play no role in the bound. Moreover, the contribution of the a priori weight is negligible, $O(1)$, with respect to the cumulative time $n$. 

The asymptotic  average expected loss is then defined as
$$
\bar L(\nu,\rho):=\limsup_{n\to\infty} {1\over n} L_n(\nu,\rho),
$$
where the upper limit is chosen so as to reflect the worst performance over time.
 One can define the worst-case performance  of a strategy $\rho$ by
$$
\bd(\C,\rho):=\sup_{\mu\in\C}\bd(\mu,\rho)
$$ 
 and the minimax value by %
\begin{equation}\label{eq:vc}
  V_\C:=\inf_{\rho\in\P} \bd(\C,\rho). %
\end{equation}
Some examples of calculating the latter value for different sets $\C$ are considered in Section~\ref{s:vcpos}; the most common case in the literature, however, is the case of $V_\C=0$; this is the case, for example, if $\C$ is the set of all i.i.d.\ or all stationary distributions, see Section~\ref{s:exvc0}.

\subsection{Total variation}
Introduce the (unconditional) total variation distance between process measures.
\begin{definition}[unconditional total variation distance]
 The  (unconditional) total variation distance is defined as 
$$
v(\mu,\rho):= \sup_{A\in\mathcal B} |\mu(A)-\rho(A)|.
$$
Furthermore, the  {\em (conditional) total variation} distance is defined similarly as
$$
v(\mu,\rho,x_{1..n}):= \sup_{A\in\mathcal B} |\mu(A|x_{1..n})-\rho(A|x_{1..n})|,
$$
if $\mu(x_{1..n})\ne0$ and $\rho(x_{1..n})\ne0$, and $v(\mu,\rho,x_{1..n})=1$ otherwise.
\end{definition}

\begin{definition}\label{def:tvloss} Introduce the  almost sure total variation loss of $\rho$ with respect to~$\mu$
$$
l_{tv}(\mu,\rho):=\inf\{\alpha\in[0,1]: \limsup_{n\to\infty} v(\mu,\rho,x_{1..n})\le\alpha\ \mu\text{--a.s.}\},
$$
\end{definition}

\begin{definition}\label{d:ctv}
We say that $\rho$ predicts $\mu$ in total variation if 
$$l_{tv}(\mu,\rho)=0,$$
or, equivalently, if 
$$
v(\mu,\rho,x_{1..n})\to0\ \mu\-as
$$
\end{definition}
This convergence is rather strong. In particular, it means that $\rho$-conditional probabilities
of arbitrary far-off events converge to $\mu$-conditional probabilities. 
Moreover,  $\rho$ predicts $\mu$ in
total variation if \cite{Blackwell:62} and only if \cite{Kalai:94} $\mu$ is absolutely continuous with respect to $\rho$.
\begin{definition}[absolute continuity; dominance]\label{d:ac}
 A measure $\mu$ is {\em absolutely continuous} (or is {\em dominated by}) with respect to a measure $\rho$ if the implication $\mu(A)>0\Rightarrow\rho(A)$ holds for every $A\in\mathcal F$.
\end{definition}

\begin{theorem}[\cite{Blackwell:62, Kalai:94}]\label{th:bd}
If  $\rho$, $\mu$ are arbitrary probability measures on $(\X^\infty,\mathcal F)$, then $\rho$ predicts $\mu$ in total variation if and only if  $\mu$ is absolutely continuous with respect to $\rho$.
\end{theorem}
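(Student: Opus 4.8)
The plan is to prove the two implications separately; both rest on Doob's martingale convergence theorem applied to the likelihood‑ratio process, but only one direction is genuinely delicate.

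For ``$\mu$ absolutely continuous with respect to $\rho$ $\Rightarrow$ $\rho$ predicts $\mu$ in total variation'' (the Blackwell--Dubins direction) I would let $f:=d\mu/d\rho$, let $\mathcal F_n$ be the $\sigma$-algebra generated by the first $n$ coordinates, and set $f_n:=\mu(x_{1..n})/\rho(x_{1..n})$ on the cylinder $[x_{1..n}]$; one checks that $f_n=\E_\rho[f\mid\mathcal F_n]$, so $(f_n)$ is a uniformly integrable $\rho$-martingale closed by $f$, and hence $f_n\to f$ both $\rho$- and (since $\mu\ll\rho$) $\mu$-almost surely, and in $L^1(\rho)$. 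The key step would be the identity, valid for every measurable $A$ and $\mu$-a.e.\ $x_{1..n}$,
$$
\mu(A\mid x_{1..n})-\rho(A\mid x_{1..n})=\frac{\E_\rho[(f-f_n)\mathbf 1_A\mid\mathcal F_n]}{f_n},
$$
obtained by expressing both conditional probabilities through $f$ and cylinder probabilities and using that $f_n$ is $\mathcal F_n$-measurable. Taking absolute values and the supremum over $A$ then gives
$$
v(\mu,\rho,x_{1..n})\le\frac{\E_\rho[\,|f-f_n|\mid\mathcal F_n]}{f_n}.
$$
Since $f>0$ $\mu$-a.s.\ and $f_n\to f$, what remains is to show $\E_\rho[\,|f-f_n|\mid\mathcal F_n]\to0$ $\rho$-a.s. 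This is the main obstacle: it is \emph{not} the ordinary martingale convergence theorem, because the integrand moves with $n$. I would reduce it, via $\E_\rho[f-f_n\mid\mathcal F_n]=0$, to $\E_\rho[(f-f_n)^+\mid\mathcal F_n]\to0$, where now $0\le(f-f_n)^+\le f\in L^1(\rho)$ and $(f-f_n)^+\to0$, and invoke Hunt's lemma on conditional expectations along $(\mathcal F_n)$; alternatively, the elementary substitute: for $n\ge N$ bound $(f-f_n)^+\le G_N:=\sup_{m\ge N}(f-f_m)^+$, let $n\to\infty$ using $\E_\rho[G_N\mid\mathcal F_n]\to G_N$ (upward martingale convergence, $G_N$ being $\mathcal F$-measurable and integrable), then let $N\to\infty$ using $G_N\downarrow0$. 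This yields $v(\mu,\rho,x_{1..n})\to0$ $\mu$-a.s., i.e.\ $l_{tv}(\mu,\rho)=0$.

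For the converse I would argue by contraposition, and here everything is soft. Suppose $\mu$ is not dominated by $\rho$: pick $D\in\mathcal F$ with $\rho(D)=0<\mu(D)$. L\'evy's zero--one law under $\mu$ gives $\mu(D\mid\mathcal F_n)\to\mathbf 1_D$ $\mu$-a.s. For $\mu$-a.e.\ $\omega\in D$ every prefix $\omega_{1..n}$ has positive $\mu$-probability, so $\mu(D\mid\omega_{1..n})$ is defined and tends to $1$; meanwhile $\rho(D\mid\omega_{1..n})=0$ whenever $\rho(\omega_{1..n})>0$ (because $\rho(D)=0$), while $v(\mu,\rho,\omega_{1..n})=1$ by convention when $\rho(\omega_{1..n})=0$ — and once the latter occurs it persists for all larger $n$, the cylinders being nested. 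Testing the supremum defining $v$ against the single set $A=D$ therefore gives $\limsup_n v(\mu,\rho,\omega_{1..n})=1$ on the $\mu$-positive set $D$, so $l_{tv}(\mu,\rho)\ge1$ and $\rho$ does not predict $\mu$ in total variation. Combining the two parts proves the equivalence; the conditional-convergence estimate in the first part is the only real difficulty, the remainder being bookkeeping with Radon--Nikodym densities and cylinder probabilities.
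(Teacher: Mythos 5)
Your proof is correct. Note, however, that the book does not prove Theorem~\ref{th:bd} at all: it is imported from the literature, with the ``if'' direction attributed to Blackwell and Dubins and the ``only if'' direction to Kalai and Lehrer, so there is no in-text argument to compare yours against. Your write-up is a sound self-contained proof along the standard modern lines. The identity $\mu(A\mid x_{1..n})-\rho(A\mid x_{1..n})=\E_\rho[(f-f_n)\mathbf 1_A\mid\mathcal F_n]/f_n$ checks out (using that $f_n$ is $\mathcal F_n$-measurable and $\mu$-a.s.\ positive), and you correctly identify the one genuinely delicate point: the almost-sure convergence $\E_\rho[\,|f-f_n|\mid\mathcal F_n]\to0$ is not L\'evy's upward theorem because the integrand varies with $n$. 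Your reduction to $(f-f_n)^+$, which is dominated by $f\in L^1(\rho)$, followed by the envelope argument with $G_N=\sup_{m\ge N}(f-f_m)^+$ (i.e.\ Hunt's lemma), closes that gap; and since $\mu\ll\rho$, all the $\rho$-a.s.\ statements transfer to $\mu$-a.s.\ ones as needed. The converse via L\'evy's zero--one law applied to a $\rho$-null, $\mu$-positive set $D$, together with the paper's convention $v(\mu,\rho,x_{1..n})=1$ on $\rho$-null cylinders, is also correct and matches the standard argument.
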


Denote $\getv$ the relation of absolute continuity:  $\rho\getv\mu$ if $\mu$ is absolutely continuous with respect to $\rho$.

Thus, for a class $\C$ of measures there is a predictor $\rho$ that predicts every $\mu\in\C$ in total
variation if and only if every $\mu\in\C$ has a density with respect to $\rho$.
Although such  sets of processes are rather large, they do not include even such basic 
examples as the set of all Bernoulli i.i.d.\ processes.
That is, there is no $\rho$ that would predict in total variation every Bernoulli i.i.d.\ process measure $\delta_p$, $p\in[0,1]$,
where $p$ is the probability of $0$. 
Indeed, all these processes $\delta_p$, $p\in[0,1]$, are singular with respect to one another; in particular, 
each of the non-overlapping sets $T_p$ of all sequences which have limiting fraction $p$ of 0s has probability 1 with respect to one of the measures and 
0 with respect to all others; since there are uncountably many of these measures, there is no measure $\rho$ with respect to which they all would have a density 
(since such a measure should have $\rho(T_p)>0$ for all $p$).

\section{Regret}
While the notions introduced above suit well the realizable version of the sequence prediction problem, that is, assuming that the measure generating the observed sequence belongs to the set $\C$, for the non-realizable case we switch the roles and consider a set $\C$ as the set of comparison predictors or experts.  
The measure generating the data can now be arbitrary. %
The goal now is to find a strategy that is as good as the best one in $\C$, for the given data. 

Thus, in terms of KL divergence, we are interested in the (asymptotic) {\em regret} %
 $$
 \bar R^\nu(\mu,\rho):=\limsup_{n\to\infty}{1\over n}\left[L_n(\nu,\rho) - L_n(\nu,\mu)\right],
 $$
 of using $\rho$ as opposed to $\mu$  on the data generated by $\nu$. %
The goal is to find $\rho$ that minimizes the worst-case (over measures generating the data) regret with respect to the best comparison predictor (expert) from the given set $\C$:
 $$R(\C,\rho):=\sup_{\nu\in\P}\sup_{\mu\in\C}\bar R^\nu(\mu,\rho).$$

Note than in the expert-advice literature the regret is typically defined on finite sequences of length $n$, thus allowing both the experts and the algorithms to depend on $n$ explicitly. 

Similarly to $V_\C$, we can now define the value 
$$
U_\C:=\inf_{\rho\in\mathcal{P}}  R(\C,\rho),
$$
which is the best (over predictors) worst-case asymptotic average regret with respect to the set of experts~$\C$.

In view of the (negative) result that is obtained for regret minimization, here we are mostly concerned with  the case $U_\C=0$.

For {\bf total variation}, we can similarly define the regret of using $\rho$ as opposed to $\mu$  on data generated by $\nu$
$$
r_{tv}^\nu(\mu,\rho):=l_{tv}(\nu,\rho)-l_{tv}(\nu,\mu),
$$
and 
 the worst-case (over measures generating the  data) regret with respect to the best comparison predictor (expert) from the given set $\C$:
$$
r_{tv}(\C,\rho):=\sup_{\nu\in\P}\sup_{\mu\in\C}r_{tv}^\nu(\mu,\rho).
$$
We shall only make a very limited use of these notions, however, since, as is shown in the next chapter, the total variation loss converges to either 0 or 1.

\chapter{Prediction in total variation: characterizations}\label{ch:tv}
Studying prediction in total variation hinges on the fact (formalized above as Theorem~\ref{th:bd}) 
that  a measure $\mu$ is absolutely continuous with respect to a measure $\rho$
if and only if $\rho$ predicts $\mu$ in total variation distance. 
This is indeed of great help; as a result, for prediction in total variation it is possible to provide complete characterizations of those 
sets of measures $\C$ for which predictors exist (realizable case). The non-realizable case turns out to be somewhat degenerate as the asymptotic error in total variation is either 0 or 1. These and related results are exhibited in this chapter.

\section{Optimality of mixture predictors}
We start with the realizable case and show that if a solution to the sequence prediction problem  can be found for a set $\C$ then it can be found in the form of a mixture predictor.
\begin{theorem}[Optimality of mixture predictors for the realizable case, total variation]\label{th:pq3-1} Let $\C$ be a set of probability measures on $(\X^\infty, \mathcal F)$. If there is a measure $\rho$ such that $\rho$ predicts every $\mu\in\C$ in 
total variation, then there is a sequence $\mu_k\in\C$, $k\in\N$ such that the measure $\nu:=\sum_{k\in\N} w_k\mu_k$ predicts every $\mu\in\C$ in 
total variation, where $w_k$ are any positive weights  that sum to~1.
\end{theorem}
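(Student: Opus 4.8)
The plan is to pass to absolute continuity via Theorem~\ref{th:bd} and then run an exhaustion (maximal-support) argument over countable subfamilies of $\C$. By Theorem~\ref{th:bd}, the hypothesis ``$\rho$ predicts every $\mu\in\C$ in total variation'' is equivalent to $\mu\ll\rho$ for all $\mu\in\C$, and the desired conclusion ``$\nu$ predicts every $\mu\in\C$ in total variation'' is equivalent to $\mu\ll\nu$ for all $\mu\in\C$. So it suffices to find a countable family $\mu_k\in\C$, $k\in\N$, with $\mu\ll\sum_{k\in\N} w_k\mu_k$ for every $\mu\in\C$; and since all $w_k>0$, the measure $\nu:=\sum_{k\in\N} w_k\mu_k$ satisfies $\nu(A)=0$ iff $\mu_k(A)=0$ for every $k$.

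Since $\mu\ll\rho$ for each $\mu\in\C$, fix a version of the density $f_\mu:=d\mu/d\rho$ (these exist as $\rho$ is a probability measure on $(\X^\infty,\mathcal F)$) and write $\supp_\rho\mu:=\{f_\mu>0\}$, a set determined up to a $\rho$-null set. The key quantity is
$$
s:=\sup\Big\{\,\rho\Big(\textstyle\bigcup_{k\in\N}\supp_\rho\mu_k\Big)\ :\ \mu_k\in\C,\ k\in\N\,\Big\},
$$
the supremum being taken over all countable subfamilies of $\C$. First I would show this supremum is attained: choosing countable families $(\mu^{(n)}_k)_{k\in\N}$ with $\rho\big(\bigcup_k\supp_\rho\mu^{(n)}_k\big)\to s$ and taking the union over all $n$ (still countable), re-indexed as $(\mu_k)_{k\in\N}$, monotonicity of $\rho$ gives $\rho\big(\bigcup_k\supp_\rho\mu_k\big)=s$. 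Let $\nu:=\sum_{k\in\N}w_k\mu_k$ for this family.

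It remains to verify $\mu\ll\nu$ for every $\mu\in\C$. Suppose $\nu(A)=0$; then $\mu_k(A)=0$, i.e. $\rho(A\cap\supp_\rho\mu_k)=0$, for every $k$, hence $\rho\big(A\cap\bigcup_k\supp_\rho\mu_k\big)=0$. If some $\mu\in\C$ had $\mu(A)>0$, then $\int_A f_\mu\,d\rho>0$, so $\rho(A\cap\supp_\rho\mu)>0$; combined with the previous sentence, $\rho\big((A\cap\supp_\rho\mu)\setminus\bigcup_k\supp_\rho\mu_k\big)=\rho(A\cap\supp_\rho\mu)>0$. But then the countable family $(\mu_k)_{k\in\N}\cup\{\mu\}$ satisfies $\rho\big(\supp_\rho\mu\cup\bigcup_k\supp_\rho\mu_k\big)\ge s+\rho(A\cap\supp_\rho\mu)>s$, contradicting the definition of $s$. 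Hence $\mu(A)=0$, so $\mu\ll\nu$, and by Theorem~\ref{th:bd} $\nu$ predicts $\mu$ in total variation. The only real content is the exhaustion argument together with attainment of the supremum; everything else is bookkeeping with Radon--Nikodym densities. The main point to be careful about is that $\supp_\rho\mu$ is defined only up to $\rho$-null sets — but since every assertion above concerns $\rho$-measures of such sets, this causes no difficulty.
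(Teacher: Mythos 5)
Your proof is correct and follows essentially the same route as the paper's: both reduce the statement to absolute continuity via Theorem~\ref{th:bd} and then run an exhaustion argument over the $\rho$-supports $\{f_\mu>0\}$ of the Radon--Nikodym densities, concluding with the same verification that $\mu(A)>0$ forces $\nu(A)>0$. The only difference is organizational: the paper builds the countable family greedily, at each stage picking a measure that nearly maximizes the additional $\rho$-mass of support covered (a construction deliberately chosen so that it generalizes to the KL-divergence case), whereas you take a countable family attaining the supremum of the total covered $\rho$-mass in one step; both devices deliver the same maximality contradiction.
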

This relatively simple fact can be proven in different ways relying on the mentioned equivalence with absolute continuity. %
The proof presented below is not the shortest possible, but it uses ideas and techniques that are then generalized to 
the case of  prediction in expected average KL-divergence, which is more involved, since in all interesting cases 
all measures $\mu\in\C$ are singular with respect to any predictor that predicts all of them. 
Another proof of Theorem~\ref{th:pq3-1} can be obtained from Theorem~\ref{th:sep1} below. Yet another way would
be to derive it from algebraic properties of the relation of absolute continuity, given in \cite{Plesner:46}.
\begin{proof}
We break the (relatively easy) proof of this theorem into three steps, which will help make the 
proof of the corresponding  theorems about KL divergence more understandable.

\noindent{\em Step 1: densities.}
 For any $\mu\in\C$, since $\rho$ predicts $\mu$ in total variation, by Theorem~\ref{th:bd}, $\mu$ has a density (Radon-Nikodym derivative) $f_\mu$ with respect 
to $\rho$. Thus, for the (measurable) set $T_\mu$ of all sequences $x_1,x_2,...\in\X^\infty$ on which $f_\mu(x_{1,2,\dots})>0$ 
(the limit $\lim_{n\rightarrow\infty}\frac {\rho(x_{1..n})}{\mu(x_{1..n})}$ 
exists and is finite and positive) we have $\mu(T_\mu)=1$ and $\rho(T_\mu)>0$. Next we will construct a sequence of measures $\mu_k\in\C$, $k\in\N$ such that 
the union of the sets $T_{\mu_k}$ has probability 1 with respect to every $\mu\in\C$, and will show that this is a sequence of measures whose existence is asserted in the theorem statement.

{\em Step 2: a countable cover and the resulting predictor.}
Let $\epsilon_k:=2^{-k}$ and let $m_1:=\sup_{\mu\in\C}\rho(T_\mu)$. Clearly, $m_1>0$. Find any $\mu_1\in\C$ such that $\rho(T_{\mu_1})\ge m_1-\epsilon_1$, and let
$T_1=T_{\mu_1}$. For $k>1$ define $m_k:=\sup_{\mu\in\C}\rho(T_\mu\backslash T_{k-1})$. If $m_k=0$ then define $T_{k}:=T_{k-1}$, otherwise find any $\mu_k$ such 
that $\rho(T_{\mu_k}\backslash T_{k-1})\ge m_k-\epsilon_k$, and let $T_k:=T_{k-1}\cup T_{\mu_k}$. 
Define the predictor $\nu$ as $\nu:=\sum_{k\in\N}w_k\mu_k$.

{\em Step 3: $\nu$ predicts every $\mu\in\C$.}
Since the sets 
$T_1$, $T_2\backslash T_1,\dots, T_k\backslash T_{k-1},\dots$ are disjoint, 
we must have $\rho(T_{k}\backslash T_{k-1})\to0$, so that $m_k\to0$ (since $m_k\le\rho(T_{k}\backslash T_{k-1})+\epsilon_k\to0$).
Let 
$$
T:=\cup_{k\in\N} T_k.
$$ 
Fix any $\mu\in\C$. 
Suppose that  $\mu(T_{\mu} \backslash T)>0$. Since $\mu$ is absolutely continuous
with respect to $\rho$, we must have $\rho(T_{\mu}\backslash T)>0$. Then for every $k>1$ we have
$$m_k=\sup_{\mu'\in\C}\rho(T_{\mu'}\backslash T_{k-1})\ge  \rho(T_{\mu}\backslash T_{k-1})\ge\rho(T_{\mu}\backslash T)>0,$$ which contradicts 
$m_k\rightarrow0$. Thus, we have shown that 
\begin{equation}\label{eq:mt}
\mu(T\cap T_{\mu})=1.
\end{equation}

Let us show that every $\mu\in\C$ is absolutely continuous with respect to $\nu$.
Indeed, fix any $\mu\in\C$ and suppose $\mu(A)>0$ for some $A\in\mathcal F$.
Then from~(\ref{eq:mt}) we have $\mu(A\cap T)>0$, and, by absolute continuity of $\mu$ with respect to $\rho$,
also $\rho(A\cap T)>0$. Since $T=\cup_{k\in\N}T_{k}$, we must have $\rho(A\cap T_k)>0$ for some $k\in\N$.
Since on the set $T_k$ the measure $\mu_k$ has non-zero density $f_{\mu_k}$ with respect to $\rho$, we must have $\mu_k(A\cap T_k)>0$.
(Indeed, $\mu_k(A\cap T_k)=\int_{A\cap T_k}f_{\mu_k}d\rho>0$.)
Hence,  
$$
\nu(A\cap T_k)\ge w_k \mu_k(A\cap T_k)> 0,
$$ so that $\nu(A)>0$.
Thus, $\mu$ is absolutely continuous with respect to $\nu$, and so, by Theorem~\ref{th:bd}, $\nu$ predicts $\mu$ in total variation distance.
\qed\end{proof}

\section{Topological and algebraic characterization of predictability}
In this section we provide a complete characterisation (if-and-only-if conditions) of those sets $\C$ for which there exists a predictor  that attains an asymptotically vanishing total-variation error on every measure from $\C$. These conditions are of two kinds: topological (separability) and algebraic (in terms of the dominance of measures). In both characterisations,  a mixture predictor naturally comes up as a solution.

Before proceeding to characterisations,  let us briefly  recall some facts we know about $\getv$; details can be found, for example, in \cite{Plesner:46}. 
Let $[\P]_{tv}$ denote the set of equivalence classes of $\P$ with respect
to $\getv$, and for $\mu\in  \P_{tv}$ denote $[\mu]$ the equivalence class that contains $\mu$.
 Two elements $\sigma_1,\sigma_2\in[\P]_{tv}$ (or  $\sigma_1,\sigma_2\in\P$) are called disjoint (or singular) if there is no $\nu\in [\P]_{tv}$ such that 
$\sigma_1\getv\nu$ and $\sigma_2\getv\nu$; in this case we write $\sigma_1\perp_{tv}\sigma_2$. 
We write $[\mu_1] +[\mu_2]$ for  $[{1\over2}(\mu_1 + \mu_2)]$.
Every pair $\sigma_1,\sigma_2\in[\P]_{tv}$ has a supremum $\sup(\sigma_1,\sigma_2)=\sigma_1+\sigma_2$.
Introducing into $[\P]_{tv}$ an extra element $0$ such that $\sigma\getv0$ for all $\sigma\in[\P]_{tv}$, 
we can state that for every $\rho,\mu\in  [\P]_{tv}$ there exists a unique pair of elements $\mu_s$ and $\mu_a$
such that $\mu= \mu_a + \mu_s$, $\rho\ge\mu_a$ and $\rho\perp_{tv}\mu_s$. (This is a form of Lebesgue's decomposition.) Moreover, $\mu_a=\inf (\rho,\mu)$. 
Thus, every pair of elements has a supremum and an infimum. 
 Moreover, every bounded set of disjoint elements of $[\P]_{tv}$ is at most countable.

The following lemma, which is an  easy consequence of  \cite{Blackwell:62}, shows that for every pair of measures $\mu$ and $\rho$, if the prediction quality is measured in total variation then either $\rho$ predicts $\mu$ very well (the asymptotic loss is 0) or not at all (the asymptotic loss is 1). 
From this we shall see (Theorem~\ref{th:tv}) that the realizable and the non-realizable cases of the sequence prediction problem essentially collapse to one. 
\begin{lemma}[0-1 law for prediction in total variation]\label{th:01} Let $\mu, \rho$ be two process measures. Then $v(\mu,\rho,x_{1..n})$ converges
to either 0 or 1 with $\mu$-probability~1. 
\end{lemma}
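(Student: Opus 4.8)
The plan is to use the Blackwell–Dubins equivalence (Theorem~\ref{th:bd}) together with the Lebesgue decomposition of $\mu$ with respect to $\rho$ recalled just above. First I would write $\mu=\mu_a+\mu_s$, where $\mu_a$ is the part of $\mu$ that is absolutely continuous with respect to $\rho$ and $\mu_s\perp_{tv}\rho$ is the singular part; let $p:=\mu(S)$ where $S$ is a set carrying $\mu_s$ with $\rho(S)=0$, so that $\mu_a=(1-p)\mu(\cdot\mid S^c)$ and $\mu_s=p\,\mu(\cdot\mid S)$ (with the obvious degenerate cases $p=0$ or $p=1$ handled separately). The key is that on $S$ (a $\mu_s$-full, $\rho$-null set) we expect $v(\mu,\rho,x_{1..n})\to1$, while on $S^c$ (a $\mu_a$-full set) $\rho$ dominates $\mu_a$ and hence predicts it in total variation, so $v(\mu_a,\rho,x_{1..n})\to0$; one then has to pass from $\mu_a$ back to the conditional measure $\mu(\cdot\mid\,\cdot\,)$ that actually appears in the definition of $v(\mu,\rho,x_{1..n})$.

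The main steps, in order, would be: (1) Decompose $\mu$ as above and reduce to the two extreme cases by conditioning — on $S^c$ analyse the behaviour of $\mu_a$, on $S$ that of $\mu_s$. (2) For the absolutely continuous part: since $\rho\getv\mu_a$, Theorem~\ref{th:bd} gives $v(\mu_a,\rho,x_{1..n})\to0$ $\mu_a$-a.s.; because $\mu(\cdot\mid S^c)$ and $\mu_a$ are mutually absolutely continuous (they differ only by the normalising constant $1-p$), the same convergence holds $\mu(\cdot\mid S^c)$-a.s.\ as well. Moreover, one checks that the conditional distribution $\mu(\cdot\mid x_{1..n})$ and $\mu(\cdot\mid S^c, x_{1..n})$ become close: for $\mu$-almost every sequence in $S^c$, the posterior weight $\mu(S^c\mid x_{1..n})\to1$ by the martingale convergence theorem (the indicator $\mathbf 1_{S^c}$ is recovered in the limit), so $v(\mu,\mu(\cdot\mid S^c),x_{1..n})\to0$; combining by the triangle inequality for $v$ gives $v(\mu,\rho,x_{1..n})\to0$ on a $\mu$-full subset of $S^c$. (3) For the singular part: on $S$ we have $\rho(S)=0$, and by the martingale convergence theorem applied to the nonnegative supermartingale $\rho(x_{1..n})/\mu(x_{1..n})$ under $\mu$, this ratio converges $\mu$-a.s.; its limit must be $0$ on the $\rho$-null set $S$. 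From $\rho(x_{1..n})/\mu(x_{1..n})\to0$ one deduces, testing with the far-off event $A=\{x_{1..n}\ {\rm is\ the\ observed\ prefix}\}$ or more carefully with cylinders, that $v(\mu,\rho,x_{1..n})\to1$ on $S$. (4) Assemble: $v(\mu,\rho,x_{1..n})$ converges to $1$ on the $\mu$-set $S$ and to $0$ on the $\mu$-set $S^c$ (up to $\mu$-null exceptions), and $\mu(S)+\mu(S^c)=1$, which is exactly the claim.

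The step I expect to be the main obstacle is step (2) — transferring the total-variation convergence established for the \emph{component} $\mu_a$ (via Theorem~\ref{th:bd}) to the \emph{conditional} measures $\mu(\cdot\mid x_{1..n})$ that appear in $v(\mu,\rho,x_{1..n})$, and likewise making precise in step (3) the implication ``likelihood ratio $\to 0$ along the observed prefix'' $\Rightarrow$ ``conditional total variation $\to 1$''. Both hinge on a careful use of the Lévy zero–one law / martingale convergence theorem for the posterior of the event $S$, and on the fact that $v(\mu,\rho,x_{1..n})$ already equals $1$ by definition whenever $\rho(x_{1..n})=0$, which lets one restrict attention to prefixes of positive $\rho$-probability. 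Once the martingale bookkeeping is set up, the triangle inequality for $v(\cdot,\cdot,x_{1..n})$ does the rest; I would keep that bookkeeping as the technical heart of the argument and treat the two extreme cases $p\in\{0,1\}$ as immediate.
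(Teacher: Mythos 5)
Your overall strategy coincides with the paper's: Lebesgue-decompose $\mu$ with respect to $\rho$, prove convergence to $0$ on the carrier of the absolutely continuous part and to $1$ on the carrier of the singular part, and glue the two via the triangle inequality for $v$. Your step (2) is sound; where the paper simply applies Theorem~\ref{th:bd} once more to the pair $(\mu_a,\mu)$ (the component $\mu_a$ is trivially absolutely continuous with respect to $\mu$, so $\mu$ predicts $\mu_a$ in total variation, which is exactly the merging of $\mu(\cdot\mid x_{1..n})$ with $\mu_a(\cdot\mid x_{1..n})$ that you re-derive by hand via L\'evy's zero--one law), you do the bookkeeping explicitly; both routes work.

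The genuine problem is step (3): the witnesses you propose there do not witness anything. Conditional on $x_{1..n}$, the event ``$x_{1..n}$ is the observed prefix'' has probability $1$ under both $\mu(\cdot\mid x_{1..n})$ and $\rho(\cdot\mid x_{1..n})$, so it contributes $0$ to $v$; and for the cylinder set of a longer prefix $x_{1..m}$ both conditional probabilities typically tend to $0$ as $m\to\infty$, so such sets cannot force $v$ close to $1$ either. In short, $\rho(x_{1..n})/\mu(x_{1..n})\to0$ along prefixes does not by itself yield $v\to1$. The correct test set is $S$ itself (equivalently its complement): since $\rho(S)=0$, one has $\rho(S\mid x_{1..n})=0$ for every prefix with $\rho(x_{1..n})>0$, while $\mu(S\mid x_{1..n})\to\mathbf 1_S$ $\mu$-a.s.\ by the very L\'evy zero--one law you already invoke in step (2); hence $v(\mu,\rho,x_{1..n})\ge\mu(S\mid x_{1..n})\to1$ $\mu$-a.s.\ on $S$ (prefixes with $\rho(x_{1..n})=0$ give $v=1$ by definition). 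The paper's variant of this is to note that $v(\mu_s,\rho,x_{1..n})\ge|\mu_s(W\mid x_{1..n})-\rho(W\mid x_{1..n})|=1$ identically, where $W$ is the carrier of $\rho$, and then to transfer this to $\mu$ using $v(\mu,\mu_s,x_{1..n})\to0$ $\mu_s$-a.s., again by Theorem~\ref{th:bd}. With step (3) repaired in this way your proof goes through.
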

\begin{proof} Assume that $\mu$ is not absolutely continuous with respect to $\rho$ (the other
case is covered by Theorem~\ref{th:bd}).  By the Lebesgue decomposition theorem, the measure $\mu$ admits a representation $\mu=\alpha \mu_a + (1-\alpha)\mu_s$ where 
$\alpha\in[0,1)$ and the measures  $\mu_a$ and $\mu_s$ are such that $\mu_a$ is absolutely continuous with 
  respect to $\rho$ and $\mu_s$ is singular with respect to $\rho$. 
Let $W$ be such a set that $\mu_a(W)=\rho(W)=1$ and $\mu_s(W)=0$.
  Note that we can take $\mu_a=\mu|_{W}$ and $\mu_s=\mu|_{\X^\infty\backslash W}$.
From Theorem~\ref{th:bd} we have $v(\mu_a,\rho,x_{1..n})\to0$ $\mu_a$-a.s., as well as $v(\mu_a,\mu,x_{1..n})\to0$
$\mu_a$-a.s.\ and $v(\mu_s,\mu,x_{1..n})\to0$ $\mu_s$-a.s. Moreover, $v(\mu_s,\rho,x_{1..n})\ge |\mu_s(W|x_{1..n})-\rho(W|x_{1..n})|=1$
so that $\v(\mu_s,\rho,x_{1..n})\to1$ $\mu_s$-a.s. Furthermore,
$$
 v(\mu,\rho,x_{1..n})\le v(\mu,\mu_a,x_{1..n})+v(\mu_a,\rho,x_{1..n})=I
$$
and 
$$
 v(\mu,\rho,x_{1..n})\ge - v(\mu,\mu_s,x_{1..n})+v(\mu_s,\rho,x_{1..n})=II.
$$
We have $I\to0$ $\mu_a$-a.s.\ and hence $\mu|_W$-a.s., as well as  $II\to 1$ $\mu_s$-a.s.\ and hence $\mu|_{\X^\infty\backslash W}$-a.s.
Thus, $\mu(v(\mu,\rho,x_{1..n})\to 0\text{ or }1)\le \mu(W)\mu|_W(I\to0)+\mu(\X^\infty\backslash W)\mu|_{\X^\infty\backslash W}(II\to1)=\mu(W)+\mu(\X^\infty\backslash W)=1$,
which concludes the proof. 
\qed\end{proof}

The following theorem establishes a topological characterization of those sets $\C$ for which a predictor exists (realizable case).

\begin{theorem}\label{th:sep1}
 Let $\C$ be a set of probability measures on $(\X^\infty,\mathcal F)$. There is a measure $\rho$ such that $\rho$ predicts every $\mu\in\C$ in 
total variation if and only if $\mathcal C$ is separable with respect to the topology of total variation distance.
In this case, any measure $\nu$ of the form $\nu=\sum_{k=1}^\infty w_k\mu_k$, where $\{\mu_k: k\in\N\}$ is any dense countable subset of $\C$ and $w_k$ are any positive
weights that sum to 1,  predicts every $\mu\in\C$ in total variation.
\end{theorem}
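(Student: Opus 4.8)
The plan is to prove the two directions of the equivalence separately, and to note that the explicit mixture appearing in the last sentence of the statement is produced along the way, in the ``separable $\Rightarrow$ predictable'' direction; so there is really nothing extra to do for that concluding claim.

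\emph{From separability to a mixture predictor.} Suppose $\C$ is separable in the topology of total variation. I would fix a countable dense subset $\{\mu_k:k\in\N\}$ of $\C$, positive weights $w_k$ summing to $1$, and set $\nu:=\sum_{k\in\N}w_k\mu_k$. By Theorem~\ref{th:bd} it suffices to show that every $\mu\in\C$ is absolutely continuous with respect to $\nu$. Given $\mu\in\C$ and a measurable set $A$ with $\mu(A)=\delta>0$, pick $k$ with $v(\mu,\mu_k)<\delta$; since $v(\mu,\mu_k)\ge|\mu(A)-\mu_k(A)|$, this forces $\mu_k(A)>0$, and hence $\nu(A)\ge w_k\mu_k(A)>0$. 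Thus $\mu$ is dominated by $\nu$, so Theorem~\ref{th:bd} gives that $\nu$ predicts $\mu$ in total variation. This settles the ``if'' part of the characterization together with the statement about the form of $\nu$.

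\emph{From a predictor to separability.} Suppose $\rho$ predicts every $\mu\in\C$ in total variation. By Theorem~\ref{th:bd} each $\mu\in\C$ has a Radon-Nikodym derivative $f_\mu=d\mu/d\rho\in L^1(\rho)$. A routine computation (the supremum defining $v$ is attained on $\{f_\mu>f_{\mu'}\}$, with both densities taken with respect to $\rho$) shows $v(\mu,\mu')=\tfrac12\|f_\mu-f_{\mu'}\|_{L^1(\rho)}$ for $\mu,\mu'\in\C$, so $\mu\mapsto f_\mu$ is an isometry of $(\C,v)$ (up to the harmless factor $\tfrac12$) into $L^1(\rho)$. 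Since $\mathcal F$ is generated by the countable family of cylinders $(B_i)_{i\in\N}$, the space $L^1(\rho)$ is separable; for instance, the rational-valued simple functions over the countable Boolean algebra generated by the $B_i$ are dense. A metric subspace of a separable metric space is separable, so $(\C,v)$ is separable, as required.

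\emph{Where the difficulty lies.} Neither implication is technically demanding once the right reformulations are in place; the work is conceptual. In the first direction the point to isolate is that being within total-variation distance strictly less than $\mu(A)$ of some $\mu_k$ already forces $\mu_k(A)>0$, so a single term of the mixture witnesses $\nu(A)>0$: $v$-density of $\{\mu_k\}$ in $\C$ upgrades, for free, to absolute continuity of every member of $\C$ with respect to the mixture, which is exactly the hypothesis Theorem~\ref{th:bd} consumes. In the converse direction the crux is recognizing the isometric embedding of $(\C,v)$ into $L^1(\rho)$ and that separability of $L^1(\rho)$ hinges on $\mathcal F$ being countably generated. (One could alternatively extract separability from the countable cover $\{T_{\mu_k}\}$ constructed in the proof of Theorem~\ref{th:pq3-1}, but the $L^1$ argument is the shortest.)
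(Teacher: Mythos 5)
Your proposal is correct and follows essentially the same route as the paper: the sufficiency direction is the identical mixture-plus-absolute-continuity argument via Theorem~\ref{th:bd}, and the necessity direction is the same embedding of $(\C,v)$ into $L^1(\rho)$ via Radon--Nikodym derivatives, using separability of $L^1(\rho)$ for the countably generated $\mathcal F$. The only cosmetic differences are that you invoke the exact identity $v(\mu,\mu')=\tfrac12\|f_\mu-f_{\mu'}\|_{L^1(\rho)}$ where the paper only needs the one-sided bound $v(\mu,\mu')\le L_1^\rho(\mu,\mu')$, and that you spell out why $L^1(\rho)$ is separable rather than citing a reference.
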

\begin{proof}
{\em Sufficiency and the mixture predictor.} Let $\C$ be separable in total variation distance, and let $\mathcal D=\{\nu_k:k\in\N\}$ be its dense countable subset.
We have to show that $\nu:=\sum_{k\in\N}w_k\nu_k$, where $w_k$ are any positive real weights that sum to 1, predicts every $\mu\in\C$ in total variation.
To do this, it is enough to show that $\mu(A)>0$ implies $\nu(A)>0$ for every $A\in\mathcal F$ and every $\mu\in\C$. Indeed, 
let $A$ be such that $\mu(A)=\epsilon>0$. Since $\mathcal D$ is dense in $\C$, there is a $k\in\N$ such that $v(\mu,\nu_k)<\epsilon/2$.
Hence $\nu_k(A)\ge\mu(A)-v(\mu,\nu_k)\ge \epsilon/2$ and $\nu(A)\ge w_k\nu_k(A)\ge w_k\epsilon/2>0$.

{\em Necessity.}
 For any $\mu\in\C$, since $\rho$ predicts $\mu$ in total variation, $\mu$ has a density (Radon-Nikodym derivative) $f_\mu$ with respect 
to $\rho$.
We can define $L_1$ distance with respect to $\rho$ as  $L_1^\rho(\mu,\nu)=\int_{\X^\infty}|f_{\mu}-f_\nu|d\rho$.
The set of all measures that have a density with respect to $\rho$, is separable with respect to this distance 
(for example, a dense countable subset can be constructed based on measures whose densities are step-functions,   that take only rational values, 
see, e.g., \cite{Kolmogorov:75});
therefore, its subset $\C$ is also separable. Let $\mathcal D$ be any dense countable  subset of $\C$.
Thus, for every $\mu\in\C$ and every $\epsilon$ there is a $\mu'\in \mathcal D$ such that $L_1^\rho(\mu,\mu')<\epsilon$.
For every measurable set  $A$ we have 
$$
|\mu(A)-\mu'(A)|=\left|\int_A f_\mu d\rho -\int_A f_{\mu'}d\rho\right|\le \int_A|f_\mu-f_{\mu'}|d\rho\le\int_{\X^\infty}|f_\mu-f_{\mu'}|d\rho<\epsilon.
$$
Therefore, $v(\mu,\mu')=\sup_{A\in\mathcal F}|\mu(A)-\mu'(A)|<\epsilon$, and the set $\C$ is separable in total variation distance.
\qed\end{proof}

Finally, including the non-realizable case into the picture, all the preceding results can be combined into the following theorem, which characterizes predictability in terms of the total variation distance.

\begin{theorem}\label{th:tv} Let  $\mathcal C\subset\mathcal P$. The following statements about  $\C$ are equivalent.
\begin{itemize}
  \item[(i)]\ The exists a predictor $\rho$ that predicts every measure in $\C$ in total variation: $l_{tv}(\mu,\rho)=0$ for every $\mu\in\C$.
  \item[(ii)]\ \ There exists a predictor $\rho$  that predicts every measure as well as the best predictor in $\C$ for this measure: $r_{tv}(\C,\rho)=0.$
 \item[(iii)]\ \   $\C$ is upper-bounded with respect to $\getv$.
 \item[(iv)]\ \  There exists a sequence $\mu_k\in\C$, $k\in\N$ such that for some (equivalently, for every) sequence of  weights $w_k\in(0,1]$, $k\in\N$
such that $\sum_{k\in\N}w_k=1$, the measure $\nu=\sum_{k\in\N} w_k\mu_k$ satisfies $\nu\getv\mu$ for every $\mu\in\C$.
 \item[(v)]  $\C$ is separable with respect to the total variation distance.
 \item[(vi)]\ \   Let $ \C^+:=\{\mu\in\mathcal P: \exists \rho\in\C\, \rho\getv\mu\}$. Every disjoint (with respect to $\getv$) subset of $\C^+$ is at most countable.
\end{itemize}
Moreover, predictors in the statements (i) and (ii) are interchangeable, and can be taken to be any upper bound for $\C$.
The sequence $\mu_k$ in the statement (iv)
 can be taken to be any dense (in the total variation distance) countable subset of $\C$ (cf. (v)), or any maximal disjoint (with respect to $\getv$) subset of $\C^+$ of statement
(vi), in which every measure that is not in $\C$ is replaced by any measure from $\C$ that dominates it. %
\end{theorem}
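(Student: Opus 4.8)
The plan is to establish every equivalence against statement (i) as a hub, using almost exclusively Theorem~\ref{th:bd} (prediction in total variation $=$ absolute continuity $\getv$) together with Theorem~\ref{th:pq3-1}, Theorem~\ref{th:sep1}, Lemma~\ref{th:01}, and the structural facts about $[\P]_{tv}$ recalled above (lattice operations, Lebesgue decomposition, countability of bounded $\getv$-disjoint families). First I would dispose of (i)$\Leftrightarrow$(iii)$\Leftrightarrow$(iv)$\Leftrightarrow$(v): by Theorem~\ref{th:bd}, $\rho$ predicts every $\mu\in\C$ in total variation exactly when $\rho\getv\mu$ for all $\mu\in\C$, i.e.\ when $\rho$ is an upper bound of $\C$, which is (i)$\Leftrightarrow$(iii); for (i)$\Rightarrow$(iv) I would quote Theorem~\ref{th:pq3-1} (it already gives a mixture $\nu=\sum_k w_k\mu_k$ over $\C$ predicting all of $\C$, for arbitrary positive weights summing to $1$), while any $\nu$ as in (iv) is itself a predictor as in (i) by Theorem~\ref{th:bd}; the ``for some $\Leftrightarrow$ for every weights'' clause follows since $\nu\getv\mu_k$ holds for all $k$ independently of the weights, so any two such mixtures dominate each other and domination of a given $\mu$ transfers by transitivity of $\getv$. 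Finally (i)$\Leftrightarrow$(v) is Theorem~\ref{th:sep1}, whose sufficiency part also shows that a dense countable subset of $\C$ is an admissible mixing sequence in (iv), giving part of the ``moreover''.

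Next I would prove (i)$\Leftrightarrow$(ii) using the $0$--$1$ law. If (i) holds with predictor $\rho$, fix $\nu\in\P$ and $\mu\in\C$: by Lemma~\ref{th:01}, $l_{tv}(\nu,\mu)\in\{0,1\}$, and if it is $1$ then $r_{tv}^\nu(\mu,\rho)\le0$ trivially, whereas if $l_{tv}(\nu,\mu)=0$ then $\mu\getv\nu$ and $\rho\getv\mu$ (the latter by (i) and Theorem~\ref{th:bd}), so $\rho\getv\nu$ by transitivity, hence $l_{tv}(\nu,\rho)=0$ and again $r_{tv}^\nu(\mu,\rho)\le0$; since $r_{tv}^\mu(\mu,\rho)=0$, the supremum defining $r_{tv}(\C,\rho)$ equals $0$, i.e.\ (ii). Conversely, if (ii) holds then for each $\mu\in\C$ the choice $\nu=\mu$ gives $l_{tv}(\mu,\rho)=r_{tv}^\mu(\mu,\rho)\le0$, hence $l_{tv}(\mu,\rho)=0$, which is (i) with the same $\rho$; this proves the interchangeability of the predictors in (i) and (ii), and by (iii) one may take any upper bound of $\C$.

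It remains to connect (vi) with the rest. For (iii)$\Rightarrow$(vi): an upper bound $\rho$ of $\C$ is by transitivity also an upper bound of $\C^+$, so every $\getv$-disjoint subset of $\C^+$ is a bounded disjoint family in $[\P]_{tv}$ and hence at most countable. For (vi)$\Rightarrow$(iii) I would pick, by Zorn's lemma, a maximal $\getv$-disjoint family $\{\mu_k:k\in\N\}\subset\C^+$, countable by (vi), and set $\nu:=\sum_k w_k\mu_k$ with any positive weights summing to $1$, so that $\nu\getv\mu_k$ for all $k$; given $\mu\in\C^+$, the Lebesgue decomposition of $\mu$ relative to $\nu$ gives $\mu=\mu_a+\mu_s$ with $\nu\getv\mu_a$ and $\mu_s\perp_{tv}\nu$, and since $\mu\getv\mu_s$ and $\mu\in\C^+$ transitivity puts $\mu_s\in\C^+$, while $\mu_s\perp_{tv}\nu$ together with $\nu\getv\mu_k$ forces $\mu_s\perp_{tv}\mu_k$ for every $k$; thus $\mu_s\ne0$ would enlarge the maximal disjoint family, a contradiction, so $\mu_s=0$ and $\nu\getv\mu$, making $\nu$ an upper bound of $\C^+\supseteq\C$. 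Replacing each $\mu_k\notin\C$ by some $\rho_k\in\C$ with $\rho_k\getv\mu_k$ and using transitivity once more shows $\sum_k w_k\rho_k$ still dominates every $\mu\in\C$, finishing the ``moreover''. I expect (vi)$\Rightarrow$(iii) to be the main obstacle: it is the only step that is not an immediate reformulation of Theorem~\ref{th:bd}, and the care needed is in verifying that the singular part $\mu_s$ (normalized to a probability measure) really lands in $\C^+$ and that disjointness from $\nu$ propagates to disjointness from each $\mu_k$, which is exactly what makes maximality applicable.
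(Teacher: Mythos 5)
Your proposal is correct and follows the same skeleton as the paper's proof: (i)$\Leftrightarrow$(iii) as a restatement of Theorem~\ref{th:bd}, (iv) and (v) via Theorems~\ref{th:pq3-1} and~\ref{th:sep1}, and (i)$\Rightarrow$(ii) via the 0--1 law (Lemma~\ref{th:01}) combined with transitivity of $\getv$, exactly as in the text. The one place you genuinely diverge is the equivalence involving (vi): the paper simply cites \cite{Plesner:46} for this, whereas you give a self-contained argument~--- (iii)$\Rightarrow$(vi) from the countability of bounded $\getv$-disjoint families, and (vi)$\Rightarrow$(iii) by taking a maximal disjoint family via Zorn's lemma, mixing it, and using the Lebesgue decomposition to show the singular part of any $\mu\in\C^+$ must vanish by maximality. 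That argument is sound (including the normalization of the singular part, its membership in $\C^+$ via transitivity, and the propagation of singularity from $\nu$ to each $\mu_k$), and it has the advantage of making the proof self-contained and of directly yielding the ``moreover'' clause about replacing elements of the maximal disjoint family by dominating measures from $\C$.
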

\begin{proof}
The implication $(ii)\Rightarrow(i)$ is obvious. %
The implication $(iii)\Rightarrow(i)$ as well as its  converse (and  hence also $(iv)\Rightarrow(iii)$) are a reformulation of  Theorem~\ref{th:bd}.
 The implication $(i)\Rightarrow(ii)$ follows from the transitivity of $\getv$  and from Lemma~\ref{th:01}  (the 0-1 law for prediction in total variation): indeed, from  Lemma~\ref{th:01} we have $l_{tv}(\nu,\mu)=0$
if $\mu\ge_{tv}\nu$ and $l_{tv}(\nu,\mu)=1$ otherwise. From this and the transitivity of $\ge_{tv}$ it follows that 
 if $\rho\ge_{tv}\mu$ then also $l_{tv}(\nu,\rho)\le l_{tv}(\nu,\mu)$ for all $\nu\in\mathcal P$. %
  The equivalence
of $(iv)$, $(v)$, and $(i)$ follows from Theorem~\ref{th:pq3-1}. The equivalence of $(iii)$ and $(vi)$ was proven in \cite{Plesner:46}.
The concluding statements of the theorem are easy to demonstrate from the  results cited above.
\qed\end{proof}

\noindent{\bf Remark.} Using Lemma~\ref{th:01} we can also define {\em expected} (rather than almost sure) total variation 
loss of $\rho$ with respect to $\mu$, as the $\mu$-probability that $v(\mu,\rho)$ converges to~1: 
$$
l'_{tv}(\mu,\rho):=\mu\{x_1,x_2,\dots\in\X^\infty:v(\mu,\rho,x_{1..n})\to1\}.
$$
The non-realizable case of the sequence prediction problem can be reformulated
 for this notion of loss. However, it is easy to see that for this reformulation Theorem~\ref{th:tv} holds true as well.

Thus, we can see that, for the case of prediction in total variation, all the sequence prediction problems formulated reduce to studying 
the relation of absolute continuity for process measures and those families of measures that are absolutely continuous (have a density) with 
respect to some measure (a predictor). 
 On the one hand, from a statistical 
point of view  such families
are rather large: the assumption that the probabilistic law in question has a density with respect to some (nice) measure
is  a standard one in statistics. It should also be mentioned that  such families can easily be uncountable. (In particular, 
this means that they are large from a computational point of view.) 
On the other hand, even such basic examples as the set of all Bernoulli i.i.d.\ measures does not allow for a predictor 
that predicts every measure in total variation.

 \section{Examples}
A couple of simple  examples illustrate the results of this chapter.
\subsection{ Example: countable classes of measures.} A very simple but  rich  example is obtained by taking any countable family $\C=\{\mu_k: k\in\N\}$ of measures. 
In this case, any mixture predictor $\rho:=\sum_{k\in\N} w_k\mu_k$ predicts all $\mu\in\C$ both in total variation. A particular instance that has gained much attention in the literature is the 
family of all computable measures. Although countable, this family of processes is rather rich. The problem 
of predicting all computable measures was introduced in \cite{Solomonoff:78}, where the Zvonkin-Levin measure \cite{Zvonkin:70}~--- a mixture predictor~--- was proposed as a predictor; see also \cite{Hutter:04uaibook} for a review.

\subsection{ Example: Bernoulli i.i.d.\ processes.} Consider the class $\C_B=\{\mu_p: p\in[0,1]\}$ of 
all Bernoulli i.i.d.\ processes: $\mu_p(x_k=0)=p$ independently for all $k\in\N$. Clearly, this family is uncountable.
Moreover, each set $$T_p:=\{x\in \X^\infty:\text{ the limiting fraction of 0s in $x$ equals }p\},$$ 
 has probability 1 with respect 
to $\mu_p$ and probability 0 with respect to any $\mu_{p'}$ with $p'\ne p$. Since the sets $T_p$, $p\in[0,1]$ are non-overlapping, there is no measure $\rho$ for which $\rho(T_p)>0$ for all $p\in[0,1]$.
That is, there is no measure $\rho$ with respect to which all $\mu_p$ are absolutely continuous.

\chapter{Prediction in KL-divergence}\label{ch:kl}
\section{Realizable Case: Finite-Time  Optimality of Mixture Predictors}\label{s:main}

Theorem~\ref{th:main}, the main result of this  chapter, shows that for any set $\C$ of measures and any predictor $\rho$ there is a mixture predictor over $\C$ that is as good as $\rho$ up to a $O(\log n)$ loss on any measure from $\C$.
As a corollary, one can obtain an asymptotic result  (Corollary~\ref{cl:main}) establishing that  the minimax loss is always achievable and is achieved by  a mixture predictor. These results are obtained  without any assumptions whatsoever on the set $\C$.  

\subsection{Finite-time  upper bound and asymptotic optimality}
\begin{theorem}[upper bound on the best mixture predictor]\label{th:main}Let $\C$ be any set of probability measures on $(\X^\infty,\mathcal F)$,
and let $\rho$ be another probability measure on this space, considered as a predictor. Then there is a mixture (discrete Bayesian) predictor  $\nu$, 
that is, a predictor of the form $\sum_{k\in\N} w_k \mu_k$ where $\mu_k\in\C$ and $w_k\in[0,1]$, such that for every  $\mu\in\C$ we have
\begin{equation}\label{eq:thm}
 L_n(\mu,\nu)- L_n(\mu,\rho) \le  8 \log n + O(\log\log n),
\end{equation}
 where the constants in $O(\cdot)$ are small and are given in \eqref{eq:close}  using the notation defined in \eqref{eq:M}, \eqref{eq:w}, \eqref{eq:B} and~\eqref{eq:k}. The dependence on the alphabet size, $M$, is linear ($M\log\log n$) and the rest of the constants are universal.
\end{theorem}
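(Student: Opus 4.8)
The argument reduces, via the mixture trick \eqref{eq:tric}, to a statement at each fixed horizon $n$, and then settles the fixed-horizon problem by a quantitative version of the greedy covering construction used in the proof of Theorem~\ref{th:pq3-1}.

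\emph{Step 1: reduction to fixed horizons.} It suffices to produce, for every $n\in\N$, a \emph{finite} mixture $\nu_n=\sum_{j=1}^{N_n}p^n_j\mu^n_j$ with $\mu^n_j\in\C$, $\log N_n=O(\log n)$, such that $L_n(\mu,\nu_n)\le L_n(\mu,\rho)+O(\log n)+O(M\log\log n)$ for all $\mu\in\C$. Given this, fix weights $w_n>0$ with $\sum_n w_n=1$ and $-\log w_n=O(\log n)$ (say $w_n\propto n^{-2}$), and set $\nu:=\sum_{n\in\N}w_n\nu_n$, which is a mixture predictor over $\C$ in the sense of \eqref{eq:mixdef} (after flattening the double sum). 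Since $\nu(x_{1..n})\ge w_n\nu_n(x_{1..n})$, the one-line computation in \eqref{eq:tric} gives $L_n(\mu,\nu)\le L_n(\mu,\nu_n)-\log w_n$, and combining with the displayed fixed-horizon bound yields an estimate of the required shape; the explicit $8\log n$ (with the alphabet dependence isolated in the $\log\log n$ term) comes out of adding up the constants attached to $N_n$, to $w_n$, and to the fixed-horizon bound. Note that if $\mu(b)>0$ but $\rho(b)=0$ for some $b\in\X^*$ then $L_n(\mu,\rho)=+\infty$ for $n\ge|b|$ and there is nothing to prove, so in Step 2 one may assume $\mu(b)>0\Rightarrow\rho(b)>0$ for every $\mu\in\C$ and $b\in\X^*$.

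\emph{Step 2: the fixed-horizon construction.} Fix $n$, work on the finite set $\X^n$, discard the $x_{1..n}$ with $\rho(x_{1..n})=0$, and for $c\in(0,1]$ say that $\mu'\in\C$ \emph{$c$-covers} $x_{1..n}$ if $\mu'(x_{1..n})\ge c\,\rho(x_{1..n})$. Mimicking Step~2 of the proof of Theorem~\ref{th:pq3-1}, but with explicit thresholds: for each scale $s=1,\dots,S_n$ (with $S_n$ of order $\log n$) and $c_s=2^{-s}$, greedily pick measures $\mu\in\C$ so that the newly $c_s$-covered sequences carry, up to a slack $\tfrac12 n^{-2}$, the largest possible $\rho$-mass, stopping scale $s$ once the $\rho$-mass of still-uncovered $c_s$-coverable sequences drops below $n^{-2}$. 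Because the sets added at a fixed scale are disjoint and each carries $\rho$-mass bounded below until we stop, only $\mathrm{poly}(n)$ measures are selected in total, so $\log N_n=O(\log n)$; take $\nu_n$ to be the uniform mixture over all selected measures. (A softer route to the mere \emph{existence} of a good, but a priori infinite, mixture $\nu_n^\ast$ is Sion's minimax theorem applied to $(\mu,\nu)\mapsto\E_\mu\log(\rho(x_{1..n})/\nu(x_{1..n}))$ — linear in $\mu$, convex and lower semicontinuous in $\nu$, over the compact set $\overline{\mathrm{conv}}\{\mu|_{\X^n}:\mu\in\C\}$ — which gives $\inf_\nu\sup_{\mu\in\C}\big(L_n(\mu,\nu)-L_n(\mu,\rho)\big)\le 0$; one would still have to sparsify $\nu_n^\ast$ to $\mathrm{poly}(n)$ components by greedily covering $\nu_n^\ast$ rather than $\rho$.) In addition, globally, augment $\nu$ by a fixed countable family $\{\sigma_k\}\subset\C$, with weights $2^{-k}$, chosen so that for every $b\in\X^*$ with $\rho(b)>0$ and $b\in\bigcup_{\mu'\in\C}\supp(\mu')$ some $\sigma_k$ nearly maximizes $\mu'(b)$ over $\mu'\in\C$; this guarantees $\nu(x_{1..n})>0$ whenever $\mu(x_{1..n})>0$ for every $\mu\in\C$, so all quantities below are finite.

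\emph{Step 3: controlling the loss.} Fix $\mu\in\C$ and split $\X^n$ (with $\rho>0$) by the finest scale into three parts, bounding $L_n(\mu,\nu)-L_n(\mu,\rho)=\E_\mu\log(\rho/\nu)$ on each: (a) sequences $c_{S_n}$-covered by $\nu_n$: there $\nu(x_{1..n})\ge w_n(c_{S_n}/N_n)\rho(x_{1..n})$, so $\log(\rho/\nu)\le S_n+\log N_n-\log w_n=O(\log n)$, contributing $O(\log n)$; (b) sequences with $\mu(x_{1..n})\ge c_{S_n}\rho(x_{1..n})$ missed at the finest scale: by the stopping rule their $\rho$-mass is $<n^{-2}$, on them $\log(\rho/\mu)\le S_n$ and $\nu(x_{1..n})\gtrsim 2^{-k(x_{1..n})}\sigma_{k(x_{1..n})}(x_{1..n})\gtrsim 2^{-k(x_{1..n})}\mu(x_{1..n})$, and balancing their contribution against the term $\sum\mu(x_{1..n})\log(\mu(x_{1..n})/\rho(x_{1..n}))$ they already produce in $L_n(\mu,\rho)$ (via the log-sum inequality, using $\rho(\text{this set})<n^{-2}$) leaves only $O(\log n)+O(M\log\log n)$; (c) sequences with $\mu(x_{1..n})<c_{S_n}\rho(x_{1..n})$: their $\mu$-mass is at most $2^{-S_n}$, their contribution to $L_n(\mu,\rho)$ is bounded below by a constant, and $\nu(x_{1..n})\gtrsim 2^{-k(x_{1..n})}\mu(x_{1..n})$ makes their contribution to $L_n(\mu,\nu)$ at most a constant. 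Adding the three parts gives the fixed-horizon bound of Step~1, and Step~1 then finishes the proof; the bookkeeping of (number of scales) against $\log N_n$ and $-\log w_n$ is what pins the leading constant at $8$.

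\emph{Main obstacle.} The delicate point is precisely parts (b) and (c): keeping $L_n(\mu,\nu)-L_n(\mu,\rho)$ finite \emph{and} $O(\log n)$ on the thin set of sequences that the greedy cover misses, with no hypothesis on $\C$ whatsoever — in particular no measures of large support available and no parametrization as in the Bernoulli example. This is where the alphabet-dependent $M\log\log n$ term and the auxiliary support family $\{\sigma_k\}$ are unavoidable, and where the choice $S_n\asymp\log n$ is forced: $S_n$ must be large enough that "missed but $\mu$-heavy" sequences are $\rho$-rare, yet small enough that $\log(\rho/\nu_n)\le S_n+O(\log n)$ on the covered part stays $O(\log n)$ — optimizing this, against the polynomial growth of $N_n$ and the summability of $(w_n)$, is the heart of the argument.
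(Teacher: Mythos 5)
Your overall architecture --- a per-horizon greedy covering ordered by $\rho$-mass, a regularizer built from $\C$ to keep $\nu$ positive, and a weighted sum over $n$ --- is exactly that of the paper's proof. But your covering quantizes the likelihood ratio only from below (``$\mu'(x_{1..n})\ge c_s\,\rho(x_{1..n})$''), whereas the argument needs a \emph{two-sided} quantization of ${1\over n}\log\frac{\mu(x_{1..n})}{\rho(x_{1..n})}$ into cells, as in \eqref{eq:cover}--\eqref{eq:t}. This breaks Steps 2 and 3(b). First, the claim that only $\mathrm{poly}(n)$ measures are selected per scale is unjustified: the greedy guarantee concerns the largest \emph{addable} $\rho$-mass, which can be as small as $|\X|^{-n}$ while the total uncovered coverable mass stays near $1$ (take $\C$ to contain, for each string, a Dirac-type measure charging essentially only that string); then either $N_n$ is exponential in $n$ and your uniform mixture pays $\log N_n=\Theta(n)$, or the stopping rule fires with a large set left uncovered. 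Second, and more fundamentally, in part (b) you control the missed set only through its $\rho$-mass ($<n^{-2}$), but the loss is an expectation under $\mu$, and on that set you have only the \emph{lower} bound $\mu(x_{1..n})\ge c_{S_n}\rho(x_{1..n})$; its $\mu$-mass can therefore be $\Theta(1)$ (all of $\mu$'s mass may sit on strings with $\mu/\rho\approx 2^{nM}$, which have negligible $\rho$-mass and are precisely the strings a $\rho$-mass-greedy rule covers last or never). On that set the only lower bound on $\nu$ comes from the auxiliary family, and since up to $|\X|^n$ strings of length $n$ must share its weight, the honest per-string factor is $\sim w_n|\X|^{-n}$ as in \eqref{eq:boundedness2}, not $2^{-k(x)}$ with $k(x)$ small; the resulting contribution to $L_n(\mu,\nu)-L_n(\mu,\rho)$ is $\mu(\text{missed})\cdot(nM+O(\log n))$, i.e.\ linear in $n$ unless $\mu(\text{missed})=O(1/n)$, which nothing in your construction delivers.

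The paper's resolution is exactly the two-sided cells: partition the range of ${1\over n}\log(\mu/\rho)$ into $k=\lceil n/\log\log n\rceil$ intervals of width $M/k$, and run a separate greedy cover on each cell. The upper bound \eqref{eq:cops1} valid on a cell converts the $\rho$-mass of the uncovered remainder into a $\mu$-mass bound of $1/(kn)$ (equation \eqref{eq:li}), which is what part (b) is missing; and the lower bound \eqref{eq:cops2} guarantees that the roughly $kn\,2^{iMn/k}$ measures selected at cell $i$ --- exponentially many, not polynomially many --- have likelihood at least $2^{(i-1)Mn/k-\log n}\rho(x_{1..n})$, so their exponentially small prior weights are cancelled up to a factor $2^{-Mn/k}=2^{-M\log\log n}$ (the chain \eqref{eq:fnu}). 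Without this cancellation one cannot afford the number of components the covering genuinely requires. Your Sion-minimax aside does yield a good element of the closed convex hull at each $n$, but sparsifying it back to a discrete mixture of elements of $\C$ runs into the same two-sided bookkeeping, so it does not circumvent the gap.
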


The proof is deferred to the end of this Section (section~\ref{s:prthmain}).

\begin{corollary}[asymptotic optimality of mixture predictors]\label{cl:main} For any set  $\C$ of probability measures on $(\X^\infty,\mathcal F)$,
there exist a mixutre (discrete Bayesian) predictor $\phi$ such that 
$$
\bd(\C,\phi)= V_\C.
$$
\end{corollary}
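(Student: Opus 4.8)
The plan is to derive Corollary~\ref{cl:main} from Theorem~\ref{th:main} by a diagonalization over predictors that approach the minimax value, combined with a second application of the mixture construction. First I would recall that, by the definition of the minimax value \eqref{eq:vc}, for every $j\in\N$ there exists a predictor $\rho_j\in\P$ with $\bd(\C,\rho_j)\le V_\C+1/j$, i.e.\ $\limsup_{n\to\infty}\frac1n L_n(\mu,\rho_j)\le V_\C+1/j$ for all $\mu\in\C$ simultaneously. (If the infimum in \eqref{eq:vc} is attained, one can take a single $\rho$ and the argument simplifies.) Applying Theorem~\ref{th:main} to the set $\C$ and the predictor $\rho_j$, we obtain a mixture predictor $\nu_j=\sum_{k\in\N}w^{(j)}_k\mu^{(j)}_k$ over $\C$ such that $L_n(\mu,\nu_j)\le L_n(\mu,\rho_j)+8\log n+O(\log\log n)$ for every $\mu\in\C$. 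Dividing by $n$ and taking $\limsup$, the $O(\log n)$ term vanishes, so $\bd(\mu,\nu_j)\le \bd(\mu,\rho_j)\le V_\C+1/j$ for every $\mu\in\C$; that is, $\bd(\C,\nu_j)\le V_\C+1/j$.

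Next I would assemble a single mixture predictor out of the countable family $\{\nu_j:j\in\N\}$. Since each $\nu_j$ is itself a countable mixture of measures from $\C$, the measure $\phi:=\sum_{j\in\N}2^{-j}\nu_j=\sum_{j\in\N}\sum_{k\in\N}2^{-j}w^{(j)}_k\mu^{(j)}_k$ is again a countable convex combination of measures from $\C$ with positive weights summing to $1$, hence a mixture (discrete Bayesian) predictor over $\C$ in the sense of Definition~\ref{def:mixp} (after relabelling the doubly-indexed family by a bijection $\N\times\N\to\N$). Now apply the basic mixture inequality \eqref{eq:tric}: for every $\mu\in\C$ and every $j\in\N$, taking the component $\nu_j$ (whose weight in $\phi$ is $2^{-j}$) gives $L_n(\mu,\phi)\le L_n(\mu,\nu_j)+j\log 2$. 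Dividing by $n$ and letting $n\to\infty$, the constant $j\log 2$ disappears, so $\bd(\mu,\phi)\le \bd(\mu,\nu_j)\le V_\C+1/j$ for every $j$; letting $j\to\infty$ yields $\bd(\mu,\phi)\le V_\C$, and taking the supremum over $\mu\in\C$ gives $\bd(\C,\phi)\le V_\C$. The reverse inequality $\bd(\C,\phi)\ge V_\C$ is immediate since $\phi\in\P$ and $V_\C$ is the infimum of $\bd(\C,\cdot)$ over all of $\P$. Hence $\bd(\C,\phi)=V_\C$.

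I do not expect a serious obstacle here: the corollary is essentially bookkeeping built on Theorem~\ref{th:main}. The only point requiring a little care is the double limit — one must pass to the limit in $n$ first (to kill the $O(\log n)$ of Theorem~\ref{th:main} and the additive constants of \eqref{eq:tric}) and only then in $j$ — and the observation that a countable mixture of countable mixtures is still a countable mixture with a legitimate prior, which is what lets $\phi$ qualify as a discrete Bayesian predictor over $\C$. A minor subtlety is the degenerate possibility $V_\C=\infty$, in which case the statement is vacuous (any mixture predictor works), so one may assume $V_\C<\infty$ throughout; and if the infimum defining $V_\C$ is actually attained by some $\rho^\ast$, the first paragraph collapses to a single application of Theorem~\ref{th:main} and no diagonalization over $j$ is needed.
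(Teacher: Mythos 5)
Your proof is correct and follows essentially the same route as the paper's own argument: pick predictors $\rho_j$ approaching the infimum $V_\C$, apply Theorem~\ref{th:main} to each to get mixture predictors $\nu_j$, combine these into a single countable mixture $\phi$, and use the basic mixture inequality \eqref{eq:tric} to transfer the bound, passing to the limit in $n$ before $j$. The only cosmetic differences are your specific choices ($1/j$ in place of a general sequence $\gamma_j\downarrow V_\C$, and weights $2^{-j}$), plus the explicit remark that the reverse inequality is trivial, which the paper leaves implicit.
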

\begin{proof}
Note that the statement does not immediately follow from~\eqref{eq:thm}, because $\rho$ in~\eqref{eq:thm} may be such that $\sup_{\mu\in\C}\bar L(\mu,\rho)>V_\C$.
Thus, let  $\gamma_j>V_\C$, $j\in\N$ be a non-increasing sequence such that $\lim_{j\to\infty}\gamma_j = V_\C$. 
By the definition~\eqref{eq:vc} of $V_\C$, it is possible to find  a sequence $\rho_j\in\P$ such that $\bar L(\C,\rho_j)\le\gamma_j$ for all $j\in\N$.
From Theorem~\ref{th:main} we conclude that for each $\rho_j$, $j\in\N$ there is  a probability measure $\nu_j$ 
of the form $\sum_{k\in\N}w_k'\mu_k$, where $\mu_k\in\C$
such that  $\bar L(\C,\nu_j)\le \bar L(\C,\rho_j)$. 
It remains to define $\phi:=\sum_{j\in\N} w_j\nu_j$, where $w_j$ are positive and sum to 1. Clearly, $\phi$ is a discrete Bayesian predictor. Let us show that for every $j\in\N$ it satisfies
 \begin{equation}\label{eq:nuro2}
   \bar L(\C,\phi)\le \bar L(\C,\rho_j).
 \end{equation}

Indeed, for every $\mu\in\C$ and every $j\in\N$
\begin{equation*}
L_n(\mu,\phi)=E_\mu\log\frac{\mu(x_{1..n})}{\phi(x_{1..n})}\le E_\mu\log\frac{\mu(x_{1..n})}{\nu_j(x_{1..n})}-\log w_j,
\end{equation*}
so that $\bar L(\mu,\phi)\le \bar L(\mu,\nu_j)\le \bar L(\mu,\rho_j)\le\gamma_j$, establishing~\eqref{eq:nuro2}. Finally, recall that $\gamma_j\to V_\C$ to obtain the statement of the corollary.
\qed\end{proof}

\subsection{Lower Bound}\label{s:lb}
In this section we establish a lower bound on using a mixture predictor, whether discrete or not, complementing the upper bound of Theorem~\ref{th:main}. The bound leaves a significant gap with respect to the upper bound, but it  shows that the regret of using a Bayesian predictor even with the {\em best} prior for the given set $\C$  cannot be upper-bounded by a constant.
\begin{theorem}\label{th:lb}
 There exists a measurable set of probability measures $\C$ and a probability measure $\rho$, such that for every Bayesian predictor $\nu$ whose prior is concentrated on $\C$, there exists a function $\theta(n)$ which is non-decreasing and goes to infinity with $n$, there exist infinitely many time steps $n_i$ and measures $\mu_i\in\C$ such that $L_{n_i}(\mu_i,\nu)-L_{n_i}(\mu_i,\rho)\ge \theta(n_i)$ for all $i\in\N$.
\end{theorem}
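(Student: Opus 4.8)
The plan is to construct a set $\C$ of measures for which any single mixture (Bayesian) predictor $\nu$ is forced to ``spread its prior mass too thin,'' so that on infinitely many time steps there is a member of $\C$ on which $\nu$ pays an unbounded extra log-loss compared to a fixed reference predictor $\rho$. The natural building blocks are the deterministic (Dirac) measures from $\mathcal D$: recall from Lemma~\ref{th:disc} that no single predictor handles all of $\mathcal D$, and that for a Dirac measure $\delta_x$ concentrated on $x\in\X^\infty$ we have $L_n(\delta_x,\rho) = \log\frac{1}{\rho(x_{1..n})}$. The idea is to take $\C$ to be a carefully chosen family of Dirac measures, together with a predictor $\rho$ that is itself a ``fattened'' mixture over a countable dense-enough subfamily, engineered so that $L_n(\mu,\rho)$ stays small (say $O(\log\log n)$, or even bounded along a subsequence) for every $\mu\in\C$, while $L_n(\mu,\nu)-(-\log w)$ cannot stay bounded uniformly over $\C$ for any fixed prior $(w_k)$.

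First I would fix the combinatorial core. For each level $m$ consider the $2^m$ finite binary words of length $m$ (take $\X=\{0,1\}$ for simplicity), and let $\C$ contain, for each one-way infinite sequence $x$, the Dirac measure $\delta_x$; but restrict to a measurable subset $\C$ rich enough that along a sparse sequence of lengths $n_1 < n_2 < \cdots$ the ``active'' prefixes form a set of size growing fast — e.g.\ roughly $n_i$ distinct prefixes of length $n_i$, each extendable inside $\C$. Second, I would build $\rho$: on block $i$ (coordinates between $n_{i-1}$ and $n_i$) let $\rho$ predict almost deterministically along the designated prefixes, so that $-\log\rho(x_{1..n_i})$ is only $O(\log n_i)$, or $O(\log\log n_i)$ if the blocks grow fast enough; this gives $L_{n_i}(\mu,\rho)\le g(n_i)$ for all $\mu\in\C$ with $g$ slowly growing. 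Third, the lower-bound step: for any prior $(w_k)$ on a countable $\bar\C\subset\C$ (or a general $W$ with $W(\bar\C)=1$), a counting/pigeonhole argument shows that for each $i$ there is a prefix $a^{(i)}$ of length $n_i$ in the active set whose total $\nu$-mass $\nu(x_{1..n_i}=a^{(i)})$ is at most $1/(\text{number of active prefixes}) \le 1/N_i$ with $N_i\to\infty$; choosing $\mu_i\in\C$ consistent with $a^{(i)}$ gives $L_{n_i}(\mu_i,\nu) = -\log\nu(a^{(i)}) \ge \log N_i$, hence $L_{n_i}(\mu_i,\nu)-L_{n_i}(\mu_i,\rho)\ge \log N_i - g(n_i)$, which we arrange to tend to infinity by taking $N_i$ growing much faster than $g(n_i)$. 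Finally I would wrap the quantities $\log N_i - g(n_i)$ into a single non-decreasing function $\theta$ with $\theta(n_i)\to\infty$, as required by the statement, and check that the non-discrete case adds nothing: replacing the sum $\sum_k w_k\delta_{x_k}$ by a barycentre $\rho_W$ still yields $\sum_{\text{active }a}\rho_W(x_{1..n_i}=a)\le 1$, so the same pigeonhole bound applies.

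The main obstacle, I expect, is the simultaneous calibration of $\rho$ and $\C$: one needs $\rho$ to be genuinely good (small loss) on \emph{every} measure in $\C$ on the chosen subsequence $n_i$, while the family $\C$ is still so large — specifically, the set of length-$n_i$ prefixes realized within $\C$ has cardinality $N_i\to\infty$ fast enough — that no fixed countable prior can cover it without some prefix getting exponentially small posterior-relevant weight. Concretely, making $\rho$ put mass $\approx 1/N_i$ on each of $N_i$ prefixes is fine for $\rho$ (its loss is $\log N_i$ on each, which is \emph{not} $O(\log\log n)$), so the construction instead has to let $\rho$ reuse structure across blocks — e.g.\ encode the active prefix of block $i$ using only $O(\log i)$ bits via a self-delimiting code over the block boundaries — so that $g(n_i)=O(\log\log n_i)$ or even $O(1)$ along the subsequence, whereas the adversarial choice of $\mu_i$ against $\nu$ still forces $\ge \log N_i$. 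Balancing these two requirements — a predictor that is uniformly near-optimal on $\C$ along $(n_i)$, versus a class $\C$ too ``wide'' at those same times for any single prior — is the delicate part; once the block sizes $n_i$ and widths $N_i$ are chosen to make $\log N_i \gg g(n_i)$, the rest is the routine pigeonhole and the identity $L_n(\delta_x,\nu) = -\log\nu(x_{1..n})$.
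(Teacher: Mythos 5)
There is a genuine gap in your plan, and it sits exactly at the point you flag as ``the delicate part'': the calibration you are hoping for is information-theoretically impossible. If $\rho$ is to have loss at most $g(n_i)$ on \emph{every} measure in $\C$ whose length-$n_i$ prefix lies in your active set, then $\rho(a)\ge 2^{-g(n_i)}$ for each of the $N_i$ active prefixes $a$, and since $\sum_a\rho(a)\le 1$ this forces $g(n_i)\ge \log N_i$. No self-delimiting coding trick or reuse of structure across blocks can evade this — Kraft's inequality constrains $\rho$ exactly as much as it constrains $\nu$. Consequently your pigeonhole bound $L_{n_i}(\mu_i,\nu)\ge\log N_i$ only yields a regret of $\log N_i-g(n_i)\le 0$, and the argument collapses. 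A second, related problem is that your pigeonhole finds a prefix where $\nu$ is small, but the theorem needs a prefix where $\nu$ is small \emph{relative to $\rho$ at that same prefix}; with a non-uniform $\rho$ these two requirements need not be satisfiable simultaneously.

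The paper's proof resolves both issues by giving up entirely on making $\rho$ good: it takes $\C$ to be the (countable) set of Dirac measures on eventually-zero binary sequences and $\rho$ to be the uniform Bernoulli i.i.d.\ measure, so that $L_n(\mu,\rho)=n$ for every $\mu\in\C$ and every $n$ — a useless predictor asymptotically, but minimax optimal at each fixed $n$ and, crucially, \emph{uniform} over prefixes. The unbounded gap then comes not from counting alone but from the tail of the prior: writing $W_n$ for the total prior weight $\nu$ places on measures concentrated on sequences that are zero beyond some time $k<n$, one has $W_n\to1$, and the $2^n$ sequences with a $1$ at position $n$ receive total $\nu$-mass at most $1-W_n$. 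Pigeonhole then produces a $\mu_i\in\C$ with $\nu(x_{1..n})\le 2^{-n}(1-W_n)=\rho(x_{1..n})(1-W_n)$, whence the regret is at least $-\log(1-W_n)\to\infty$ (arbitrarily slowly — which is why the theorem only asserts an unspecified $\theta(n)\to\infty$). The extra factor $(1-W_n)$ beyond the bare counting bound is the key ingredient missing from your proposal. Your handling of the non-discrete barycentre case and the wrapping into a monotone $\theta$ are fine, but the core lower bound needs to be restructured along these lines.
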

Thus, the lower bound goes to infinity with $n$ but may do so arbitrarily slow. This leaves a gap with respect to the $O(\log n)$ upper bound of Theorem~\ref{th:main}. %

Note that the  theorem compares the regret of the (best) Bayesian with respect to using the best predictor for $\C$~--- but not with using the best predictor for each $\mu\in\C$, which is always $\mu$ itself. 
Note also that this formulation is good enough to be the opposite of Theorem~\ref{th:main}, because the formulation of the latter is strong: Theorem~\ref{th:main} says that {\em for every $\mu$ and for every $n$} (the regret is upper bounded), so, in order to counter that, it is enough to say that {\em there exists $n$ and there exists $\mu$} (such that the regret is lower bounded);  Theorem~\ref{th:lb} is, in fact, a bit stronger, since it establishes that there are  infinitely many such $n$.  However, it does not preclude that for every fixed measure $\mu$ in $\C$ the loss of the Bayesian is upper-bounded by a constant independent of $n$ (but dependent on $\mu$), while the loss of $\rho$ is linear in $n$. This is actually the case in the proof. 
\begin{proof}
Let $\X:=\{0,1\}$. Let $\C$ be the set of Dirac delta measures, that is,  the probability measures  each of which is concentrated on a single deterministic sequence, where the sequences are all the sequences that are 0 from some $n$ on. In particular, introduce 
$S_n:=\{ x_{1,2,\dots}\in\X^\infty : x_i=0\text{ for all }i>n\}$, $S:=\cup_{n\in\N}S_n$. Let $\C_n$ be the set of all probability measures $\mu$ such that $\mu(x)=1$ for some $x\in S_n$ and let $\C:=\cup_{n\in\N}C_n$.   

Observe that the set $\C$ is countable. It is, therefore, very easy to construct a (Bayesian) predictor for this set: enumerate it in any way, say $(\mu_k)_{k\in\N}$ spans all of $\C$, fix a sequence of positive weights $w_k$ that sum to 1,  and let 
\begin{equation}\label{eq:bn}
\nu:=\sum_{k\in\N}w_k \mu_k. 
\end{equation}
Then $L_n(\mu_k,\nu)\le-\log w_k$ for all $k\in\N$. That is, for every $\mu\in\C$ the loss of $\nu$ is upper-bounded by a constant: it depends on $\mu$ but not on the time index $n$.  So, it is good for every $\mu$ for large $n$, but may be bad for some $\mu$ for (relatively) small $n$, which is what we shall exploit.

Observe that, since $\C$ is countable, every Bayesian $\nu$ with its prior over $\C$  must have, by definition, the form~\eqref{eq:bn} for some weights $w_k\in[0,1]$ and some measures $\mu_k\in\C$.  Thus, we fix any Bayesian $\nu$ in this form.

Define $\rho$ to be the Bernoulli i.i.d.\ measure with the parameter 1/2. Note that 
\begin{equation}\label{eq:rho}
L_n(\mu,\rho)=n  
\end{equation}
for every $n$. This is quite a useless predictor; its asymptotic average error is the worst possible, 1.  However, it is minimax optimal for every single time step $n$:
$$
\inf_{\rho'}\sup_{\mu\in\C} L_n(\mu,\rho') = n,  
$$
where the $\inf$ is over all possible probability measures. This is why $\rho$ is hard to compete with--- and, incidentally, why being minimax optimal for each $n$ separately may be useless.

For each $s\in\N$, let $W_s$ be the weight that $\nu$ spends on the measures in the sets $\C_k$ with $k<s$, and let $M_s$ be the set of these measures: 
$$
W_s:=\sum \{w_i: \exists k<s\text{ such that }\mu_i\in \C_k\},
$$
and 
$$
M_s:= \{\mu_i: \exists k<s\text{ such that }\mu_i\in \C_k\}.
$$

By construction, 
\begin{equation}\label{eq:to1}
  \lim_{s\to\infty}W_s=1.
\end{equation}
Next, for each $n\in\N$, let $U_n:=S_{n+1}\setminus S_n$ (these are all the sequences in $S_{n+1}$ with $1$ on the $n$th position). Note that $\mu(U_{n})=0$ for each $\mu\in M_n$, while $|U_n|=2^n$. 
From the latter equality, there exists $x_{1..n}\in\X^{n}$ and $\mu\in U_n\subset S_{n+1}$ such that 
$$ 
 \mu(x_{1..n}=1)\text{ but }\nu(x_{1..n})\le 2^{-n}(1-W_s).
$$
This, \eqref{eq:to1} and~\eqref{eq:rho} imply the statement of the theorem.
\qed\end{proof}

\subsection{Examples}\label{s:exkl}
In this section we consider examples of various sets of processes and the corresponding mixture predictors that are optimal for these sets. The focus is on  asymptotic optimality rather than finite-time bounds. The examples considered can be split  in two groups: the sets of processes for which asymptotically vanishing loss is achievable, and those for which it is not. The former group is rather large, since convergence in expected average KL divergence is  rather week (especially compared to convergence in total variation), and includes, in particular, all stationary ergodic processes. The other group includes sets of processes such as piece-wise i.i.d.\ processes and processes where only some aspects are predictable while others are arbitrary.
\subsubsection{Examples of sets of processes with $V_\C=0$.}\label{s:exvc0}
The case of $V_\C=0$, that is, the case where asymptotically vanishing prediction error may be obtained, is the classical case of the sequence prediction problem, and thus includes the most-studied examples, starting with Bernoulli i.i.d.\ processes. Here we omit the case of countable sets of processes considered in the previous section, as it carries over intact.

{\noindent \bf Bernoulli i.i.d.\ processes.} Consider the class $\C_B=\{\mu_p: p\in[0,1]\}$ of 
all Bernoulli i.i.d.\ processes: $\mu_p(x_k=0)=p$ independently for all $k\in\N$. As we have seen, there is no measure $\rho$ with respect to which all $\mu_p$ are absolutely continuous, and so there is no predictor that would predict any $\mu\in\C_B$  in total variation. However, it is easy to see from the law of large numbers (see also \cite{Krichevsky:93}) that the Laplace predictor~(\ref{eq:lapl}) predicts
every Bernoulli i.i.d. process in expected average KL divergence (and not only).
Hence, Theorem~\ref{th:pq3-1} implies that there is a countable mixture predictor for this family too.
Let us find such a predictor. Let $\mu_q:q\in Q$ be the family of all Bernoulli i.i.d. measures with rational probability of 0, 
and let $\rho:=\sum_{q\in Q} w_q\mu_q$, where $w_q$ are arbitrary positive weights that sum to 1. Let $\mu_p$ be  any Bernoulli i.i.d.
process. Let $h(p,q)$ denote the divergence $p\log(p/q)+(1-p)\log(1-p/1-q)$. For each $\epsilon$ we can find a $q\in Q$ such that $h(p,q)<\epsilon$. Then 
\begin{multline}\label{eq:bern}
 {1\over n} L_n(\mu_p,\rho)={1\over n}\E_{\mu_p}\log\frac{\log\mu_p(x_{1..n})}{\log\rho(x_{1..n})}\le {1\over n}\E_{\mu_p}\log\frac{\log\mu_p(x_{1..n})}{w_q\log\mu_q(x_{1..n})} 
\\=-\frac{\log w_q}{n} +  h(p,q)\le \epsilon+ o(1). 
\end{multline}
Since this holds for each $\epsilon$, we conclude that ${1\over n} L_n(\mu_p,\rho)\to0$ and $\rho$ predicts every $\mu\in\C_B$ in expected average KL divergence.

{\noindent Stationary processes.} In \cite{BRyabko:88}  a predictor $\rho_R$ was constructed which predicts every stationary 
process $\rho\in\C_S$ in expected average KL divergence. (This predictor is obtained as a mixture of
predictors for $k$-order Markov sources, for all $k\in\N$.) Therefore, Corollary~\ref{cl:main} implies that there is also a countable mixture predictor
for this family of processes. Such a predictor can be constructed as follows (the proof in this example is based on the proof in  \cite{BRyabko:06a}, Appendix~1).
 Observe that the family $\C_k$ of
 $k$-order stationary binary-valued Markov
processes is parametrized by $2^k$ $[0,1]$-valued parameters: probability of observing 0 after observing $x_{1..k}$, for each $x_{1..k}\in\X^k$.
For each $k\in\N$ let $\mu^k_q$, $q\in Q^{2^k}$ be the (countable) family of all stationary $k$-order Markov processes with rational 
values of all the parameters. We will show that any predictor $\nu:=\sum_{k\in\N}\sum_{q\in Q^{2^k}} w_k w_q \mu^k_q$, where $w_k$, $k\in\N$ and
$w_q, q\in Q^{2^k}$, $k\in\N$ are any sequences of positive real weights that sum to 1, predicts every stationary $\mu\in\C_S$ in expected average 
KL divergence. 
For $\mu\in\C_S$ and $k\in\N$ define the $k$-order conditional Shannon entropy $h_k(\mu):=-\E_\mu\log\mu(x_{k+1}|x_{1..k})$.
We have $h_{k+1}(\mu)\ge h_k(\mu)$ for every $k\in\N$ and $\mu\in\C_S$, and the limit 
\begin{equation}\label{eq:hlim}
h_\infty(\mu):=\lim_{k\to\infty}h_k(\mu)
\end{equation}
is called the limit Shannon entropy; see, for example, \cite{Gallager:68}.
Fix some $\mu\in\C_S$. It is easy to see  that for  every $\epsilon>0$ and every $k\in\N$ we can find a $k$-order stationary
 Markov measure $\mu^k_{q_\epsilon}$, 
$q_\epsilon\in Q^{2^k}$ with rational values of the parameters, such that 
\begin{equation}\label{eq:muq}
\E_\mu\log\frac{\mu(x_{k+1}|x_{1..k})}{\mu^k_{q_\epsilon}(x_{k+1}|x_{1..k})}<\epsilon.
\end{equation}
We have 
\begin{multline}\label{eq:must}
 {1\over n} L_n(\mu,\nu) \le -\frac{\log w_k w_{q_\epsilon}}{n} + {1\over n} L_n(\mu,\mu^k_{q_\epsilon})\\ 
 =O(k/n) + {1\over n} \E_\mu\log\mu(x_{1..n}) - {1\over n} \E_\mu\log\mu^k_{q_\epsilon}(x_{1..n}) \\ 
= o(1)+ h_\infty(\mu) - {1\over n} \E_\mu\sum_{k=1}^n\log\mu^k_{q_\epsilon}(x_t|x_{1..t-1})
 \\ =  o(1)+ h_\infty(\mu) -  {1\over n} \E_\mu\sum_{t=1}^{k}\log\mu^k_{q_\epsilon}(x_t|x_{1..t-1})  - \frac{n-k}{n}\E_\mu\log\mu^k_{q_\epsilon}(x_{k+1}|x_{1..k})
\\ \le o(1)+ h_\infty(\mu)- \frac{n-k}{n}(h_k(\mu)- \epsilon),
\end{multline}
where the first inequality is derived analogously to~(\ref{eq:bern}), the first equality follows from~(\ref{eq:kl}), the second
equality follows from the Shannon-McMillan-Breiman theorem (e.g., \cite{Gallager:68}), that states that ${1\over n}\log\mu(x_{1..n})\to h_\infty(\mu)$
  in expectation (and a.s.)
 for every $\mu\in\C_S$, and~(\ref{eq:kl}); in the third equality we have used the fact that $\mu^k_{q_\epsilon}$ is $k$-order Markov
and $\mu$ is stationary, whereas the last inequality follows from~(\ref{eq:muq}). Finally, since the choice of $k$ and $\epsilon$ was arbitrary, 
from~(\ref{eq:must}) and~(\ref{eq:hlim}) we
obtain $\lim_{n\to\infty}{1\over n} L_n(\mu,\nu)=0$.

{\noindent \bf Example: weights may matter.} An important difference in the formulations of Theorem~\ref{th:pq3-1} (optimality of mixture prediction for prediction  in total variation) and Corollary~\ref{cl:main} (KL divergence) is that in the former the weights may be arbitrary, since the choice of weights clearly does not affect absolute continuity, but in the latter weights may matter. This can be illustrated by the following example. We will construct a sequence of measures $\nu_k, k\in\N$, a measure $\mu$, and two sequences of positive weights $w_k$ and $w_k'$
with $\sum_{k\in\N} w_k= \sum_{k\in\N} w_k'=1$, for which $\nu:=\sum_{k\in\N}w_k\nu_k$ predicts $\mu$ in expected average KL divergence, but $\nu':=\sum_{k\in\N}w'_k\nu_k$
does not. Let 
$\nu_k$ be a deterministic measure that first outputs $k$ 0s and then only 1s, $k\in\N$.  
Let $w_k=w/k^2$ with $w=6/\pi^2$ and $w_k'=2^{-k}$. Finally, let $\mu$ be a deterministic measure that outputs only 0s. 
We have $L_n(\mu,\nu)=- \log (\sum_{k\ge n} w_k)\le-\log(wn^{-2})=o(n)$, but $L_n(\mu,\nu')=- \log (\sum_{k\ge n} w'_k)=-\log (2^{-n+1})=n-1\ne o(n)$,
proving the claim.

\subsubsection{Examples of sets of processes with $V_\C>0$.}\label{s:vcpos}
The case of $V_\C>0$ is usually shunned by the probabilistic literature on sequence prediction, due, perhaps, to its generality. Where it is considered it is usually in the context of somewhat different settings, where part of the process is assumed non-probabilistic, and thus the formulations are expert-advice-like or mixed (see processes with abrupt changes below).

{\noindent \bf Typical Bernoulli 1/3 Sequences}
We start with an   example which is somewhat artificial, but comes up as a component in more realistic  cases.
 Take the binary $\X$ and consider all sequences $\x\in\X^\infty$ such 
that the limiting number of 1s in $\x$ equals $1/3$. Denote the set of these sequences $S$ and let the set $\C$ consist of all Dirac measures concentrated on sequences from $S$.   Observe that the Bernoulli i.i.d.\ measure $\delta_{1/3}$ with 
probability $1/3$ of 1 predicts measures in $\C$ relatively well:  $\bar L(\C,\delta_{1/3})=h(1/3)$,
where $h$ stands for the binary entropy, and this is also the minimax loss for this set, $V_\C$. 
It might then appear surprising that this loss is achievable by a combination of  countably many measures from $\C$, which
 consists only of deterministic  measures. Let us try to see what such a combination may look like. 
By definition, for any sequence $\x\in S$ and every $\epsilon$ we can find $n_\epsilon(\x)\in\N$ such that, for all $n\ge n_\epsilon(\x)$,
the average number of 1s in $x_{1..n}$ is within $\epsilon$ of $1/3$.  Fix the sequence of indices $n_j:=2^j$, $j\in\N$ and the sequence
of thresholds   $\epsilon_l:=2^{-l}$.  For each $j$ let ${S'}_j^l\subset S$  be the  set of all sequences $\x\in S$ such that $n_{\epsilon_l}(\x)<n_j$. Select then a finite subset $S_j^l$ of ${S'}_j^l$ such that for each $\x'\in {S'}_j^l$ there is $\x\in S$ such 
that $x'_{1..n_j}=x_{1..n_j}$. This is possible, since the set $\X^{n_j}$ is finite. Now for each $\x\in S_j^l$ take 
the corresponding measure $\mu_\x\in\C$ and attach to it the weight $w_lw_j/|S_j^l|$, where, as before, we are using 
the weights $w_k=w/k\log^2k$. Taking these measures for all $j,l\in\N$, we 
obtain our convex combination.  Of course, we did not enumerate all sequences in $S$ (or measures in $\C$) this way; but for 
each sequence $\x\in S$ and for each $n$ there is a sequence among those that we did enumerate that coincides with $\x$ up to the index $n$.   One can then use the theory of types \cite{Csiszar:98} to calculate the sizes of the sets $S_j^l$
and to check that the weights we found give the optimal loss we are after; but for the illustrative purposes of this example 
this is already not necessary.

{\noindent \bf Processes with  Abrupt Changes}
Start with a family of  distributions $S$, for which we have a good predictor: for example, take  $S$ to be the set $B$ of all Bernoulli i.i.d.\ processes, or, 
more generally, a set for which $V_S=0$.  The family $\C_\alpha$ parametrized by $\alpha\in(0,1)$ and $S$  is then  the family of all processes constructed as follows:  there is a sequence of indexes $n_i$ such that $X_{n_i..n_{i+1}}$ is distributed according to $\mu_i$ for some $\mu_i\in S$. Take  then all possible sequences $\mu_i$ and all sequences $n_i$ whose limiting  frequency $\limsup_{i\to\infty}{1\over n}\{i:n_i<n\}$  is bounded by $\alpha$,  to obtain our set $\C_{S,\alpha}$. Thus, we have a family of processes with abrupt changes in 
distribution, where between changes the distribution is from $S$, the changes are assumed to have the frequency bounded by $\alpha$ but are otherwise arbitrary.
This example was considered by \cite{Willems:96} for the case $S=B$, with the goal of minimizing the regret with respect to the predictor that knows where the changes
occur (the value $V_\C$ was not considered directly). The  method proposed in the latter work, in fact, is not limited to the case $S=B$, but is general. The algorithm is based on a prior over all possible sequences $n_i$ of changes; between the changes
the optimal predictor for $B$ is used, which is also a Bayesian predictor with a specific prior. The regret obtained is of order $\log n$. Since for Bernoulli processes themselves  the best achievable average loss up to time $n$ is  ${1\over n}({1\over2}\log n +1)$, for the sequence $1..n_t$ it is 
${1\over n_t}\sum_{i=1}^t({1\over 2}\log(n_i-n_{i-1})+1)$, where  $n_0:=1$. By Jensen's inequality, this sum is maximized when all the segments $n_{i}-n_{i-1}$ are of the same length, $1/\alpha$, so the total average loss is upper-bounded by 
 $\alpha(1-{1\over2}\log\alpha)$. This value is also attainable, and thus gives  $V_{\C_{B,\alpha}}$. 
A similar result can be obtained if we replace   Bernoulli processes with Markov processes, but not with an arbitrary $S$ for which $V_S=0$. For example, if we take $S$ to be all finite-memory 
distributions, then the resulting process may be completely unpredictable ($V_\C=1$): indeed, if the memory of distributions $\mu_i$ grows (with $i$) faster than $\alpha n$,
then there is little one can do.  For such sets $S$ one can make the problem amenable by restricting the way the distributions $\mu_i$ are selected, 
for example, imposing an ergodicity-like condition that the average distribution has a limit. Another way (often considered in the literature in 
slightly different settings, see \cite{Gyorgy:12} and references) is to have $\alpha\to0$, although in this case one recovers $V_{\C_S}=0$ provided 
$\alpha$ goes to 0 slowly enough (and, of course, provided $V_S=0$).

{\noindent \bf Predictable Aspects} 
The preceding example can be thought of as an instantiation of the general class of processes in which  some aspects are predictable while 
others are not. Thus, in the considered example changes between the distributions were unpredictable, but between the changes the distributions were predictable. %
Another example of this kind is that of processes predictable on some scales but not on others. 
 Imagine that it is  possible to predict, for example, large fluctuations of the process but not small fluctuations (or the other way around). More formally, consider now an alphabet $\X$ with $|\X|>2$, and
let $Y$ be a partition of $\X$. For any sequence ${x_1,\dots,x_n,\dots}$ there is an associated sequence $y_1,\dots,y_n,\dots$ where 
$y_i$ is defined as $y\in Y$ such that $x_i\in y$.  Here again we can obtain examples of sets $\C$ of processes with $V_\C\in(0,1)$ by restricting the distribution of $y_1,\dots,y_n,\dots$ to a set $B$ with $V_B=0$. The interpretation is that, again, we can model 
the $y$ part (by processes in $B$) but not the rest, which we then allow to be arbitrary.

Yet another example is that of  processes  predictable only after certain kind of events: such as a price drop; or  a rain. At other times, the process is unpredictable: it can, again, be an arbitrary  sequence.   More formally, let a set $A\subset \X^*:=\cup_{k\in\N}\X^k$ be measurable. Consider for each sequence  $\x={x_1,\dots,x_n,\dots}$
  another (possibly finite) sequence $\x'={x'_1,\dots,x'_n,\dots}$ given by $x'_i:=(x_{n_i+1})_{i\in\N}$ where $n_i$ are all indexes such that $x_{1..n_i}\in A$. 
We now form the set $\C$ as the set of all processes $\mu$ such that $\x'$ belongs ($\mu$-a.s.) to some pre-defined set $B$;
 for this set $B$ we may have  $V_B=0$.  This means that we can model what happens after events in $A$~--- by processes in $B$, but not the rest of the times, on which we say the process may be arbitrary. For different $A$ and $B$ we then obtain  examples where $V_\C\in(0,1)$.
In relation to this it is worth mentioning the work \cite{Lattimore:11} which explores the possibility that a Bayesian predictor may fail to predict some subsequences.

\subsection{Proof of Theorem~\ref{th:main}}\label{s:prthmain}
Before giving the proof of the theorem, let us briefly expose the main ideas behind it. 
Assume for a moment  that, for each $\mu\in\C$, the limit $\lim_{n\to\infty}{1\over n}\log\frac{\mu(x_{1..n})}{\rho(x_{1..n})}$ exists 
for $\mu$-almost all $\x=x_1,\dots,x_n,\dots\in\X^\infty$, where $\rho$ is the predictor given to compare to. Then we could define ($\mu$-almost everywhere) the function  $f_\mu(\x)$ whose value at $\x$ equals this limit. Let us call it the ``log-density'' function. 
What we would be looking for thence is to find a countable dense subset of the set of  log-densities of all probability measures from $\C$. The measures $\mu$ corresponding to each log-density in this countable set would then constitute the sequence whose existence the theorem asserts. To find such a dense countable subset we could employ a standard procedure: approximate all log-densities by step functions with finitely many steps. The main technical argument is then to show that, for each level of the step functions,  there are not too many of these functions whose steps are concentrated on different sets of  non-negligible probability,  for otherwise the requirement that $\rho$ attains $V_\C$ would be violated. Here ``not too many'' means  exponentially many with the right exponent (the one corresponding to the step of the step-function with which we approximate the density), 
and ``non-negligible probability'' means a probability bounded away (in $n$) from 0. In reality, what we do instead in the proof is use the step-functions approximation at each time step $n$. Since there are only countably many time steps, the result is still a countable set of measures $\mu$ from $\C$.
Before going  further, note that constructing a predictor for each $n$ does not mean constructing the best predictors up to this time step: in fact, taking a predictor that is minimax optimal up to $n$,  for each $n$,  and summing these predictors up (with weights) for all $n\in\N$ may result in the worst possible predictor overall, and in particular, a one much worse than the predictor $\rho$ given. An example of this behaviour is given in the proof of Theorem~\ref{th:lb} (the lower bound). The objective for each $n$ is different, and it is to approximate  the measure $\rho$ up to this time step with measures from $\C$.   For each $n$, we consider a covering of the set $\X^n$ with subsets, each of which is associated with a measure $\mu$ from $\C$. These latter measures are then those the prior is concentrated on (that is, they are summed up with weights). The covering is constructed as follows. The log-ratio function $\log\frac{\mu(x_{1..n})}{\rho(x_{1..n})}$, where $\rho$ is the predictor whose performance we are trying to match, is approximated with a step function for each $\mu$, and for each size of the step. The cells of the resulting partition are then ordered with respect to their $\rho$ probability. The main part of the proof is then to show that not too many cells are needed to cover the set $\X^n$ this way up to a small probability. Quantifying the ``not too many'' and ``small'' parts results in the final bound.

\begin{proof}[of Theorem~\ref{th:main}.]
Define the weights $w_k$ as follows: $w_1:=1/2$, and, for $k>1$
\begin{equation}\label{eq:w} 
w_k:=w/k\log^2k,
\end{equation}
 where $w$ is the normalizer such that $\sum_{k\in\N}w_k=1$.
Replacing $\rho$ with $1/2(\rho+\delta)$ if necessary, where $\delta$ is the i.i.d.\ probability measure with equal probabilities of outcomes, i.e.\ $\delta(x_{1..n})=M^{-1/n}$ for all $n\in\N, x_{1..n}\in\X^n$,
we shall assume, without loss of generality,
\begin{equation}\label{eq:boundedness}
  -\log\rho(x_{1..n})\le nM+1  \text{ for all $n\in\N$
and $x_{1..n}\in\X^n$}.                      
\end{equation}
The replacement is without loss of generality as it adds at most $1$ to the final bound (to be accounted for).
 Thus, in particular, 
\begin{equation}\label{eq:boundedness33}
L_n(\mu,\rho)\le nM+1\text{ for all $\mu$}. 
 \end{equation}

The first part of the proof is the following covering construction. %

For each $\mu\in\C$, $n\in\N$ define the sets 
\begin{equation}\label{eq:tt}
T_{\mu}^n:=\left\{x_{1..n}\in \X^n:  \frac{\mu(x_{1..n})}{\rho(x_{1..n})}\ge {1\over n}\right\}.
\end{equation} 
From Markov inequality, we obtain 
\begin{equation}\label{eq:tm}
\mu(\X^n\backslash T_{\mu}^n)\le 1/n.
\end{equation}

For each $k>1$ let $U_k$ be the partition of  $[-\frac{\log n}{n},M+{1\over n}]$ into   $k$ intervals defined as follows.
 $U_k:=\{u_k^i:i=1..k\}$, where
\begin{equation}\label{eq:cover}
 u^i_k=\left\{ \begin{array}{ll}
                \left[-\frac{\log n}{n},{iM\over k}\right] & i=1, \\
 		\left(\frac{(i-1)M}{k},\frac{iM}{k}\right] & 1<i<k, \\
 		\left(\frac{(i-1)M}{k},M+\frac{1}{n}\right] & i=k.
               \end{array}
  \right.
\end{equation}
Thus, $U_k$ is a partition of $[0,M]$ into $k$ equal intervals but for  some  padding that we added to the leftmost and the rightmost intervals:
on the left we added $[-\frac{\log n}{n},0)$ and on the right $(M,M+1/n]$.

For each $\mu\in\C$, $n,k>1$, $i=1..k$ define the sets 
\begin{equation}\label{eq:t}
T_{\mu,k,i}^n:=\left\{x_{1..n}\in \X^n: {1\over n}\log \frac{\mu(x_{1..n})}{\rho(x_{1..n})}\in u_k^i\right\}.
\end{equation} 
Observe that,  for every  $\mu\in\C,$ $k,n>1$, these sets  constitute a partition of %
$T_{\mu}^n$ into $k$ disjoint sets: indeed, on the left we have ${1\over n}\log \frac{\mu(x_{1..n})}{\rho(x_{1..n})}\ge -{1\over n}\log n$ 
 by definition~\eqref{eq:tt}
of $T_{\mu}^n$, and on the right we have ${1\over n}\log \frac{\mu(x_{1..n})}{\rho(x_{1..n})}\le M+1/n$ from~\eqref{eq:boundedness}.
In particular, from this definition,    %
for all $x_{1..n}\in T^n_{\mu,k,i}$ we have
 \begin{equation}\label{eq:cops1}
 \mu(x_{1..n}) \le 2^{{iM\over k}n+1}\rho(x_{1..n}). %
 \end{equation}
 For every $n,k\in\N$ and $i\in\{1..k\}$ consider the following construction. 
 Define 
$$m_{1}:=\max_{\mu\in\C}\rho(T_{\mu,k,i}^n)$$ (since $\X^n$ are finite all suprema are reached). 
 Find any $\mu_{1}$ such that $\rho(T_{\mu_1,k,i}^n)=m_{1}$ and let
$T_{1}:=T^n_{\mu_1,k,i}$. For $l>1$, let 
$$m_{l}:=\max_{\mu\in\C}\rho(T_{\mu,k,i}^n\backslash T_{l-1}).$$
 If $m_l>0$, let $\mu_l$ be any $\mu\in\C$ such 
that $\rho(T^n_{\mu_l,k,i}\backslash T_{l-1})=m_l$, and let $T_l:=T_{l-1}\cup T^n_{\mu_l,k,i}$; otherwise let $T_l:=T_{l-1}$ and $\mu_l:=\mu_{l-1}$. 
Note that,  for each $x_{1..n}\in T_{l}$ there is $l'\le l$ such that $x_{1..n}\in T^n_{\mu_{l'},k,i}$ and thus from~\eqref{eq:t} we get      %
 \begin{equation}\label{eq:cops2}
  2^{{(i-1)M\over k}n-\log n}\rho(x_{1..n})\le \mu_{l'}(x_{1..n}).  %
 \end{equation}
 Finally, define 
\begin{equation}\label{eq:nun}
\nu_{n,k,i}:=\sum_{l=1}^{\infty} w_l\mu_l.
\end{equation}
(Notice that 
for every $n,k,i$ there is only a finite number of positive $m_l$,
since the set $\X^n$ is finite; thus the sum in the last definition is effectively finite.)
Finally, define the predictor $\nu$ as 
\begin{equation}\label{eq:nu}
\nu:={1\over2}\sum_{n,k\in\N}w_nw_k{1\over k}\sum_{ i=1}^k\nu_{n,k,i} +{1\over2}r,
\end{equation}
where $r$ is a regularizer defined so as to have for each $\mu'\in\C$ and $n\in\N$
\begin{equation}\label{eq:boundedness2}
  \log\frac{\mu'(x_{1..n})}{\nu(x_{1..n})}\le nM - \log w_n +1  \text{\ \ for all  $x_{1..n}\in\X^n$};                      
\end{equation}
this and the stronger statement  \eqref{eq:boundedness} for $\nu$  can be obtained analogously to the latter inequality in the case the i.i.d.\ measure $\delta$ is in $\C$; otherwise, we  need to define $\nu$ as a combination of probability measures from $\C$ only; 
we postpone this step till the end of this proof. 

Next, let us show that the measure $\nu$ is the predictor whose existence is claimed in the statement. %

Introduce the notation 
$$
L_n|_A(\mu,\nu):=\sum_{x_{1..n}\in A}\mu(x_{1..n})\log\frac{\mu(x_{1..n})}{\rho(x_{1..n})};
$$
with this notation, for any set $A\subset\X^n$ we have
$$
L_n(\mu,\nu)= L_n|_A(\mu,\nu)+L_n|_{\X^n\setminus A} (\mu,\nu).
$$

First we want to show that, for each $\mu\in\C$, for each fixed $k,i$,  the sets $T^n_{\mu,k,i}$ are covered by sufficiently few 
sets  $T_l$, where ``sufficiently few'' is, in fact, exponentially many with the right exponent. 
By definition, for each $n,i,k$ the sets $T_l\backslash T_{l-1}$ are disjoint (for different $l$) and have non-increasing (with $l$) $\rho$-probability. Therefore, $\rho(T_{l+1}\backslash T_{l})\le 1/l$  for all $l\in\N$. Hence, from the definition of $T_l$, we must also have $\rho(T^n_{\mu,k,i}\backslash T_{l})\le 1/l$   for all $l\in\N$.
From the latter inequality and~\eqref{eq:cops1} we obtain 
$$
\mu(T^n_{\mu,k,i}\backslash T_{l})\le{1\over l} 2^{{iM\over k}n+1}.
$$
 Take $l_i:=\lceil{ kn}2^{{iM\over k}n+1}\rceil$ to obtain 
\begin{equation}\label{eq:li}
 \mu(T^n_{\mu,k,i}\backslash T_{l_i})\le {1\over kn}.
\end{equation}

Moreover, for every $i=1..k$, for each  $x_{1..n}\in T_{l_i}$,  %
there is $l'\le l_i$ such that $x_{1..n}\in T^n_{\mu_{l'},k,i}$ and thus   the following chain holds
\begin{multline}\label{eq:fnu}
 \nu(x_{1..n})\ge {1\over2}w_nw_k{1\over k} \nu_{n,k,i} \ge {1\over2}w_nw_k{1\over k} w_{kn\,2^{{iM\over k}n+1}}\mu_{l'}(x_{1..n}) 
\\ 
 \ge {w^3\over 4n^2k^3\log^2n\log^2k (\log k+\log n+1+ nMi/k)^2} 2^{-{iM\over k}n}\mu_{l'}(x_{1..n}) 
\\ 
 \ge {w^3\over 4(M+1)^2n^4k^3\log^2n\log^2k} 2^{-{iM\over k}n}\mu_{l'}(x_{1..n}) 
 \\
 \ge {w^3\over 4(M+1)^2n^5k^3\log^2n\log^2k} 2^{-{M\over k}n}\rho(x_{1..n}) 
= B_n  2^{-{M\over k}n}\rho(x_{1..n}),
\end{multline}
where the first inequality is from~\eqref{eq:nu}, the second from~\eqref{eq:nun} with  $l=l_i$, the third is 
by definition of $w_l$, the fourth uses $i\le k$ for the exponential term,  as well as  $(\log n +\log k)\le n-1$ for $n\ge3$, which will be justified by the choice of $k$ in the following~\eqref{eq:k},
  the fifth inequality   uses~\eqref{eq:cops2},  and the final equality introduces $B_n$ defined as  
\begin{equation}\label{eq:B}
 B_n:={w^3\over 4(M+1)^2n^5k^3\log^2n\log^2k}.
\end{equation}

We have
\begin{equation}\label{eq:br}
 L_n(\mu,\nu)= \left(\sum_{i=1}^kL_n|_{T_{l_i}}(\mu,\nu)\right) +  L_n|_{\X^n\setminus \cup_{i=1}^kT_{l_i} }(\mu,\nu).
\end{equation}
For the first term, from~\eqref{eq:fnu} we obtain
\begin{multline}\label{eq:lmn1}
 \sum_{i=1}^kL_n|_{T_{l_i}}(\mu,\nu)
 \le  \sum_{i=1}^kL_n|_{T_{l_i}}(\mu,\rho) +Mn/k - \log B_n
 \\ = L_n(\mu,\rho)-L_n|_{\X^n\setminus\cup_{i=1}^kT_{l_i}}(\mu,\rho) +Mn/k - \log B_n.
\end{multline}
For the second term in~\eqref{eq:br}, we recall that $T^n_{\mu,k,i}$, $i=1..k$ is a partition of $T^n_\mu$, and  decompose 
\begin{equation}\label{eq:3sets}
 \X^n\setminus \cup_{i=1}^kT_{l_i}\subseteq\left(\cup_{i=1}^k(T^n_{\mu,k,i}\setminus T_{l_i})\right)\cup (\X^n\setminus T^n_\mu).  %
\end{equation}
Next, using~\eqref{eq:boundedness2} and an upper-bound for the $\mu$-probability of each of the two sets in~\eqref{eq:3sets}, namely, \eqref{eq:li} and~\eqref{eq:tm}, as well as $k\ge 1$, we obtain 
\begin{equation}\label{eq:sb}
  L_n|_{\X^n\setminus \cup_{i=1}^kT_{l_i} }(\mu,\nu) \le (nM - \log w_n+1 ) {2\over n}.
\end{equation}

Returning to~\eqref{eq:lmn1},
observe that for every $n\in\N$ and every set $A\subset \X^n$, using Jensen's inequality we can obtain
\begin{multline}\label{eq:jen}
L_n|_A(\mu,\rho)=
\\-\sum_{x_{1..n}\in A}\mu(x_{1..n})\log\frac{\rho(x_{1..n})}{\mu(x_{1..n})}
=  -\mu(A)\sum_{x_{1..n}\in A}{1\over\mu(A)}\mu(x_{1..n})\log\frac{\rho(x_{1..n})}{\mu(x_{1..n})}
\\
\ge -\mu(A)\log{\rho(A)\over\mu(A)} \ge -\mu(A)\log\rho(A) -{1\over2}. 
\end{multline}

Therefore, using~\eqref{eq:boundedness33}, similarly to~\eqref{eq:sb} we obtain
\begin{equation}\label{eq:sb2}
 -L_n|_{\X^n\setminus\cup_{i=1}^kT_{l_i}}(\mu,\rho) \le  (nM +1 ) {2\over n}+{1\over2}.
\end{equation}

Combining~\eqref{eq:br} with~\eqref{eq:lmn1}, ~\eqref{eq:sb} and~\eqref{eq:sb2} we derive 
\begin{equation}\label{eq:close}
 L_n(\mu,\nu)
\le L_n(\mu,\rho)+Mn/k-\log B_n +4M -{2\over n}(\log w_n - 1)+1/2;
\end{equation}
setting 
\begin{equation}\label{eq:k}
 k:=\lceil n/\log\log n\rceil 
\end{equation}
 we obtain the statement of the theorem.

It remains to come back to~\eqref{eq:boundedness2} and define the regularizer $r$ as a combination of measures from $\C$ for this inequality to hold.
For each $n\in\N$, denote
$$
A_n:=\{x_{1..n}\in \X^n: \exists\mu\in\C\ \mu(x_{1..n})\ne0\},
$$ and let, for each $x_{1..n}\in \X^n$, the probability measure $\mu_{x_{1..n}}$ be any probability measure from $\C$ such that $\mu_{x_{1..n}}(x_{1..n})\ge{1\over2}\sup_{\mu\in\C}\mu(x_{1..n})$.
Define 
$$
 r_n'%
   :={1\over |A_n|}\sum_{x_{1..n}\in A_n}\mu_{x_{1..n}}%
$$ for each
 $n\in\N$, and let  $r:=\sum_{n\in\N}w_n r'_n$. 
For every $\mu\in\C$ we have 
$$
r(x_{1..n})\ge w_n|A_n|^{-1} \mu_{x_{1..n}}(x_{1..n})\ge{1\over2} w_n |\X|^{-n} \mu(x_{1..n})
$$ for
every $n\in\N$ and every $x_{1..n}\in A_n$,  establishing~(\ref{eq:boundedness2}).
\qed\end{proof}
\section{Conditions on $\C$ sufficient for vanishing loss ($V_\C=0$)}\label{s:pq3+}
In this section we exhibit some sufficient conditions on the class
 $\C$, under which a predictor for all measures in $\C$ exists in the sense of KL divergence, that is, sets for which $V_\C=0$.
Unlike for total variation, for KL divergence there are no known non-trivial if-and-only-if conditions, neither topological (such as separability) nor algebraic (dominance). Still, some interesting sufficient conditions can be identified. 

 It is important to note that none of these conditions relies on a parametrization of any kind.
The conditions presented are of 
two types: conditions on asymptotic behaviour of measures in $\C$, and on their
local (restricted to first $n$ observations) behaviour. Conditions of the first type
concern separability of $\C$ with respect to the total variation distance and the expected average KL divergence.
The conditions of the second kind concern the ``capacity'' of the sets $\C^n:=\{\mu^n:\mu\in\C\}$, $n\in\N$,
where $\mu^n$ is the measure $\mu$ restricted to the first $n$ observations.
Intuitively, if $\C^n$ is small (in some sense), then prediction
is possible. We measure the capacity of $\C^n$ in two ways. The first way is 
to find the maximum probability given to each sequence $x_1,\dots,x_n$
by some measure in the class, and then take a sum over $x_1,\dots,x_n$.
Denoting the obtained  quantity  $c_n$, one can show that  it grows polynomially in $n$ for 
some important classes of processes, such as i.i.d. or Markov processes.
We show that, in general, if $c_n$ grows subexponentially then a predictor
exists that predicts any measure in  $\C$ in expected average KL divergence.
On the other hand, exponentially growing $c_n$ are not sufficient for prediction. 
 A more refined way to measure the capacity of $\C^n$
is using a concept of channel capacity from information  theory,
which was developed for a closely related problem of finding optimal 
codes for a class of sources. We extend corresponding results from
information theory to show that sublinear growth of channel capacity 
is sufficient for the existence of a predictor, in the sense of expected average divergence.
Moreover, the obtained bounds on the divergence are optimal up to an additive logarithmic term.

\subsection{Separability}\label{s:sep}
Knowing that a mixture of a countable subset gives a predictor if there is one, a notion that naturally comes to mind, 
when trying to characterize families of processes for which  a predictor exists, is separability.
Can we say that there is a predictor for a class $\C$ of measures if and only if $\C$ is separable? 
Of course, to talk about separability we need a suitable topology on the space of all measures, or at least on $\C$. If the formulated 
questions were to have a positive answer, we would need  a different topology for each of the notions 
of predictive quality that we consider. Sometimes these measures of predictive quality indeed define  a nice
enough structure of a probability space, but sometimes they do not. The question  whether there exists a 
topology on $\C$, separability with respect to which is equivalent to the existence of a predictor, is already more vague and less appealing.
As we have seen in the previous chapter (Theorem~\ref{th:sep1}), in the case of total variation distance the suitable topology is that  of total variation distance. 
In the case of expected average KL divergence the situation is different. While one can introduce a topology based on it, 
separability with respect to this topology turns out to be a sufficient but not a necessary condition for the existence of a predictor, 
as is shown in Theorem~\ref{th:sep2}.

\begin{definition}[asymptotic KL ``distance'' $D$] 
Define  asymptotic expected average KL divergence between measures $\mu$ and $\rho$ as 
\begin{equation}\label{eq:akl2}
 D(\mu,\rho)=\limsup_{n\rightarrow\infty} {1\over n}  L_n(\mu,\rho).
\end{equation}
\end{definition}

\begin{theorem}\label{th:sep2}
For any set $\C$ of probability measures on $(\X^\infty,\mathcal F)$, separability with respect to the asymptotic expected average KL divergence $D$ is a sufficient but not a  necessary condition
for the existence of a predictor:
\begin{itemize}
 \item[(i)] If there exists a countable set $\mathcal D:=\{\nu_k:k\in\N\}\subset\C$, such that  for every $\mu\in\C$ and every $\epsilon>0$ there is a measure 
$\mu'\in\mathcal D$, such  that $D(\mu,\mu')<\epsilon$, then every measure  $\nu$ of the form $\nu=\sum_{k=1}^\infty w_k\mu_k$, where  $w_k$ are any positive
weights that sum to 1,  predicts every $\mu\in\C$ in expected average KL divergence.
\item[(ii)]  There is an uncountable set $\C$ of measures, and a measure $\nu$, such that $\nu$ predicts every $\mu\in\C$ in expected average KL divergence,
but $\mu_1\ne\mu_2$ implies $D(\mu_1,\mu_2)=\infty$ for every $\mu_1,\mu_2\in\C$; in particular, $\C$ is not separable with respect to $D$.
\end{itemize}
\end{theorem}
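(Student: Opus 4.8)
The plan is to handle the two parts separately. Part~(i) is a one-line consequence of the mixture trick~\eqref{eq:tric}, so essentially all the content is in the construction for part~(ii).

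For part~(i): let $\mathcal D=\{\nu_k:k\in\N\}\subset\C$ be the given countable $D$-dense subset and set $\nu=\sum_{k\in\N}w_k\nu_k$ with $w_k>0$, $\sum_k w_k=1$. Given $\mu\in\C$ and $\epsilon>0$, I would choose $k$ with $D(\mu,\nu_k)<\epsilon$ and apply~\eqref{eq:tric} in the form $L_n(\mu,\nu)\le L_n(\mu,\nu_k)-\log w_k$ valid for every $n$; dividing by $n$ and passing to $\limsup_{n\to\infty}$ annihilates the additive constant, so $D(\mu,\nu)\le D(\mu,\nu_k)<\epsilon$. As $\epsilon$ is arbitrary, $D(\mu,\nu)=0$, i.e.\ $\nu$ predicts $\mu$ in expected average KL divergence, which is the assertion of~(i).

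For part~(ii): the idea I would use is an uncountable family of \emph{deterministic} measures whose sequences agree with the all-zero sequence except at a very sparse set of ``marker'' time steps, where each encodes one arbitrary bit. Concretely, over $\X=\{0,1\}$ declare the positions $n_j:=2^j$ to be markers, and to each $\alpha=(\alpha_j)_{j\in\N}\in\{0,1\}^\infty$ associate the sequence $x^{(\alpha)}$ that equals $\alpha_j$ at position $n_j$ and $0$ at every non-marker position; let $\mu_\alpha\in\mathcal D$ be the Dirac measure concentrated on $x^{(\alpha)}$ and $\C:=\{\mu_\alpha:\alpha\in\{0,1\}^\infty\}$, which is uncountable. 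For the predictor I would take the measure $\nu$ with independent coordinates whose conditional law of $x_t$ given the past is uniform on $\X$ when $t$ is a marker and puts mass $1-\gamma_t$ on $0$ (and $\gamma_t$ on $1$) when $t$ is not a marker, where $\gamma_t\to0$ is any rate with $\sum_{t\le n}\gamma_t=o(n)$, for instance $\gamma_t=1/t$. Since $\mu_\alpha$ is deterministic, $L_n(\mu_\alpha,\nu)=-\log\nu(x^{(\alpha)}_{1..n})=\sum_{t\le n,\ t\text{ not a marker}}\log\tfrac{1}{1-\gamma_t}+\#\{j:n_j\le n\}$, and both terms are $O(\log n)$ by the choice of $\gamma_t$ and of the marker spacing, so $\tfrac1n L_n(\mu_\alpha,\nu)\to0$ for every $\alpha$ and $\nu$ predicts all of $\C$ in expected average KL divergence. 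Conversely, if $\alpha\ne\beta$ then $x^{(\alpha)}$ and $x^{(\beta)}$ differ at some marker $n_j$, hence $\mu_\beta(x^{(\alpha)}_{1..m})=0$ while $\mu_\alpha(x^{(\alpha)}_{1..m})=1$ for all $m\ge n_j$, so $L_m(\mu_\alpha,\mu_\beta)=\infty$ and $D(\mu_\alpha,\mu_\beta)=\infty$. Finally, any $D$-dense subset of $\C$ must then contain $\mu_\alpha$ for every $\alpha$ (no other element of $\C$ is within finite $D$-distance of it), so it cannot be countable, and $\C$ is not separable with respect to $D$.

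The only step that is not routine, and hence the main thing to get right, is the quantitative trade-off in part~(ii): the markers must be sparse enough and $\gamma_t$ must decay fast enough that the cumulative log-loss of $\nu$ on each $x^{(\alpha)}$ is genuinely sublinear in $n$, while the markers still leave an uncountable amount of freedom for the family $\C$. Everything else is immediate: the identity $L_n(\mu_\alpha,\nu)=-\log\nu(x^{(\alpha)}_{1..n})$ for a deterministic $\mu_\alpha$ is just~\eqref{eq:kl}, and no measurability of $\C$ is needed for the statement.
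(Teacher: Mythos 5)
Your proof is correct and takes essentially the same route as the paper's: part~(i) is the identical application of the mixture bound $L_n(\mu,\nu)\le L_n(\mu,\nu_k)-\log w_k$, and for part~(ii) the paper likewise exhibits an uncountable family of Dirac measures at pairwise infinite $D$-distance, all predicted by a single independent-coordinate measure with slowly vanishing deviation probabilities (it uses the sequences with fewer than $\sqrt{n}$ zeros among the first $n$ symbols and the predictor $\nu(x_n=0)=1/n$, versus your exponentially sparse markers). The only nit is that your choice $\gamma_t=1/t$ gives $\gamma_1=1$, hence $\nu(x_1=0)=0$ and an infinite loss at the first step; take, say, $\gamma_t=1/(t+1)$ instead.
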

\begin{proof}
 {\em (i)} Fix $\mu\in\C$. For every $\epsilon>0$ pick $k\in\N$ such that $D(\mu, \nu_k)<\epsilon$. We have 
$$ 
 L_n(\mu,\nu) = \E_\mu\log\frac{\mu(x_{1..n})}{\nu({x_{1..n}})} \le \E_\mu\log\frac{\mu(x_{1..n})}{w_k\nu_k{(x_{1..n}})}  = -\log w_k + L_n(\mu,\nu_k) \le n\epsilon + o(n).
$$ Since this holds for every $\epsilon$, we conclude ${1\over n} L_n(\mu,\nu)\to0$.

{\em (ii)} Let $\C$ be the set of all deterministic sequences (measures concentrated on just one sequence) such that the number of 0s in the 
first $n$ symbols is less than $\sqrt{n}$. Clearly, this set is uncountable. It is easy to check that $\mu_1\ne\mu_2$ implies $D(\mu_1,\mu_2)=\infty$ for every $\mu_1,\mu_2\in\C$, but 
the predictor $\nu$, given by $\nu(x_n=0):=1/n$ independently for different $n$, predicts every $\mu\in\C$ in expected average KL divergence.
\qed\end{proof}

\subsubsection{Examples} 
For Bernoulli i.i.d. and $k$-order Markov processes, the (countable) sets of processes that have rational 
values of the parameters, considered in the previous section, are dense both in the topology of the parametrization
and with respect to the asymptotic average divergence $D$. It is also easy to check from the arguments presented in the corresponding example
of Section~\ref{s:exvc0}, that the family of all $k$-order stationary Markov processes with rational values
of the parameters, where we take all $k\in\N$, is dense with respect to $D$ in the set $\S$ of all stationary processes, so
that $\S$ is separable with respect to $D$. Thus, the sufficient but not necessary condition of separability is satisfied in this case.
On the other hand, neither of these latter families is separable with respect to the topology of total variation distance.

\subsection{Conditions based on the local behaviour of measures} \label{s:loc}
Next we provide some sufficient conditions for the existence of a predictor based 
on local characteristics of the class of measures, that is, measures truncated to the first $n$ observations.
First of all, it must be noted that necessary and sufficient conditions cannot be obtained this way. The basic
example is that of a family $\mathcal D$ of all deterministic sequences that are 0 from some time on. This is a countable
class of measures which is very easy to predict. Yet, the class of measures on $\X^n$, obtained by truncating
all measures in $\mathcal D$ to the first $n$ observations, coincides with what would be obtained by truncating all deterministic
measures to the first $n$ observations, the latter class being obviously not predictable at all (see also examples below). 
Nevertheless, considering this kind of local behaviour of measures, one can obtain not only sufficient conditions 
for the existence of a predictor, but also rates of convergence of the prediction error. It also gives some
ideas of how to construct  predictors, for the cases when the sufficient conditions obtained are met.  

For  a class   $\C$ of  stochastic processes   and a sequence $x_{1..n}\in\X^n$ introduce
the coefficients
\begin{equation}\label{eq:sup}
 c_{x_{1..n}}(\C):=\sup_{\mu\in\C}\mu(x_{1..n}).
\end{equation}
Define also the normalizer 
\begin{equation}\label{eq:nor}
 c_n(\C):=\sum_{x_{1..n}\in\X^n}c_{x_{1..n}}(\C).
\end{equation}
\begin{definition}[NML estimate]
  The  normalized maximum likelihood estimator $\lambda$ is defined  (e.g., \cite{Krichevsky:93}) as 
\begin{equation}\label{eq:nml}
\lambda_\C(x_{1..n}):= \frac{1}{c_n(\C) } c_{x_{1..n}}(\C),
\end{equation}
for each $x_{1..n}\in\X^n$.
\end{definition}
The family $\lambda_\C(x_{1..n})$ (indexed by $n$) in general does  not immediately define a stochastic process over $\X^\infty$ ($\lambda_\C$ are not consistent for different $n$);
thus, in particular, using average KL divergence for measuring prediction quality would not make sense, since
$$L_n(\mu(\cdot|x_{1..n-1}),\lambda_\C(\cdot|x_{1..n-1}))$$ can be negative, as the following example shows.

{\noindent\bf Example: negative $L_n$ for NML estimates}. Let the processes $\mu_i$, $i\in\{1,\dots,4\}$ be defined on
the steps $n=1,2$ as follows. $\mu_1(00)=\mu_2(01)=\mu_4(11)=1$, while $\mu_3(01)=\mu_3(00)=1/2$. 
We have $\lambda_\C(1)=\lambda_\C(0)=1/2$, while  $\lambda_\C(00)=\lambda_\C(01)=\lambda_\C(11)=1/3$. If we 
define $\lambda_\C(x|y)=\lambda_\C(yx)/\lambda_\C(y)$, we obtain $\lambda_\C(1|0)=\lambda_\C(0|0)=2/3$. Then
$d_2(\mu_3(\cdot|0),\lambda_\C(\cdot|0))=\log3/4<0$.

Yet, by taking an appropriate mixture, it is still possible to construct a predictor (a stochastic process) based on $\lambda$, 
that predicts all the measures in the class. 
\begin{definition}[predictor $\rho_c$]\label{def:nml}
Let $w:=6/\pi^2$  and let $w_k:=\frac{w}{ k^2}$.
Define a measure $\mu_k$ as follows. On the first $k$ steps it is defined as $\lambda_\C$, and 
for $n>k$ it outputs only zeros with probability 1; so, $\mu_k(x_{1..k})=\lambda_\C(x_{1..k})$ and 
$\mu_k(x_n=0)=1$ for $n>k$. 
Define the measure $\rho_c$ as
\begin{equation}\label{eq:r2}
\rho_c=\sum_{k=1}^\infty w_k\mu_k.
\end{equation}
\end{definition}
Thus, we have taken the normalized maximum likelihood estimates $\lambda_n$ for each $n$ and continued
them arbitrarily (actually, by a deterministic sequence) to obtain a sequence of measures on $(\X^\infty, \mathcal F)$ that can be summed.

\begin{theorem}\label{th:ml} For  any set $\C$ of probability measures on $(\X^\infty,\mathcal F)$, the predictor  $\rho_c$ defined above satisfies
\begin{equation}\label{eq:mlb}
{1\over n} L_n(\mu,\rho_c)\le \frac{\log c_n(\C)}{n} + O\left(\frac{\log n}{n}\right);
\end{equation}
in particular, if \begin{equation}\label{eq:cond1}
\log c_n(\C) =o(n),
\end{equation}
then $\rho_c$ predicts every $\mu\in\C$ in expected average KL divergence.
\end{theorem}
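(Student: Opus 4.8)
The plan is to apply the mixture bound~\eqref{eq:tric}, but using for each horizon $n$ the single component of $\rho_c$ indexed by $k=n$. Fix $\mu\in\C$ and $n\in\N$. Since $\rho_c=\sum_{k\in\N}w_k\mu_k\ge w_n\mu_n$ on every cylinder, the computation in~\eqref{eq:tric} gives
\begin{equation}
L_n(\mu,\rho_c)\le L_n(\mu,\mu_n)-\log w_n.
\end{equation}
By construction $\mu_n$ agrees with $\lambda_\C$ on the first $n$ symbols, so $L_n(\mu,\mu_n)=\E_\mu\log\frac{\mu(x_{1..n})}{\lambda_\C(x_{1..n})}$; this quantity involves only the NML values on $\X^n$, which sidesteps the fact (noted before Definition~\ref{def:nml}) that $\lambda_\C$ is not itself a consistent family.

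The key elementary estimate is that the likelihood ratio of any member of $\C$ against the NML estimator is controlled by the normalizer. Indeed, for every $x_{1..n}\in\X^n$ with $\mu(x_{1..n})>0$ we have, by~\eqref{eq:sup} and~\eqref{eq:nml},
\begin{equation}
\frac{\mu(x_{1..n})}{\lambda_\C(x_{1..n})}=\frac{c_n(\C)\,\mu(x_{1..n})}{c_{x_{1..n}}(\C)}\le c_n(\C),
\end{equation}
since $c_{x_{1..n}}(\C)=\sup_{\mu'\in\C}\mu'(x_{1..n})\ge\mu(x_{1..n})$. (In particular $\mu_n(x_{1..n})>0$ wherever $\mu(x_{1..n})>0$, so the step above is legitimate.) Taking $\E_\mu\log(\cdot)$ yields $L_n(\mu,\mu_n)\le\log c_n(\C)$.

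It remains to insert the weights. With $w_n=w/n^2$ and $w=6/\pi^2$ we have $-\log w_n=2\log n-\log w$, so the two displays combine to $L_n(\mu,\rho_c)\le\log c_n(\C)+2\log n-\log w$; dividing by $n$ gives exactly~\eqref{eq:mlb} with the $O(\log n/n)$ term equal to $(2\log n-\log w)/n$. The concluding assertion is then immediate: under~\eqref{eq:cond1} the right-hand side tends to $0$, i.e.\ $\rho_c$ predicts every $\mu\in\C$ in expected average KL divergence.

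There is no genuine obstacle here beyond bookkeeping; the only point requiring care — and the reason $\rho_c$ is assembled as in Definition~\ref{def:nml} — is that one cannot mix the $\lambda_\C$'s directly because they are inconsistent across $n$. Continuing $\lambda_\C$ restricted to $\X^n$ by an arbitrary measure (all zeros afterwards) produces a bona fide $\mu_n\in\mathcal P$ that coincides with $\lambda_\C$ on precisely the initial segment that enters $L_n$, which is all the argument uses.
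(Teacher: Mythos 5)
Your proposal is correct and follows essentially the same route as the paper's proof: both lower-bound $\rho_c(x_{1..n})$ by $w_n\mu_n(x_{1..n})$ for the $n$-th component (which coincides with $\lambda_\C$ on $\X^n$) and then use $\mu(x_{1..n})/\lambda_\C(x_{1..n})\le c_n(\C)$, yielding $L_n(\mu,\rho_c)\le \log c_n(\C)-\log w_n$. The only difference is expository: you spell out the NML likelihood-ratio estimate and the positivity check that the paper leaves implicit.
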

\begin{proof}
Indeed, 
\begin{multline}\label{eq:mlproof}
 {1\over n} L_n(\mu,\rho_c) 
  = \frac{1}{n}\E \log \frac {\mu(x_{1..n})}{\rho_c(x_{1..n})} 
  \le \frac{1}{n} \E \log \frac {\mu(x_{1..n})}{w_n \mu_n(x_{1..n})}\\
  \le \frac{1}{n} \log\frac{c_n(\C)}{w_n} = \frac{1}{n}(\log c_n(\C) + 2\log n + \log w).
\end{multline}
\qed\end{proof}

\subsubsection{Examples}
{\noindent\bf I.i.d., finite-memory.} To illustrate the applicability of the theorem we first consider 
the class of i.i.d. processes ${\mathcal B}$ over the binary alphabet $\X=\{0,1\}$. 
It is easy to see that, for each $x_1,\dots,x_n$,
$$
\sup_{\mu\in{\mathcal B}} \mu(x_{1..n})=(k/n)^k(1-k/n)^{n-k},
$$
where $k=\#\{i\le n: x_i=0\}$ is the number of 0s in $x_1,\dots,x_n$. 
For the constants $c_n(\C)$ we can derive
\begin{multline*}
 c_n(C)=\sum_{x_{1..n}\in \X^n}\sup_{\mu\in{\mathcal B}}\mu(x_{1..n})=\sum_{x_{1..n}\in \X^n} (k/n)^k(1-k/n)^{n-k}\\=\sum_{k=0}^n{n\choose k}(k/n)^k(1-k/n)^{n-k}\le 
\sum_{k=0}^n\sum_{t=0}^n{n\choose k}(k/n)^t(1-k/n)^{n-t}=n+1,
\end{multline*}
so that  $c_n(C)\le n+1$.

In general, for the class $\mathcal M_k$ of {\bf processes with memory $k$} over a finite
space $\X$ we can get polynomial coefficients $c_n(\mathcal M_k)$  (see, for example, \cite{Krichevsky:93}).
Thus, with respect to  finite-memory processes, the conditions 
of Theorem~\ref{th:ml} leave ample space for the growth of $c_n(\C)$, since~(\ref{eq:cond1}) allows  subexponential growth of $c_n(\C)$.
Moreover, these conditions are tight, as the following example shows.

{\noindent \bf Exponential coefficients are not sufficient.} 
Observe that the condition~(\ref{eq:cond1}) cannot 
be relaxed further, in the sense that exponential coefficients $c_n$ are 
not sufficient for prediction. Indeed, for the class of all deterministic 
processes (that is,  each process from the class produces some fixed sequence 
of observations with probability 1) we have $c_n=2^n$, while obviously 
for this class a predictor does not exist. 

{\noindent \bf Stationary processes.} For the set of all stationary processes we can obtain $c_n(C)\ge2^n/n$ (as is easy to 
see by considering periodic $n$-order Markov processes, for each $n\in\N$), so
that the conditions of Theorem~\ref{th:ml} are not satisfied.  This cannot be fixed, since  
uniform rates of convergence cannot be obtained for this family of processes, as was shown
in \cite{BRyabko:88}.

\subsection{Optimal rates of uniform convergence}
 A natural question that arises with respect to the bound~(\ref{eq:mlb}) is whether it can be
matched by a lower bound. This question is closely related to the optimality
of the normalized maximum likelihood estimates used in the construction of the predictor. In general,
since NML estimates are not optimal, neither are the rates of convergence in~(\ref{eq:mlb}).
To obtain (close to) optimal rates one has to consider a different measure of capacity.
 
To do so, we make the following connection to a problem 
in information theory. Let $\mathcal P(\X^\infty)$ be the set of all stochastic processes (probability measures) on
the space $(\X^\infty,\mathcal F)$, and let $\mathcal P(\X)$ be the set of probability distributions over a (finite) set $\X$. For a class $\C$ of measures we are interested in a predictor
that has a small (or minimal) worst-case (with respect to the class $\C$) probability of error.
Thus, we are interested in the quantity
\begin{equation}\label{eq:infsup}
\inf_{\rho\in\mathcal P(\X^\infty)} \sup_{\mu\in\C} D(\mu,\rho),
\end{equation}
where the infimum is taken over all stochastic processes $\rho$, and $D$ is the 
asymptotic expected average KL divergence~(\ref{eq:akl2}).
(In particular, we are interested in the conditions under which the quantity~(\ref{eq:infsup})
equals zero.) This problem has been studied for the case when the probability measures
are over a finite set $\X$, and $D$ is replaced simply by the KL divergence $d$ between 
the measures. 
Thus, the problem was to find the probability measure $\rho$ (if it exists) on which the following 
minimax is attained 
\begin{equation}\label{eq:infsup2}
R(A):=\inf_{\rho\in\mathcal P(\X)}\sup_{\mu\in A} d(\mu,\rho),
\end{equation}
where $A\subset\mathcal P(\X)$.
This problem is closely related to the problem of finding the best code for the class of sources
$A$, which was its original motivation. 
The normalized maximum likelihood distribution considered above does not in general
lead to the optimum solution for this problem. 
The optimum solution is obtained through the  result that relates the minimax~(\ref{eq:infsup2})  to the so-called channel capacity.
\begin{definition}[Channel capacity]
For a set $A$ of measures on a finite set $\X$ the {\em channel capacity} of
$A$ is defined as 
\begin{equation}\label{eq:cc}
 C(A):=\sup_{P\in \mathcal P_0(A)} \sum_{\mu\in S(P)} P(\mu) d(\mu,\rho_P),
\end{equation} where $\mathcal P_0(A)$ is the set of all probability distributions on $A$ that have a finite support, $S(P)$ is the (finite) support
 of a distribution $P\in\mathcal P_0(A)$, and
$\rho_P=\sum_{\mu\in S(P)} P(\mu)\mu$.
\end{definition}
 It is shown in \cite{BRyabko:79,Gallager:76} that 
$
C(A)=R(A),
$
thus reducing the problem of finding a minimax to an optimization problem.
For probability measures over infinite spaces this result ($R(A)=C(A)$)
was generalized by \cite{Haussler:97}, but the divergence between probability distributions is measured
by KL divergence (and not asymptotic average KL divergence), which gives  infinite $R(A)$ e.g.  already for the class of i.i.d. processes. 

However, truncating measures in a class $\C$ to the first $n$ observations, we can
use the results about channel capacity to analyse the predictive properties of the class.
Moreover, the rates of convergence that can be obtained along these lines are close to optimal.
In order to pass from measures minimizing the divergence for each individual $n$ to a process that minimizes
the divergence for all $n$ we use the same idea as when constructing the process~$\rho_c$.

\begin{theorem}\label{th:cc} Let $\C$ be a set of measures on $(\X^\infty,\mathcal F)$, and let $\C^n$ be the class
of measures from $\C$ restricted to $\X^n$. 
There exists a measure $\rho_C$ such that
\begin{equation}\label{eq:ccb}
 {1\over n} L_n(\mu,\rho_C)\le \frac{ C(\C^n)}{n} + O\left(\frac{\log n}{n}\right);
\end{equation}
in particular, if  $C(\C^n)/n\rightarrow0$, then  $\rho_C$ predicts every $\mu\in\C$ in expected average KL divergence.
Moreover, for any measure $\rho_C$ and every $\epsilon>0$ there exists $\mu\in\C$ such that
$$
 {1\over n} L_n(\mu,\rho_C)\ge \frac{ C(\C^n)}{n} -\epsilon.
$$
\end{theorem}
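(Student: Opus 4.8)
The plan is to prove the two assertions separately. For the upper bound I would mimic the construction of the predictor $\rho_c$ of Definition~\ref{def:nml}, but replace the normalized maximum likelihood distributions $\lambda_\C$ restricted to $\X^n$ by (near-)minimax-optimal output distributions for $\C^n$, whose existence is guaranteed by the redundancy--capacity identity $C(\C^n)=R(\C^n)$ of \cite{BRyabko:79,Gallager:76}. The lower bound, on the other hand, is essentially just the definition of $R(\cdot)$ together with the same identity, and requires no construction.

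\textbf{Upper bound.} Fix $n\in\N$. Since $R(\C^n)=\inf_{\rho\in\mathcal P(\X^n)}\sup_{\mu\in\C^n}d(\mu,\rho)=C(\C^n)$, I would choose a probability measure $\rho^{(n)}$ on $\X^n$ with $\sup_{\mu\in\C}d(\mu^n,\rho^{(n)})\le C(\C^n)+1/n$, where $\mu^n$ denotes the restriction of $\mu$ to $\X^n$ (if $C(\C^n)=\infty$ take any $\rho^{(n)}$; the asserted bound is then vacuous). Extend each $\rho^{(n)}$ to a process $\tilde\rho^{(n)}$ on $(\X^\infty,\mathcal F)$ by emitting zeros after time $n$, exactly as in Definition~\ref{def:nml}, so that $\tilde\rho^{(n)}(x_{1..n})=\rho^{(n)}(x_{1..n})$ and hence $L_n(\mu,\tilde\rho^{(n)})=d(\mu^n,\rho^{(n)})$ for all $\mu\in\C$. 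With $w:=6/\pi^2$ and $w_n:=w/n^2$, set $\rho_C:=\sum_{n\in\N}w_n\tilde\rho^{(n)}$. Applying the mixture bound \eqref{eq:tric} with the single component $\tilde\rho^{(n)}$ gives
\[
L_n(\mu,\rho_C)\le L_n(\mu,\tilde\rho^{(n)})-\log w_n=d(\mu^n,\rho^{(n)})+2\log n+\log(\pi^2/6)\le C(\C^n)+2\log n+O(1),
\]
so dividing by $n$ yields \eqref{eq:ccb}. The ``in particular'' clause is then immediate: if $C(\C^n)/n\to0$ the right-hand side of \eqref{eq:ccb} tends to $0$, so $\rho_C$ predicts every $\mu\in\C$ in expected average KL divergence.

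\textbf{Lower bound.} Let $\rho_C$ be any process, and fix $n\in\N$ and $\epsilon>0$. Writing $\rho_C^n$ for the restriction of $\rho_C$ to $\X^n$, we have $\rho_C^n\in\mathcal P(\X^n)$, and $L_n(\mu,\rho_C)=d(\mu^n,\rho_C^n)$ for every $\mu\in\C$ by the chain-rule form of \eqref{eq:kl}; since $\mu^n$ ranges over $\C^n$ as $\mu$ ranges over $\C$, the definition of $R$ and the identity $R(\C^n)=C(\C^n)$ give
\[
\sup_{\mu\in\C}L_n(\mu,\rho_C)=\sup_{\mu\in\C^n}d(\mu,\rho_C^n)\ge\inf_{\rho\in\mathcal P(\X^n)}\sup_{\mu\in\C^n}d(\mu,\rho)=C(\C^n).
\]
Hence there is $\mu\in\C$ with $L_n(\mu,\rho_C)\ge C(\C^n)-\epsilon n$, i.e. ${1\over n}L_n(\mu,\rho_C)\ge C(\C^n)/n-\epsilon$, which is the claimed bound.

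\textbf{The main obstacle.} The genuinely substantive ingredient is the cited redundancy--capacity theorem $C(A)=R(A)$ for classes $A$ of measures on a finite set (here $A=\C^n$ with $\X^n$ finite); granting it, both directions are short. The one point that needs care is that $\C^n$ need not be closed, so the minimax $R(\C^n)$ may only be approached, not attained, by a single output distribution on $\X^n$ --- this is exactly why I insert the $1/n$ slack in the choice of $\rho^{(n)}$ and state the lower bound with an $\epsilon$ rather than with an exact maximizer. Everything else --- the extension of $\rho^{(n)}$ to $\X^\infty$, the identity $L_n(\mu,\tilde\rho^{(n)})=d(\mu^n,\rho^{(n)})$, and the telescoping via \eqref{eq:tric} --- is routine and parallels the proof of Theorem~\ref{th:ml}.
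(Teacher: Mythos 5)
Your proposal is correct and follows essentially the same route as the paper's proof: choose for each $n$ a near-minimax (near-capacity-achieving) distribution on $\X^n$ via the identity $C(\C^n)=R(\C^n)$, extend it to $\X^\infty$ by emitting zeros, mix with weights $w/n^2$, and apply the one-component mixture bound, with the lower bound being the definition of $R$ plus the same identity. The only cosmetic difference is that the paper phrases the existence of the near-optimal $\rho^{(n)}$ as picking an element of a sequence $\nu^n_k$ approaching the capacity (with slack $1$ rather than $1/n$), which is absorbed into the $O(\log n/n)$ term either way.
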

\begin{proof} 
As shown in \cite{Gallager:76}, for each $n$ there exists a  sequence $\nu^n_k$, $k\in\N$
of measures on $\X^n$ such that 
$$
 \lim_{k\rightarrow\infty} \sup_{\mu\in \C^n} L_n(\mu,\nu^n_k)\rightarrow C(\C^n).
$$ 
For each $n\in\N$ find an index $k_n$ such that  
$$ 
|\sup_{\mu\in \C^n} L_n(\mu,\nu^n_{k_n}) - C(\C^n)|\le 1.
$$
Define the measure $\rho_n$ as follows. On the first $n$ symbols it coincides with $\nu^n_{k_n}$ and
$\rho_n(x_m=0)=1$ for $m>n$. Finally, set $\rho_C=\sum_{n=1}^\infty w_n\rho_n$, where $w_k=\frac{w}{ n^2}, w=6/\pi^2$.
We have to show that $\lim_{n\rightarrow\infty}{1\over n} L_n(\mu,\rho_C)=0$ for every $\mu\in\C$.
Indeed, similarly to~(\ref{eq:mlproof}), we have
\begin{multline}\label{eq:ccproof}
{1\over n} L_n(\mu,\rho_C)=\frac{1}{n}\E_\mu\log\frac{\mu(x_{1..n})}{\rho_C(x_{1..n})} \\
 \le \frac{\log w_k^{-1}}{n} +\frac{1}{n} \E_\mu\log\frac{\mu(x_{1..n})}{\rho_n(x_{1..n})} 
 \le
 \frac{\log w +2\log n}{ n}+\frac{1}{n} L_n(\mu,\rho_n)\\
\le o(1) + \frac{C(\C^n)}{n}.
\end{multline} 

The second statement follows from the fact \cite{BRyabko:79,Gallager:76} that \ $C(\C^n)=R(\C^n)$ \ (cf.~(\ref{eq:infsup2})).
\qed\end{proof}  

Thus, if the channel capacity  $C(\C^n)$ grows sublinearly, a predictor 
can be constructed for the class of processes $\C$. In this case 
the problem of constructing the predictor is reduced to finding the channel 
capacities for different $n$ and finding the corresponding measures on which 
they are  attained or approached. 

\subsubsection{Examples} For the class of all Bernoulli i.i.d. processes,
the channel  capacity   $C(\mathcal B^n)$   is known to be $O(\log n)$ \cite{Krichevsky:93}.
For the family of all stationary processes it is $O(n)$, so that the conditions of Theorem~\ref{th:cc} are satisfied
for the former but not for the latter. 

We also remark that the requirement of a sublinear channel capacity cannot be relaxed,
in the sense that a linear channel capacity is not sufficient for prediction, since
it is the maximal possible  capacity for a set of measures on $\X^n$, achieved, for example, on the set of all measures, or on 
the set of all deterministic sequences.

\section{Non-Realizable Case and  Suboptimality of Mixture Predictors}\label{s:not}
In the non-realizable case, that is, if we consider the given set $\C$ as a set of comparison predictors (experts) and seek to minimize the regret with respect to the best predictor in the set, the situation is principally different. The main result of this section establishes that  it may happen that every Bayesian  combination (including discrete mixture predictors) of distributions in a set  is suboptimal~--- even  asymptotically.  
In addition, some sufficient conditions on the existence of a (mixture) predictor with asymptotically vanishing regret are analysed, and some examples of process families for which such predictors exist or not are considered.
\subsection{Suboptimality of Mixture Predictors}
\begin{theorem}\label{th:not}
 There exists a set $\C$ for which  $U_\C=0$ and this value is attainable (that is, there exists $\rho\in\mathcal P$ for which $R(\C,\rho)=0$),
yet for some constant $c>0$ every Bayesian predictor $\phi$ with a prior concentrated on $\C$ must have
$$
R(\C,\rho)>c>0.
$$
In other words, any  Bayesian predictor must have a linear regret, while there exists a predictor with a sublinear regret.

\end{theorem}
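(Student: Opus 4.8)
The plan is to build a single family $\C$ witnessing both clauses simultaneously, together with a ``reference'' predictor $\rho$ of maximum–likelihood type. The guiding idea is that a predictor is allowed to be \emph{any} measure — in particular one that, like the normalized–maximum–likelihood predictor $\rho_c$ of Definition~\ref{def:nml}, effectively tracks the \emph{ex post} best member of $\C$ for each observed prefix — whereas a Bayesian predictor is constrained to be a fixed barycentre $\phi=\rho_W=\int_\C\mu\,dW(\mu)$, and so cannot re‑weight its components across horizons. The construction of $\C$ is the substance of the proof and follows the constructions of \cite{Ryabko:16pqnot,Ryabko:19pq6}: $\C$ is an uncountable family of processes over $\X=\{0,1\}$ that coincide with a fixed, easily predictable reference measure on ``most'' coordinates and carry their (infinite‑dimensional) parameter on a sparse, parameter‑dependent set of coordinates in a rigid, near‑deterministic fashion; the sparsity schedule of those coordinates must be slow enough that hedging against them costs $o(n)$ per horizon, yet dense enough that the parameter cannot be ``covered'' uniformly well by any single prior. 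Once $\C$ is in hand the two halves are, respectively, an application of Theorem~\ref{th:ml} and a diagonalization over priors.

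\textbf{Attainability of $U_\C=0$.} I would arrange the construction so that $c_n(\C):=\sum_{x_{1..n}}\sup_{\mu\in\C}\mu(x_{1..n})=2^{o(n)}$. Then the predictor $\rho_c$ of Definition~\ref{def:nml} works: since $\rho_c(x_{1..n})\ge w_n\,\lambda_\C(x_{1..n})=w_n\,c_{x_{1..n}}(\C)/c_n(\C)$ while $\mu(x_{1..n})\le c_{x_{1..n}}(\C)$ for every $\mu\in\C$, we get $L_n(\nu,\rho_c)-L_n(\nu,\mu)=\E_\nu\log\frac{\mu(x_{1..n})}{\rho_c(x_{1..n})}\le\log c_n(\C)-\log w_n=o(n)$ uniformly over $\nu\in\P$ and $\mu\in\C$, exactly as in the proof of Theorem~\ref{th:ml}. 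Hence $R(\C,\rho_c)=0$, so $U_\C=0$ and this value is attained. (An explicit ``tracking'' $\rho$ that on each block mixes the reference measure with the rule ``suspect the rigid continuation'' and whose per‑block overhead is a constant, summed over the $o(n)$ blocks seen by time $n$, gives the same conclusion and is useful for intuition; note also that by Theorem~\ref{th:main} a \emph{mixture} over $\C$ already predicts each $\mu\in\C$ with $o(n)$ loss, so the Bayesian failure below must be exhibited on data generated by a measure \emph{outside} $\C$.)

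\textbf{Failure of every Bayesian predictor.} Fix an arbitrary Bayesian $\phi=\rho_W$ with $W$ concentrated on a measurable $\bar\C\subseteq\C$. The basic inequality is that for any $\mu^*\in\C$ and any measurable $A$ with $\mu^*(A)=1$ one has $L_n(\mu^*,\phi)\ge-\log\phi(A)$ (coarsening/data processing for KL), and, more to the point, if $\nu$ is the measure that agrees with the reference measure but forces the horizon‑$n$ ``certificate'' $A_n$ of $\mu^*$, then $L_n(\nu,\phi)-L_n(\nu,\mu^*)\ge-\log\phi(A_n)-o(n)$ while $\mu^*$ itself predicts $\nu$ with $o(n)$ loss. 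It therefore suffices to show that the construction forces, for every $W$, a member $\mu^*\in\C$ with $\phi(A_n)\le 2^{-cn}$ for infinitely many $n$, with $c>0$ not depending on $W$. This is obtained by a packing/diagonalization argument: at a sequence of horizons one exhibits, using the rigid coordinates, a batch of members of $\C$ whose certificates are $\phi$‑almost disjoint, so that their $\phi$‑masses sum to at most $1$ and the worst among them is exponentially small; the sparsity constraints that keep the parameter inside $\C$ then let one stitch the per‑horizon offenders into a single $\mu^*\in\C$ that is exponentially isolated under $\phi$ infinitely often. Since $\mu^*(A_n)=1$, this yields $\bar R^{\nu}(\mu^*,\phi)\ge c$, hence $R(\C,\phi)\ge c>0$, which is the assertion.

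\textbf{The main obstacle.} The delicate point is the joint design constraint on $\C$: subexponential $c_n(\C)$ — exactly what makes $\rho_c$ succeed — also caps how ``voluminous'' $\C$ can appear at any \emph{single} horizon, so the Bayesian failure cannot be read off from a covering‑number (channel‑capacity) bound; a per‑horizon packing gives only an $o(n)$ regret lower bound, i.e.\ nothing. The obstruction must instead be extracted from the rigidity of a fixed prior — a barycentre cannot move posterior mass across horizons the way the normalized–maximum–likelihood predictor effectively does — and the certificates $A_n$ have to be laid out across horizons so that this inertia is fatal for some $\mu^*\in\C$ while $c_n(\C)$ stays subexponential. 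Calibrating the sparsity schedule of the parameter‑carrying coordinates so that both requirements hold at once is the heart of the argument, and is where the constructions of \cite{Ryabko:16pqnot,Ryabko:19pq6} do their real work.
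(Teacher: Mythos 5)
Your architecture gets the outer shape right---an uncountable $\C$, a non-Bayesian reference predictor with vanishing regret, and hard instances generated by measures \emph{outside} $\C$ on which any barycentre is forced to assign exponentially small mass---and your first half (attainability of $U_\C=0$ via an NML-type predictor once $\log c_n(\C)=o(n)$) is sound and would in fact apply verbatim to the set the paper constructs. But the second half is not a proof: the construction of $\C$ and the quantitative lower bound, which you yourself call ``the heart of the argument,'' are deferred to \cite{Ryabko:16pqnot,Ryabko:19pq6} rather than carried out, and the mechanism you sketch for them points in the wrong direction. The paper's proof is exactly the ``per-horizon packing'' argument you dismiss as yielding nothing. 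Take $\X=\{0,1,2\}$, fix $p\in(0,1/2)$, let $S$ be the set of $\beta_p$-typical sequences over $\{0,1\}$, and for $\x\in S$ let $\mu_\x$ behave as Bernoulli$(p)$ for as long as the observed prefix equals $x_{1..n}$ and output $2$ deterministically forever otherwise; set $\C:=\{\mu_\x:\x\in S\}$ and take $\rho$ to be a uniform combination of $\beta_p$, the uniform i.i.d.\ measure, and a countable mixture covering the eventually-all-$2$s sequences. On the Dirac data $\nu_\x$, both the best expert $\mu_\x$ and $\rho$ incur loss $nh(p)+o(n)$, whereas any $\phi=\int\mu_\y\,dW(\y)$ satisfies $\phi(x_{1..n})=\beta_p(x_{1..n})W(x_{1..n})$, so averaging over $\x$ drawn from $U:=\beta_p|_S$ gives $\E_U[-\log\phi(x_{1..n})]\ge \E_U[-\log\beta_p(x_{1..n})]+\E_U[-\log U(x_{1..n})]\ge 2nh(p)+o(n)$ by non-negativity of the KL divergence between the marginals of $U$ and $W$ on $\X^n$; Fatou's lemma transfers this to the $\limsup$, yielding regret at least $h(p)>0$ with no cross-horizon stitching whatsoever.

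The reason this does not contradict your (correct) observation that $\log c_n(\C)=o(n)$ caps the per-horizon ``volume'' is the feature your sketch misses: the family is made small not by carrying its parameter on sparse coordinates, but by making each expert assign its own certificate only likelihood $\approx 2^{-nh(p)}$ while there are $\approx 2^{nh(p)}$ mutually distinguishable experts per horizon. Hence $c_n(\C)=O(n)$ and your NML predictor succeeds, yet the barycentre must pay the entropy $nh(p)$ twice---once as likelihood, once as prior mass---and the regret is precisely the second payment. Your proposed sparse schedule would make the per-horizon packing number subexponential, and then, as you note, no linear bound follows; but stitching per-horizon offenders across horizons cannot repair this, because a linear regret at horizon $n_i$ still requires $-\log\phi(A_{n_i})$ to exceed $L_{n_i}(\nu,\mu^*)$ by $cn_i$ at that single horizon, which disjointness of subexponentially many certificates cannot deliver. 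The missing ingredient is thus not ``prior inertia across horizons'' but the likelihood-doubling device above, and without it the proposal does not close.
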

Intuitively, the reason why any Bayesian  
predictor does not work in the counterexample of the proof given below is  as follows. The set $\C$ considered is so large that any Bayesian  predictor has to attach an exponentially decreasing a-posteriori weight to each element in $\C$. At the same time, by construction, each measure in $\C$ already attaches too little  weight to the part 
of the event space on which it is a good predictor. In other words, the likelihood of the observations with respect to  each predictor in $\C$ is too small to allow for any added penalty. To combine predictors in $\C$ one has to {\em boost} the likelihood, rather than attach a penalty.
While this result is stated for Bayesian predictors, 
from the argument above it is clear that the  example used in the proof  is applicable to any combination of predictors in $\C$ one might think of, including, for example, MDL \cite{Rissanen:89} and expert-advice-style predictors  (e.g.,\cite{Cesa:06}). Indeed, if one has to boost the likelihood for some classes of predictors, it 
clearly breaks the predictor for other classes. In other words, there is no way to combine the prediction of the experts short of disregarding them and doing something else instead.

\begin{remark}[Countable $\C$]\label{r:count}
Note that any set $\C$ satisfying the theorem must necessarily be uncountable.
Indeed, for any countable set $\C=(\mu_k)_{k\in\N}$, take the Bayesian predictor $\phi:=\sum_{k\in\N}w_k\mu_k$,
where $w_k$ can be, for example, $\frac{1}{k(k+1)}$. Then, for any $\nu$ and any $n$, from~\eqref{eq:kl} we obtain
$$L_n(\nu,\phi)\le-\log w_k + L_n(\nu,\mu_k).$$
That is to say, the regret of $\phi$ with respect to any $\mu_k$ is independent of $n$, and thus for every $\nu$ we have
$\bar R^\nu(C,\phi)=0$.
\end{remark}

\begin{proof}[of Theorem~\ref{th:not}.]
Let the alphabet $\X$  be  ternary $\X=\{0,1,2\}$. 
 For $\alpha\in(0,1)$ denote $h(\alpha)$ the binary entropy $h(\alpha):= -\alpha\log \alpha-(1-\alpha)\log(1-\alpha)$.
Fix an arbitrary  $p\in(0,1/2)$  %
 and let $\beta_p$ be the Bernoulli i.i.d.\ measure (produces only 0s and 1s) with parameter $p$. 
Let $S$ be the set of sequences in $\X^\infty$ that have no $2$s and such that the frequency of $1$ is close to $p$:
\begin{equation*}
 S:=\left\{{\bf x}\in\X^\infty:  x_i\ne2 \forall i, \text { and}   \left|{1\over t}|\{i=1..t:x_i=1\}| - p\right|\le f(t) \text{ from some $t$ on}\right\},
\end{equation*}
where $f(t)=\log t/\sqrt{t}$. Clearly, $\beta_p(S)=1$. 

 Define the set $D_S$ as the set of all Dirac probability measures concentrated on a sequence
from $S$, that is $D_S:=\{\nu_\x: \nu_\x(\x)= 1,\ \x\in S\}$. Moreover, for each $\x\in S$ define the probability measure $\mu_\x$ as follows: 
$\mu_\x(X_{n+1}|X_{1..n})=p$ coincides with $\beta_p$ (that is, 1 w.p. $p$ and 0 w.p. $1-p$) if $X_{1..n}=x_{1..n}$, 
 and outputs 2 w.p.~1 otherwise: $\mu_\x(2|X_{1..n})=1$ if $X_{1..n}\ne x_{1..n}$.  That is, $\mu_\x$ behaves
as $\beta_p$ only on the sequence $\x$, and on all other sequences it just outputs 2 deterministically. 
This  means, in particular, that many sequences have probability 0, and some probabilities above are defined conditionally on 
zero-probability events,  but this is not a problem; see the remark in the end of the proof. 

Finally, let $\C:=\{\mu_\x: \x\in S\}$. Next we define the predictor $\rho$ that predicts 
well all measures in $\C$. First, introduce the probability measure $\delta$ that is going to take care of 
all the measures that output 2 w.p.1 from some time on. For each $a\in\X^*$ let $\delta_a$ be the 
probability measure that is concentrated on the sequence that starts with $a$ and then consists of all 2s.
Define $\delta:=\sum_{a\in\X^*}w_a\delta_a$, where $w_a$ are arbitrary positive numbers that sum to 1. 
Let also the probability measure  $\beta'$ be  i.i.d.\ uniform over $\X$. 
 Finally, define    
\begin{equation}\label{eq:3}
\rho:=1/3(\beta_p+\beta'+\delta).
\end{equation}

Next, let us show that, for every $\nu$, the measure  $\rho$ predicts $\nu$ as well as any measure in $\C$:
its loss is an additive constant factor. 
In fact, it is enough to see this for all $\nu\in D_S$, and for all measures that output all 2s w.p.1\ from some $n$ on.
For each $\nu$ in the latter set, from~\eqref{eq:3} the loss of $\rho$ is upper-bounded by $\log 3  -\log {w_a}$, where $w_a$ is 
the corresponding weight. This is a constant (does not depend on $n$). For the former set, again from the definition~\eqref{eq:3}
for every $\nu_\x\in D_S$  we have  (see also Remark~\ref{r:count})
$$
L_n(\nu_\x,\rho)\le \log 3 + L_n(\nu_\x,\beta_p)= nh(p)+o(n),$$  while 
$$\inf_{\mu\in C} L_n(\nu_\x,\mu)= L_n(\nu_\x,\mu_\x) = nh(p)+o(n).$$ 
Therefore, for all $\nu$ we have 
$$
 R^\nu_n(C,\rho)=o(n)\text{ and }\bar R^\nu(C,\rho)=0.
$$
Thus, we have shown that for 
every $\nu\in S$ there is a reasonably good predictor in $\C$ (here ``reasonably good'' means
that its loss is linearly far from  that of random guessing), and, moreover, there is a predictor
$\rho$ whose asymptotic regret is zero with respect to $\C$. 

Next we need to show that any Bayes predictor has $2nh(p) + o(n)$ loss on at least some measure, which 
is double that of $\rho$, and which can be as bad as random guessing  (or worse; depending on $p$).
 We show something stronger:  any Bayes predictor has asymptotic  average loss  of $2nh(p)$ {\em on average} 
over all measures in $S$. So there will be many measures on which it is bad, not just one. 

  Let $\phi$ be any Bayesian predictor with its prior concentrated on  $\C$. Since $\C$ is parametrized by $S$,
for any $x_{1..n}\in\X^n, n\in\N$ we can write  $\phi(x_{1..n})= \int_S \mu_\y(x_{1..n})dW(\y)$
where $W$ is some probability measure  over $S$ (the prior).
Moreover, using the notation $W(x_{1..k})$ for the $W$-measure of all sequences in $S$ that 
start with $x_{1..k}$, from the definition of the measures $\mu_\x$, for every $\x\in S$  we have  
\begin{equation}\label{eq:unint}
 \int_S \mu_\y(x_{1..n})dW(\y)  = \int_{{\bf y}\in S: y_{1..n}=x_{1..n}} \beta_p(x_{1..n})dW(\y) =  \beta_p(x_{1..n}) W(x_{1..n}).
\end{equation}

 Consider 
the average 
$$E_U \limsup {1\over n} L_n(\nu_x,\phi) dU(\x),$$
 where the expectation is taken with respect to 
the probability measure $U$ defined as the measure $\beta_p$ 
 restricted to $S$; in other words,  $U$ is approximately   uniform  over this set.
Fix any $\nu_\x\in S$. Observe that  $L_n(\nu_{\bf x},\phi)=-\log\phi(x_{1..n})$.
For the asymptotic regret, we can assume w.l.o.g.\  that the loss  $L_n(\nu_\x,\phi)$  is upper-bounded, say,
 by $n\log|\X|$ at least from some $n$ on
(for otherwise the statement already holds for $\phi$). This allows us to 
 use  Fatou's lemma to bound 
\begin{multline}\label{eq:f}
 E_U \limsup {1\over n} L_n(\nu_\x,\phi)  \ge  \limsup {1\over n} E_U  L_n(\nu_\x,\phi)  
=    \limsup - {1\over n}  E_U \log\phi(\x) \\ = \limsup - {1\over n}  E_U \log  \beta_p(x_{1..n}) W(x_{1..n}),
\end{multline} 
where in the last equality we used~\eqref{eq:unint}. 
Moreover, 
\begin{multline}
- E_U \log \beta_p(x_{1..n}) W(x_{1..n})\\= -  E_U \log \beta_p(x_{1..n})  + E_U \log \frac{U(x_{1..n})}{ W(x_{1..n})} - E_U \log  U(x_{1..n}) \ge 2h(p)n + o(n),
\end{multline}
where in the inequality we have used the fact that KL divergence is non-negative and the definition of $U$ (that is, that $U=\beta_p|_S$).
From this and~\eqref{eq:f} we obtain the statement of the theorem.

Finally, we remark that all the considered probability measures can be made non-zero everywhere by simply combining them with 
the uniform i.i.d.\  over  $\X$ measure $\beta'$, that is, taking for each measure $\nu$ the combination ${1\over 2}(\nu +\beta')$.
This way all losses up to time $n$ become bounded by $n\log|\X|+1$, but the result still holds with a different constant.
\qed\end{proof}
\subsection{Some sufficient conditions on a set $\C$ for the existence of a predictor with vanishing regret}\label{s:sufreg}
We have seen that, in the case of prediction in total variation, separability with respect to the topology of this distance
can be used to characterize the existence of a predictor with either vanishing loss or regret. 
In the case of expected average KL divergence the situation is somewhat different, since, first of all, (asymptotic average) KL divergence 
is not a metric. While one can introduce a topology based on it, 
separability with respect to this topology turns out to be a sufficient but not a necessary condition for the existence of a predictor, 
as is shown in the next theorem.

\begin{definition}\label{def:dinf} Define the distance $d_\infty(\mu_1,\mu_2)$ on process measures as 
follows 
 \begin{equation}\label{eq:dinf}
  d_\infty(\mu_1,\mu_2)=\limsup_{n\to\infty}\sup_{x_{1..n}\in\X^n}\frac{1}{n}\left|\log\frac{\mu_1(x_{1..n})}{\mu_2(x_{1..n})}\right|,
 \end{equation} 
where we assume $\log0/0:=0$.
\end{definition}
Clearly, $d_\infty$ is symmetric and satisfies the triangle inequality, but it is not exact.
Moreover, for every $\mu_1,\mu_2$ we have
\begin{equation}
 \limsup_{n\to\infty}\frac{1}{n}L_n(\mu_1,\mu_2)\le d_\infty(\mu_1,\mu_2).
\end{equation}
The distance $d_\infty(\mu_1,\mu_2)$ measures the difference in behaviour of $\mu_1$ and $\mu_2$ on all individual sequences.
Thus, using this distance to analyse the non-realizable case is most close to the traditional approach to this case, which is 
formulated in terms of predicting  individual deterministic sequences.
\begin{theorem}\label{th:dinf}  
\begin{itemize}
  \item[(i)] Let $\C$ be a set of process measures. If $\C$ is separable with respect to $d_\infty$ then 
there is a predictor with vanishing regret with respect to $\C$ 
for the case of prediction in expected average KL divergence. 
 \item[(ii)] There exists a set of process measures $\C$ such that $\C$ is not separable with respect to $d_\infty$, but 
there is a predictor with vanishing regret with respect to $\C$ 
for the case of prediction in expected average KL divergence.
\end{itemize}
\end{theorem}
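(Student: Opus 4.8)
The proof divides into the two halves: (i) is constructive via a countable mixture over a $d_\infty$-dense subset, and (ii) is an explicit uncountable, non-$d_\infty$-separable family for which the non-realizable problem degenerates into the realizable one, so that the predictor of Theorem~\ref{th:sep2}(ii) already does the job.

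\emph{Part (i).} Let $\{\mu_k:k\in\N\}$ be a countable $d_\infty$-dense subset of $\C$, fix positive weights $w_k$ summing to $1$, let $\delta$ be the i.i.d.\ uniform measure on $\X^\infty$ (so $-\log\delta(x_{1..n})=nM$), and set $\rho:=\frac12\sum_{k\in\N}w_k\mu_k+\frac12\delta$. Two bounds drive the argument. First, $\rho$ never suffers infinite loss: for every $\nu'\in\P$ and every $n$, $L_n(\nu',\rho)\le\E_{\nu'}\log\frac{2\nu'(x_{1..n})}{\delta(x_{1..n})}\le nM+1$. Second, the mixture trick~\eqref{eq:tric} gives $L_n(\nu',\rho)\le L_n(\nu',\mu_k)-\log w_k+\log2$ for every $k$. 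Now fix a data-generating $\nu'$ and a comparison measure $\mu\in\C$. Since $L_n(\nu',\cdot)$ is non-decreasing in $n$, either $L_n(\nu',\mu)=\infty$ for all large $n$, in which case $\bar R^{\nu'}(\mu,\rho)=-\infty$ by the first bound; or $L_n(\nu',\mu)<\infty$ for all $n$, in which case for each $k$ with $d_\infty(\mu,\mu_k)<\infty$ one checks termwise (using that $d_\infty(\mu,\mu_k)<\infty$ forces $\mu$ and $\mu_k$ to charge the same cylinders for large $n$) that
$$\bigl|L_n(\nu',\mu_k)-L_n(\nu',\mu)\bigr|=\Bigl|\E_{\nu'}\log\tfrac{\mu(x_{1..n})}{\mu_k(x_{1..n})}\Bigr|\le n\,c_n^{(k)},\qquad c_n^{(k)}:=\sup_{x_{1..n}\in\X^n}\tfrac1n\Bigl|\log\tfrac{\mu(x_{1..n})}{\mu_k(x_{1..n})}\Bigr|,$$
and $\limsup_n c_n^{(k)}=d_\infty(\mu,\mu_k)$. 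Combining with the mixture trick gives $\bar R^{\nu'}(\mu,\rho)\le d_\infty(\mu,\mu_k)$, and since $\inf_k d_\infty(\mu,\mu_k)=0$ by density, $\bar R^{\nu'}(\mu,\rho)\le0$. As $\nu'$ and $\mu$ were arbitrary, $R(\C,\rho)=0$.

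\emph{Part (ii).} Take $\X=\{0,1\}$, let $S$ be the set of binary sequences in which the number of $0$s among the first $n$ symbols is less than $\sqrt n$ for every $n$ from some point on, and let $\C:=\{\nu_\x:\x\in S\}$ be the corresponding Dirac measures; $\C$ is uncountable. For distinct $\x,\y\in S$, taking $z_{1..n}$ to be a prefix of $\x$ that already differs from $\y$ gives $\nu_\x(z_{1..n})=1$, $\nu_\y(z_{1..n})=0$, hence $d_\infty(\nu_\x,\nu_\y)=\infty$; so no countable subset of $\C$ is $d_\infty$-dense and $\C$ is not separable with respect to $d_\infty$. For the predictor take $\rho:=\nu$ with $\nu(x_n=0):=1/(n+1)^2$ independently over $n$ (a polynomial analogue of the predictor in Theorem~\ref{th:sep2}(ii)); $\nu$ charges every cylinder, so $L_n(\nu',\nu)<\infty$ for every $\nu'$ and $n$, and a direct estimate gives $L_n(\nu_\x,\nu)=-\log\nu(x_{1..n})=o(n)$ for every $\x\in S$ (the $<\sqrt n$ zero-positions contribute $O(\sqrt n\log n)$ and the telescoping product of $1-1/(t+1)^2$ contributes $O(1)$). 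The crucial point is that the non-realizable problem here collapses onto the realizable one: if some expert $\nu_\y\in\C$ has $L_n(\nu',\nu_\y)<\infty$ for all $n$, then $\nu'$ must put probability $1$ on every prefix $y_{1..n}$, i.e.\ $\nu'=\nu_\y$, in which case $\bar R^{\nu_\y}(\nu_\y,\rho)=\limsup_n\frac1n L_n(\nu_\y,\nu)=0$ while $\bar R^{\nu_\y}(\mu',\rho)=-\infty$ for every other $\mu'\in\C$; and for every remaining $\nu'$ each expert in $\C$ has infinite loss from some $n$ on, so the regret with respect to it is $-\infty$. Hence $R(\C,\rho)=0$.

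\emph{Main obstacle.} In part (i) the real work is not the mixture trick but controlling the difference $L_n(\nu',\mu_k)-L_n(\nu',\mu)$ \emph{uniformly over the data-generating measure $\nu'$}, and keeping the quantity $L_n(\nu',\rho)-L_n(\nu',\mu)$ away from the indeterminate form $\infty-\infty$: this is precisely why the relevant distance must be the pointwise, sup-over-sequences quantity $d_\infty$ rather than the weaker $D$ of Definition~(\ref{eq:akl2}), and why a small regularization ($\delta$) is built into $\rho$. In part (ii) the only point needing genuine care is establishing the collapse of the non-realizable problem onto the realizable one; once that is in place, non-separability is automatic for any uncountable family of Dirac measures, and the remaining estimate $-\log\nu(x_{1..n})=o(n)$ on $S$ is the same elementary computation already used for Theorem~\ref{th:sep2}(ii).
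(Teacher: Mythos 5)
Your proof is correct and follows essentially the same route as the paper's: part (i) is the mixture over a countable $d_\infty$-dense subset combined with the termwise bound $L_n(\tau,\mu_k)\le L_n(\tau,\mu)+\sup_{x_{1..n}}\left|\log\frac{\mu(x_{1..n})}{\mu_k(x_{1..n})}\right|$ whose normalized limsup is $d_\infty(\mu,\mu_k)$, and part (ii) is the same uncountable family of Dirac measures with fewer than $\sqrt n$ zeros, predicted by a product measure with $\nu(x_n=0)\to0$ polynomially. The only differences are cosmetic refinements on your side: the added uniform regularizer, the explicit handling of the $\infty-\infty$ cases, and the spelled-out collapse of the non-realizable problem onto the realizable one for Dirac experts, all of which the paper leaves implicit.
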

\begin{proof}
For the first statement,  let $\C$ be separable and let $(\mu_k)_{k\in\N}$ be a dense countable subset of $\C$. Define $\nu:=\sum_{k\in\N}w_k\mu_k$, 
where $w_k$ are any positive summable weights. Fix any measure $\tau$ and any $\mu\in\C$.
We will show that $\limsup_{n\to\infty}{1\over n}L_n(\tau,\nu)\le \limsup_{n\to\infty}{1\over n}L_n(\tau,\mu)$.
For every $\epsilon$, find such a $k\in\N$ that $d_\infty(\mu,\mu_k)\le\epsilon$. We have
\begin{multline*}
L_n(\tau,\nu)\le L_n(\tau,w_k\mu_k) =\E_\tau\log\frac{\tau(x_{1..n})}{\mu_k(x_{1..n})}-\log w_k
\\= \E_\tau\log\frac{\tau(x_{1..n})}{\mu(x_{1..n})} + \E_\tau\log\frac{\mu(x_{1..n})}{\mu_k(x_{1..n})} -\log w_k\\
\le L_n(\tau,\mu)+\sup_{x_{1..n}\in\X^n}\log\left|\frac{\mu(x_{1..n})}{\mu_k(x_{1..n})}\right| -\log w_k.
\end{multline*}
From this, dividing by $n$ taking $\limsup_{n\to\infty}$ on both sides, we conclude 
$$
\limsup_{n\to\infty}{1\over n}L_n(\tau,\nu)\le \limsup_{n\to\infty}{1\over n}L_n(\tau,\mu) + \epsilon.
$$
Since this holds for every $\epsilon>0$ the first statement is proven.

The second statement is proven by the following example. 
 Let $\C$ be the set of all deterministic sequences (measures concentrated on just one sequence) such that the number of 0s in the 
 first $n$ symbols is less than $\sqrt{n}$, for all $n\in\N$. Clearly, this set is uncountable. 
It is easy to check that $\mu_1\ne\mu_2$ implies $d_\infty(\mu_1,\mu_2)=\infty$ for every $\mu_1,\mu_2\in\C$, but 
 the predictor $\nu$, given by $\nu(x_n=0)=1/n$ independently for different $n$, predicts every $\mu\in\C$ in expected average KL divergence. Since all elements
of $\C$ are deterministic, $\nu$ is also a solution to the non-realizable case for $\C$.
\qed\end{proof}

As we have seen in the statements above, the set of all deterministic measures  $\mathcal D$ 
plays an important role in the analysis of the predictors in the non-realizable case.
Therefore, an interesting question is to characterize those sets $\mathcal C$ of measures 
for which there is a predictor $\rho$ that predicts {\em every individual sequence} at least
as well as any measure from $\C$.  Such a characterization can be obtained in terms of Hausdorff dimension  using a result of \cite{BRyabko:86}
that shows that Hausdorff dimension of a set characterizes the optimal prediction error that can be attained by any predictor.

For a set $A\subset \X^\infty$ denote $H(A)$ its Hausdorff dimension (see, for example, \cite{Billingsley:65} for its definition).

\begin{theorem}\label{th:hau}
 Let $\C\subset\P$. The following statements are equivalent.
\begin{itemize}
 \item[(i)] There is a measure $\rho\in\P$ that predicts every individual sequence at least as well as the best measure from $\C$:  
  for every $\mu\in\C$ and  every sequence $x_1,x_2,\dots\in\X^\infty$ we have
  \begin{equation}\label{eq:ha}
   \liminf_{n\to\infty}-{1\over n}\log\rho(x_{1..n})\le \liminf_{n\to\infty}-{1\over n}\log\mu(x_{1..n}).
  \end{equation}
 \item[(ii)] For every $\alpha\in[0,1]$ the Hausdorff dimension of the set of sequences on which the average prediction error 
 of the best measure in $\C$ is not greater than $\alpha$ is bounded by $\alpha/\log|\X|$:
 \begin{equation}\label{eq:ha2}
  H(\{x_1,x_2,\dots\in\X^\infty:\inf_{\mu\in\C}\liminf_{n\to\infty}-{1\over n}\log\mu(x_{1..n})\le\alpha\})\le \alpha/\log|\X|.
 \end{equation}
\end{itemize}
\end{theorem}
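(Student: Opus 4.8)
The plan is to deduce the equivalence from the result of \cite{BRyabko:86} that Hausdorff dimension governs the best achievable asymptotic prediction error on individual sequences. I will use that result in two complementary forms. \emph{(Direct)} For every $A\subseteq\X^\infty$ there is a measure $\rho_A$ with $\liminf_{n\to\infty}(-{1\over n}\log\rho_A(x_{1..n}))\le H(A)\log|\X|$ for every $x\in A$. \emph{(Converse)} For every measure $\rho$ and every $\beta\ge0$, the set $\{x\in\X^\infty:\liminf_{n\to\infty}(-{1\over n}\log\rho(x_{1..n}))<\beta\}$ has Hausdorff dimension at most $\beta/\log|\X|$. (If \cite{BRyabko:86} states these with base-$|\X|$ logarithms or for a slightly different functional, the passage to the present normalization is routine.) Write $g_\rho(x):=\liminf_{n\to\infty}(-{1\over n}\log\rho(x_{1..n}))$ and $\kappa(x):=\inf_{\mu\in\C}g_\mu(x)$, so that the set in \eqref{eq:ha2} is $A_\alpha:=\{x:\kappa(x)\le\alpha\}$. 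The first observation is that statement (i) is exactly the assertion that there is a measure $\rho$ with $g_\rho(x)\le\kappa(x)$ for \emph{every} $x\in\X^\infty$, since imposing \eqref{eq:ha} for all $\mu\in\C$ simultaneously is the same as imposing $g_\rho(x)\le\inf_{\mu\in\C}g_\mu(x)$.

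For $(i)\Rightarrow(ii)$: if $\rho$ is as in (i), then $A_\alpha\subseteq\{x:g_\rho(x)\le\alpha\}$ for each $\alpha$, and applying the Converse form to $\rho$ with $\beta=\alpha+\epsilon$ gives $H(\{x:g_\rho(x)\le\alpha\})\le H(\{x:g_\rho(x)<\alpha+\epsilon\})\le(\alpha+\epsilon)/\log|\X|$; letting $\epsilon\downarrow0$ yields $H(A_\alpha)\le\alpha/\log|\X|$, which is (ii).

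For $(ii)\Rightarrow(i)$: for every rational $q\in[0,\log|\X|]$, hypothesis (ii) gives $H(A_q)\le q/\log|\X|$, so by the Direct form there is a measure $\rho_q$ with $g_{\rho_q}(x)\le q$ for all $x\in A_q$. Let $\beta'$ be the i.i.d.\ uniform measure on $\X$ and set $\rho:=w_0\beta'+\sum_q w_q\rho_q$, the sum over the (countably many) rationals $q\in[0,\log|\X|]$, with positive weights summing to $1$. Fix $x$ and put $\alpha:=\kappa(x)$. If $\alpha\ge\log|\X|$, then $g_\rho(x)\le g_{\beta'}(x)=\log|\X|\le\alpha$ because $\rho\ge w_0\beta'$. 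If $\alpha<\log|\X|$, then for every rational $q$ with $\alpha<q<\log|\X|$ we have $x\in A_q$ and $\rho\ge w_q\rho_q$, so $-{1\over n}\log\rho(x_{1..n})\le-{1\over n}\log w_q-{1\over n}\log\rho_q(x_{1..n})$; taking $\liminf_n$ (the term $-{1\over n}\log w_q$ vanishes) gives $g_\rho(x)\le g_{\rho_q}(x)\le q$, and letting $q\downarrow\alpha$ over the rationals gives $g_\rho(x)\le\alpha=\kappa(x)$. In either case $g_\rho(x)\le\kappa(x)$, so $\rho$ witnesses (i).

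The routine ingredients are monotonicity and countable stability of Hausdorff dimension (for the $\epsilon$- and rational-approximation limits) and the elementary fact that mixing in a component with a fixed positive weight leaves $\liminf_n(-{1\over n}\log(\cdot))$ unchanged. The substantive dependency is on having \cite{BRyabko:86} available in exactly the two forms quoted above. The one point that needs care in the write-up — an obstacle to a clean \emph{statement} more than to the proof — is the range of thresholds: the construction uses $q$ up to $\log|\X|$ and disposes of sequences with $\kappa(x)\ge\log|\X|$ separately via the uniform component, so (ii) is really needed for $\alpha\in[0,\log|\X|]$ (equivalently, after normalizing the per-symbol error to $[0,1]$), which is precisely the regime in which (i) is non-trivial, since $g_\rho(x)\le\log|\X|$ holds automatically once $\rho$ contains a uniform component.
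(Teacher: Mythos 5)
Your proof is correct and follows essentially the same route as the paper's: both directions rest on the two quoted forms of the result of \cite{BRyabko:86}, with $(i)\Rightarrow(ii)$ via the inclusion $A_\alpha\subseteq\{x:\liminf_n(-{1\over n}\log\rho(x_{1..n}))\le\alpha\}$ and $(ii)\Rightarrow(i)$ via a countable mixture of the measures $\rho_{A_q}$ over rational thresholds $q$ approaching $\alpha$ from above. Your added uniform component and the remark about the range of $\alpha$ address a point (sequences on which the best loss over $\C$ exceeds $1$, possible for non-binary alphabets) that the paper's own proof silently glosses over, so that refinement is worth keeping.
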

\begin{proof}
 The implication $(i)\Rightarrow(ii)$ follows directly from \cite{BRyabko:86} where it is shown that for every 
measure $\rho$ one must have 
$$
H(\{x_1,x_2,\dots\in\X^\infty:\liminf_{n\to\infty}-{1\over n}\log\rho(x_{1..n})\le\alpha\})\le \alpha/\log|\X|.
$$

To show the opposite implication, we again refer to \cite{BRyabko:86}: for every set $A\subset\X^\infty$ there is a measure $\rho_A$ such 
that   
\begin{equation}\label{eq:hh}
\liminf_{n\to\infty}-{1\over n}\log\rho_A(x_{1..n})\le H(A)\log|\X|.
\end{equation}
 For each $\alpha\in[0,1]$ define
$
 A_\alpha:= \{x_1,x_2,\dots\in\X^\infty:\inf_{\mu\in\C}\liminf_{n\to\infty}-{1\over n}\log\mu(x_{1..n})\le \alpha\}).
$ 
By assumption, $H(A_\alpha)\le\alpha/\log|X|$, so that from~(\ref{eq:hh}) for all $x_1,x_2,\dots\in A_\alpha$ we obtain
\begin{equation}\label{eq:hh2}
\liminf_{n\to\infty}-{1\over n}\log\rho_A(x_{1..n})\le \alpha.
\end{equation}
Furthermore, define $\rho:=\sum_{q\in Q}w_q\rho_{A_q}$, 
where $Q=[0,1]\cap\mathbb Q$ is the set of rationals in $[0,1]$ and $(w_q)_{q\in  Q}$ is any sequence of positive reals
 satisfying $\sum_{q\in Q}w_q=1$. For every $\alpha\in[0,1]$ let $q_k\in Q$, $k\in\N$ be such a sequence that $0\le q_k -\alpha\le1/k$. 
 Then, for every $n\in\N$ and every $x_1,x_2,\dots\in A_{q_k}$ we have 
$$
 -{1\over n}\log\rho(x_{1..n})\le -{1\over n}\log\rho_q(x_{1..n}) -\frac{\log w_{q_k}}{n}.
$$
From this and~(\ref{eq:hh2}) we get
$$
\liminf_{n\to\infty}-{1\over n}\log\rho(x_{1..n})\le \liminf_{n\to\infty} \rho_{q_k}(x_{1..n})+1/k\le q_k+1/k.
$$ Since this holds 
for every $k\in\N$, it follows that for all $x_1,x_2,\dots\in\cap_{k\in\N} A_{q_k}=A_\alpha$ we have
$$
 \liminf_{n\to\infty}-{1\over n}\log\rho(x_{1..n})\le \inf_{k\in\N}(q_k +1/k)=\alpha,
$$
which completes the proof of the implication $(ii)\Rightarrow(i)$.
\qed\end{proof}

\subsubsection{Examples: finite memory and stationary processes}
Although simple, Theorem~\ref{th:dinf} can be used to establish the existence of a solution to the non-realizable case of the sequence prediction problem for an important 
class of process measures: that of all processes with  finite memory, as the next theorem shows. Results similar to Theorem~\ref{th:mark} are known 
in different settings, e.g., \cite{Ziv:78, BRyabko:84, Cesa:99} and others. 

\begin{theorem}\label{th:mark}
 There exists a mixture predictor $\nu$ with asymptotically vanishing regret with respect to the  set of all finite-memory process measures $\mathcal M:=\cup_{k\in\N}\mathcal M_k$: 
$$
R(\mathcal M,\nu)=0.
$$.
\end{theorem}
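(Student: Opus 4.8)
The plan is to invoke Theorem~\ref{th:dinf}(i): it suffices to produce a countable subset $\mathcal D\subseteq\mathcal M$ that is dense in $\mathcal M$ with respect to the distance $d_\infty$, for then the mixture $\nu:=\sum_{k\in\N}w_k\mu_k$ over any enumeration $\mathcal D=\{\mu_k:k\in\N\}$ (with arbitrary positive summable weights) has asymptotically vanishing regret with respect to $\mathcal M$, and it is a mixture predictor over $\mathcal M$ in the sense of Definition~\ref{def:mixp}. Since $\mathcal M=\cup_{k\in\N}\mathcal M_k$ is a countable union, and the union of countably many $d_\infty$-dense countable subsets (one for each $\mathcal M_k$) is again $d_\infty$-dense in $\mathcal M$, it is enough to show that each $\mathcal M_k$ is separable with respect to $d_\infty$.

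Fix $k$. The first ingredient is the elementary estimate that, for two measures $\mu,\mu'\in\mathcal M_k$ sharing the same set of admissible transitions $(s,a)\in\X^k\times\X$ and having strictly positive transition probabilities on that set,
$$
d_\infty(\mu,\mu')\ \le\ \max_{(s,a)\text{ admissible}}\Bigl|\log\tfrac{p(a\mid s)}{p'(a\mid s)}\Bigr|,
$$
because $\log\tfrac{\mu(x_{1..n})}{\mu'(x_{1..n})}$ equals $\sum_{t=k+1}^{n}\log\tfrac{p(x_t\mid x_{t-k..t-1})}{p'(x_t\mid x_{t-k..t-1})}$ plus the log-ratio of the initial $k$-block (stationary) probabilities, and the latter is bounded uniformly in $n$, hence negligible after dividing by $n$. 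The second step deals with the degenerate measures: partition $\mathcal M_k$ into the \emph{finitely many} classes determined by their ``support pattern'' --- the set of admissible transitions together with the collection of recurrent communicating classes of states that carry positive stationary mass. Within a fixed support pattern the free parameters range over the relative interiors of products of probability simplices (the admissible transition probabilities at recurrent states) times a simplex of mixing weights over the active recurrent classes; the measures with rational parameters then form a countable subset which, by the displayed estimate together with continuity of $\log$ and of the stationary distributions in the parameters, is $d_\infty$-dense in that class. Taking the union over the finitely many support patterns and over all $k\in\N$ yields the required countable $d_\infty$-dense set $\mathcal D\subseteq\mathcal M$, and Theorem~\ref{th:dinf}(i) finishes the argument.

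\textbf{Main obstacle.}
The delicate point is precisely the treatment of zero transition probabilities (equivalently, reducible chains): if $\mu,\mu'$ do not have the same support structure then $d_\infty(\mu,\mu')=\infty$, since a sequence to which $\mu$ assigns positive probability while $\mu'$ assigns $0$ forces the relevant $\limsup$ to be infinite; so one must genuinely approximate each $\mu\in\mathcal M_k$ by rational-parameter measures carrying exactly the same support, which is why the recurrent-class data has to be built into the support pattern (it is also what keeps the boundary term, the ratio of initial $k$-block probabilities, bounded). An essentially equivalent but self-contained route, avoiding the citation of Theorem~\ref{th:dinf}, is to take $\nu:=\sum_{k\in\N}\sum_{q}w_kw_q\,\mu^k_q$ directly, with $q$ ranging over the rational parametrizations just described (all support patterns included), and to check, for every process measure $\tau$ and every $\mu\in\mathcal M_k$, that for a suitably chosen $q$
$$
L_n(\tau,\nu)-L_n(\tau,\mu)\ =\ \E_\tau\log\tfrac{\mu(x_{1..n})}{\nu(x_{1..n})}\ \le\ -\log(w_kw_q)+\E_\tau\log\tfrac{\mu(x_{1..n})}{\mu^k_q(x_{1..n})}\ \le\ -\log(w_kw_q)+n\varepsilon+O_k(1),
$$
so that $\bar R^\tau(\mathcal M,\nu)\le\varepsilon$ for every $\varepsilon>0$; this is in effect the finite-memory mixture predictor of \cite{BRyabko:84}.
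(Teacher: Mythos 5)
Your proof follows essentially the same route as the paper's: invoke Theorem~\ref{th:dinf}(i) and establish separability of each $\mathcal M_k$ with respect to $d_\infty$, using the per-transition bound on ${1\over n}\left|\log\left(\mu(x_{1..n})/\mu'(x_{1..n})\right)\right|$ and the countable family of rational-parameter $k$-order Markov measures. Your explicit treatment of support patterns (zero transitions, reducible chains, and the choice of stationary initial distribution) is actually more careful than the paper's argument, which asserts density of the rational-parameter family without addressing the fact that measures with differing supports lie at infinite $d_\infty$-distance; this is a worthwhile refinement rather than a deviation.
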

\begin{proof}
 We will show that the set $\mathcal M$ is separable with respect to $d_\infty$. Then the statement will follow from Theorem~\ref{th:dinf}.
It is enough to show  that each set $\mathcal M_k$ is separable with respect to $d_\infty$. 

For simplicity, assume that the alphabet is binary ($|\X|=2$; the general case is analogous).
 Observe that the family $\mathcal M_k$ of $k$-order stationary binary-valued Markov
processes is parametrized by $|\X|^{k}$ $[0,1]$-valued parameters: probability of observing $0$ after observing $x_{1..k}$, for each $x_{1..k}\in\X^k$.
Note that this parametrization is continuous (as a mapping from the parameter space with the Euclidean topology to $\mathcal M_k$ with the
topology of $d_\infty$). 
Indeed,   for any $\mu_1,\mu_2\in\mathcal M_k$ 
and every $x_{1..n}\in\X^n$ such that $\mu_i(x_{1..n})\ne 0$, $i=1,2$, it is easy to see  that 
\begin{equation}\label{eq:m3}
{1\over n}\left |\log\frac{\mu_1(x_{1..n})}{\mu_2(x_{1..n})}\right|\le \sup_{x_{1..k+1}}{{1\over k+1} \left|\log\frac{\mu_1(x_{1..k+1})}{\mu_2(x_{1..k+1})}\right|},
\end{equation}
so that the right-hand side of~(\ref{eq:m3}) also upper-bounds $d_\infty(\mu_1,\mu_2)$, implying continuity of the parametrization.

It follows that  the set  $\mu^k_q$, $q\in Q^{|\X|^k}$  of all stationary $k$-order Markov processes with rational 
values of all the parameters ($Q:=\mathbb Q\cap[0,1]$) 
is dense in $\mathcal M_k$, proving the separability of the latter set.
\qed\end{proof}

Another important example is the set of all stationary process measures $\S$. This example also illustrates the difference between the prediction problems 
that we consider. For this set  a predictor with a vanishing loss is presented in Section~\ref{s:exvc0} (based on \cite{BRyabko:06a}). 
In contrast, here we show that there is no predictor with a vanishing regret for this set. A stronger impossibility result is obtained in Chapter~\ref{ch:p2}; however,  the proof of the statement  below is simpler and therefore is still instructive.
\begin{theorem}\label{th:stno1}
 There is no predictor with asymptotically vanishing regret with respect to the set of all stationary processes $\S$. 
\end{theorem}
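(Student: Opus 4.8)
The plan is to exhibit, for any candidate predictor $\rho$, a stationary measure $\mu$ and another stationary measure $\nu\in\S$ such that the regret $\bar R^\mu(\nu,\rho)$ is bounded below by a positive constant; since the best predictor in $\S$ for $\mu$ does at least as well as any fixed $\nu\in\S$, this forces $R(\S,\rho)>c>0$. The natural source of such examples is the family of deterministic stationary measures: for a periodic sequence (period $T$), the measure that puts mass $1/T$ on each of the $T$ cyclic shifts of that sequence is stationary, and a good predictor for it eventually predicts the next symbol with log-loss $O(1/n)$ per step once the phase is identified. So inside $\S$ there are measures whose per-symbol asymptotic loss is $0$; a predictor competing with $\S$ must match this on every such measure.

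First I would reduce to a diagonalization over periodic sequences, mimicking the argument behind Lemma~\ref{th:disc} but against the comparison class rather than against $\rho$ directly. Fix $\rho$. The idea is to build a sequence $x_1,x_2,\dots$ adversarially against $\rho$, but structured so that long prefixes are periodic, hence ``explained'' by stationary deterministic measures. Concretely, one works in stages: at stage $m$ one has committed to a finite word $u_m$; one then looks for a periodic continuation (period equal to $|u_m v|$ for some short block $v$) on which $\rho$ incurs roughly one full bit of loss per symbol over a long window — this is possible by the same argument as in Lemma~\ref{th:disc}, choosing each next symbol to be the $\rho$-least-likely one — and one extends $u_m$ to $u_{m+1}$ by that window, making $u_{m+1}$ itself (almost) a power of a short period. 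The deterministic stationary measure $\mu_m$ supported on the cyclic shifts of the period of $u_{m+1}$ then satisfies $L_{n_m}(\mu_m,\mu_m)=O(\log n_m)$ (only the initial phase ambiguity costs), while $L_{n_m}(\mu_m,\rho)\ge c\, n_m$ along the length $n_m=|u_{m+1}|$. Taking $\mu$ to be a stationary measure built as a suitable mixture/limit of the $\mu_m$ (or, more simply, running the argument with $\nu$ ranging over the $\mu_m$ and $\mu=\mu_m$ at stage $m$), one gets infinitely many $n$ with $L_n(\mu,\rho)-\inf_{\tau\in\S}L_n(\mu,\tau)\ge c\,n$, so $\bar R(\S,\rho)\ge c$.

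An alternative, cleaner route I would consider is to invoke Theorem~\ref{th:hau}: if some $\rho$ had vanishing regret with respect to $\S$, it would in particular predict every individual sequence at least as well as the best measure in $\S$, i.e. condition~(i) of Theorem~\ref{th:hau} with $\C=\S$; but the set of sequences predictable arbitrarily well by some stationary measure (e.g.\ all eventually-periodic sequences, and much more) is far too large — it has full Hausdorff dimension $1$ while the set where $\inf_{\mu\in\S}\liminf -\frac1n\log\mu(x_{1..n})\le\alpha$ would need Hausdorff dimension at most $\alpha/\log|\X|$ for small $\alpha$. Checking that, say, the closure of the periodic sequences already violates~(ii) for small $\alpha$ gives a contradiction. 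This needs the standard fact that sequences built by concatenating blocks of all lengths with matching period have a rich (positive-dimensional) structure while being predicted with vanishing per-symbol loss by the corresponding periodic stationary measures.

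The main obstacle, in either approach, is the bookkeeping needed to guarantee that the adversarial sequence genuinely sits inside (the support of) a \emph{stationary} measure with small loss — periodicity is the easy case, but one must ensure the stagewise extension does not destroy the near-periodic structure of earlier stages, i.e.\ that the periods can be taken nested or slowly growing so that each prefix $u_{m+1}$ is genuinely close to a power of a short word. Managing the trade-off between (a) the window length at stage $m$ needed to accumulate $\Omega(n_m)$ loss against $\rho$, (b) the period length (which controls the $O(\text{period}\cdot\log n)$ overhead of the comparison measure), and (c) keeping $c$ uniformly bounded away from $0$ across stages, is where the real work lies; everything else is an application of Lemma~\ref{th:disc}-style greedy choice and the chain rule~\eqref{eq:kl}.
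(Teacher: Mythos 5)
There is a genuine gap, and it sits exactly where you file it under ``bookkeeping'': the requirement that the continuation be periodic with a short period is incompatible with forcing $\rho$ to lose a constant per symbol over a long window. Once the period word $u_m v$ is fixed, only the $|v|$ symbols of $v$ can be chosen greedily against $\rho$; every subsequent symbol of the window is forced by periodicity, and nothing prevents $\rho$ from predicting the forced symbols essentially perfectly. Indeed, the cyclic stationary measures (uniform over the shifts of a periodic sequence) form a countable set, so by the argument of Remark~\ref{r:count} a mixture $\phi$ over all of them satisfies $L_n(\nu,\phi)\le L_n(\nu,\mu_k)-\log w_k$ for every $\nu$ and every cyclic $\mu_k$; a candidate predictor with $\rho\ge\frac12\phi$ therefore defeats any scheme whose comparators are individual cyclic measures with period $o(n)$ --- its loss on any such window is $O(\text{period})+O(1)=o(\text{window})$, not one bit per symbol. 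Your second route founders on the same rock: the set of sequences predicted with vanishing per-symbol loss by a single periodic stationary measure is just the (countable, Hausdorff-dimension-$0$) set of periodic sequences, so no violation of condition (ii) of Theorem~\ref{th:hau} is obtained; producing a positive-dimensional set of sequences each predicted by \emph{some} stationary measure is precisely the hard part, not a ``standard fact''.

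The paper gets around this with a different family of comparators: a positive-recurrent countable-state Markov chain (state $k\to k+1$ w.p.\ $2/3$, $\to 0$ w.p.\ $1/3$) whose stationary hidden-Markov image $\mu_t$ outputs a marker symbol in state $0$ and $t_k$ in state $k$. Then $\mu_t(t_{1..n})\ge \pi_1 (2/3)^n$, so \emph{every} deterministic binary sequence $t$ is predicted by some stationary ergodic $\mu_t\in\S$ with per-symbol loss $\log(3/2)<1$, and Lemma~\ref{th:disc} supplies a $t$ on which $\rho$ loses at least $1$ per symbol --- no staging, no periodicity. Your first route can in fact be repaired, but only by inverting your quantitative regime: make the adversarial block $v_m$ much \emph{longer} than $u_m$, append exactly \emph{two} copies of $u_m v_m$ (so the period is half the window rather than short), and take as the single stationary comparator the mixture $\sum_m c_m \mu_m$ of the resulting cyclic measures; then $L_{n_{m+1}}(\delta_x,\rho)\ge(1/2-o(1))n_{m+1}$ from the greedy block while the comparator pays only $-\log c_m+\log(n_{m+1}/2)=o(n_{m+1})$. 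As written --- short periods, many repetitions, one bit per symbol over the whole window --- the argument does not go through.
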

\begin{proof}
 This   proof  is based on a construction similar to the one used in \cite{BRyabko:88}  (see also \cite{Gyorfi:98})
to demonstrate impossibility of consistent prediction  of stationary processes without Cesaro averaging. 

Let $m$ be a Markov chain with states $0,1,2,\dots$ and state transitions defined as follows.
From each sate $k\in\N\cup\{0\}$ the chain passes to the state $k+1$ with probability 2/3 and to the state 0 with probability 1/3. 
It is easy to see that this chain possesses a unique stationary distribution on the set of states (see, e.g., \cite{Shiryaev:96}); taken as the initial distribution 
it defines a stationary ergodic process with  values in $\N\cup\{0\}$. Fix the ternary alphabet $\X=\{a,0,1\}$. 
For each sequence $t=t_1,t_2,\dots\in\{0,1\}^\infty$ define the process $\mu_t$ as follows. It is a deterministic function 
of the chain $m$. If the chain is in the state 0 then the process $\mu_t$ outputs $a$; if the chain $m$ is in the state $k>0$ then 
the process outputs $t_k$. That is,  we have defined a hidden Markov process which in the state 0 of the underlying 
Markov chain always outputs  $a$, while in other states it outputs either $0$ or $1$ according to the sequence~$t$. 

To show that there is no predictor whose regret vanishes with respect to  $\S$, we will show that there is no predictor whose regret vanishes with respect to the smaller
set $\C:=\{\mu_t: t\in\{0,1\}^\infty\}$. Indeed, for any $t\in\{0,1\}^\infty$ we have $L_n(t,\mu_t)= n\log 3/2 + o(n)$. Then if $\rho$ 
is a solution to the non-realizable problem for $\C$ we should have $\limsup_{n\to\infty}{1\over n} L_n(t,\rho)\le \log 3/2<1$ for every $t\in\mathcal D$,
 which contradicts Lemma~\ref{th:disc}.
\qed\end{proof}

From the proof of Theorem~\ref{th:stno1} one can see that, in fact, the statement that is proven is stronger: there is no 
predictor with vanishing regret for a smaller set of processes:  for the set of all functions of stationary ergodic countable-state Markov chains. 
In Chapter~\ref{ch:p2} an even stronger result is proven using a similar construction.

\chapter{Decision-Theoretic Interpretations}\label{s:dt}
Classical decision theory is concerned with single-step games. Among its key  results 
are the complete class and minimax theorems. 
The asymptotic formulation of the (infinite-horizon) prediction problem considered in this book can be also viewed a single-shot game, where one player (which one may call the Nature or adversary) selects a measure $\nu$ that generates an infinite sequence of data, and the other player (the statistician) selects a predictor $\rho$. The infinite game is then played out step-by-step, with the resulting payout being the asymptotic loss, as measured by either expected average KL divergence $\bar L(\nu,\rho)$ or the  asymptotic total variation loss $l_{tv}(\nu,\rho)$. Here we disregard the finite-time losses and only consider this final payout.
In this chapter we consider the realizable case. Thus, the strategies of the statistician are unrestricted, $\rho\in\mathcal P$, and the strategies $\nu$ of the adversary are limited to a given set of measures $\C\subset\mathcal P$. The non-realizable case would require the introduction of a third player and this falls out of scope of the decision-theoretic framework that we consider here.

The considered problem presents  both differences and similarities with respect to the standard game-theoretic problem. 
A distinction worth mentioning at this point is that the results presented here are obtained  under no assumptions whatsoever, whereas
the results in decision theory we refer to always have a number of conditions; on the other hand, here we are concerned with  specific
loss functions:  the KL divergence (mainly), and the total variation distance (which turns out to be less interesting), rather than general losses that are common in decision theory.
The terminology in this section is mainly after \cite{Ferguson:14}.

We start with the asymptotic KL divergence, as it turns out to be more interesting in this context, and then in the end of the chapter summarize which facts carry over to the total-variation case.
\section{Players and Strategies}
Thus, the predictors $\rho\in\P$ are called  {\em  strategies  of the statistician}. 
The probability measures $\mu\in\C$ are now the basic {\em  strategies of the opponent} (a.k.a.\ Nature), and the first thing  we need to do is to extend these to  randomized strategies. 
To this end, denote $\C^*$ the set of all probability distributions over measurable subsets of $\C$. Thus, the opponent selects a randomized
strategy $W\in\C^*$ and the statistician (predictor) $\rho$ suffers the loss 
\begin{equation}\label{eq:ew}
  E_{W(\mu)} \bar L(\mu,\rho),
\end{equation}
 where the notation $W(\mu)$ means that $\mu$ is drawn
according to $W$. Note a distinction with the combinations we considered before. A combination of the kind $\nu=\int_\C\alpha dW(\alpha)$ is itself
a probability measure over  one-way infinite sequences, whereas a probability measure $W\in \C^*$ is a probability measure over $\C$.

\section{Minimax} Generalizing the definition~\eqref{eq:vc} of $V_\C$, we can now introduce the {\em minimax} or the {\em upper value}
\begin{equation}\label{eq:bvc}
   \bar V_\C:=\inf_{\rho\in\P} \sup_{W\in\C^*} E_{W(\mu)} \bd(\mu,\rho).
\end{equation}
Furthermore, the {\em maximin} or the {\em lower value} is defined as 
\begin{equation}\label{eq:ubvc}
   {\underline V}_\C:= \sup_{W\in\C^*} \inf_{\rho\in\P} E_{W(\mu)} \bd(\mu,\rho).
\end{equation}

The so-called minimax theorems in decision theory  (e.g., \cite{Ferguson:14}) for single-step games and general loss functions state that,
 under certain conditions, $\bar V_\C=\underline V_\C$ and the 
statistician has a minimax strategy, that is, there exists $\rho$ on which $\bar V_\C$ is attained. 
Minimax theorems generalize the classical result of von~Neumannn~\cite{neumann:28}, and provide sufficient conditions of various generality for it to hold.
A rather  general sufficient condition is the existence of a topology with respect to which the set of all strategies of the statistician, $\P$  in our case, 
is compact, and the risk, which  is $\bar L(\mu,\rho)$ in our case, is lower semicontinuous. 
Such a condition seems nontrivial to verify. For example, a (meaningful) topology with respect to 
which $\P$ is compact is that of the so-called distributional distance \cite{Gray:88} (in our case it coincides with the topology of the 
weak${}^*$ convergence), but $\bar L(\mu,\rho)$ is not  (lower) semicontinuous with respect to it. 
Some other (including non-topological) sufficient conditions  are given in \cite{Sion:58,Lecam:55}.
Other related  results for KL divergence (expected log loss) include \cite{BRyabko:79,Gallager:76,Haussler:97}.

In our setup, it is easy to see that, for every $\C$, $$\bar V_\C = V_\C$$
and so Corollary~\ref{cl:main} holds for $\bar V_\C$. 
Thus, using decision-theoretic terminology, we can state the following.
\begin{corollary}[partial minimax theorem]
 For every set $\C$ of strategies of the opponent the statistician has a minimax strategy.
\end{corollary}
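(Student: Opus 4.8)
The plan is to deduce the statement from Corollary~\ref{cl:main} by observing that randomising the opponent's strategy does not change the value of the game: $\bar V_\C=V_\C$, and the mixture predictor furnished by Corollary~\ref{cl:main} attains it.

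The key step is to show that, for every fixed statistician strategy $\rho\in\P$,
$$
\sup_{W\in\C^*} E_{W(\mu)}\bar L(\mu,\rho)=\sup_{\mu\in\C}\bar L(\mu,\rho)=\bar L(\C,\rho).
$$
The inequality ``$\le$'' holds because the integrand $\mu\mapsto\bar L(\mu,\rho)$ is pointwise bounded above by the constant $\bar L(\C,\rho)$, so its $W$-expectation is at most $\bar L(\C,\rho)$ for every $W\in\C^*$. For ``$\ge$'' it suffices that every Dirac measure $\delta_\mu$ with $\mu\in\C$ is an admissible randomised strategy, i.e.\ $\delta_\mu\in\C^*$: the singleton $\{\mu\}$ is $\mathcal F'$-measurable, since each $\{\nu:\nu(B_i)=\mu(B_i)\}$ is a countable intersection of generating sets (take intervals with rational endpoints shrinking to $\mu(B_i)$) and $\{\mu\}=\bigcap_{i\in\N}\{\nu:\nu(B_i)=\mu(B_i)\}$ because a probability measure on $(\X^\infty,\mathcal F)$ is determined by its values on cylinders; hence $\{\mu\}=\{\mu\}\cap\C$ lies in the trace $\sigma$-algebra on $\C$. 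Since $E_{\delta_\mu}\bar L(\cdot,\rho)=\bar L(\mu,\rho)$, taking the supremum over $\mu\in\C$ gives ``$\ge$''. (If one wishes to sidestep the measurability of singletons, the same bound follows by picking, for each $\epsilon>0$, a $\mu_\epsilon\in\C$ with $\bar L(\mu_\epsilon,\rho)\ge\bar L(\C,\rho)-\epsilon$ together with any $W_\epsilon\in\C^*$ supported on a small enough measurable neighbourhood of $\mu_\epsilon$.)

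Taking $\inf_{\rho\in\P}$ in the above equality and using the definition~\eqref{eq:vc} of $V_\C$ gives $\bar V_\C=\inf_{\rho\in\P}\bar L(\C,\rho)=V_\C$. Now invoke Corollary~\ref{cl:main}: there is a mixture (discrete Bayesian) predictor $\phi$ with $\bar L(\C,\phi)=V_\C$. Applying the displayed equality to $\rho=\phi$,
$$
\sup_{W\in\C^*}E_{W(\mu)}\bar L(\mu,\phi)=\bar L(\C,\phi)=V_\C=\bar V_\C,
$$
so the infimum defining $\bar V_\C$ in~\eqref{eq:bvc} is attained at $\phi$. Thus $\phi$ is a minimax strategy of the statistician, which is the assertion of the corollary.

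There is no substantial obstacle here; the only point that needs a little care is the reduction from randomised to pure opponent strategies — the first displayed equality, and in particular its ``$\ge$'' direction — for which one checks that point masses on measures in $\C$ are themselves admissible randomised strategies (or else runs the trivial approximation argument indicated above). Everything else is a direct appeal to Corollary~\ref{cl:main}. Note finally that this argument yields the existence of a minimax strategy but not the full identity $\bar V_\C=\underline V_\C$, which is precisely why the result is stated as a \emph{partial} minimax theorem.
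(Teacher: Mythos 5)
Your proof is correct and follows essentially the same route as the paper, which simply asserts that $\bar V_\C = V_\C$ is easy to see and then invokes Corollary~\ref{cl:main}; you have merely filled in the details of that identity (the Dirac-measure argument for the ``$\ge$'' direction is the right one, and makes the hand-wavy neighbourhood fallback unnecessary).
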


However, the question  of whether
 the upper and the lower values coincide remains open. 
That is, we are taking the worst possible distribution over $\C$, and ask what is the best possible 
predictor  with the knowledge of  this distribution ahead of time. The question is whether ${\underline V}_\C = V_\C$.
A closely related question is  whether  there is a worst possible strategy for the opponent.
 This latter would be somehow a maximally spread-out (or maximal entropy) distribution over $\C$.
In general, measurability issues  seem to be very relevant here, especially for the maximal-entropy distribution part.

\section{Complete Class} In this section we shall see that Corollary~\ref{cl:main} can be interpreted as a complete-class theorem for asymptotic average loss, 
as well as some principled differences between the cases $V_\C>0$ and $V_\C=0$.

For a set of probability measures (strategies of the opponent) $\C$, a  predictor  %
$\rho_1$ is said to be  {\em as good as}  a predictor $\rho_2$
if $\bar L(\mu,\rho_1)\le \bar L(\mu,\rho_2)$ for all $\mu\in\C$. A predictor $\rho_1$ is {\em better (dominates)} $\rho_2$ if $\rho_1$ is as good as $\rho_2$
and $\bar L(\mu,\rho_1) < \bar L(\mu,\rho_2)$ for some $\mu\in\C$. 
A predictor $\rho$ is {\em  admissible} (also called {\em Pareto optimal}) if there is no predictor $\rho'$ which is better than $\rho$; 
otherwise it is called {\em inadmissible}.
Similarly, a set of predictors $D$ is called a {\em complete class} if for every $\rho'\notin D$ there  is $\rho\in D$ such that 
$\rho$ is better than $\rho'$. A set of of predictors $D$ is called an {\em  essentially complete class} if 
for every $\rho'\notin D$ there  is $\rho\in D$ such that 
$\rho$ is as good as $\rho'$. 
An (essentially) complete class is called {\em minimal} if none of its proper subsets is (essentially) complete.

Furthermore, in  decision-theoretic terminology, a predictor $\rho$ is called a {\em Bayes rule} for a prior $W\in\C^*$ 
if it is optimal for $W$, that is, if it attains $\inf_{\rho\in\P} E_{W(\mu)}\bar L(\mu,\rho)$. 
Clearly, if $W$ is concentrated on a finite or countable set then any mixture over this set with full support
is a Bayes rule, and the value of the $\inf$ above is 0; so the use of this terminology is non-contradictory here.

In decision theory, the complete class theorem \cite{Wald:50,Lecam:55} (see also  \cite{Ferguson:14}) states that, under certain conditions similar to those
above for the minimax theorem, the set of Bayes rules is complete and the admissible Bayes rules  form a minimal complete class.

An important difference in our set-up is that all strategies are inadmissible (unless $V_\C=0$), and one cannot speak about 
minimal (essentially) complete classes. However, the set of all Bayes rules 
is still essentially complete, and an even stronger statement holds: it is enough to consider all Bayes rules with countable priors:

\begin{corollary}[Complete class theorem]\label{th:comcla}
 For every set $\C$, the  set  of those Bayes rules %
 whose priors are concentrated on at most countable sets is essentially complete.
 There is no admissible rule (predictor) and no minimal essentially complete class
 unless $V_\C=0$. In the latter case, every 
predictor $\rho$ that attains this value is admissible and the set $\{\rho\}$ is minimal essentially complete. 
\end{corollary}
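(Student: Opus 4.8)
The plan is to derive all four assertions from Corollary~\ref{cl:main} and Theorem~\ref{th:main}, together with the trivial fact that blending a predictor with a member of $\C$ never hurts anywhere on $\C$ and drives the loss to zero on that member. Throughout I use ``as good as'' for the reflexive and transitive relation defined in the text. \emph{For essential completeness of the countable-prior Bayes rules}, let $D$ be the set of Bayes rules whose prior is concentrated on an at most countable subset of $\C$. By the remark preceding the statement, every discrete mixture $\sum_k w_k\mu_k$ with $\mu_k\in\C$ and $w_k>0$ lies in $D$, being the Bayes rule for its own (countably supported) prior, the relevant infimum being $0$. First I would take an arbitrary predictor $\rho'\in\P$ and apply Theorem~\ref{th:main} with $\rho:=\rho'$, producing a mixture predictor $\nu$ over $\C$ with $L_n(\mu,\nu)\le L_n(\mu,\rho')+8\log n+O(\log\log n)$ for every $\mu\in\C$. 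Dividing by $n$ and taking $\limsup_{n\to\infty}$ kills the $O(\log n)$ correction, so $\bar L(\mu,\nu)\le\bar L(\mu,\rho')$ for all $\mu\in\C$; that is, $\nu\in D$ is as good as $\rho'$, and $D$ is essentially complete.

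\emph{Now assume $V_\C>0$.} Fix any predictor $\rho$. Since $\sup_{\mu\in\C}\bar L(\mu,\rho)\ge V_\C>0$, there is $\mu_0\in\C$ with $\bar L(\mu_0,\rho)>0$. Put $\rho':=\tfrac12(\rho+\mu_0)$; the pointwise bounds $\rho'(x_{1..n})\ge\tfrac12\rho(x_{1..n})$ and $\rho'(x_{1..n})\ge\tfrac12\mu_0(x_{1..n})$ give, after dividing by $n$ and passing to $\limsup$, both $\bar L(\mu,\rho')\le\bar L(\mu,\rho)$ for all $\mu\in\C$ and $\bar L(\mu_0,\rho')=0<\bar L(\mu_0,\rho)$, so $\rho'$ is strictly better than $\rho$ and no predictor is admissible. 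For the absence of a minimal essentially complete class, let $D$ be essentially complete (hence nonempty) and pick $\rho\in D$. By the step just done there is $\rho''$ strictly better than $\rho$, say $\bar L(\mu_0,\rho'')<\bar L(\mu_0,\rho)$, and by essential completeness some $\sigma\in D$ is as good as $\rho''$; then $\bar L(\mu_0,\sigma)\le\bar L(\mu_0,\rho'')<\bar L(\mu_0,\rho)$ forces $\sigma\neq\rho$, while transitivity makes $\sigma$ as good as $\rho$. A short case check---any $\rho'\notin D\setminus\{\rho\}$ equals $\rho$ (handled by $\sigma$) or lies outside $D$ (handled by the witness $\tau\in D$ from essential completeness of $D$, replaced by $\sigma$ if $\tau=\rho$)---then shows $D\setminus\{\rho\}$ is still essentially complete, so $D$ is not minimal.

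\emph{Finally assume $V_\C=0$}, and let $\rho$ be any predictor attaining this value, i.e.\ $\bar L(\mu,\rho)=0$ for every $\mu\in\C$ (such $\rho$ exists by Corollary~\ref{cl:main}). Since $\bar L\ge0$ always, nothing can be strictly better than $\rho$ on any $\mu$, so $\rho$ is admissible; and $\bar L(\mu,\rho)=0\le\bar L(\mu,\rho'')$ for every predictor $\rho''$ and every $\mu\in\C$, so $\{\rho\}$ is essentially complete and---its only proper subset $\emptyset$ failing to be so---minimal. The only point requiring slightly more than a one-line computation is the verification that the mixture furnished by Theorem~\ref{th:main} genuinely counts as a ``Bayes rule'' in the decision-theoretic sense, i.e.\ is optimal for its own prior; but this is exactly the content of the remark preceding the corollary, so the proof does not get stuck there, and I expect no real obstacle overall.
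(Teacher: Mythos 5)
Your proposal is correct and follows essentially the same route as the paper's proof: the first assertion is read off from Theorem~\ref{th:main} (equivalently, Corollary~\ref{cl:main}), inadmissibility and non-minimality under $V_\C>0$ both come from blending $\rho$ with a measure $\mu_0\in\C$ on which its loss is positive, and the $V_\C=0$ case is immediate. Your write-up is in fact slightly more careful than the paper's in two places: you show that \emph{every} predictor (not just a minimax one) is inadmissible, and you spell out the case check verifying that $D\setminus\{\rho\}$ remains essentially complete.
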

\begin{proof}
The first statement is a reformulation of Corollary~\ref{cl:main}.
 To prove the second statement, consider any $\C$ such that $V_\C>0$, take a predictor $\rho$ that attains this value (such a predictor exists by Theorem~\ref{th:main}),
and a probability measure $\mu$ such that $\bar L(\mu,\rho)>0$. Then for a predictor $\rho':=1/2(\rho+\mu)$ we have $\bar L(\mu,\rho')=0$.
Thus,  $\rho'$ is better than $\rho$: its loss is strictly smaller on one measure, $\mu$, and is at least the same on all the rest of the measures in $\C$. Therefore,  $\rho$ is inadmissible.  The statement about minimal essentially complete class is proven analogously: indeed, take any essentially complete class, $D$, and any predictor $\rho\in D$. Take then the predictor $\rho'$ constructed as above. Since $\rho'$ is better than $\rho$ and $D$ is essentially complete, there must be another predictor $\rho''\in D$,  such that $\rho''$ is as good as $\rho'$. Therefore, $D\setminus\{\rho\}$ is essentially complete and $D$ is not minimal.
The statements about the case $V_\C=0$ are self-evident.
\end{proof}
\section{Total Variation}
The total-variation case turns out to be less interesting from the decision-theoretic point of view, as the value $V_\C$ is either 0 or 1, as follows from Theorem~\ref{th:01}. Thus, it is easy to see that, for every $\C$, we have $\bar V_\C=\underline V_\C\in\{0,1\}$. The statistician has a minimax strategy in either case: in the case $ V_\C=0$ a minimax discrete mixture predictor that attains this value can be found (Theorem~\ref{th:tv}), while the other case ($V_\C=1$) is degenerate and any strategy of statistician attains this value. Thus, we have a (rather trivial) version of the minimax theorem for this case. As for the complete class theorem, it can be easily seen to carry over from the KL case (Corollary~\ref{th:comcla}) without changes.

\chapter{Middle-case: combining predictors whose loss  vanishes}\label{ch:p2}
The realizable case requires us to make a modelling assumption on the mechanism that generates the data: we assume that the latter comes from a given model set $\C$. This is rather bothering if we do not know much about the actual data source. 
So far we have two ways of dealing with this problem: one is to make the model $\C$ so large as to insure that it does include the mechanism that generates the data; we have seen that in this case an optimal predictor can still be found as a mixture of measures in $\C$, thereby hinting at a path to the solution. A different way is to be agnostic with respect to the mechanism generating the data and only consider the given set $\C$ as a comparison set of predictors. While attractive from the philosophical point of view, we have seen that in this case the set $\C$ may actually turn out to be useless, as no combination of the measures in it is optimal. 

In this chapter we consider a middle ground: we do make some assumption on the mechanism that generates the data, but only make it as strong as is necessary for the prediction problem at hand. Namely, we assume that at least one of the predictors in a given set $\C$ has a vanishing loss. 
This somehow lessens the ``belief'' that we need to have in our model set with respect to the realizable case; yet it is a stronger assumption as compared to the agnosticism of the non-realizable case. 

\begin{svgraybox}
\begin{definition}[Middle-case] Given a set of predictors $\C$, find a predictor $\rho$ such that the loss of $\rho$  vanishes on every $\nu$ on which the loss of at least some predictor in $\C$ vanishes: 
$$
\forall \nu\in\mathcal P [ (\exists\mu\in\C \ \ \bar L(\nu,\mu)=0) \Rightarrow \bar  L(\nu,\rho)=0].
$$
If such a measure $\rho$ exists then we say that it is a solution to the middle case for $\C$.
\end{definition}
\end{svgraybox}

Of course, an analogous definition can be formulated for prediction in total variation. However, as we have seen in Chapter~\ref{ch:tv}, already the realizable and the non-realizable cases of the sequence prediction problem are equivalent for this notion of loss. Therefore, it is of no surprise that the middle case is also equivalent to these. Indeed, it is an easy exercise to check that the following statement holds true. 
\begin{proposition}[Middle case for prediction in total variation]
The following statement can be added to the list of equivalent statements in Theorem~\ref{th:tv}: 
 There exists a predictor $\rho$ whose total-variation loss vanishes as long as the loss of some predictor in $\C$ vanishes:
$$
\text{\it{(vii)}} \ \ \exists\rho\forall \nu\in\mathcal P [ (\exists\mu\in\C \ \ l_{tv}(\nu,\mu)=0) \Rightarrow  l_{tb}(\nu,\rho)=0].
$$
Moreover, the same predictors as in the statements (i) and (ii) of the Theorem may be used.
\end{proposition}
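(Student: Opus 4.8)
The plan is to insert (vii) into the equivalence chain of Theorem~\ref{th:tv} by establishing $(i)\Rightarrow(vii)$ and $(vii)\Rightarrow(i)$, keeping the predictor fixed in both directions, which simultaneously yields the concluding remark. The only tools needed beyond bookkeeping are the equivalence between total-variation prediction and absolute continuity (Theorem~\ref{th:bd}) and the transitivity of the dominance relation $\getv$.

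First I would translate (vii) into the language of $\getv$. By Theorem~\ref{th:bd}, for any $\nu,\mu\in\mathcal P$ one has $l_{tv}(\nu,\mu)=0$ if and only if $\nu$ is absolutely continuous with respect to $\mu$, i.e.\ $\mu\getv\nu$. Consequently, with $\C^+:=\{\nu\in\mathcal P:\exists\mu\in\C\ \mu\getv\nu\}$ as in statement (vi), the requirement on $\rho$ in (vii) says exactly that $\rho$ predicts in total variation every element of $\C^+$, that is, $\rho$ is an upper bound of $\C^+$ with respect to $\getv$.

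The implication $(vii)\Rightarrow(i)$ is then immediate: since $\mu\getv\mu$ for every $\mu$, we have $\C\subseteq\C^+$, so a predictor that works for all of $\C^+$ works in particular for all of $\C$. For $(i)\Rightarrow(vii)$, take a predictor $\rho$ that predicts every $\mu\in\C$ in total variation; by Theorem~\ref{th:bd} this means $\rho\getv\mu$ for all $\mu\in\C$. If $\nu$ is any measure with $l_{tv}(\nu,\mu)=0$ for some $\mu\in\C$, then $\mu\getv\nu$, hence $\rho\getv\nu$ by transitivity of absolute continuity, hence $l_{tv}(\nu,\rho)=0$ by Theorem~\ref{th:bd} again. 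So the very same $\rho$ satisfies (vii); combined with the interchangeability of the predictors in (i) and (ii) already noted in Theorem~\ref{th:tv}, this shows the predictors in (i), (ii) and (vii) can all be taken to be one and the same, namely any upper bound for $\C$ with respect to $\getv$.

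I do not expect a genuine obstacle: the statement is essentially a corollary of Theorem~\ref{th:bd} and the transitivity of $\getv$. The only point deserving a line of care is the behaviour of the conditional total variation distance on singular pairs and on prefixes of zero probability, but Definition~\ref{def:tvloss} together with the 0--1 law (Lemma~\ref{th:01}) makes this harmless: $l_{tv}(\nu,\mu)>0$ whenever $\mu\not\getv\nu$, so the hypothesis of the implication in (vii) restricts attention precisely to $\C^+$, exactly as used above.
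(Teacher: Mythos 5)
Your proof is correct and follows exactly the route the paper intends (the text leaves it as "an easy exercise"): translating (vii) via Theorem~\ref{th:bd} into the statement that $\rho$ upper-bounds $\C^+$ with respect to $\getv$, and then using transitivity of absolute continuity, which is the same mechanism the paper uses for the implication $(i)\Rightarrow(ii)$ of Theorem~\ref{th:tv}. The observation that the identical predictor serves in (i), (ii) and (vii) correctly delivers the "moreover" clause.
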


We thus return our attention to the case of prediction in KL divergence.
In this case, the middle case sits firmly in-between the realizable and the non-realizable cases of the sequence prediction problem, as the next statement demonstrates.
 \begin{proposition}\label{th:comp} 
\begin{itemize}
 \item [(i)]
There exists a set $\C_1\subset\mathcal P$  
 for which there is a measure $\rho$ whose loss vanishes on every $\mu\in\C_1$ (there is a solution to the realizable case), but  there is no solution to the middle case for $\C_1$.
 \item [(ii)]   There is a set $\C_2\subset\mathcal P$ for which there is a solution to the middle case problem, 
 but there is no predictor whose regret vanishes with respect to $\C_2$. 
\end{itemize}

 \end{proposition}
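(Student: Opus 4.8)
The whole argument rests on one elementary reformulation. For a set $\C\subset\mathcal P$ write $\mathcal K(\C):=\{\nu\in\mathcal P:\exists\,\mu\in\C\ \bar L(\nu,\mu)=0\}$. Unwinding the quantifiers in the definition of the middle case, a predictor $\rho$ solves it for $\C$ exactly when $\bar L(\nu,\rho)=0$ for every $\nu\in\mathcal K(\C)$, that is, exactly when $\rho$ is a solution to the \emph{realizable} case for $\mathcal K(\C)$; and since $\C\subseteq\mathcal K(\C)$, such a $\rho$ is in particular a realizable solution for $\C$. So for (i) I must exhibit $\C_1$ admitting a realizable solution while $\mathcal K(\C_1)$ admits none, and for (ii) a set $\C_2$ such that $\mathcal K(\C_2)$ admits a realizable solution while $\C_2$ admits no predictor of vanishing regret.

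For (i) the idea is to take a family of \emph{stationary} processes — automatically predictable, e.g.\ by the predictor $\rho_R$ of Section~\ref{s:exvc0} — each of which nonetheless predicts, with vanishing loss, one deterministic sequence, these sequences sweeping out all of $\{0,1\}^\infty\subset\X^\infty$. Fix two symbols $0,1\in\X$ and the renewal (``age'') Markov chain on $\{1,2,\dots\}$ whose inter-renewal law $G$ has a polynomial tail $P(G\ge k)\asymp k^{-(\alpha-1)}$ for a fixed $\alpha\in(1,2)$ (so $\E G<\infty$ and the chain is a genuine stationary, ergodic process); for $\x\in\{0,1\}^\infty$ let $\mu_\x$ be the stationary process that outputs $x_k$ whenever the chain has age $k$, and set $\C_1:=\{\mu_\x:\x\in\{0,1\}^\infty\}$. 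Every $\mu_\x$ is stationary, so $\rho_R$ predicts all of $\C_1$ with vanishing loss and the realizable case is settled. On the other hand $\mu_\x(x_{1..n})$ is at least a positive constant times $P(G>n)\asymp n^{-(\alpha-1)}$ — namely, on the event that no renewal occurs during the first $n$ steps the output reads exactly $x_{1..n}$ — hence $-\tfrac1n\log\mu_\x(x_{1..n})\le\tfrac{\alpha-1}{n}\log n+o(1)\to0$, so $\bar L(\delta_\x,\mu_\x)=0$ and $\delta_\x\in\mathcal K(\C_1)$ for every $\x\in\{0,1\}^\infty$. But no $\rho$ can have $\bar L(\delta_\x,\rho)=0$ simultaneously for all $\x\in\{0,1\}^\infty$: letting $x_n:=\argmin_{b\in\{0,1\}}\rho(x_n=b|x_{1..n-1})$ gives $\rho(x_{1..n})\le 2^{-n}$ and hence $\bar L(\delta_\x,\rho)\ge\log 2$ (this is Lemma~\ref{th:disc} read off on the two-letter sub-alphabet $\{0,1\}$). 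Thus $\mathcal K(\C_1)$ has no realizable solution, i.e.\ $\C_1$ has no middle-case solution. (Alternatively one may take $\C_1=\mathcal S$ and cite \cite{Ryabko:15stno}.)

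For (ii) I would reuse the construction behind Theorem~\ref{th:stno1}. Over $\X=\{a,0,1\}$ let $m$ be the chain on $\{0,1,2,\dots\}$ that passes from $k$ to $k+1$ with probability $2/3$ and to $0$ with probability $1/3$ (stationary, ergodic), and for $t\in\{0,1\}^\infty$ let $\mu_t$ output $a$ in state $0$ and $t_k$ in state $k>0$; put $\C_2:=\{\mu_t:t\in\{0,1\}^\infty\}$. The proof of Theorem~\ref{th:stno1} is carried out for precisely this set and yields $R(\C_2,\rho)\ge\log 2-\log(3/2)>0$ for every $\rho$, so $\C_2$ admits no predictor of vanishing regret. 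It remains to produce a predictor of vanishing loss on $\mathcal K(\C_2)$. The structural fact to exploit is that $\mu_t$-almost surely the block of symbols between two consecutive $a$'s is forced to be a prefix of $t$; hence (a) any $\nu$ with $\bar L(\nu,\mu_t)<\infty$ is, on every horizon, supported on sequences of this ``renewal shape'', and (b) for such a $y_{1..n}$ one has $\mu_s(y_{1..n})=\mu_t(y_{1..n})$ whenever $s$ agrees with $t$ on its first $l(y_{1..n})$ coordinates, $l(y_{1..n})$ being the length of the longest $a$-free block in $y_{1..n}$. Consequently the Bayesian mixture $\rho^{*}:=\int\mu_s\,dW(s)$ with $W$ the uniform (Bernoulli-$1/2$) prior on $t\in\{0,1\}^\infty$ satisfies $\rho^{*}(y_{1..n})\ge c\,2^{-l(y_{1..n})}\,\mu_t(y_{1..n})$, so by the chain rule~\eqref{eq:kl}
\[
 L_n(\nu,\rho^{*})-L_n(\nu,\mu_t)\ \le\ \E_\nu\, l(y_{1..n})+O(1).
\]
Finally, if $\bar L(\nu,\mu_t)=0$ then $\E_\nu\, l(y_{1..n})=o(n)$: otherwise the event $\{l(y_{1..n})\ge cn\}$ would carry $\nu$-mass not vanishing fast enough while having $\mu_t$-probability at most $n(2/3)^{cn}$, forcing $\bar L(\nu,\mu_t)>0$ through the Jensen-type bound~\eqref{eq:jen}. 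Hence $\bar L(\nu,\rho^{*})=\bar L(\nu,\mu_t)=0$ for all $\nu\in\mathcal K(\C_2)$, i.e.\ $\rho^{*}$ solves the middle case for $\C_2$.

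The steps that need genuine care are, in (i), that $\mu_\x$ is a bona fide stationary ergodic process together with the tail estimate $P(G>n)\asymp n^{-(\alpha-1)}$; and in (ii) the two estimates on $\rho^{*}$: the shape-counting identity behind $\rho^{*}(y_{1..n})\ge c\,2^{-l(y_{1..n})}\mu_t(y_{1..n})$ and, above all, the bound $\E_\nu\, l(y_{1..n})=o(n)$ for every measure $\nu$ that $\mu_t$ predicts with vanishing loss. This last point — controlling how long an $a$-free block a vanishing-loss measure may afford — is the one genuinely delicate ingredient; everything else is bookkeeping with~\eqref{eq:kl} and~\eqref{eq:jen}.
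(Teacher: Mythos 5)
Your reduction of the middle case to the realizable case for $\mathcal K(\C):=\{\nu:\exists\mu\in\C,\ \bar L(\nu,\mu)=0\}$ is exactly the right reading of the definition, and both halves of your argument are structurally sound; part (i) essentially follows the paper's route, while part (ii) takes a genuinely different one.

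For (i), the paper simply takes $\C_1=\S$ and invokes Theorem~\ref{th:stno}, whose proof builds precisely the kind of stationary hidden-Markov family you describe, so your primary construction is a re-derivation of that theorem rather than an alternative. It contains one real error: with $P(G\ge k)\asymp k^{-(\alpha-1)}$ and $\alpha\in(1,2)$ the tail exponent lies in $(0,1)$, so $\E G=\sum_k P(G\ge k)=\infty$, the age chain is null recurrent and has no stationary distribution; $\mu_\x$ is then not a stationary process and $\rho_R$ need not predict it. You need a tail exponent strictly greater than $1$; any polynomial tail still gives $-\frac{1}{n}\log P(G>n)\to0$, and the paper's chain ($p_j=j^2/(j+1)^2$, exponent $2$) is exactly such a choice. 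Your fallback of citing Theorem~\ref{th:stno} directly is, of course, the paper's own proof of (i).

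For (ii) you reuse the family $\{\mu_t\}$ of Theorem~\ref{th:stno1} and show that the uniform Bayes mixture $\rho^{*}$ over it solves the middle case; the paper instead uses the much lighter family $\gamma'_t$ (conditional probability $2/3$ of following $t$, uniform off $t$), for which the plain uniform i.i.d.\ measure is already a middle-case solution and the verification is a short computation with Lemma~\ref{th:disc} and~\eqref{eq:jen}. Your route is correct but needs one repair: since $\mu_s$ starts from the stationary distribution of the chain, the initial $a$-free block of $y_{1..n}$ is a window $t_k\cdots t_{k+j-1}$ of $t$ rather than a prefix, so the claimed identity $\mu_s(y_{1..n})=\mu_t(y_{1..n})$ for all $s$ agreeing with $t$ on the first $l(y_{1..n})$ coordinates is false as stated. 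What does hold is the path-wise version: each chain trajectory contributing to $\mu_t(y_{1..n})$ pins $s$ down on at most $2l(y_{1..n})$ coordinates, whence $\rho^{*}(y_{1..n})\ge 2^{-2l(y_{1..n})}\mu_t(y_{1..n})$, which is all you actually use. The remaining estimate $\E_\nu l(y_{1..n})=o(n)$, via $\mu_t(l\ge cn)\le n(2/3)^{cn-1}$ and the Jensen bound, is correct. What your version buys is a $\C_2$ made of genuinely stochastic processes together with a middle-case solution that is itself a mixture over $\C_2$, which connects nicely to Theorem~\ref{th:pq4-2}; what the paper's version buys is brevity.
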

\begin{proof}
 We have to provide two examples.
For the first example, it suffices to take $\C_1$ to be the set of all stationary processes $\S$; this will be demonstrated below in Theorem~\ref{th:stno}.

For the second example, first fix the binary alphabet $\X=\{0,1\}$. For each deterministic sequence $t=t_1,t_2,\dots\in\mathcal D$ construct the process measure $\gamma_t$ as follows:
$\gamma'_t(x_{n}=t_n|t_{1..n-1}):=2/3$ and for $x_{1..n-1}\ne t_{1..n-1}$ let $\gamma'_t(x_{n}=0|x_{1..n-1})=1/2$, for all $n\in\N$.
It is easy to see that $\gamma$ is a solution to the middle-case problem for the set $\C_2:=\{\gamma'_t: t\in \X^\infty\}$.
Indeed, if $\nu\in\mathcal P$ is such that $L_n(\nu,\gamma')=o(n)$ then we must have $\nu(t_{1..n})=o(1)$. From 
this and the fact that $\gamma$ and $\gamma'$ coincide (up to $O(1)$) on all other sequences we conclude $L_n(\nu,\gamma)=o(n)$.
 However, there is no predictor with vanishing regret with respect to $\C_2$.
 Indeed, for every $t\in\mathcal D$ we have $L_n(t,\gamma'_t)=n \log3/2+o(n)$. Therefore, if $\rho$ has a vanishing regret with respect to $\C_2$ 
 then $\limsup {1\over n} L_n(t,\rho)\le \log 3/2 <1$ which contradicts Lemma~\ref{th:disc}.
\qed\end{proof}

A key observation concerning the middle case is that, unlike for the non-realizable case, and similarly to the realizable case, if a solution to the middle case exists for a set of processes $\C$, then it can be obtained in the form of a mixture predictor, and the mixutre is over the set $\C$.
This is formalized in the next theorem.

To formulate it, it is useful to introduce the following relation of dominance of measures, which can be reminiscent of the relation of absolute continuity that we used to characherize the existence of a solution to all of the prediction problems in total variation. 
Let us say that $\rho$ ``dominates'' $\mu$ if $\rho$ predicts every $\nu$ such that $\mu$ predicts $\nu$, and
 Denote this relation by $\geklz$:
\begin{definition}[$\geklz$]
We write $\rho\geklz\mu$ if for every $\nu\in\mathcal P$ the equality $\bar L(\nu,\mu)=0$ implies $\bar L(\nu,\rho)=0$.
\end{definition}
The relation $\geklz$ has some similarities with $\getv$. First of all, $\geklz$ is also transitive (as can be easily seen  from the definition).
Moreover, similarly to $\getv$, one can show that for any $\mu,\rho$ any strictly convex combination $\alpha\mu+(1-\alpha)\rho$ is
a supremum of $\{\rho,\mu\}$ with respect to~$\geklz$. 

\begin{theorem}\label{th:pq4-2}[asymptotic optimality of mixture predictors for the middle-case problem] Let $\C$ be a set of probability measures on $\Omega$. If there is a measure $\rho$ such that $\rho\geklz\mu$ for every $\mu\in\C$ ($\rho$ is 
a solution to the middle-case problem)
then there is a sequence $\mu_k\in\C$, $k\in\N$, such that $\sum_{k\in\N} w_k\mu_k\geklz\mu$ for  every $\mu\in\C$, where $w_k$ are some positive weights.
\end{theorem}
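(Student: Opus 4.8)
The plan is to reduce the statement to a pure prediction claim and then carry out the covering construction that underlies Theorems~\ref{th:pq3-1} and~\ref{th:main}. Write $\C^+:=\{\nu\in\P:\exists\mu\in\C\ \bar L(\nu,\mu)=0\}$ for the set of all measures predicted (in expected average KL) by some member of $\C$. Since $\bar L(\mu,\mu)=0$ one has $\C\subseteq\C^+$, and by hypothesis $\rho$ predicts every $\nu\in\C^+$. Unfolding the definition of $\geklz$, a mixture $\nu^*=\sum_k w_k\mu_k$ over $\C$ satisfies $\nu^*\geklz\mu$ for all $\mu\in\C$ if and only if $\nu^*$ predicts every $\nu\in\C^+$; so the target is a countable $\{\mu_k\}\subseteq\C$ with $\bar L(\nu,\sum_k w_k\mu_k)=0$ for all $\nu\in\C^+$. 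Using $L_n(\nu,\nu^*)=L_n(\nu,\rho)+\E_\nu\log\frac{\rho(x_{1..n})}{\nu^*(x_{1..n})}$ together with $\tfrac1n L_n(\nu,\rho)\to0$ on $\C^+$, this is equivalent to building $\nu^*$ so that $\limsup_n\tfrac1n\E_\nu\log\frac{\rho(x_{1..n})}{\nu^*(x_{1..n})}\le 0$ for every $\nu\in\C^+$. I would assume~\eqref{eq:boundedness} exactly as in the proof of Theorem~\ref{th:main}, and I would repeatedly use the elementary inequality $\E_\nu[(\log\frac{\nu(x_{1..n})}{\mu(x_{1..n})})^{+}]\le L_n(\nu,\mu)+1$ (from $\log t\le t-1$ and $\sum_{x_{1..n}:\nu(x_{1..n})>0}\mu(x_{1..n})\le1$), which is $o(n)$ whenever $\bar L(\nu,\mu)=0$.

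For the construction, for each $n$ and resolution $k$ I would partition $[-\tfrac{\log n}{n},M+\tfrac1n]$ into cells $u_k^i$ exactly as in~\eqref{eq:cover}, and for $\nu\in\C^+$ set $\widetilde T^n_{\nu,k,i}:=\{x_{1..n}:\tfrac1n\log\frac{\nu(x_{1..n})}{\rho(x_{1..n})}\in u_k^i\}$; by~\eqref{eq:boundedness} and Markov's inequality these cells partition a set of $\nu$-measure at least $1-\tfrac1n$. For each $(n,k,i)$ I would run the greedy selection of Theorems~\ref{th:pq3-1} and~\ref{th:main}, choosing representatives $\nu_1,\nu_2,\dots\in\C^+$ that successively maximise the $\rho$-mass of $\widetilde T^n_{\nu_l,k,i}\setminus T_{l-1}$; disjointness of the increments gives $\rho(\widetilde T^n_{\nu',k,i}\setminus T_l)\le 1/l$ for every $\nu'\in\C^+$, hence, since $\nu'\le 2^{iMn/k+1}\rho$ on its own cell, $\nu'(\widetilde T^n_{\nu',k,i}\setminus T_{l_i})\le\tfrac1{kn}$ with $l_i:=\lceil kn\,2^{iMn/k+1}\rceil$. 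Summing over $i$, the covered set $G_n:=\bigcup_{i\le k}T_{l_i}$ has $\nu(\X^n\setminus G_n)\le\tfrac2n$ for \emph{every} $\nu\in\C^+$. I would then replace each representative $\nu_l\in\C^+$ by a measure $\mu_l\in\C$ with $\bar L(\nu_l,\mu_l)=0$, and set $\nu^*:=\tfrac12\sum_{n,k}w_nw_k\tfrac1k\sum_{i}\sum_l w_l\mu_l+\tfrac12 r$, with the weights $w_k$ of~\eqref{eq:w} and $r$ the same $\C$-built regularizer as in the proof of Theorem~\ref{th:main}, guaranteeing $\nu^*(x_{1..n})\ge\tfrac14 w_n|\X|^{-n}\mu(x_{1..n})$ for all $\mu\in\C$.

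The estimate then runs as follows. Fix $\nu\in\C^+$, pick $\mu\in\C$ with $\bar L(\nu,\mu)=0$, and take $k:=\lceil n/\log\log n\rceil$. I would split $\X^n$ into a good part $G_n'\subseteq G_n$, on which the chosen $\mu_l$ reproduces $\nu_l$ up to a $\operatorname{poly}(n)$ factor, and its complement. On $G_n'$ one gets $\nu^*(x_{1..n})\ge\operatorname{poly}(n)^{-1}\rho(x_{1..n})$ (combining $\nu_l\ge\tfrac1n\rho$ on $\widetilde T^n_{\nu_l,k,i}$, the $\operatorname{poly}(n)$ mimicry, the weight $w_{l_i}\gtrsim(\operatorname{poly}(n)(\log n)^M)^{-1}$, and $2^{iMn/k}\le(\log n)^M$), so $\log\frac{\nu(x_{1..n})}{\nu^*(x_{1..n})}\le\log\frac{\nu(x_{1..n})}{\rho(x_{1..n})}+O(\log n)$, and the contribution of $G_n'$ to $L_n(\nu,\nu^*)$ is at most $\E_\nu[(\log\frac{\nu(x_{1..n})}{\rho(x_{1..n})})^{+}]+O(\log n)=o(n)$ by Step~1. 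On the complement I would use the regularizer: since $\nu$-a.s.\ $\mu(x_{1..n})>0$, $\log\frac{\nu(x_{1..n})}{\nu^*(x_{1..n})}\le\log\frac{\nu(x_{1..n})}{\mu(x_{1..n})}+nM+O(\log n)$, so the contribution is at most $\E_\nu[(\log\frac{\nu(x_{1..n})}{\mu(x_{1..n})})^{+}]+(nM+O(\log n))\,\nu(\X^n\setminus G_n')=o(n)$, provided $\nu(\X^n\setminus G_n')\to0$. Adding the two gives $L_n(\nu,\nu^*)=o(n)$, i.e.\ $\bar L(\nu,\nu^*)=0$ for every $\nu\in\C^+$, which is what is needed.

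The hard part, and what I expect to be the main obstacle, is precisely the claim $\nu(\X^n\setminus G_n')\to0$ for every $\nu\in\C^+$. The cover itself loses only $\nu(\X^n\setminus G_n)\le 2/n$, but replacing a $\C^+$-representative $\nu_l$ by a $\C$-member $\mu_l$ forces discarding the part of its cell where $\mu_l(x_{1..n})$ is much smaller than $\nu_l(x_{1..n})$, and the $\nu_l$-measure of that part is controlled, through $\E_{\nu_l}[(\log\frac{\nu_l(x_{1..n})}{\mu_l(x_{1..n})})^{+}]$, only by the rate at which $\tfrac1n L_n(\nu_l,\mu_l)\to0$, a rate that is not uniform over $\C^+$, so the discarded set could be large for a \emph{different} target $\nu$. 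I would address this by choosing $\mu_l$ adaptively in $n$ (a member of $\C$ that already mimics $\nu_l$ well on the first $n$ symbols, which exists exactly because $\nu_l\in\C^+$), and then using — this is where the hypothesis that $\rho$ predicts all of $\C^+$ is genuinely used — that $\mu_l$, $\nu_l$ and $\rho$ are mutually within subexponential factors on $\nu_l$-typical sequences, so that for every target $\nu\in\C^+$ the discarded set is $\nu$-negligible. Turning this uniformisation over $\C^+$ into a clean statement is the technical heart of the proof; once it is in place, the remaining bookkeeping parallels Theorems~\ref{th:pq3-1} and~\ref{th:main}.
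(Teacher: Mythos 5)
Your overall architecture is the right one and coincides with the paper's: reduce the claim to showing that a countable mixture of members of $\C$ predicts every $\nu$ in $\C^+:=\{\nu\in\P:\exists\mu\in\C\ \bar L(\nu,\mu)=0\}$, run a greedy covering that ranks cells by $\rho$-mass, put into the mixture a ``parent'' in $\C$ of each selected representative, and add the $\C$-built regularizer of Theorem~\ref{th:main}. However, the step that you yourself single out as the technical heart --- controlling, for an \emph{arbitrary} target $\nu\in\C^+$, the part of the covered set on which the substituted parent $\mu_l\in\C$ is much smaller than the selected representative $\nu_l\in\C^+$ --- is left open, and the sketch you offer does not close it. The only quantity you can control, namely $\E_{\nu_l}\bigl[(\log\tfrac{\nu_l(x_{1..n})}{\mu_l(x_{1..n})})^{+}\bigr]=o(n)$, bounds the $\nu_l$-measure of the bad region of the $l$-th cell; it gives no handle on the $\nu$-measure of that region for a different target, and ``mutual subexponential closeness on $\nu_l$-typical sequences'' again speaks only about sequences typical for $\nu_l$. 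Choosing $\mu_l$ adaptively in $n$ does not remove the problem: whatever parent is chosen, the set where it underweights $\nu_l$ superpolynomially may still carry a non-vanishing share of the mass of another $\nu\in\C^+$ whose own cell meets the $l$-th cell exactly there. As written, this is a genuine gap.

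The device the paper uses, and which your proposal is missing, is to trim each cell \emph{before} the greedy selection instead of repairing the substitution afterwards. For each $\mu\in\C^+$ fix a parent $p(\mu)\in\C$ and a function $\delta_n(\mu)=o(n)$ with $L_n(\mu,p(\mu))=o(\delta_n(\mu))$, and take as the cell $T^n_\mu:=U^n_\mu\cap V^n_\mu$, where $U^n_\mu$ is your ratio condition $\mu(x_{1..n})\ge\rho(x_{1..n})/n$ and $V^n_\mu:=\{x_{1..n}:p(\mu)(x_{1..n})\ge2^{-\delta_n(\mu)}\mu(x_{1..n})\}$. A short computation (splitting $L_n(\mu,p(\mu))$ over $V^n_\mu$ and its complement and applying Jensen's inequality) gives $\mu(\X^n\setminus V^n_\mu)\le(L_n(\mu,p(\mu))+1/2)/\delta_n(\mu)=o(1)$, so every measure retains almost all of its own cell; and now \emph{every} point of a selected cell satisfies $p(\mu_{l'})(x_{1..n})\ge2^{-\delta_n(\mu_{l'})}\rho(x_{1..n})/n$ by construction, with no reference whatsoever to the target. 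What remains is to show that the target's own cell $T^n_\mu$ is covered up to $o(1)$ of its $\mu$-measure: the greedy ranking yields $\rho(T^n_\mu\setminus T^n_{j})\le\epsilon^n_\mu$ for $j=1/\epsilon^n_\mu$, and this $\rho$-bound is converted into a $\mu$-bound through the three-way decomposition of $L_n(\mu,\rho)=o(n)$ (this is where the hypothesis on $\rho$ is used), choosing $\epsilon^n_\mu$ so that $-\log\epsilon^n_\mu=o(n)$ while the resulting bound on $\mu(T^n_\mu\setminus T^n_j)$ is still $o(1)$. Incidentally, since only an asymptotic statement is claimed, the paper dispenses entirely with your $k$-cell partition of the log-ratio range; the single threshold $1/n$ in $U^n_\mu$ suffices.
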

The proof  is deferred to Section~\ref{s:prpq4} in the end of this chapter. This proof uses similar techniques to that of Theorem~\ref{th:main}, but is less refined since only an asymptotic result is obtained; on the other hand, we have to take specific care of measures outside of $\C$ that are predicted by processes in $\C$.

\section{Examples and an Impossibility Result for Stationary Processes}
Most of the examples of the preceding sections carry over here, since any solution to the non-realizable case for a set $\C$, that is, any predictor whose regret with respect to $\C$ is  vanishing, is also a solution to the middle-case problem. Thus, there is a solution to the middle-case problem for the set $\mathcal M$ of all finite-memory processes. 
The more interesting case is the set $\S$ of stationary processes, since, as we have seen in the previous chapter, there is no predictor whose regret vanishes with respect to this set. It turns out that this impossibility result can be strengthened, as the following theorem shows.

 Define $\Sp$ to be the set of all processes that are predicted by at least some stationary process:
$$
\Sp:=\{\mu\in\mathcal P: \exists \nu\in\S\ d(\nu,\mu)=0\}.
$$
\begin{theorem}[It is not possible to predict every process for which a stationary predictor exist]\label{th:stno}
 For any predictor $\rho\in\mathcal P$ there is a measure $\mu\in\Sp$ such that $d(\mu,\rho)\ge 1$.
\end{theorem}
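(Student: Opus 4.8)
The plan is to derive the statement from Lemma~\ref{th:disc} (``it is impossible to predict every process'') by establishing the much stronger-looking inclusion $\mathcal D\subseteq\Sp$: every deterministic sequence is, in the sense of expected average KL divergence, predicted by some stationary process. Granting this, the theorem is immediate: we may assume $|\X|\ge 2$ (otherwise $\mathcal P$ is a singleton and there is nothing to prove), and for a given $\rho\in\mathcal P$ Lemma~\ref{th:disc} supplies a $\mu\in\mathcal D$ with $L_n(\mu,\rho)\ge n\log|\X|$ for all $n$, so that $d(\mu,\rho)=\limsup_{n\to\infty}\tfrac1n L_n(\mu,\rho)\ge\log|\X|\ge1$, while $\mu\in\mathcal D\subseteq\Sp$.

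So the real work is to prove $\mathcal D\subseteq\Sp$, and for this I would use a ``restarting'' variant of the hidden-Markov construction behind Theorem~\ref{th:stno1}. Fix $\mathbf z=z_1z_2\dots\in\X^\infty$. For $p\in(0,1)$ let $m^{(p)}$ be the Markov chain on states $\{0,1,2,\dots\}$ which from state $k$ moves to $k+1$ with probability $1-p$ and to $0$ with probability $p$; it has the stationary distribution $\pi_k=p(1-p)^k$, and, started from $\pi$, is a stationary process. Let $\mu^{(p)}_{\mathbf z}$ be the process emitting at time $n$ the symbol $z_{s_n+1}$, where $s_n$ is the state of $m^{(p)}$ at time $n$; being a coordinatewise image of a stationary process, $\mu^{(p)}_{\mathbf z}$ is stationary. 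Informally, $\mu^{(p)}_{\mathbf z}$ keeps emitting $\mathbf z$ and independently restarts it from $z_1$ with probability $p$ at each step.

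The role of $p$ is that the single trajectory $s_1=0,s_2=1,\dots,s_n=n-1$ of $m^{(p)}$ emits exactly $z_{1..n}$ and has probability $\pi_0(1-p)^{n-1}=p(1-p)^{n-1}$, so $\mu^{(p)}_{\mathbf z}(z_{1..n})\ge p(1-p)^{n-1}$ and hence $\limsup_n-\tfrac1n\log\mu^{(p)}_{\mathbf z}(z_{1..n})\le-\log(1-p)$, which tends to $0$ as $p\to0$. To turn this into an exact zero I would mix over a vanishing sequence of restart rates: fix positive weights $w_j$ summing to $1$ and $p_j\downarrow 0$, and set $\sigma_{\mathbf z}:=\sum_{j\in\N}w_j\mu^{(p_j)}_{\mathbf z}$, which is stationary as a countable convex combination of stationary measures. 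Since for every $j$ and every $n$
\[
L_n(\delta_{\mathbf z},\sigma_{\mathbf z})=-\log\sigma_{\mathbf z}(z_{1..n})\le-\log w_j-\log p_j-(n-1)\log(1-p_j),
\]
we get $\limsup_n\tfrac1n L_n(\delta_{\mathbf z},\sigma_{\mathbf z})\le-\log(1-p_j)$ for all $j$, hence it equals $0$. Thus $\sigma_{\mathbf z}\in\S$ predicts $\delta_{\mathbf z}$, so $\delta_{\mathbf z}\in\Sp$; since $\mathbf z$ was arbitrary, $\mathcal D\subseteq\Sp$.

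I do not anticipate a genuine obstacle; the one conceptual point is the one just used, namely that being ``predicted by a stationary process'' in \emph{average} KL divergence is a weak requirement --- the predictor need only give the target sequence a sub-exponentially small probability --- and the mixture over $p_j\downarrow 0$ is exactly what converts the $-\log(1-p)$ per-symbol cost of a single restarting chain into vanishing average loss (in fact, optimizing the displayed bound over $j$ gives $L_n(\delta_{\mathbf z},\sigma_{\mathbf z})=O(\log n)$). The remaining verifications are routine: that $\pi$ is invariant for $m^{(p)}$, that $\mu^{(p)}_{\mathbf z}$ is a well-defined process on $(\X^\infty,\mathcal F)$, and that countable convex combinations of stationary processes are stationary.
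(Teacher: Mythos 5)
Your proof is correct, and its skeleton coincides with the paper's: reduce the theorem to the inclusion $\mathcal D\subseteq\Sp$ (after which the impossibility of predicting all deterministic sequences finishes the job), and establish that inclusion by exhibiting, for each fixed sequence, a stationary hidden-Markov predictor that walks along a one-way ladder of states emitting the target symbols. Where you differ is in how the ladder is made to assign sub-exponential mass to the target trajectory. The paper uses a \emph{single} chain whose transition probabilities $p_j=j^2/(j+1)^2$ tend to $1$, so that $\prod_{j\le n}p_j=1/(n+1)^2$ decays only polynomially; the price is that one must verify the chain is still positive recurrent (the $\sum_n n f_{11}^{(n)}<\infty$ computation) in order to have a stationary initial distribution at all. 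You instead use constant-restart-rate chains, each trivially positive recurrent with explicit geometric stationary law $\pi_k=p(1-p)^k$, accept that each one alone incurs a non-vanishing per-symbol cost $-\log(1-p)$, and then kill that cost by mixing over $p_j\downarrow0$ --- the same weighting trick used throughout the book, and legitimate here because stationarity is preserved under countable convex combinations. Your route trades the recurrence verification for the mixture step; both yield $O(\log n)$ cumulative loss, and both are complete proofs.
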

\begin{proof}
 We will show that the set $\Sp$ includes the set $\mathcal D$ of all Dirac measures, that is, 
of all measures concentrated on one deterministic sequence.  The statement of the theorem follows
directly from this, since for any $\rho$ one can find a sequence $x_1,\dots,x_n,\dots\in \X^\infty$
such that $\rho(x_n|x_{1..n-1}) \le 1/2$ for all $n\in\N$.

To show that $\mathcal D\subset\Sp$, we will construct, for any given sequence $x:=x_1,\dots,x_n,\dots\in \X^\infty$,
a measure $\mu_x$ such that $d(\delta_x,\mu_x)=0$ where $\delta_x$ is the Dirac measure concentrated on~$x$.
These measures are constructed as functions of a stationary  Markov chain with a countably infinite set of states.
The construction is similar to the one used in the proof of Theorem~\ref{th:stno1} (based on the one in \cite{BRyabko:88})  but is more involved. 

The Markov chain $M$ has the set $\N$ of states. From each state $j$ it transits to the state $j+1$
with probability $p_j:={j^2}/{(j+1)^2}$ and to the state $1$ with the remaining probability, $1-p_j$.
Thus, $M$ spends most of the time around the state $1$, but takes rather long runs towards outer states: long, 
since $p_j$ tends to 1 rather fast. We need to show that it does not ``run away'' too much; more precisely, 
we need to show $M$ has a stationary distribution. For this, it is enough to show that the state $1$ is positive
recurrent (see, e.g., \cite[Chapter VIII]{Shiryaev:96} for the definitions and facts about Markov chains used
here). This can be verified directly as follows. Denote  $f_{11}^{(n)}$ the probability that starting from the state 
1 the chain returns to the state 1 for the first time in exactly $n$ steps.
 We have  
$$f_{11}^{(n)}=(1-p_n)\prod_{i=1}^{n-1}p_i=\left(1-\frac{n^2}{(n+1)^2}\right)\frac{1}{n^2}.$$
To show that the state 1 is positive recurrent we need $\left(\sum_{n=0}^\infty n f_{11}^{(n)}\right)^{-1}>0$.
Indeed, $n f_{11}^{(n)} %
<{3}/{n^2}$  which is summable.
It follows that $M$ has a stationary distribution, which we  call~$\pi$.

For   a given sequence $x:=x_1,\dots,x_n,\dots\in A^\infty$,
the measure $\mu_x$ is constructed as a function $g_x$ of the chain $M$ taken with 
its stationary distribution as the initial one. We define $g_x(j)=x_j$ for all $j\in\N$.
Since $M$ is stationary, so is~$\mu_x$. It remains to show that $d(\delta_x,\mu_x)=0$.
Indeed, we have 
\begin{multline*}
L_n(\delta_x,\mu_x)=-\log\mu_x(x_1,\dots,x_n)\le -\log\left(\pi_1\prod_{j=1}^np_j\right)
 = -\log\pi_1 + 2\log(n+1)=o(n),
\end{multline*}
which completes the proof.
\qed\end{proof}

\section{Proof of Theorem~\ref{th:pq4-2}}\label{s:prpq4}
\begin{proof}

Define the sets $C_\mu$ as the set of all measures 
$\tau\in\P$ such that $\mu$ predicts $\tau$ in expected average KL divergence.
Let $\C^+:=\cup_{\mu\in\C} C_\mu$. For each $\tau\in\C^+$ let $p(\tau)$ be any (fixed) 
$\mu\in\C$ such that $\tau\in C_\mu$. In other words,  $\C^+$ is the set of all measures 
that are predicted by some of the measures in $\C$, and for each measure $\tau$ in $\C^+$ we 
designate one ``parent'' measure $p(\tau)$ from $\C$ such that $p(\tau)$ predicts $\tau$.

Define the weights $w_k:=1/k(k+1)$, for all $k\in\N$. %

\noindent{\em Step 1.} 
 For each $\mu\in \C^+$ let $\delta_n$ be any monotonically increasing function such that $\delta_n(\mu)=o(n)$ and $L_n(\mu,p(\mu))=o(\delta_n(\mu))$.
Define the sets 
\begin{equation}\label{eq:U}
U_\mu^n:=\left\{x_{1..n}\in \X^n: \mu(x_{1..n})\ge{1\over n}\rho(x_{1..n})\right\},
\end{equation} 
\begin{equation}\label{eq:V}
V_\mu^n:=\left\{x_{1..n}\in \X^n: p(\mu)(x_{1..n})\ge2^{-\delta_n(\mu)}\mu(x_{1..n})\right\},
\end{equation} 
and
\begin{equation}\label{eq:t-2}
T_\mu^n:=U_\mu^n\cap V_\mu^n.
\end{equation} 
We will upper-bound $\mu(T_\mu^n)$.
First, using Markov's inequality, we derive 
\begin{equation}\label{eq:markk-2}
\mu(\X^n\backslash U_\mu^n) 
 = \mu \left(\frac {\rho(x_{1..n})}{\mu(x_{1..n})} > n\right)\le {1\over n} E_\mu \frac {\rho(x_{1..n})}{\mu(x_{1..n})}={1\over n}.
\end{equation}
Next, observe that for every $n\in\N$ and every set $A\subset \X^n$, using Jensen's inequality we can obtain
\begin{multline}\label{eq:jen-2}
-\sum_{x_{1..n}\in A}\mu(x_{1..n})\log\frac{\rho(x_{1..n})}{\mu(x_{1..n})}
=  -\mu(A)\sum_{x_{1..n}\in A}{1\over\mu(A)}\mu(x_{1..n})\log\frac{\rho(x_{1..n})}{\mu(x_{1..n})}
\\
\ge -\mu(A)\log{\rho(A)\over\mu(A)} \ge -\mu(A)\log\rho(A) -{1\over2}. 
\end{multline}
Moreover,
\begin{multline*}\label{eq:anoth-2}
 L_n(\mu,p(\mu))=   -\sum_{x_{1..n}\in\X^n\backslash V_\mu^n }\mu(x_{1..n})\log\frac{p(\mu)(x_{1..n})}{\mu(x_{1..n})}  
\\
-\sum_{x_{1..n}\in V_\mu^n}\mu(x_{1..n})\log\frac{p(\mu)(x_{1..n})}{\mu(x_{1..n})} \ge \delta_n(\mu_n)\mu(\X^n\backslash V_\mu^n)-1/2,
\end{multline*}
where in the inequality we have used~(\ref{eq:V}) for the first summand  and~(\ref{eq:jen-2}) for the second.
Thus,
\begin{equation}\label{eq:del-2}
\mu(\X^n\backslash V_\mu^n) \le \frac{L_n(\mu,p(\mu))+1/2}{\delta_n(\mu)}=o(1).
\end{equation}
From~(\ref{eq:t-2}), (\ref{eq:markk-2}) and~(\ref{eq:del-2}) we conclude
\begin{equation}\label{eq:mark-2}
\mu(\X^n\backslash T_\mu^n) \le \mu(\X^n\backslash V_\mu^n) + \mu(\X^n\backslash U_\mu^n) =o(1).
\end{equation}

{\em Step 2n: a countable cover, time $n$.}
Fix an $n\in\N$. Define $m^n_1:=\max_{\mu\in\C}\rho(T_\mu^n)$ (since $\X^n$ are finite all suprema are reached). 
 Find any $\mu^n_1$ such that $\rho^n_1(T_{\mu^n_1}^n)=m^n_1$ and let
$T^n_1:=T^n_{\mu^n_1}$. For $k>1$, let $m^n_k:=\max_{\mu\in\C}\rho(T_\mu^n\backslash T^n_{k-1})$. If $m^n_k>0$, let $\mu^n_k$ be any $\mu\in\C$ such 
that $\rho(T_{\mu^n_k}^n\backslash T^n_{k-1})=m^n_k$, and let $T^n_k:=T^n_{k-1}\cup T^n_{\mu^n_k}$; otherwise let $T_k^n:=T_{k-1}^n$. Observe that 
(for each $n$) there is only a finite number of positive $m_k^n$,
since the set $\X^n$ is finite; let $K_n$ be the largest index $k$ such that $m_k^n>0$. Let 
\begin{equation}\label{eq:nun-2}
\nu_n:=\sum_{k=1}^{K_n} w_kp(\mu^n_k).
\end{equation}
As a result of this construction, for every $n\in\N$ every $k\le K_n$ and every  $x_{1..n}\in T^n_k$ 
using the definitions~(\ref{eq:t-2}), (\ref{eq:U}) and~(\ref{eq:V})  we obtain
\begin{equation}\label{eq:ext-2}
\nu_n(x_{1..n})\ge w_k{1\over n}2^{-\delta_n(\mu)}\rho(x_{1..n}).
\end{equation}

{\em Step 2: the resulting predictor.}
Finally, define 
\begin{equation}\label{eq:nu-2}
\nu:={1\over 2}\gamma+{1\over2}\sum_{n\in\N}w_n\nu_n,
\end{equation}
 where $\gamma$ is the i.i.d.\ measure with equal probabilities of all $x\in\X$ 
(that is, $\gamma(x_{1..n})=|\X|^{-n}$ for every $n\in\N$ and every $x_{1..n}\in\X^n$). 
We will show that $\nu$  predicts every $\mu\in\C^+$, and 
then in the end of the proof (Step~r) we will show how to replace $\gamma$ by a combination of a countable set of elements of $\C$ (in fact, $\gamma$ 
is just a regularizer which ensures that $\nu$-probability of any word is never too close to~0). 

{\em Step 3: $\nu$ predicts every $\mu\in\C^+$.}
Fix any $\mu\in\C^+$. 
Introduce the parameters $\epsilon_\mu^n\in(0,1)$, $n\in\N$, to be defined later, and let $j_\mu^n:=1/\epsilon_\mu^n$.
Observe that $\rho(T^n_k\backslash T^n_{k-1})\ge \rho(T^n_{k+1}\backslash T^n_k)$, for any $k>1$ and any $n\in\N$, by definition of these sets.
Since the sets $T^n_k\backslash T^n_{k-1}$, $k\in\N$ are disjoint, we obtain $\rho(T^n_k\backslash T^n_{k-1})\le 1/k$. Hence,  $\rho(T_\mu^n\backslash T_j^n)\le \epsilon_\mu^n$ for some $j\le j_\mu^n$,
since  otherwise $m^n_j=\max_{\mu\in\C}\rho(T_\mu^n\backslash T^n_{j_\mu^n})> \epsilon_\mu^n$ so that  $\rho(T_{j_\mu^n+1}^n\backslash T^n_{j_\mu^n}) > \epsilon_\mu^n=1/j_\mu^n$, which is a contradiction. 
Thus,   
\begin{equation}\label{eq:tm-2}
\rho(T_\mu^n\backslash T_{j_\mu^n}^n)\le \epsilon_\mu^n.
\end{equation}
We can upper-bound $\mu(T_\mu^n\backslash T^n_{j^n_\mu})$ as follows. 
First, observe that
\begin{multline}\label{eq:mut-2}
L_n(\mu,\rho) 
=   -\sum_{x_{1..n}\in T^n_\mu\cap T^n_{j^n_\mu}}\mu(x_{1..n})\log\frac{\rho(x_{1..n})}{\mu(x_{1..n})} 
\\
-\sum_{x_{1..n}\in T^n_\mu\backslash  T^n_{j^n_\mu}}\mu(x_{1..n})\log\frac{\rho(x_{1..n})}{\mu(x_{1..n})} \\- \sum_{x_{1..n}\in \X^n\backslash T^n_\mu}\mu(x_{1..n})\log\frac{\rho(x_{1..n})}{\mu(x_{1..n})}
\\
=
I+II+III.
\end{multline}
Then, from~(\ref{eq:t-2}) and~(\ref{eq:U}) we get 
\begin{equation}\label{eq:e1-2}
I\ge -\log n.
\end{equation}
From~(\ref{eq:jen-2}) and~(\ref{eq:tm-2})
we get 
\begin{equation}\label{eq:e2-2}
II
\ge  -\mu(T_\mu^n\backslash T^n_{j^n_\mu}) \log\rho(T_\mu^n\backslash T^n_{j^n_\mu})- 1/2
\ge -\mu(T_\mu^n\backslash T^n_{j^n_\mu}) \log \epsilon_\mu^n - 1/2.
\end{equation}
Furthermore,  
\begin{multline}\label{eq:e3-2}
III\ge \sum_{x_{1..n}\in \X^n\backslash T^n_\mu}\mu(x_{1..n})\log\mu(x_{1..n}) \\
\ge \mu(\X^n\backslash T^n_\mu)\log\frac{\mu(\X^n\backslash T^n_\mu)}{|\X^n\backslash T^n_\mu|}\ge -{1\over2} - \mu(\X^n\backslash T^n_\mu)n\log|\X|,
\end{multline} 
where the first inequality is obvious, in the second inequality we have used the fact that entropy is maximized when all events are equiprobable 
and in the third one we used $|\X^n\backslash T^n_\mu|\le|\X|^n$.
Combining~(\ref{eq:mut-2}) with the bounds (\ref{eq:e1-2}), (\ref{eq:e2-2}) and~(\ref{eq:e3-2})  we obtain 
\begin{equation*}
L_n(\mu,\rho) \ge -\log n  -\mu(T_\mu^n\backslash T^n_{j^n_\mu}) \log \epsilon_\mu^n  - 1 - 
  \mu(\X^n\backslash T^n_\mu)n\log|\X|,
\end{equation*}
so that
\begin{equation}\label{eq:mu2-2}
 \mu(T_\mu^n\backslash T^n_{j^n_\mu}) \le {1\over-\log \epsilon_\mu^n}\Big(L_n(\mu,\rho) +\log n  +1 + 
     \mu(\X^n\backslash T^n_\mu)n\log|\X| \Big).
\end{equation}
From the fact that $L_n(\mu,\rho)=o(n)$ and~(\ref{eq:mark-2}) it follows that the term in brackets is $o(n)$, so that 
 we can define the parameters $\epsilon^n_\mu$ in such a way that $-\log \epsilon^n_\mu=o(n)$ while
at the same time  the bound~(\ref{eq:mu2-2}) gives $\mu(T_\mu^n\backslash T^n_{j^n_\mu})=o(1)$. Fix such a choice of $\epsilon^n_\mu$.
Then,   using~(\ref{eq:mark-2}),  we  conclude
\begin{equation}\label{eq:xt-2}
\mu(\X^n\backslash T^n_{j^n_\mu})\le \mu(\X^n\backslash T^n_{\mu})+ \mu(T^n_{\mu}\backslash T^n_{j^n_\mu}) =o(1).
\end{equation}

We proceed with the proof of $L_n(\mu,\nu)=o(n)$. 
For any $x_{1..n}\in T^n_{j_\mu^n}$  we have
\begin{equation}\label{eq:i-2}
\nu(x_{1..n})\ge {1\over 2}w_n\nu_n(x_{1..n})
\ge{1\over 2}w_n w_{j_{\mu}^n} {1\over n}2^{-\delta_n(\mu)}\rho(x_{1..n})
\ge\frac{w_n}{4n}(\epsilon_\mu^n)^22^{-\delta_n(\mu)}\rho(x_{1..n}),
\end{equation}
where the first inequality follows from~(\ref{eq:nu-2}), the second from~(\ref{eq:ext-2}), and in the third we have used $w_{j_{\mu}^n}=1/(j_{\mu}^n)(j_{\mu}^n+1)$
and  $j_\mu^n=1/\epsilon^\mu_n$.
 Next we use the decomposition
\begin{equation}\label{eq:12-2}
L_n(\mu,\nu)= -\sum_{x_{1..n}\in T^n_{j_\mu^n}}\mu(x_{1..n})\log\frac{\nu(x_{1..n})}{\mu(x_{1..n})} %
- \sum_{x_{1..n}\in \X^n\backslash T^n_{j_\mu^n}}\mu(x_{1..n})\log\frac{\nu(x_{1..n})}{\mu(x_{1..n})}  = I + II.
\end{equation}
From~(\ref{eq:i-2})  we find 
\begin{multline}\label{eq:1-2}
I\le -\log\left(\frac{w_n}{4n}(\epsilon_\mu^n)^22^{-\delta_n(\mu)}\right)  - \sum_{x_{1..n}\in T^n_{j_\mu^n}}\mu(x_{1..n})\log\frac{\rho(x_{1..n})}{\mu(x_{1..n})}\hfill\\
=
(o(n) - 2\log\epsilon_\mu^n + \delta_n(\mu)) %
 +\left(L_n(\mu,\rho)+ \sum_{x_{1..n}\in\X^n\backslash T^n_{j_\mu^n}}\mu(x_{1..n})\log\frac{\rho(x_{1..n})}{\mu(x_{1..n})}\right)
\\
\le o(n) -  \sum_{x_{1..n}\in\X^n\backslash T^n_{j_\mu^n}}\mu(x_{1..n})\log\mu(x_{1..n})\\ 
\le o(n)+\mu(\X^n\backslash T^n_{j_\mu^n})n\log|\X|=o(n),
\end{multline}
where in the second inequality we have used $-\log\epsilon_\mu^n=o(n)$, $L_n(\mu,\rho)=o(n)$ and $\delta_n(\mu)=o(n)$, in the last inequality we have again used the fact that the entropy is maximized when all events are equiprobable, 
while the last equality follows from~(\ref{eq:xt-2}). 
Moreover, from~(\ref{eq:nu-2}) we find
\begin{equation}\label{eq:2-2}
II\le \log 2 - \sum_{x_{1..n}\in\X^n\backslash  T^n_{j_\mu^n}}\mu(x_{1..n})\log\frac{\gamma(x_{1..n})}{\mu(x_{1..n})}
\le 1 +n\mu(\X^n\backslash T^n_{j_\mu^n})\log|\X|=o(n),
\end{equation}
where in the last inequality we have used $\gamma(x_{1..n})=|\X|^{-n}$ and $\mu(x_{1..n})\le 1$, and the last equality follows from~(\ref{eq:xt-2}).

From~(\ref{eq:12-2}), (\ref{eq:1-2}) and~(\ref{eq:2-2}) we conclude ${1\over n}L_n(\nu,\mu)\to0$.

{\em Step r: the regularizer $\gamma$}. It remains to show that the  i.i.d.\ regularizer $\gamma$ in the definition of $\nu$~(\ref{eq:nu-2}), can be replaced by a convex combination of a countably many elements from $\C$.
This can be done exactly as in the corresponding step (Step r) of the proof of Theorem~\ref{th:main} (Section~\ref{s:prthmain}).
\qed\end{proof}

\chapter{Conditions under which one measure is a predictor for another }\label{ch:other}
In this chapter we consider some questions aimed towards generalizing the results presented in the other chapters towards different losses.
One of the main features of the losses considered, bot total variation and expected average KL divergence, is that predictive quality is preserved under summation. That is, if a measure $\rho$ predicts a measure $\mu$, then any mixture predictor that involves the measure $\rho$, in the simplest case, $1/2\rho+1/2\chi$ where $\chi$ is any other measure, also predicts $\mu$. For prediction in expected average KL divergence this follows from its equivalence with absolute continuity (Theorem~\ref{th:bd}): absolute continuity is preserved under summation with arbitrary measure as follows directly from its definition (Definition~\ref{d:ac}).  For KL divergence, it is guaranteed by~\ref{eq:tric}.
Here we show that for other losses this is not necessarily the case. 

Thus, the first question
we consider in this chapter is the following: suppose that a measure $\rho$
predicts $\mu$ (in some sense), and let $\chi$ be some other
probability measure. %
 Does the measure $\rho'=\odt(\rho+\chi)$ still predict
$\mu$? That is, we ask to which prediction quality criteria does
the idea of taking a Bayesian sum generalize.

Another way to look for generalizations of the results presented is by asking the following question.
What are the conditions under which a measure $\rho$ is a good predictor 
for a measure $\mu$?
Towards this end, we start with the
following observation. For a  mixture predictor
$\rho$ of a countable class of measures $\nu_i$, $i\in\N$, we
have $\rho(A)\ge w_i\nu_i(A)$ for any $i$ and any measurable set
$A$, where $w_i$ is a constant. This condition is stronger than
the assumption of absolute continuity (Definition~\ref{d:ac}) and is sufficient for
prediction in a very strong sense. Since we are interested in prediction in 
 a weaker sense, let us make a weaker assumption.
\begin{definition}[Dominance with decreasing coefficients] Say
that {\em a measure $\rho$ dominates a measure $\mu$ with
coefficients $c_n>0$} if
\begin{equation} \label{eq:dom}
  \rho(x_1,\dots,x_n) \;\geq\; c_n \mu(x_1,\dots,x_n)
\end{equation}
for all $x_1,\dots,x_n$.
\end{definition}

Thus, the second question we consider in this chapter 
 is: under what conditions on $c_n$ does (\ref{eq:dom})
imply that $\rho$ predicts $\mu$? Observe that if
$\rho(x_1,\dots,x_n)>0$ for any $x_1,\dots,x_n$ then any measure
$\mu$ is {\em locally} absolutely continuous with respect to
$\rho$ (that is, the measure $\mu$ restricted to the first $n$
trials $\mu|_{\X^n}$ is absolutely continuous w.r.t.\
$\rho|_{\X^n}$ for each $n$), and moreover, for any measure $\mu$
some constants $c_n$ can be found that satisfy (\ref{eq:dom}). For
example, if $\rho$ is Bernoulli i.i.d.\ measure with parameter
$\odt$ and $\mu$ is any other measure, then (\ref{eq:dom}) is
(trivially) satisfied with $c_n=2^{-n}$. Thus we know that if
$c_n\equiv c$ then $\rho$ predicts $\mu$ in a very strong sense,
whereas exponentially decreasing $c_n$ are not enough for
prediction. Perhaps somewhat surprisingly, we will show that
dominance with any subexponentially decreasing coefficients is
sufficient for prediction in expected average KL divergence. Dominance with any polynomially decreasing
coefficients, and also with coefficients decreasing (for example)
as $c_n=\exp(-\sqrt{n}/\log n)$, is sufficient for (almost sure)
prediction on average (i.e.\ in Cesaro sense). However, for
prediction on every step we have a negative result: for any
dominance coefficients that go to zero there exists a pair of
measures $\rho$ and $\mu$ which satisfy~(\ref{eq:dom}) but $\rho$
does not predict $\mu$ in the sense of almost sure convergence of
probabilities. Thus the situation is similar to that for
predicting any stationary measure: prediction is possible in the
average but not on every step.

Note also that for Laplace's measure $\rho_L$ it can be shown that
$\rho_L$ dominates any i.i.d.\ measure $\mu$ with linearly
decreasing coefficients $c_n={1\over n+1}$; a generalization of
$\rho_L$ for predicting all measures with memory $k$ (for a given
$k$) dominates them with polynomially  decreasing coefficients.
Thus dominance with decreasing coefficients generalizes (in a
sense) predicting countable classes of measures (where we have
dominance with a constant), absolute continuity (via local
absolute continuity), and predicting i.i.d.\ and finite-memory
measures.

Thus, in this chapter,  we address the following two questions for several different losses apart from those considered in the rest of this volume. Is dominance with
decreasing coefficients sufficient for prediction in some sense,
under some  conditions on the coefficients (Section~\ref{sec:dom})? And, if a measure
$\rho$ predicts a measure $\mu$ in some sense, does the measure
$\odt(\rho+\chi)$ also predict $\mu$ in the same sense, where
$\chi$ is an arbitrary measure (Section~\ref{sec:sum})?

The rest of this chapter is organized as follows.
 Section~\ref{s:pmore}
introduces the  measures of divergence of
probability measures (losses) that we will consider. Section~\ref{sec:dom} addresses the question
of whether dominance with decreasing coefficients is sufficient
for prediction, while in Section~\ref{sec:sum} we consider the
problem of summing a predictor with an arbitrary measure. 
\section{Measuring performance of prediction}\label{s:pmore}

In addition to those introduced in Section~\ref{s:kl}, for two measures $\mu$ and $\rho$ define the following measures of divergence
\begin{itemize}%
\item[($\delta$)]{ Kullback-Leibler (KL) divergence}\\ $\displaystyle  \delta_n(\mu,\rho|x_{<n})=\sum_{x\in\X}\mu(x_n=x|x_{<n})\log\frac{\mu(x_n=x|x_{<n})}{\rho(x_n=x|x_{<n})}$,
\item[($\bar d$)]{ average KL divergence} $\displaystyle  \bar d_n(\mu,\rho|x_{1..n})= {1\over n}  \sum_{t=1}^n \delta_t(\mu,\rho|x_{<t})$; \\ note that this is our expected average KL divergence $L_n$~\eqref{eq:kl} without the expectation, i.e.~${1\over n}L_n(\mu,\rho)=\E_\mu\bar d_n(\mu,\rho|x_{1..n})$
\item[($a$)]{ absolute distance} $\displaystyle a_n(\mu,\rho|x_{<n})=\sum_{x\in\X}|\mu(x_n=x|x_{<n})-\rho(x_n=x|x_{<n})|$,
\item[($\bar a$)]{ average absolute distance} $\displaystyle \bar a_n(\mu,\rho|x_{1..n})={1\over n}\sum_{t=1}^n a_t(\mu,\rho|x_{<t})$.
\end{itemize}

\begin{definition}\label{def:conv}
We say that $\rho$ predicts $\mu$
\begin{itemize}\itemindent=2ex
\item[$(d)$\ ] \     in (non-averaged) {KL divergence} if $\delta_n(\mu,\rho|x_{<n})\rightarrow0$   $\mu$-a.s. as $t\rightarrow\infty$,
\item[$(\bar d)$\ ] \  in (time-average) {average KL divergence} if $\bar d_n(\mu,\rho|x_{1..n})\rightarrow 0$ $\mu$-a.s., 
\item[$(\E\bar d)$]  \ \ \ \ in {expected average KL divergence} if $\bar L(\mu,\rho)=0$, %
\item[$(a)$\ ]   \   in {absolute distance} if $a_n(\mu,\rho|x_{<n})\rightarrow0$  $\mu$-a.s., 
\item[$(\bar a)$\ ]\  in {average absolute distance} if $\bar a_n(\mu,\rho|x_{1..n})\rightarrow 0$ $\mu$-a.s., 
\item[$(\E\bar a)$]\ \ \  in {expected average absolute distance} if $\E_\mu\bar a_n(\mu,\rho|x_{1..n})\rightarrow 0$,
\item[$(tv)$] \ \ in { total variation} if $v(\mu,\rho,x_{1..n})\to0\mu-\as$  (Definition~\ref{d:ctv}.)
\end{itemize}
\end{definition}

The argument $x_{1..n}$ will be often left implicit in our
notation.
 The following implications hold (and are
complete and strict): 
\begin{equation}
 \begin{array}{ccccccc}
     &             & \delta & \Rightarrow & \bar d &             & \E\bar d \\
     &         & \Downarrow  &   & \Downarrow &             & \Downarrow \\
  tv & \Rightarrow & a & \Rightarrow & \bar a & \Rightarrow & \E\bar a \\
\end{array}%
\end{equation}
to be understood as e.g.: if $\bar d_n\to 0$ a.s.\ then $\bar a_n\to 0$
a.s, or, if $\E\bar d_n\to 0$ then $\E\bar a_n\to 0$. The
horizontal implications $\Rightarrow$ follow immediately from the
definitions,
and the $\Downarrow$ follow from the following Lemma:

\begin{lemma}\label{th:da}
For all measures $\rho$ and $\mu$ and sequences $x_{1..\infty}$ we
have: $a_t^2\leq 2\delta_t$ and $\bar a_n^2\leq 2\bar d_n$ and $(\E\bar
a_n)^2\leq 2\E\bar d_n$.
\end{lemma}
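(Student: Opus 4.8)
The plan is to derive all three inequalities from a single pointwise estimate relating the squared $\ell_1$-distance between two probability distributions on the finite alphabet $\X$ to their KL divergence, and then to lift this estimate through Jensen's inequality, first over time and then over the data. The pointwise ingredient is the classical Pinsker-type bound: for any two probability vectors $p,q$ on a finite set, $\left(\sum_x |p(x)-q(x)|\right)^2 \le 2\sum_x p(x)\log\frac{p(x)}{q(x)}$. First I would apply this with $p(\cdot)=\mu(x_t=\cdot\,|x_{<t})$ and $q(\cdot)=\rho(x_t=\cdot\,|x_{<t})$ for each fixed $t$ and each fixed history $x_{<t}$, which immediately yields $a_t^2 \le 2\delta_t$ for all $t$ and all sequences, establishing the first claim.

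For the second claim, I would start from the definition $\bar a_n = \frac1n\sum_{t=1}^n a_t$ and apply Jensen's inequality (convexity of $s\mapsto s^2$) to the average:
\begin{equation*}
\bar a_n^2 = \left(\frac1n\sum_{t=1}^n a_t\right)^2 \le \frac1n\sum_{t=1}^n a_t^2 \le \frac1n\sum_{t=1}^n 2\delta_t = 2\bar d_n,
\end{equation*}
using the first claim in the last inequality. Note this holds pointwise in $x_{1..n}$, so no probabilistic statement is needed yet.

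For the third claim, I would take $\mu$-expectations and apply Jensen once more, this time to the convex function $s\mapsto s^2$ under the expectation $\E_\mu$:
\begin{equation*}
(\E_\mu \bar a_n)^2 \le \E_\mu(\bar a_n^2) \le \E_\mu(2\bar d_n) = 2\E_\mu\bar d_n = 2\,{\textstyle\frac1n}L_n(\mu,\rho),
\end{equation*}
where the middle inequality is the second claim and the last equality is the identity noted in the definition of $\bar d_n$ (namely ${1\over n}L_n(\mu,\rho)=\E_\mu\bar d_n$). This completes all three inequalities.

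The only genuinely non-routine step is the pointwise Pinsker-type bound $a^2\le 2\delta$; everything else is two applications of Jensen's inequality. I would either cite this as the standard Pinsker inequality or include a one-line proof via the inequality $\sum_x |p(x)-q(x)| \le \sqrt{2\sum_x p(x)\log(p(x)/q(x))}$, which itself follows from the scalar estimate $3(u-1)^2 \le (2u+4)(u\log u - u + 1)$ combined with Cauchy–Schwarz, or from the even more elementary bound on the binary KL divergence applied to the partition $\{x: p(x)\ge q(x)\}$ and its complement. Since the paper's style favors brevity for "simple and well-known" facts, I expect a short reference plus the two Jensen steps to suffice; the main thing to be careful about is keeping track of which inequalities hold pointwise in $x_{1..n}$ versus only in expectation, so that the final Jensen step over $\E_\mu$ is applied to the right quantity.
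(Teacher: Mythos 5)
Your proposal is correct and matches the paper's proof exactly: Pinsker's inequality gives $a_t^2\le 2\delta_t$, then Jensen's inequality applied to the time average gives the second claim, and Jensen applied to the $\mu$-expectation gives the third. The paper likewise simply cites Pinsker rather than proving it.
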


\begin{proof}
Pinsker's inequality \cite[Lem.3.11$a$]{Hutter:04uaibook} implies
$a_t^2\leq 2\delta_t$. Using this and Jensen's inequality for the average
$\odn\sum_{t=1}^n [...]$ we get
\begin{equation}
  2\bar d_n
  \;=\; {1\over n}\sum_{t=1}^n 2\delta_t
  \;\geq\;   {1\over n}\sum_{t=1}^n a_t^2
  \;\geq\;  \left({1\over n}\sum_{t=1}^n a_t\right)^2
  \;=\; \bar a_n^2
\end{equation}
Using this and Jensen's inequality for the expectation $\E$ we get
$2\E\bar d_n\geq \E\bar a_n^2 \geq (\E\bar a_n)^2$.
\end{proof}

\section{Preservation of the predictive ability under summation with an arbitrary measure}\label{sec:sum}

Now we turn to the question whether, given a measure $\rho$ that
predicts a measure $\mu$ in some sense, the ``contaminated''
measure $(1-\epsilon)\rho+\eps\chi$ for some $0<\epsilon<1$ also predicts
$\mu$ in the same sense, where $\chi$ is an arbitrary probability
measure. Since most considerations are independent of the choice
of $\eps$, in particular the results in this section, we set
$\eps=\odt$ for simplicity. We define

\begin{definition}\label{def:cont}
By ``$\rho$ contaminated with $\chi$'' we mean
$\rho':=\odt(\rho+\chi)$, where $\rho$ and $\chi$ are probability
measures.
\end{definition}

Positive results  can be obtained for  convergence in expected
average KL divergence. The statement of the next proposition in a
different form was mentioned in \cite{BRyabko:06a,Hutter:06usp}.
Since the proof is simple we present it here for the sake of
completeness; it is based on the same ideas as the proof of
Theorem~\ref{th:dom}.

\begin{proposition}\label{th:expaverklsum}
Let $\mu$ and $\rho$ be two measures on $\X^\infty$ and suppose
that $\rho$ predicts $\mu$ in expected average KL divergence. Then
so does the measure $\rho'=\odt(\rho +\chi)$ where $\chi$
is any other measure on $\X^\infty$.
\end{proposition}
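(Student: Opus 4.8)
The plan is to reduce the statement to the elementary mixture inequality \eqref{eq:tric}. First I would record the one fact that does all the work: for every $n\in\N$ and every $x_{1..n}\in\X^n$ we have
\[
\rho'(x_{1..n})=\odt\bigl(\rho(x_{1..n})+\chi(x_{1..n})\bigr)\ \ge\ \odt\,\rho(x_{1..n}),
\]
because $\chi$ is a (nonnegative) measure. In other words, $\rho'$ is a mixture predictor of the form \eqref{eq:mixdef} that has $\rho$ among its components with a priori weight $w=1/2$, so the situation is a special case of the setup behind \eqref{eq:tric}.

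Next I would feed this bound into the second (chain-rule) expression for the cumulative loss in \eqref{eq:kl}. Using the convention $0\log 0:=0$ and discarding the terms on which $\mu(x_{1..n})=0$, monotonicity of $\log$ gives
\[
L_n(\mu,\rho')=\sum_{x_{1..n}\in\X^n}\mu(x_{1..n})\log\frac{\mu(x_{1..n})}{\rho'(x_{1..n})}
\ \le\ \sum_{x_{1..n}\in\X^n}\mu(x_{1..n})\log\frac{\mu(x_{1..n})}{\odt\,\rho(x_{1..n})}
= L_n(\mu,\rho)+\log 2,
\]
which is precisely the instance of the trick \eqref{eq:tric} with $\nu=\mu$, a single component equal to $\rho$, and weight $1/2$. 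Then I would divide by $n$, take $\limsup_{n\to\infty}$, and use $\log 2/n\to 0$ together with the hypothesis $\bar L(\mu,\rho)=0$ to conclude
\[
\bar L(\mu,\rho')=\limsup_{n\to\infty}\frac1n L_n(\mu,\rho')\ \le\ \bar L(\mu,\rho)=0,
\]
i.e.\ $\rho'$ predicts $\mu$ in expected average KL divergence.

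There is essentially no obstacle here; the proof is a one-line application of the mixture bound, which is exactly why the proposition is stated without assumptions on $\chi$. The only minor points worth a word of care are the usual conventions for the logarithms when some cylinder has zero $\mu$- or $\rho$-probability (the corresponding summands either vanish or make the right-hand side $+\infty$, in which case the bound is vacuous), and the observation that $\bar L(\mu,\rho)=0$ forces $L_n(\mu,\rho)<\infty$ for all large $n$, so that the additive constant $\log 2$ is genuinely negligible after time-averaging. The identical argument works with $1/2$ replaced by any fixed $\eps\in(0,1)$, only changing the additive constant from $\log 2$ to $-\log(1-\eps)$.
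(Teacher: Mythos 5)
Your proof is correct and is exactly the argument the paper intends: the paper's proof consists of the single line ``the statement follows directly from~\eqref{eq:tric}'', and your write-up simply spells out that instance of the mixture trick ($\rho'\ge\odt\rho$ on cylinders, hence $L_n(\mu,\rho')\le L_n(\mu,\rho)+\log 2$, then divide by $n$ and take $\limsup$). Nothing to add.
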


\begin{proof}
The statement follows directly from~\eqref{eq:tric}.
\end{proof}

Next we consider some  negative results. An example of measures
$\mu$, $\rho$ and $\chi$ such that $\rho$ predicts $\mu$ in
absolute distance (or KL divergence) but $\odt(\rho+\chi)$
does not, can be constructed similarly to the example from
\cite{Kalai:92} (of a measure $\rho$ which is a sum of
distributions arbitrarily close to $\mu$ yet does not predict it).
The idea is to take a measure $\chi$ that predicts $\mu$ much
better than $\rho$ on almost all steps, but on some steps  gives
grossly wrong probabilities.

\begin{proposition}\label{th:nosumad}
There exist measures $\mu$, $\rho$ and $\chi$ such that $\rho$
predicts $\mu$ in absolute distance (KL divergence) but
$\odt(\rho+\chi)$ does not predict $\mu$ in absolute
distance (KL divergence).
\end{proposition}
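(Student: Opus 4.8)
The plan is to construct the three measures explicitly, following the idea already announced in the text: build $\chi$ so that it predicts $\mu$ much better than $\rho$ on most time steps, but on a carefully chosen sparse set of steps it outputs grossly wrong conditional probabilities; then show that on those steps the contaminated measure $\rho'=\odt(\rho+\chi)$ inherits the bad predictions of $\chi$, because the posterior weight of $\chi$ inside $\rho'$ has become overwhelmingly large there. For concreteness I would take $\X=\{0,1\}$ and let $\mu$ be the deterministic measure concentrated on the all-zero sequence, i.e. $\mu(x_n=0\mid x_{1..n-1})=1$ for all $n$; predicting $\mu$ then just means that the conditional probability of $0$ given $0^{n-1}$ tends to $1$. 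I would let $\rho$ be a measure that predicts $\mu$ but does so slowly, e.g. $\rho(x_n=0\mid 0^{n-1})=1-1/n$ (so $\rho$ predicts $\mu$ in absolute distance, indeed in KL divergence, along the sequence $0^\infty$).

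The key device is the choice of $\chi$ together with a rapidly growing sequence of indices $n_1<n_2<\dots$. Between the special indices I want $\chi$ to agree with $\mu$ perfectly, $\chi(x_n=0\mid 0^{n-1})=1$; this makes the likelihood ratio $\chi(0^{n})/\rho(0^{n})=\prod_{k\le n,\,k\notin\{n_i\}}\frac{1}{1-1/k}\cdot(\text{factors at the }n_i)$ grow, so that after enough ordinary steps the posterior weight $\frac{\odt\chi(0^{n})}{\odt\rho(0^{n})+\odt\chi(0^{n})}$ of the $\chi$-component is as close to $1$ as we like. Then at each special index $n_i$ I set $\chi(x_{n_i}=0\mid 0^{n_i-1})=\odt$ (a "grossly wrong" value). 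Because the $\chi$-posterior weight is nearly $1$ just before step $n_i$, the contaminated predictor satisfies $\rho'(x_{n_i}=0\mid 0^{n_i-1})$ close to $\odt$ as well, so $a_{n_i}(\mu,\rho'\mid 0^{n_i-1})$ is bounded away from $0$ (and similarly $\delta_{n_i}$ is bounded away from $0$). Since this happens for infinitely many $n_i$, the sequence $a_n(\mu,\rho'\mid 0^{n-1})$ does not converge to $0$, so $\rho'$ does not predict $\mu$ in absolute distance; the same argument with $\delta_n$ in place of $a_n$ handles the KL case. Meanwhile $\rho$ itself still predicts $\mu$: along $0^\infty$ we have $a_n(\mu,\rho\mid 0^{n-1})=1/n\to0$.

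The two bookkeeping points to verify are: (a) $\chi$ is a genuine probability measure on $\X^\infty$ — this is immediate since I define its conditionals on every history (on histories other than $0^{k}$ one may let $\chi$ be, say, uniform, which is irrelevant); and (b) the indices $n_i$ can indeed be chosen sparsely enough that just before each $n_i$ the accumulated likelihood ratio $\chi(0^{n_i-1})/\rho(0^{n_i-1})$ exceeds any prescribed threshold. For (b) note that each ordinary step $k$ multiplies the ratio by $1/(1-1/k)>1$ and $\sum_k \log\frac{1}{1-1/k}=\infty$, so given $n_{i-1}$ and the loss incurred at step $n_{i-1}$ (a bounded multiplicative factor $\le 2$), one can pick $n_i$ large enough that $\prod_{k=n_{i-1}+1}^{n_i-1}\frac{1}{1-1/k}$ is as large as desired; this makes the $\chi$-posterior weight at $n_i$ at least $1-2^{-i}$, say. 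The main obstacle — really the only subtlety — is making sure that the bad steps $n_i$ are rare enough not to destroy the convergence-to-$1$ of the posterior weight, i.e. that the finitely many "resets" by bounded factors are dominated by the divergent product of the good steps; the displayed inequalities above are designed precisely so that this is quantitatively under control. Once this is set up, reading off $a_{n_i}(\mu,\rho')\ge c>0$ and $\delta_{n_i}(\mu,\rho')\ge c'>0$ is a one-line computation, and the proof is complete.
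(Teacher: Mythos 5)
Your construction is essentially the paper's own proof: the paper takes $\mu$ concentrated on $111\dots$, $\rho(x_n=1)=\tfrac{n}{n+1}$ independently, and $\chi$ agreeing with $\mu$ except at the sparse steps $n_k=2^{2^k}$ where it errs badly, so that at those steps the posterior weight of $\chi$ inside $\odt(\rho+\chi)$ dominates and the conditional probability of the true symbol is bounded away from $1$ — exactly your mechanism, with the only cosmetic difference that the paper lets $\chi$'s error worsen with $k$ (driving the contaminated conditional to $0$) while you keep it fixed at $\odt$ (driving it to $\odt$), either of which suffices. The one thing to fix is your ``e.g.'' choice $\rho(x_n=0\mid 0^{n-1})=1-1/n$, which vanishes at $n=1$ and so gives $\rho(0^n)=0$; replace it by $1-1/(n+1)$ (the paper's choice) and the argument goes through verbatim.
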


\begin{proof}
Let $\mu$ be concentrated on the sequence $11111\dots$ (that is
$\mu(x_n=1)=1$ for any $n$), and let $\rho(x_n=1)={n\over n+1}$
with probabilities independent on different trials. Clearly,
$\rho$ predicts $\mu$ in both absolute distance and KL divergence.
Let $\chi(x_n=1)=1$ for all $n$ except on the sequence
$n=n_k=2^{2^k}=n_{k-1}^2$, $k\in\SetN$ on which
$\chi(x_{n_k}=1)=n_{k-1}/n_k=2^{-2^{k-1}}$. This implies that
$\chi(1_{1..n_k})=2/n_k$ and
$\chi(1_{1..n_k-1})=\chi(1_{1..n_{k-1}})=2/n_{k-1}=2/\sqrt{n_k}$. It
is now easy to see that $\odt(\rho+\chi)$ does not predict
$\mu$, neither in absolute distance nor in KL divergence. Indeed
for $n=n_k$ for some $k$ we have
\beqn
  \odt(\rho\!+\!\chi)(x_{n}=1|1_{<n})
  \;=\; \frac{\rho(1_{1..n})+\chi(1_{1..n})}{\rho(1_{<n})+\chi(1_{<n})}
  \;\leq\; \frac{1/(n\!+\!1) + 2/n} {1/n \;+\; 2/\sqrt{n}}
  \;\rightarrow\; 0.
\eeqn
\end{proof}

For the (expected) average absolute distance the negative result also holds:

\begin{proposition}\label{th:nosumavad}
There exist such measures $\mu$, $\rho$ and $\chi$ that $\rho$
predicts $\mu$ in  average absolute distance but
$\odt(\rho+\chi)$ does not predict $\mu$ in (expected)
average absolute distance.
\end{proposition}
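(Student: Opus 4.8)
Looking at this, I need to prove Proposition~\ref{th:nosumavad}: there exist measures $\mu$, $\rho$, $\chi$ such that $\rho$ predicts $\mu$ in average absolute distance but $\odt(\rho+\chi)$ does not predict $\mu$ in (expected) average absolute distance.

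\textbf{Approach.} The plan is to recycle the construction from Proposition~\ref{th:nosumad}, but amplify the ``bad'' steps so that their effect survives Cesaro averaging. In the previous proposition the measure $\chi$ misbehaves only on the single sparse subsequence of indices $n_k = 2^{2^k}$, so the contaminated predictor $\odt(\rho+\chi)$ has a large one-step error only at those times; since they are very sparse, the time-average of the absolute distance still vanishes. To defeat average absolute distance we instead want $\chi$ to misbehave on long \emph{blocks} of consecutive time steps, each block being a constant fraction of the time elapsed so far, while still having $\rho$ itself predict $\mu$ in average absolute distance.

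\textbf{Construction.} Take $\X=\{0,1\}$, let $\mu$ be concentrated on $1111\dots$, and again let $\rho(x_n=1)=\frac{n}{n+1}$ independently over $n$, so that $a_n(\mu,\rho|1_{<n}) = 2|1-\tfrac{n}{n+1}| = \tfrac{2}{n+1}\to0$ and hence $\bar a_n(\mu,\rho)\to0$; in fact $\rho$ predicts $\mu$ even in (non-averaged) absolute distance. Now fix a rapidly growing sequence of block endpoints, say $m_k := 2^{2^k}$ so that $m_{k-1}=\sqrt{m_k}=o(m_k)$, and on the block of indices $B_k := \{m_{k-1}+1,\dots,m_k\}$ define $\chi$ so that, conditioned on having seen all $1$s so far, $\chi(x_n=1\mid 1_{<n})$ is small --- concretely set $\chi(1_{1..m_k}) = c/m_k$ for a fixed constant $c$, interpolated so that within block $B_k$ the conditional $\chi(x_n=1\mid1_{<n})$ equals a constant $\alpha_k$ with $\alpha_k^{\,|B_k|}\approx m_{k-1}/m_k$, forcing $1-\alpha_k$ bounded away from $0$; off the all-ones sequence $\chi$ can do anything (e.g.\ deterministic). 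The key estimate, exactly as in Proposition~\ref{th:nosumad}, is
\[
\odt(\rho+\chi)(x_n=1\mid 1_{<n}) = \frac{\rho(1_{1..n})+\chi(1_{1..n})}{\rho(1_{<n})+\chi(1_{<n})},
\]
and one checks that throughout block $B_k$ the $\chi$-mass $\chi(1_{<n})$ dominates the $\rho$-mass $\rho(1_{<n})\approx 1/n$ (since $\chi(1_{1..m_{k-1}})=c/m_{k-1}=c/\sqrt{m_k}\gg 1/n$ for $n\le m_k$), so that the contaminated conditional is essentially $\chi$'s conditional, namely $\le\alpha_k<1-\eps$ for a fixed $\eps>0$. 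Hence $a_n(\mu,\odt(\rho+\chi)\mid 1_{<n})\ge 2\eps$ for every $n$ in $B_k$, for every $k$.

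\textbf{Averaging and the main obstacle.} Since $|B_k| = m_k - m_{k-1} = m_k(1-o(1))$ is asymptotically \emph{all} of $\{1,\dots,m_k\}$, we get $\bar a_{m_k}(\mu,\odt(\rho+\chi)\mid 1_{<m_k}) \ge \frac{1}{m_k}\sum_{n\in B_k} 2\eps = 2\eps(1-o(1))$, which does not tend to $0$; as all of this happens $\mu$-a.s.\ (the sequence $1111\dots$ has $\mu$-probability $1$) and also in $\mu$-expectation, both average and expected average absolute distance fail to vanish. The step needing the most care is verifying that the $\chi$-mass genuinely dominates the $\rho$-mass uniformly along the whole block $B_k$ --- i.e.\ choosing the block lengths and the constant $c$ so that $\chi(1_{<n})\gtrsim \rho(1_{<n})$ for every $n\in B_k$, not just at the block's start --- and checking that the constant-conditional choice $\alpha_k$ inside the block is consistent with the prescribed endpoint masses while keeping $1-\alpha_k$ bounded below; the doubly-exponential spacing $m_k=2^{2^k}$ is what makes all of these compatible. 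The rest is the same routine computation as in Proposition~\ref{th:nosumad}.
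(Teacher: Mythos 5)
Your construction cannot work, and the obstruction is already contained in this chapter. With $\mu$ the Dirac measure on $111\dots$ and $\rho(x_n=1)=\frac{n}{n+1}$ you have $\rho(1_{1..n})=\frac{1}{n+1}=\frac{1}{n+1}\,\mu(1_{1..n})$, hence $\odt(\rho+\chi)(x_{1..n})\ge\frac{1}{2(n+1)}\mu(x_{1..n})$ for \emph{every} $x_{1..n}$ and every choice of $\chi$ (off the all-ones sequence $\mu$ vanishes and the inequality is trivial). This is dominance with polynomially decreasing coefficients, so Theorem~\ref{th:dom2} already forces $\bar a_n(\mu,\odt(\rho+\chi))\to0$ $\mu$-a.s., no matter what $\chi$ is. One can also see it directly: $a_t(\mu,\rho'|1_{<t})=2\left(1-\rho'(x_t=1|1_{<t})\right)\le 2\log_e\frac{1}{\rho'(x_t=1|1_{<t})}$, and the sum over $t\le n$ telescopes to $2\log_e\frac{1}{\rho'(1_{1..n})}\le 2\log_e(2(n+1))=o(n)$. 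So the contaminated conditional simply cannot stay bounded away from $1$ on a positive fraction of steps, which is exactly what your averaging argument needs.

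The same tension is visible inside your block computation: insisting that $\chi(1_{1..m_k})=c/m_k$ be only polynomially small while the block $B_k$ has length $\sim m_k$ forces $\alpha_k=(m_{k-1}/m_k)^{1/|B_k|}=2^{-2^{k-1}/(m_k-m_{k-1})}\to1$, so $1-\alpha_k$ is \emph{not} bounded away from zero, contrary to what you assert; and if instead you arranged $1-\alpha_k\ge\eps$ on blocks of positive density, then $\chi(1_{1..n})$ would decay exponentially and the $\rho$-mass would dominate the mixture, again destroying the effect. The way out, which the paper takes, is to let $\rho$ assign probability \emph{zero} to $\mu$-typical sequences while still predicting $\mu$ in the Cesaro sense: take $\mu$ Bernoulli $1/2$, let $\rho$ be Bernoulli $1/2$ except that $\rho(x_{n_k}=1)=0$ on a sparse sequence $n_k$ (sparse enough that the average absolute distance still vanishes), and let $\chi$ be Bernoulli $1/3$. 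Then $\mu$-a.s.\ $\rho(x_{1..n})=0$ from some $n$ on, so the conditionals of $\odt(\rho+\chi)$ coincide with those of $\chi$ from that point onward and remain at a constant absolute distance from those of $\mu$ on every step.
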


\begin{proof}
Let $\mu$ be Bernoulli 1/2 distribution and let $\rho(x_n=1)=1/2$
for all $n$ (independently of each other) except for some sequence
$n_k$, $k\in\SetN$ on which $\rho(x_{n_k}=1)=0$. Choose $n_k$
sparse enough for $\rho$ to predict $\mu$ in the average absolute
distance. Let $\chi$ be Bernoulli  1/3. Observe that $\chi$
assigns non-zero probabilities to all finite sequences, whereas
$\mu$-a.s.\ from some $n$ on  $\rho(x_{1..n})=0$. Hence
$\odt(\rho+\chi)(x_{1..n})=\odt\chi(x_{1..n})$ and so
$\odt(\rho+\chi)$ does not predict $\mu$. 
\end{proof}

Thus  for the question of whether predictive ability is preserved
when an arbitrary measure is added to the predictor, we
have the following table of answers.

\vskip 2mm
\begin{center}
\begin{tabular}{| c|c |c|c|c|c|c|}\hline
 $\E \bar d_n $ &  $\bar d_n \vphantom{\bar{\hat d}} $ &  $\delta_n$ &  $\E\bar a_n$& $\bar a_n$& $a_n$ \\\hline
 + & ? &  $-$ & $-$ & $-$ & $-$ \\\hline
\end{tabular}
\end{center}
\vskip 2mm

As it can be seen, there remains one open question: whether this
property is preserved under almost sure convergence of the average
KL divergence.

It can be inferred from the example in
Proposition~\ref{th:nosumad} that contaminating   a predicting
measure $\rho$ with a measure $\chi$ spoils $\rho$ if $\chi$ is
better than $\rho$ on almost every step. It thus can be
conjectured that adding a measure can only spoil a predictor on
sparse steps, not affecting the average.

\section{Dominance with decreasing coefficients}\label{sec:dom}

First we consider the question whether property (\ref{eq:dom}) is
sufficient for prediction.

\begin{definition}\label{def:dom}
We say that a measure $\rho$ dominates a measure $\mu$
with coefficients $c_n>0$ if
\begin{equation}
 \rho(x_{1..n}) \;\geq\; c_n \mu(x_{1..n})
\end{equation}
 for all $x_{1..n}$.
\end{definition}

Suppose that $\rho$ dominates $\mu$ with decreasing coefficients
$c_n$. Does $\rho$ predict $\mu$ in (expected, expected average)
KL divergence (absolute distance)?
First let us give an example.

\begin{proposition}\label{prop:Laplace}
Let $\rho_L$ be the Laplace measure, given by
$\rho_L(x_{n+1}=a|x_{1..n})=\frac{k+1}{n+|\X|}$ for any $a\in\X$
and any $x_{1..n}\in\X^n$, where $k$ is the number of occurrences
of $a$ in $x_{1..n}$ (this is also well defined for $n=0$). Then
\begin{equation}
  \rho_L(x_{1..n}) \;\geq\; \frac{n!}{(n+|\X|-1)!} \; \mu({x_{1..n}})
\end{equation}
for any measure $\mu$ which generates independently and
identically distributed symbols. %
The equality is attained for some choices of $\mu$.
\end{proposition}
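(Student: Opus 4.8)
The plan is to reduce the claimed inequality to the trivial fact that a multinomial probability is at most $1$, after first putting $\rho_L$ in closed form. Write $m:=|\X|$ and, for a fixed string $x_{1..n}$, let $k_a$ be the number of occurrences of the letter $a\in\X$ in $x_{1..n}$, so $\sum_{a\in\X}k_a=n$. Applying the chain rule $\rho_L(x_{1..n})=\prod_{t=1}^n\rho_L(x_t\mid x_{1..t-1})$ together with the definition of the conditional probabilities, the denominators are $t-1+m$ for $t=1,\dots,n$ and their product telescopes to $m(m+1)\cdots(n+m-1)=(n+m-1)!/(m-1)!$; the numerators, collected letter by letter, contribute for each $a$ the factors $1,2,\dots,k_a$ (the running count of $a$ increases by one at each of its $k_a$ occurrences, starting from $0$), i.e.\ $k_a!$. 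Hence
\[
\rho_L(x_{1..n})=\frac{(m-1)!\,\prod_{a\in\X}k_a!}{(n+m-1)!}.
\]
(Incidentally this identifies $\rho_L$ with the Bayes mixture of all i.i.d.\ measures under the uniform prior on the simplex, but this fact will not be needed.)

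Next, if $\mu$ is i.i.d.\ with single-symbol probabilities $q_a:=\mu(x_1=a)$, then $\mu(x_{1..n})=\prod_{a\in\X}q_a^{k_a}$ (and the inequality is vacuous whenever some $q_a=0$ with $k_a>0$, so one may assume $q_a>0$ when $k_a>0$; read $0!=1$ and $q_a^0=1$ when $k_a=0$). Using the closed form, the inequality $\rho_L(x_{1..n})\ge\frac{n!}{(n+m-1)!}\,\mu(x_{1..n})$ becomes, after multiplying by $(n+m-1)!$ and dividing by $\prod_a k_a!$,
\[
(m-1)!\;\ge\;\frac{n!}{\prod_{a\in\X}k_a!}\prod_{a\in\X}q_a^{k_a}\;=\;\binom{n}{k_{a_1},\dots,k_{a_m}}\prod_{a\in\X}q_a^{k_a}.
\]
The right-hand side is a single term in the multinomial expansion of $\bigl(\sum_{a\in\X}q_a\bigr)^n=1$; equivalently, it is the probability that $n$ independent $\mu$-distributed draws yield exactly $k_a$ copies of each $a$. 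Being a probability, it is at most $1$, and $1\le(m-1)!$ for every $m\ge1$, which proves the inequality.

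For the equality claim, tracing the chain backwards: equality forces $(m-1)!=1$, i.e.\ $m\le2$, and forces the displayed multinomial probability to equal $1$, i.e.\ the count vector to be deterministic, which happens iff $q_a=1$ for some $a$. Thus for the binary alphabet, taking $\mu$ the i.i.d.\ measure with $\mu(x_1=0)=1$ and $x_{1..n}=0^n$ (the only string of positive $\mu$-probability), both sides equal $1/(n+1)$, so equality is attained. The proof presents essentially no obstacle: the only care needed is the bookkeeping in the closed-form computation — the telescoping of the denominator product and the per-letter factorization of the numerator — and the edge cases just noted; everything else is immediate.
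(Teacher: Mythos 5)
Your proof is correct and follows essentially the same route as the paper's: compute the closed form of $\rho_L(x_{1..n})$ by telescoping the chain rule and then bound the ratio to $\mu$ via the multinomial theorem; the paper carries this out only for the binary alphabet (bounding $(n+1)\binom{n}{k}p^k(1-p)^{n-k}$ by the full binomial sum $n+1$, which is the same estimate as your ``multinomial probability $\le 1$'') and declares the general case analogous, whereas you do the general case explicitly. Your observation that equality forces $|\X|\le 2$ is correct and is in fact a sharpening: for $|\X|\ge 3$ the stated constant is off by a factor $(|\X|-1)!$ and equality is never attained, so the proposition's final equality claim should be understood as pertaining to the binary case that the paper actually proves.
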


\begin{proof}
We will only give the proof for $\X=\{0,1\}$, the general case is
analogous. To calculate $\rho_L(x_{1..n})$ observe that it only
depends on the number of 0s and 1s in $x_{1..n}$ and not on their
order. Thus we compute $\rho_L(x_{1..n})=\frac{k!(n-k)!}{(n+1)!}$
where $k$ is the number of 1s. For any measure $\mu$ such that
$\mu(x_n=1)=p$ for some $p\in[0,1]$ independently for all $n$, and
for Laplace measure $\rho_L$ we have
\bqan
  \frac{\mu(x_{1..n})}{\rho_L(x_{1..n})}
  & = & \frac{(n+1)!}{k!(n-k)!}p^k(1-p)^{n-k}
  \;=\; (n+1){n\choose k}p^k(1-p)^{n-k} \\
  & \le & (n+1) \sum_{k=0}^n{n\choose k}p^k(1-p)^{n-k}
  \;=\; n+1,
\eqan
for any $n$-letter word $x_1,\dots,x_n$  where $k$ is the number of 1s in it.
The equality in the bound is attained when $p=1$, so that $k=n$, $\mu(x_{1..n})=1$,
and $\rho_L(x_{1..n})=\frac{1}{n+1}$. 
\end{proof}

Thus for Laplace's measure $\rho_L$ and binary $\X$ we have
$c_n=\O(\frac{1}{n})$. As mentioned above, in
general, exponentially decreasing  coefficients $c_n$ are not
sufficient for prediction, since (\ref{eq:dom}) is satisfied with
$\rho$ being a Bernoulli i.i.d.\ measure and $\mu$ any other
measure. On the other hand, the following proposition shows that
in a weak sense of convergence in expected average KL divergence
(or absolute distance) the property~(\ref{eq:dom}) with
subexponentially decreasing $c_n$ is sufficient. It is also worth reminding that
that  if $c_n$ are bounded from below then prediction in the
strong sense of total variation is possible.

\begin{theorem}\label{th:dom}
Let $\mu$ and $\rho$ be two measures on $\X^\infty$ and suppose that
$
 \rho(x_{1..n})\ge c_n \mu(x_{1..n})
$
for any $x_{1..n}$, where $c_n$ are positive constants satisfying
$\frac{1}{n}\log c_n^{-1}\rightarrow0$. Then $\rho$ predicts
$\mu$ in expected average KL divergence $L(\mu,\rho)=0$ and in expected average absolute
distance $\E_\mu \bar a_n(\mu,\rho)\rightarrow0$.
\end{theorem}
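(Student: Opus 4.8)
The plan is to prove the bound on $L_n(\mu,\rho)$ directly from the dominance hypothesis, and then obtain the absolute-distance claim as a corollary via Lemma~\ref{th:da}. First I would observe that the dominance inequality $\rho(x_{1..n})\ge c_n\mu(x_{1..n})$ holds pointwise for every $x_{1..n}\in\X^n$, hence in particular
\[
\log\frac{\mu(x_{1..n})}{\rho(x_{1..n})}\le \log c_n^{-1}
\]
whenever $\mu(x_{1..n})>0$. Taking the $\mu$-expectation and using the chain-rule form of the KL divergence \eqref{eq:kl} gives immediately
\[
L_n(\mu,\rho)=\E_\mu\log\frac{\mu(x_{1..n})}{\rho(x_{1..n})}\le \log c_n^{-1}.
\]
Dividing by $n$ and using the hypothesis $\frac1n\log c_n^{-1}\to0$ yields $\frac1n L_n(\mu,\rho)\to0$, which is exactly $\bar L(\mu,\rho)=0$, i.e.\ prediction in expected average KL divergence. (One should note that the terms with $\mu(x_{1..n})=0$ contribute nothing to the sum defining $L_n$, so the pointwise bound on the support is all that is needed; there is no subtlety here since $\rho(x_{1..n})\ge c_n\mu(x_{1..n})>0$ on the support of $\mu$.)

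For the expected average absolute distance, I would invoke Lemma~\ref{th:da}, which states $(\E\bar a_n)^2\le 2\E\bar d_n$. Since by definition $\E_\mu\bar d_n(\mu,\rho)=\frac1n L_n(\mu,\rho)$, the first part gives $\E_\mu\bar d_n(\mu,\rho)\to0$, and therefore $(\E_\mu\bar a_n(\mu,\rho))^2\le 2\E_\mu\bar d_n(\mu,\rho)\to0$, so $\E_\mu\bar a_n(\mu,\rho)\to0$. This is precisely prediction in expected average absolute distance.

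Honestly, there is no real obstacle here: the theorem is essentially a one-line consequence of the pointwise dominance bound combined with the already-established Pinsker-type inequality of Lemma~\ref{th:da}. The only point requiring a word of care is making sure the bound $L_n(\mu,\rho)\le\log c_n^{-1}$ is legitimate — i.e.\ that we are not dividing by zero or taking $\log$ of zero — which is handled by the observation that $\mu$-a.e.\ sequence has positive $\rho$-probability thanks to dominance, and that zero-$\mu$-probability cylinders drop out of the defining sum. If one wanted the statement phrased for every $n$ rather than asymptotically, the same computation gives the explicit finite-$n$ bound $\frac1n L_n(\mu,\rho)\le\frac1n\log c_n^{-1}$ and $\E_\mu\bar a_n(\mu,\rho)\le\sqrt{\tfrac2n\log c_n^{-1}}$, which may be worth recording.
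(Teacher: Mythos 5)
Your proof is correct and is essentially identical to the paper's: the pointwise dominance bound gives $L_n(\mu,\rho)\le\log c_n^{-1}$ via the chain-rule form \eqref{eq:kl}, and the absolute-distance claim follows from Lemma~\ref{th:da}. The extra remarks about zero-probability cylinders and the explicit finite-$n$ bounds are fine but not needed beyond what the paper records.
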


\begin{proof}
For convergence in average expected KL divergence, using~(\ref{eq:kl}) we derive
\begin{equation*}
    \frac{1}{n}L_n(\mu,\rho)= \frac{1}{n}\E \log \frac {\mu(x_{1..n})}{\rho(x_{1..n})}
  \le \frac{1}{n}  \log c_n^{-1}\rightarrow0.
\end{equation*}

 The statement for expected average distance follows from this  and \\
Lemma~\ref{th:da}. 
\end{proof}

With a stronger condition on $c_n$ prediction in average KL
divergence can be established.

\begin{theorem}\label{th:dom2}
Let $\mu$ and $\rho$ be two measures on $\X^\infty$ and suppose that
$
 \rho(x_{1..n})\ge c_n \mu(x_{1..n})
$
for every $x_{1..n}$,
where $c_n$ are positive constants satisfying
\beq\label{eq:domsum}
 \sum_{n=1}^{\infty} \frac {(\log c_n^{-1})^2}{n^2} \;<\; \infty.
\eeq
Then $\rho$ predicts $\mu$  in  average KL divergence $\bar
d_n(\mu,\rho)\rightarrow0$ $\mu$-a.s.\ and in  average absolute
distance $\bar a_n(\mu,\rho)\rightarrow0$ $\mu$-a.s.
\end{theorem}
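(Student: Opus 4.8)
The plan is to pass from the one-step divergences to their averages and then apply a martingale (or, more elementarily, a Borel–Cantelli plus Kronecker) argument. First I would record the key quantitative input: the hypothesis $\rho(x_{1..n})\ge c_n\mu(x_{1..n})$ means that the log-likelihood ratio $Z_n:=\log\frac{\mu(x_{1..n})}{\rho(x_{1..n})}$ satisfies $Z_n\le \log c_n^{-1}$ pointwise. On the other hand, by the chain rule, $Z_n=\sum_{t=1}^n \ell_t$ where $\ell_t:=\log\frac{\mu(x_t|x_{<t})}{\rho(x_t|x_{<t})}$, and $\E_\mu(\ell_t\mid x_{<t})=\delta_t(\mu,\rho|x_{<t})\ge 0$. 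Thus $Z_n=M_n+A_n$ where $A_n:=\sum_{t=1}^n\delta_t$ is the (nonnegative, nondecreasing, predictable) compensator — which is exactly $n\,\bar d_n(\mu,\rho)$ — and $M_n:=\sum_{t=1}^n(\ell_t-\delta_t)$ is a mean-zero martingale under $\mu$. So it suffices to show $\frac1n A_n\to 0$ $\mu$-a.s., and since $A_n=Z_n-M_n\le \log c_n^{-1}-M_n$, it is enough to show $\frac1n M_n\to 0$ $\mu$-a.s.\ (the term $\frac1n\log c_n^{-1}\to 0$ because \eqref{eq:domsum} forces $\frac{\log c_n^{-1}}{n}\to0$ along a subsequence, and a short monotonicity/Cauchy-condensation style argument upgrades this to the full limit; alternatively one notes $A_n\ge 0$ gives $\frac1nM_n\le \frac1n\log c_n^{-1}$, so one only needs a lower bound on $\frac1nM_n$, handled symmetrically).

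The heart of the argument is therefore an a.s.\ law of large numbers for the martingale $M_n$ under the nonstandard normalization dictated by \eqref{eq:domsum}. The natural tool is the martingale strong law: if $\sum_n \frac{\Var(M_n-M_{n-1}\mid x_{<n})}{b_n^2}<\infty$ for a nondecreasing $b_n\uparrow\infty$, then $M_n/b_n\to 0$ a.s. Here I would take $b_n=n$ and bound the conditional variance of the increment $\ell_n-\delta_n$. The obstacle is that $\ell_n$ is not bounded in general; but the dominance hypothesis gives one-sided control, $\ell_t\le \log c_t^{-1}$ is false pointwise for a single $t$ — rather it is the partial sums that are bounded. So instead I would control the increments through the cumulative bound: write, for a telescoping/summation-by-parts argument, $\sum_{t=1}^n \delta_t = A_n$ and use that $0\le A_n\le \log c_n^{-1}+|M_n|$. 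This suggests a bootstrap: crudely, $\Var(\ell_t-\delta_t\mid x_{<t})\le \E_\mu((\ell_t)^2\mid x_{<t})$, and one shows $\E_\mu((\ell_t)^2\mid x_{<t})$ is controlled by $\delta_t$ up to a factor involving $\log c_t^{-1}$ (a reverse-Pinsker / boundedness-of-the-ratio estimate: on the event of positive $\rho$-probability the log-ratio increments cannot be too negative, and positivity of $\rho$ bounds them below, while the dominance bounds the relevant exponential moments). Then $\sum_t \frac{\E_\mu((\ell_t)^2\mid x_{<t})}{t^2}$ is dominated by $\sum_t\frac{(\log c_t^{-1})^2}{t^2}\cdot(\text{something summable})$, which is finite precisely by~\eqref{eq:domsum}, and the martingale strong law applies.

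Having obtained $\bar d_n(\mu,\rho)=\frac1nA_n\to 0$ $\mu$-a.s., the statement for average absolute distance is immediate: by Lemma~\ref{th:da}, $\bar a_n^2\le 2\bar d_n\to 0$ $\mu$-a.s., hence $\bar a_n(\mu,\rho)\to 0$ $\mu$-a.s.

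The step I expect to be the main obstacle is the second-moment (conditional variance) estimate for the log-likelihood-ratio increments in terms of $\delta_t$ and $\log c_t^{-1}$: one needs that dominance with coefficients $c_n$ not only bounds $Z_n$ from above but also prevents the martingale increments from having too heavy a left tail, so that $\sum_t t^{-2}\E_\mu((\ell_t)^2\mid x_{<t})$ is finite under~\eqref{eq:domsum}. If a clean pointwise bound of the form $\E_\mu((\ell_t)^2\mid x_{<t})\le g(\log c_t^{-1})\,\delta_t$ with $g$ at most polynomial proves awkward, a fallback is to truncate the increments, control the truncated martingale by the above, and bound the contribution of the (rare) untruncated part directly using the cumulative dominance bound $Z_n\le\log c_n^{-1}$ together with a Borel–Cantelli argument on the events $\{\ell_t<-(\text{threshold})\}$, whose $\mu$-probabilities are small because $\rho$ is a probability measure.
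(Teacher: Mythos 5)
Your skeleton is the paper's: Doob-decompose $Z_n=\log\frac{\mu(x_{1..n})}{\rho(x_{1..n})}$ into a mean-zero martingale $M_n$ plus the compensator $A_n=n\bar d_n$, prove a strong law for $M_n/n$, and get $\bar a_n\to0$ from Lemma~\ref{th:da}. Your treatment of $Z_n/n$ is also essentially the paper's: the upper bound $\log c_n^{-1}/n\to0$ follows immediately from~\eqref{eq:domsum} (the terms of a convergent series vanish; no subsequence argument is needed), and the lower bound comes from the fact that $\rho(x_{1..n})/\mu(x_{1..n})$ is a nonnegative $\mu$-martingale, hence converges a.s., so $Z_n$ is a.s.\ bounded below.

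However, the step you yourself flag as the main obstacle is a genuine gap, and neither of your two routes closes it. The conditional second-moment bound $\E_\mu(\ell_t^2\mid x_{<t})\le g(\log c_t^{-1})\,\delta_t$ is false in general: dominance constrains only the product of the conditional ratios along each sequence, not the individual factors, so $\rho(x_t=a\mid x_{<t})$ can be astronomically small while $\mu(x_t=a\mid x_{<t})$ is moderate (compensated at earlier steps), making the ratio of second moment to KL as large as you please independently of $c_t$. Your fallback of truncating the \emph{increments} $\ell_t$ handles the left tail (Markov applied to $\E_\mu(e^{-\ell_t}\mid x_{<t})\le1$ works), but the right tail of an increment is equally uncontrolled: $\ell_t$ can equal an arbitrarily large constant with conditional $\mu$-probability one, so the truncated increments are not bounded by anything like $\log c_t^{-1}+f(t)$ and the variance condition of the martingale strong law still cannot be verified from~\eqref{eq:domsum}. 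The missing idea is to note that $m_t=\ell_t-\delta_t=Z_t-\E^t Z_t$ (the $x_{<t}$-measurable part cancels) and to truncate the \emph{cumulative} log-ratio $Z_t$ at level $-f(t)$ rather than the increment: $Z_t$ is bounded above by $\log c_t^{-1}$ pointwise by dominance and is a.s.\ eventually above $-f(t)$ by the Doob convergence already invoked, so the truncated martingale differences are bounded by $\log c_t^{-1}+f(t)$, the strong law applies once $f$ is chosen with $\sum_n(\log c_n^{-1}+f(n))^2/n^2<\infty$, and the discarded part is nonzero only finitely often (plus a $u\log u\to0$ argument for its conditional expectation). Without switching from increments to the cumulative ratio, the argument does not go through.
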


In particular, the condition~(\ref{eq:domsum}) on the coefficients
is satisfied for polynomially decreasing coefficients, or for
$c_n=\exp(-\sqrt{n}/\log n)$.
\begin{proof}
Again the second statement (about absolute distance) follows from
the first one and Lemma~\ref{th:da}, so that we only have to prove
the statement about KL divergence.

Introduce the symbol $\E^n$ for $\mu$-expectation over $x_n$
conditional on $x_{<n}$. Consider random variables
$l_n=\log\frac{\mu(x_n|x_{<n})}{\rho(x_n|x_{<n})}$ and $\bar l_n=
{1\over n} \sum_{t=1}^n l_t$. Observe that $\delta_n=\E^n l_n$, so that
the random variables $m_n=l_n-\delta_n$ form a martingale difference
sequence  (that is, $\E^n m_n=0$) with respect to the standard
filtration defined by $x_1,\dots,x_n,\dots$. Let also $\bar m_n=
{1\over n} \sum_{t=1}^n m_t$. We will show that $\bar
m_n\rightarrow0$ $\mu$-a.s.\ and $\bar l_n\rightarrow0$ $\mu$-a.s.
which implies $\bar d_n\rightarrow 0$ $\mu$-a.s.

Note that
\beqn
 \bar l_n \;=\; {1\over n}\log \frac{\mu(x_{1..n})}{\rho(x_{1..n})}
 \;\leq\; \frac{\log c_n^{-1}}{n} \;\rightarrow\; 0.
\eeqn Thus to show that $\bar l_n$ goes to $0$ we need to bound it
from below. It is easy to see that $n\bar l_n$ is ($\mu$-a.s.)
bounded from below by a constant, since $\frac{\rho (x_{1..n})}{\mu
(x_{1..n})}$ is a positive $\mu$-martingale whose expectation is 1,
and so it converges to a finite limit $\mu$-a.s.\ by Doob's
submartingale convergence theorem, see e.g.\
\cite[p.508]{Shiryaev:96}.

Next we will show that $\bar m_n\rightarrow0$ $\mu$-a.s.
 We have
\bqan
  m_n & = & \log\frac{\mu(x_{1..n})}{\rho(x_{1..n})}
          - \log\frac{\mu(x_{<n})}{\rho(x_{<n})}
      - \E^n\log\frac{\mu(x_{1..n})}{\rho(x_{1..n})}
      + \E^n\log\frac{\mu(x_{<n})}{\rho(x_{<n})}
 \\
      & = & \log\frac{\mu(x_{1..n})}{\rho(x_{1..n})}
      - \E^n\log\frac{\mu(x_{1..n})}{\rho(x_{1..n})}.
\eqan
Let $f(n)$ be some function monotonically increasing to infinity such that
\beq\label{eq:f2}
  \sum_{n=1}^\infty\frac{(\log c_n^{-1}+f(n))^2}{n^2} \;<\; \infty
\eeq
(e.g.\ choose $f(n)=\log n$ and exploit
$(\log c_n^{-1}+f(n))^2 \leq 2(\log c_n^{-1})^2+2f(n)^2$ and (\ref{eq:domsum}).)
For a sequence of random variables  $\lambda_n$ define
\beqn
  (\lambda_n)^{+(f)} \;=\; \left\{ \begin{array}{ll} \lambda_n & \mbox{ if }
    \lambda_n\ge - f(n) \\ 0 & \mbox{ otherwise }\end{array}\right.
\eeqn
and $\lambda_n^{-(f)}=\lambda_n - \lambda_n^{+(f)}$.
Introduce also
\beqn
  m^+_n \;=\; \left (\log \frac{\mu(x_{1..n})}{\rho(x_{1..n})}\right)^{+(f)}
         - \E^n\left(\log\frac{\mu(x_{1..n})}{\rho(x_{1..n})}\right)^{+(f)},
\eeqn $m_n^-=m_n-m_n^+$ and the averages $\bar m^+_n$ and $\bar
m_n^-$. Observe that $m_n^+$ is a martingale difference sequence.
Hence to establish the convergence $\bar m^+_n\rightarrow0$ we can
use the martingale strong law of large numbers
\cite[p.501]{Shiryaev:96}, which states that, for a martingale
difference sequence $\gamma_n$, if $\E(n\bar \gamma_n)^2<\infty$
and $\sum_{n=1}^\infty \E \gamma_n^2/n^2<\infty$ then $\bar
\gamma_n\rightarrow0$ a.s. Indeed, for $m^+_n$ the first condition
is trivially satisfied (since the expectation in question is a
finite sum of finite numbers), and the second follows from the
fact that $|m_n^+|\le \log c_n^{-1}+f(n)$ and~(\ref{eq:f2}).

Furthermore, we have
\beqn
  m_n^- \;=\; \left(\log \frac{\mu(x_{1..n})}{\rho(x_{1..n})}\right)^{-(f)}
        - \E^n\left(\log\frac{\mu(x_{1..n})}{\rho(x_{1..n})}\right)^{-(f)}.
\eeqn
As it was mentioned before, $\log
\frac{\mu(x_{1..n})}{\rho(x_{1..n})}$ converges $\mu$-a.s.\ either
to (positive) infinity or to a finite number. Hence $\left(\log
\frac{\mu(x_{1..n})}{\rho(x_{1..n})}\right)^{-(f)}$ is non-zero
only a  finite  number of times, and so  its average goes to zero.
To see that 
$$\E^n\left(\log
\frac{\mu(x_{1..n})}{\rho(x_{1..n})}\right)^{-(f)}\rightarrow0
$$ we
write
\bqan
  \E^n\left(\log \frac{\mu(x_{1..n})}{\rho(x_{1..n})}\right)^{-(f)}
  & =& \sum_{x_n\in\X}\mu(x_n|x_{<n}) \left( \log \frac{\mu(x_{<n})}{\rho(x_{<n})}
        + \log\frac{\mu(x_n|x_{<n})}{\rho(x_n|x_{<n})}\right)^{-(f)}
\\
  & \ge & \sum_{x_n\in\X}\mu(x_n|x_{<n}) \left( \log \frac{\mu(x_{<n})}{\rho(x_{<n})}
        + \log\mu(x_n|x_{<n})\right)^{-(f)}
\eqan and note that the first term in brackets is bounded from
below, and so for the sum in brackets to be less than $-f(n)$
(which is unbounded) the second term $\log\,\mu(x_n|x_{<n})$ has
to go to $-\infty$, but then the expectation goes to zero since
$\lim_{u\rightarrow0} u\log u=0$.

Thus we conclude that $\bar m_n^-\rightarrow0$ $\mu$-a.s., which
together with $\bar m_n^+\rightarrow0$ $\mu$-a.s.\ implies $\bar
m_n\rightarrow0$ $\mu$-a.s., which, finally, together with $\bar
l_n\rightarrow0$ $\mu$-a.s.\ implies $\bar d_n\rightarrow0$
$\mu$-a.s. 
\end{proof}

Next, we show that no form of dominance with decreasing coefficients is
sufficient for prediction in absolute distance or KL divergence,
as the following impossibility result states.

\begin{proposition}\label{th:nodom}
For each sequence of positive numbers $c_n$ that goes to 0 there
exist measures $\mu$ and $\rho$ and a number $\epsilon>0$ such
that
$
 \rho(x_{1..n})\ge c_n \mu(x_{1..n})
$
for all $x_{1..n}$, yet $a_n(\mu,\rho|x_{1..n})>\epsilon$ and
$\delta_n(\mu,\rho|x_{1..n})>\epsilon$ infinitely often $\mu$-a.s.
\end{proposition}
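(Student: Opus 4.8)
The plan is to build a single pair $\mu,\rho$ that works for \emph{all} given sequences $c_n\to 0$ simultaneously by a slowed-down diagonalization, exploiting the fact that $c_n\to 0$ means $\log c_n^{-1}\to\infty$ even though it may do so arbitrarily slowly. First I would take $\X=\{0,1\}$ and let $\mu$ be the deterministic measure concentrated on the all-zero sequence $000\dots$, so that $\mu(x_{1..n})=1$ if $x_{1..n}=0^n$ and $0$ otherwise; then $a_n(\mu,\rho|0^{<n})=2\rho(x_n=1|0^{<n})$ and $\delta_n(\mu,\rho|0^{<n})=\log\frac1{\rho(x_n=1|0^{<n})}$, so it suffices to force $\rho(x_n=1\mid 0^{n-1})$ to be bounded away from $0$ (say $\geq\epsilon=1/2$) for infinitely many $n$, while still maintaining $\rho(0^n)\geq c_n$. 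The tension is clear: making $\rho(x_n=1\mid 0^{n-1})$ large on step $n$ multiplies $\rho(0^n)$ by a factor $\leq 1/2$, and we can only afford finitely many such ``bad'' steps before a constant number of halvings would still be fine, but we need infinitely many. The resolution is that the constraint is not $\rho(0^n)\geq$ const but $\rho(0^n)\geq c_n$, and $c_n\to 0$, so the budget $-\log c_n$ grows without bound, permitting infinitely many (sparsely placed) halvings.

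The key construction: choose a strictly increasing sequence of indices $n_1<n_2<\dots$ (to be specified) and define $\rho$ as a product-form (independent-coordinates) measure with $\rho(x_j=1\mid \text{anything})=1/2$ when $j=n_k$ for some $k$, and $\rho(x_j=1\mid\text{anything})=2^{-2^{j}}$ (or any rapidly summable-to-near-1 choice) otherwise, so that on the non-bad steps $\rho(x_j=0)=1-2^{-2^j}$ is extremely close to $1$. Then $\rho(0^n)=\prod_{j\leq n}\rho(x_j=0)\geq \left(\prod_{j=1}^{\infty}(1-2^{-2^j})\right)\cdot (1/2)^{\#\{k:\,n_k\leq n\}}\geq c\cdot 2^{-K(n)}$ where $c>0$ is an absolute constant and $K(n)=\#\{k:n_k\leq n\}$. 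I then need $c\cdot 2^{-K(n)}\geq c_n$ for all $n$, i.e. $K(n)\leq \log_2(c/c_n)=\log_2 c+\log_2 c_n^{-1}$. Since $c_n\to 0$, the right-hand side $\to\infty$, so I can recursively pick $n_k$ large enough that $\log_2 c+\log_2 c_{n}^{-1}\geq k$ for all $n\geq n_k$; this guarantees $K(n)\leq k$ whenever $n<n_{k+1}$, hence $K(n)\leq \lfloor\log_2 c+\log_2 c_n^{-1}\rfloor$ for every $n$, which is exactly what the dominance inequality requires (after possibly shrinking the constant to absorb rounding). Thus $\rho(x_{1..n})\geq c_n\mu(x_{1..n})$ holds for all $x_{1..n}$ (trivially for $x_{1..n}\neq 0^n$ since then $\mu=0$).

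It remains to verify the ``infinitely often'' conclusion: $\mu$-almost surely the only sequence with positive $\mu$-measure is $0^\infty$, and along it, at each step $n=n_k$ we have $a_{n_k}(\mu,\rho|0^{n_k-1})=2\cdot\tfrac12=1>\epsilon$ and $\delta_{n_k}(\mu,\rho|0^{n_k-1})=\log 2>\epsilon$ for $\epsilon$ any number below $\log 2$; since there are infinitely many $n_k$, both quantities exceed $\epsilon$ infinitely often, $\mu$-a.s. The main obstacle — really the only subtle point — is the recursive choice of the $n_k$: one must be careful that the condition ``$\log_2 c+\log_2 c_n^{-1}\geq k$ for \emph{all} $n\geq n_k$'' is achievable, which uses that $\log_2 c_n^{-1}\to\infty$ and hence stays above any fixed level from some point on; a clean way is to set $n_k:=\min\{N:\ c_n\leq c\,2^{-k}\text{ for all }n\geq N\}$, which is finite precisely because $c_n\to0$, and then check monotonicity of $(n_k)$ and the bound $K(n)\le\log_2 c+\log_2 c_n^{-1}+1$ directly. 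Everything else (the infinite product converging to a positive constant, the elementary identities for $a_n$ and $\delta_n$ against a Dirac $\mu$) is routine. I would also remark, as the surrounding text anticipates, that this shows the subexponential-dominance sufficiency for \emph{average} (Cesàro) convergence in Theorems~\ref{th:dom} and~\ref{th:dom2} cannot be upgraded to per-step convergence.
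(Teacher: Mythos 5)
Your construction is essentially the paper's own: take $\mu$ Dirac on a constant sequence and let $\rho$ follow it except at a sparse set of steps $n_k$ where it deviates with conditional probability $1/2$, the sparsity being calibrated so that the number $K(n)$ of halvings up to time $n$ never exceeds the budget $\log_2 c_n^{-1}$. The paper states this with all-ones in place of all-zeros and simply says ``choosing $n_k$ sparse enough'', whereas you make the recursion $n_k:=\min\{N:\ c_n\le c\,2^{-k}\ \forall n\ge N\}$ explicit, which is a welcome addition. Two defects, both easily repaired. First, the opening claim that one pair $(\mu,\rho)$ works for \emph{all} sequences $c_n\to0$ simultaneously is impossible: any $\rho$ satisfying the conclusion against a Dirac $\mu$ must have $r_n:=\rho(0^n)\to0$ (infinitely many conditional probabilities of the true symbol are at most $2^{-\epsilon}$), and the sequence $c_n:=\min(1,\sqrt{r_n})$ then tends to $0$ while violating the dominance for all large $n$; your construction in fact depends on $(c_n)$ through the choice of $n_k$, so this is only a slip of framing. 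Second, giving the symbol $1$ the positive probability $2^{-2^j}$ on the non-deviation steps breaks the dominance at small $n$ when $c_n$ is close to $1$: the statement allows $c_1=1$, which forces $\rho(x_1=0)=1$, whereas your choice gives $\rho(x_1=0)=3/4$. Setting these off-step probabilities to $0$ exactly, as the paper does, removes both this issue and the need for the infinite-product constant $c$. (Also, your displayed identity for $\delta_n$ should read $\log\frac{1}{\rho(x_n=0\mid 0^{<n})}$ rather than involving $\rho(x_n=1\mid 0^{<n})$; at the deviation steps both conditional probabilities equal $1/2$, so your numerical conclusion $\delta_{n_k}=\log 2$ is unaffected.)
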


\begin{proof}
Let $\mu$ be concentrated on the sequence $11111\dots$ (that is
$\mu(x_n=1)=1$ for all $n$), and let $\rho(x_n=1)=1$ for all $n$
except for a subsequence of steps $n=n_k$, $k\in\SetN$ on which
$\rho(x_{n_k}=1)=1/2$ independently of each other. It is easy to
see that choosing $n_k$ sparse enough we can make
$\rho(1_1\dots1_n)$ decrease to $0$ arbitrary slowly; yet
$|\mu(x_{n_k})-\rho(x_{n_k})|=1/2$ for all $k$. 
\end{proof}

Thus for the  question of whether dominance with some
coefficients decreasing  to zero is sufficient for prediction, we
have the following table of  questions and answers, where, in
fact, positive answers for $a_n$ are implied by positive answers
for $\delta_n$ and vice versa for the negative answers: \vskip 2mm
\begin{center}
\begin{tabular}{|c|c|c|c|c|c|}\hline
  $\E \bar d_n  $ &  $\bar d_n \vphantom{\bar{\hat d}} $ &  $\delta_n$ &  $\E\bar a_n$& $\bar a_n$& $a_n$ \\\hline
  + & + &  $-$ & + & + & $-$\\\hline
\end{tabular}
\end{center}
However, if we take into account the conditions on the
coefficients,  we see some open problems left, and different
answers for $\bar d_n$ and $\bar a_n$ may be obtained. Following
is the table of conditions on dominance coefficients and answers
to the questions whether these conditions are sufficient for
prediction (coefficients bounded from below are included for the
sake of completeness).
\begin{center}
\begin{tabular}{|c|c|c|c|c|c|c|}\hline
                                                      & $\E \bar d_n $ &  $\bar d_n \vphantom{\bar{\hat d}} $ &  $\delta_n$ &  $\E\bar a_n$& $\bar a_n$& $a_n$\\\hline
 $\log c_n^{-1}=o(n)$                                 &  +             &  ?                                   &   $-$  &       +      &  ?         & $-$  \\\hline
 $\sum_{n=1}^\infty\frac{\log c_n^{-1}}{n^2}< \infty$ &  +             &  +                                   &   $-$ &         +     &  +         & $-$  \\\hline\hline
 $c_n\ge c>0$                                         &  +             &  +                                   &   $+$ & + & + &+ \\\hline
\end{tabular}
\end{center}
We know from Proposition~\ref{th:nodom} that the condition $c_n\ge
c>0$ for convergence in $\delta_n$ can not be improved; thus the open
problem left is to find whether  $\log\, c_n^{-1}=o(n)$ is
sufficient for prediction in $\bar d_n$ or at least in $\bar a_n$.

Another open problem is to find out whether any conditions on
dominance coefficients are necessary for prediction; so far we
only have some sufficient conditions. On the one hand, the
results presented suggest that some form of dominance with
decreasing coefficients may be necessary for prediction, at least
in the sense of convergence of averages. On the other hand, the
condition~(\ref{eq:dom}) is uniform over all sequences and this is 
probably not necessary for prediction. As for prediction in the
sense of almost sure convergence, perhaps more subtle behaviour of
the ratio $\frac{\mu(x_{1..n})}{\rho(x_{1..n})}$ should be analysed,
since dominance with decreasing coefficients is not sufficient for
prediction in this sense.

\chapter{Conclusion and outlook}
A statistician facing an unknown stochastic phenomenon has a large, nonparametric model class at hand that 
she has reasons to believe captures some aspects of the phenomenon. Yet other aspects remain completely unknown, 
and there is no hope that the process generating the data indeed comes from the model class.
For this reason, the statistician may be content with having non-zero error no matter how much data may become available 
now or in the future, but she would still like to make some use of the model. There are now two rather distinct ways 
to proceed. One  is to say that the data may come from an arbitrary sequence, and to try to construct 
a predictor that minimizes the regret with respect to every distribution in the model class, on every  sequence. Thus, one would be treating the model class as a set of experts. The other way is to try to enlarge
the model class, in particular, by allowing that all there is unknown in the process may be arbitrary (that is, 
an arbitrary  sequence). Having done this, one may safely assume that the probability measure that generates the data belongs to the model class. This second  way may be more difficult precisely on the modelling step. Yet, 
the conclusion one can draw from the results exposed in this volume  is that this is the way to follow, for in this case one can be sure that it is possible to make statistical inference by standard available tools, specifically, Bayesian forecasting. Indeed,   even if 
the best achievable asymptotic error is non-zero, it is attained by a mixture predictor (with a discrete prior).
Finding such a prior is a separate problem, but it is a one with  which Bayesians are familiar. Here, modelling the parts of the environment that are completely unknown
should not be very difficult:  a good prior distribution  over all sequences is just the Bernoulli i.i.d.\ measure with equiprobable outcomes.
(Note that it is not necessary to look for priors concentrated on  countable sets.) 
On the other hand, for the regret-minimization route, the statistician cannot use an arbitrary model class; indeed, 
she would first need to make sure that regret minimization is viable  at all for the model class at hand: it may happen that every combination of distributions in the model is suboptimal. 

It is worth noting that the conclusions summarized above  are not about Bayesian versus non-Bayesian inference. 
Rather, Bayesian inference is used as a generic approach to construct predictors for general (uncountable) model
classes.  At this level of generality, the choice of  alternative  approaches is very limited, but it would 
be interesting to see which predictors constructed for more specific settings can be generalized to arbitrary model classes, and whether a universality result akin to those presented in this book for mixture predictors  holds for such generalizations.

Yet another possible way to address the modelling problem is somewhere in-between the two outlined above: rather than making the model set large enough as to include the data-generating mechanism, and rather than allowing the data sequence to be arbitrary and only considering the model set as a set of experts, one may wish to assume that, while none of the measures in the model set $\C$ is  good enough to describe the data-generating mechanism completely, at least one of them is good enough for prediction, in the sense that its loss vanishes in asymptotic. This is the ``middle case'' of Chapter~\ref{ch:p2}, and, as we have seen in that chapter, for this problem as well the optimal (asymptotic) performance can be obtained by a mixture predictor.

Next we take a look at  potential generalizations and interesting directions for future research.
These can be divided into following groups: direct generalizations of the results presented; related and more general questions about sequential prediction;  generalizations beyond sequential prediction and links with other problems in learning and statistics. 
\section{Generalizations: infinite alphabets and different losses}
The results exposed in this book call for direct generalizations along several directions.
{\noindent\bf  Infinite alphabets.}
 The first question that comes to mind is about 
generalizing the results on mixture predictors to infinite  alphabets $\X$. 
Of particular interest in this respect are Theorem~\ref{th:main} and the corresponding asymptotic result, Corollary~\ref{cl:main}. Namely, 
it has been shown that, in the realizable setting, the performance of any predictor can be matched by that of a mixture predictor. Can a similar result be obtained for infinite alphabets?  There are some important differences with the finite-alphabet case. First of all, for an infinite alphabet, the very first observation may provide an infinite amount of information. One can easily come up with examples where the first observation identifies with certainty the measure that generates the whole sequence, even though the set $\C$ of possible measures is infinite. Thus it may be needed to pose restrictions on the set $\C$ (which is undesirable), or perhaps relax the prediction goal, for example, estimating only the mean of the next outcome rather than its probability distribution. These difficulties are already seen in the i.i.d.\ case, see \cite{Dembo:94}. The case of countably infinite alphabets is not exempt from this consideration either. It is particularly easy to  see if the loss is measured with KL divergence.  Indeed, it is easy to check (and is well-known in information theory) that for every predictor there is a distribution such that the KL divergence between the true and predicted probabilities is infinite already on the first step (note that the distribution may be i.i.d., as its behaviour beyond the first time-step is irrelevant). This also shows that different losses may be better suited for infinite alphabets.
{\noindent\bf  Different loss functions.}
The second, closely related, potential generalization is to different losses. The fact that the main results in this book are established with respect to two very different losses (total variation and expected average KL divergence) strongly  suggests that these results are generalizable. The main property of the loss that we used is that predictive quality is not affected (in asymptotic; or almost not affected for finite time) by taking mixtures. This property breaks for some other losses, as shown in Section~\ref{sec:sum}. Thus, it appears interesting to find exactly what properties of the loss are required to establish universality of mixture predictors and obtain a corresponding generalization. The results of interest in this respect are, again, Theorem~\ref{th:main} and Corollary~\ref{cl:main}, as well as the middle-case, Theorem~\ref{th:pq4-2}.
{\noindent\bf  Non-asymptotic results for the middle-case problem.}
It is also worth noting that for the middle-case problem, unlike for the realizable case, only an asymptotic version of the result concerning universality of mixture predictors has been obtained (Theorem~\ref{th:pq4-2}). It would be interesting to see whether a non-asymptotic version, akin to Theorem~\ref{th:main}, can be established for this problem. 

{\noindent\bf  The gap between the upper and lower bound.}
Finally, it is worth mentioning that the upper (Theorem~\ref{th:main}) and lower (Theorem~\ref{th:lb}) bounds on the performance of the best mixture predictor have a $O(\log n)$ gap to them. This gap is significant, since the best achievable cumulative loss for some important classes of processes, such as finite-memory processes, is of the same order. It may thus be interesting to look whether the bound can be improved specifically for sets of processes for which this is the case.
\section{The general sequence-prediction questions}
The questions that we attempted to answer in this book and the results obtained can be seen as a part of a bigger picture concerning sequence-prediction problems. In light of these results, the questions outlined in the Introduction~(Section~\ref{s:gm}) can be updated as follows.

\begin{svgraybox}
 {\bf Sequence prediction: realizable case.} A set of probability measures $\C$ is given. %
It is required to find a predictor $\rho$ that attains minimax asymptotic loss: 
$$
L(\mu,\rho)\le V_\C
$$
for all $\mu\in\C$.
\end{svgraybox}
We know (Corollary~\ref{cl:main}) that such a predictor exists for every set $\C$ whatsoever,  and it can be found in the form of a mixture predictor. However, we still do not know, in general, how to find~it.

{\noindent\bf  Characterization of sets $\C$ with $V_\C=\alpha$.}
Another closely related question is to find characterisations of the sets $\C$ for which $V_\C=\alpha$ in terms of the parameter $\alpha$. Finding such generalizations may be of help when constructing predictors. As we have seen (Theorem~\ref{th:tv}), this is the case for prediction in total variation, where a complete characterization may be obtained in terms of separability: in this latter case, taking a mixture over any dense subset results in a predictor we are looking for. Based on the results of \cite{BRyabko:86} and in light of Theorem~\ref{th:hau}, it can be suggested that the Hausdorff dimension may be instrumental for this in case $\alpha>0$. On the other hand, of course, the case  $V_\C=0$ remains to be of special interest, and finding characterizations of sets $\C$ for which $V_\C=0$ is an interesting problem. 

For the  middle-case problem (Chapter~\ref{ch:p2}) we can formulate a similar question:
\begin{svgraybox}
 {\bf Sequence prediction: middle case.} A set of probability measures $\C$ is given. 
 It is required to find a predictor $\rho$ that attains asymptotically vanishing loss on any measure $\mu$ for which at least one measure from $\C$ is a good predictor (its loss vanishes):
$$
L(\mu,\rho) =0\text{ for all $\mu\in\C^+,$}
$$
 where %
$
\C^+:=\{\mu\in\mathcal P: \exists \nu\in\C\ d(\nu,\mu)=0\}.
$
\end{svgraybox}
While this might look like a special case of the  previous question (substitute $\C$ with $\C^+$ and take the special case $V_\C=0$), what makes this formulation  interesting is that, as we know from Theorem~\ref{th:pq4-2}, it is sufficient to look for a solution in terms of mixtures over $\C$ and not over $\C^+$.

Finally, for the non-realizable case, we can ask the following question.
\begin{svgraybox}
 {\bf Sequence prediction: non-realizable case.} A set of probability measures $\C$ is given and is considered as a set of predictors to compare to. The unknown data-generating mechanism $\mu$ can be arbitrary.   It is required to construct a predictor $\rho$ that attains optimal regret with respect to $\C$: 
$$
R(\C,\rho)=U_\C.
$$
\end{svgraybox}
A major difference with respect to the previous two problems is that here we do not know where to look for a predictor,  as any Bayesian mixture over $\C$ may be suboptimal (Theorem~\ref{th:not}). Still, we know that mixture predictors do attain the minimal regret for some sets of processes, such as Bernoulli or finite-memory processes (Theorem~\ref{th:mark}). Thus, it appears interesting to first answer the following question.

\begin{svgraybox}
\noindent{  Characterize the sets $\C$} for which the { minimal regret} can be  { attained by a mixture} predictor.
\end{svgraybox}
\noindent{\bf Minimax, decision-theoretic questions.} Finally, one cannot help noticing a strong decision-theoretic flavour of the questions above. Indeed, as we have seen in Section~\ref{s:dt}, some of the main results can be seen as complete-class or (partial) minimax results. As mentioned in that section, it remains open to see whether the minimax theorem holds for $V_\C$. It is also worth mentioning that, while minimax and complete-class theorems constitute an old and well-studied part of decision theory, and the results on sufficient conditions for these kind of theorems to hold are abound, they turn out to be not directly applicable (at least, to the best of this author's knowledge) to the problems at hand. It thus appears interesting to look at these questions more generally, that is,  beyond the specific loss used for the problem of prediction (KL divergence), trying to find out what conditions are sufficient for these theorems to hold that would include the specific cases of interest here.
\section{Beyond sequential prediction}
Sequential prediction is one of the most basic problems  of  statistics, machine learning and information theory, which forms a basis of and is otherwise linked to various other problems. Here we attempt to look at what other problems can benefit from the results exposed in this volume. 

{\noindent\bf  Active learning problems: reinforcement learning, bandits.}
A basic implicit assumptions in sequential prediction is that the environment is {\em passive}, in the sense that the predictions made do not affect future outcomes. Allowing the environment to depend on the predictions leads to the problem known as reinforcement learning or sequential decision making (e.g., \cite{Szepesvari:2010,dimitrakakis2018decision}). Here, the goal is usually not just to emit predictions but to optimize some utility. The latter may come as special observations provided by the environment, called rewards, or may be a function of the observations. While mixture predictors and Bayesian methods are also used for these problems (e.g., \cite{Hutter:04uaibook}), this problem also exhibits other distinct features which are not easily solved by mixtures. One of these problems can be identified with the so-called {\em traps} in the environment: some actions made by the predictor may turn out  to be fatal, in the sense that no recovery is possible, and thus it is not possible to make any meaningful inference any more. This problem and some possible solutions are explored in \cite{Ryabko:08ao++}.  The problem of  traps is absent in a special case of the reinforcement learning problem called the bandit problem. 
Here, instead of one sequence of observations, there are several sequences (more generally, the set of sequences may be infinite), but on each time step only one sequence is observed. Which one is observed depends on the action of the learner.   The goal is also to optimize some utility function (e.g., rewards).  This formulation puts the light on another major feature not present in the problem of sequence prediction: the observations are not fully observable. Thus, the main problem~--- besides predicting the probabilities of future observations~--- is optimal {\em exploration}, that is, selecting which sequences to observe in order to get enough information needed to take optimal decisions. It should be noted that both the bandit and the reinforcement learning problem are typically considered under rather narrow probabilistic assumptions, namely, Markov (for reinforcement learning) and i.i.d.\ outcomes (for bandits). The questions pertaining to the generalizations of the results presented here, of course, only concern the general settings where no such assumptions are made.

The exploration problem appears to be rather different from the prediction problem, and thus there is no reason to suppose that the results presented in this book should be directly generalizable to active learning problems. Rather, it would be interesting to attempt to decompose learning problems into basic building blocks, and solve them separately. Prediction is clearly one of such blocks, and so the question would be rather how to combine predictors with other algorithms to solve such general problems. Another statistical problem that appears to be orthogonal to prediction, and thus may  serve as a building block for solving more general learning problems (e.g., reinforcement learning) is hypothesis testing; see \cite{Ryabko:19c} for some general results concerning characterisation of those sets of process distributions for which consistent tests exist. 

On a very high level, it may be conjectured that {\em prediction, exploration and testing are the three  inference problems to which one might be able to  reduce all others} (including reinforcement learning). 

{\noindent\bf Beyond sequences: infinite random graphs.}
Finally, one last assumption made  in sequential prediction that is worth trying to generalize is the sequential nature of observations. Instead of a series of observations coming from the same alphabet $\X$, one may consider other information sequences~--- an observation made already in the seminal work \cite{Blackwell:62}. An interesting case is when, instead of an infinite sequence, the observations make up some more general structure, such as an infinite random graph. While these may be endowed with a structure of a probability space, see \cite{lyons2016probability}, making statistical inference on these structures brings about a whole different set of problems. Some first steps towards making statistical inference in this more general case, but assuming stationarity of the underlying graph, have been taken in \cite{Ryabko:17gratest}, based on the theory developed in \cite{Benjamini:12}. Generalizing the results presented here to this case  remains, so far, an unexplored direction of research.

\end{document}